\def\thetitle{ Pushing the Boundaries of Tractable Multiperspective Reasoning:\\ A Deduction Calculus for Standpoint $\ELp$}
\def\theauthors{%
	\mbox{Lucía Gómez Álvarez} and \mbox{Sebastian Rudolph} and \mbox{Hannes Strass}
}
\def\theaffiliation{Computational Logic Group, TU~Dresden, Germany\\[0.5ex] Center for Scalable Data Analytics and Artificial Intelligence (ScaDS.AI) Dresden/Leipzig, Germany}
\newcommand{\email}[1]{\href{mailto:#1}{\texttt{#1}}}
\def\theemails{%
	\email{\{lucia.gomez\_alvarez},
	\email{sebastian.rudolph},
	\email{hannes.strass\}@tu-dresden.de}%
}
\setlist{leftmargin=*,noitemsep,parsep=0.2ex,topsep=0.3ex}
\newcommand{\shadydiamond}{\mbox{{\rotatebox[origin=c]{45}{\small\ding{113}}}}}
\newcommand{\shadybox}{\mbox{{\rotatebox[origin=c]{0}{\small\ding{113}}}}}
\theoremstyle{plain}
\newtheorem{theorem}{Theorem}
\newtheorem{lemma}[theorem]{Lemma}
\newtheorem{corollary}[theorem]{Corollary}
\newtheorem{numberclaim}{Claim}
\theoremstyle{definition}
\newtheorem{definition}{Definition}
\newtheorem{example}{Example}
\theoremstyle{nonumberplain}
\theoremstyle{nonumberplain}
\newtheorem{claimproof}{Proof of Claim}
\newenvironment{proofsketch}{\begin{proof}[Proof sketch]}{\end{proof}}
\newif\iflong
\newcommand{\FormatArithmeticComplexityClass}[1]{\ensuremath{\textsc{#1}}\xspace}
\newcommand{\PTime}{\FormatArithmeticComplexityClass{PTime}}
\newcommand{\NP}{\FormatArithmeticComplexityClass{NP}}
\newcommand{\ExpTime}{\FormatArithmeticComplexityClass{ExpTime}}
\newcommand{\NExpTime}{\FormatArithmeticComplexityClass{NExpTime}}
\newcommand{\TwoExpTime}{\mbox{\sc{2\ExpTime}}\xspace}
\newcommand{\pred}[1]{\mbox{\small\tt {#1}}}
\newcommand{\spform}[1]{\mathsf{#1}}
\newcommand{\spsub}[2]{\spform{#1}_{#2}}
\newcommand{\mathcom}[3]{ \newcommand{#1}[#2]{\mbox{$#3$}}}
\mathcom{\imp}{0}{\ \rightarrow\ }            % implication arrow
\mathcom{\rimp}{0}{\ \leftarrow\ }            % implication arrow
\mathcom{\con}{0}{\ \wedge\ }                 % conjunction
\mathcom{\dis}{0}{\ \vee\ }                   % disjunction
\mathcom{\n}{0}{\neg}                     % negation
\mathcom{\dimp}{0}{\ \leftrightarrow\ }       % mat equiv
\mathcom{\corresponds}{0}{\ \Lleftarrow\! \! \Rrightarrow\ }
\mathcom{\A}{0}{\forall}                  % universal quantifier
\mathcom{\E}{0}{\exists}     % existential quant
\def\Box{\mathop\square}
\def\Diamond{\mathop\lozenge}
\mathcom{\tuple}{1}{\left\langle #1 \right\rangle}
\mathcom{\stuple}{1}{\langle #1 \rangle}
\def\eqdef{\mathrel{\ =_{\mbox{\em \tiny def}}\ }}
\def\sp{\hbox{$\spform{s'}$}\xspace}
\def\spp{\hbox{$\spform{s''}$}\xspace}
\def\st{\hbox{$\spform{s}$}\xspace}
\def\s#1{\hbox{$\spform{s}_{#1}$}\xspace}
\def\stand#1{\hbox{$\spform{#1}$}\xspace}
\def\star{\hbox{$*$}\xspace}
\def\pr{\pi}
\def\Precs{\Pi}
\newcommand{\standb}[1]{\mathord{\Box\nolimits_{\spform{#1}}}}
\newcommand{\standd}[1]{\mathord{\Diamond\nolimits_{\spform{#1}}}}
\newcommand{\standdsub}[2]{\mathord{\Diamond\nolimits_{\spsub{#1}{#2}}}}
\newcommand{\standv}[1]{\mathord{\mathop{\odot}\nolimits_{\spform{#1}}}}
\def\standbx#1{\mathord{\Box\nolimits_{\scaleobj{0.8}{\spform{#1}}}}}
\def\standdx#1{\mathord{\Diamond\nolimits_{\scaleobj{0.8}{\spform{#1}}}}}
\newcommand{\standbs}{\standb{s}}
\def\standbu{\standb{u}}
\def\standbv{\standb{v}}
\def\standbsp{\standb{s'}}
\newcommand{\standds}{\standd{s}}
\def\standdu{\standd{u}}
\def\standdvo{\standdsub{v}{0}}
\def\standdvi{\standdsub{v}{1}}
\def\standdsp{\standd{s'}}
\newcommand{\standvs}{\standv{s}}
\def\standvu{\standv{u}}
\def\standvsp{\standv{s'}}
\def\allstandb{\standb{*}}
\def\standball{\standb{*}}
\def\f\xspacestandtopre{\hbox{$\sigma\,$}\xspace}
\def\fpretov\xspacealue{\hbox{$\delta\,$}\xspace}
\def\ModSat#1||-#2{#1\models #2}
\def\NotModSat#1||-#2{#1\nvDash #2}
\newcommand{\define}[1]{\emph{#1}}
\newcommand{\N}{\mathbb{N}}
\newcommand{\set}[1]{\left\{#1\right\}}
\newcommand{\guard}{\;\middle\vert\;}
\newcommand{\size}[1]{\left\lVert#1\right\rVert}
\renewcommand{\eqdef}{\mathrel{\,:=\,}}
\newcommand{\iffdef}{\mathrel{\;:\mathrel{\mkern-6mu}{\Longleftrightarrow}\;}}
\newcommand{\pheq}{\mathrel{\phantom{=}}}
\newcommand{\phiff}{\mathrel{\phantom{\iff}}}
\newcommand{\phimplies}{\mathrel{\phantom{\implies}}}
\newcommand{\ebnfeq}{\mathrel{::=}}
\newcommand{\ebnfalt}{\mathrel{\hspace{-1pt}\ \mid\ \hspace{-1pt}}}
\renewcommand{\land}{\mathrel{\wedge}}
\renewcommand{\S}{\mathcal{S}}
\def\Stands{N_{\mathsf{S}}}
\def\Concepts{N_{\mathsf{C}}}
\def\Roles{N_{\mathsf{R}}}
\def\Individuals{N_{\mathsf{I}}}
\def\sts{\spform{s}} % standpoint symbol
\def\stsp{\spform{s'}} % standpoint symbol
\def\stu{\spform{u}} % standpoint symbol
\def\stv{\spform{v}} % standpoint symbol
\def\stvb{\spsub{v}{b}} % standpoint symbol with subscript
\def\stvo{\spsub{v}{0}} % standpoint symbol with subscript
\def\stvi{\spsub{v}{1}} % standpoint symbol with subscript
\def\sto{\mathbf{0}}
\def\prvo{\pr'_{\stvo}}
\def\prvi{\pr'_{\stvi}}
\def\prvb{\pr'_{\stvb}}
\def\prvy{\pr'_{\spsub{v}{y}}}
\def\E{\mathcal{E}}
\def\struct{\mathcal{I}}
\def\intf{\cdot^{\struct}}
\def\Dom{\Delta}
\def\de{\delta}
\def\ve{\varepsilon}
\def\ze{\zeta}
\newcommand{\interpret}[2]{#1^{#2}}
\newcommand{\interprets}[1]{\interpret{#1}{\struct}}
\newcommand{\interpretgp}[1]{\interpret{#1}{\gamma(\pr)}}
\newcommand{\interpretgps}[1]{\interpret{#1}{\gamma(\pr_*)}}
\newcommand{\interpretpgp}[1]{\interpret{#1}{\gamma'(\pr)}}
\newcommand{\interpretgpp}[1]{\interpret{#1}{\gamma(\pr')}}
\newcommand{\interpretpgpp}[1]{\interpret{#1}{\gamma'(\pr')}}
\newcommand{\interpretpgppp}[1]{\interpret{#1}{\gamma'(\pr'')}}
\def\dland{\sqcap}
\def\dlsub{\sqsubseteq}
\newcommand{\T}{\mathcal{T}}
\renewcommand{\S}{\mathcal{S}}
\newcommand{\CT}{\mathcal{C}_\T}
\newcommand{\CTp}{\mathcal{C}_{\T'}}
\newcommand{\CTpp}{\mathcal{C}_{\T''}}
\renewcommand{\A}{\mathcal{A}}
\newcommand{\K}{\mathcal{K}}
\newcommand{\dlstruct}{\mathfrak{D}}
\def\EL{\ensuremath{\mathcal{E\!L}}\xspace}
\def\ELp{\ensuremath{\mathcal{E\!L}+}\xspace}
\newcommand{\goesto}{\mathbin{\qquad\longrightarrow\qquad}}
\newcommand{\hs}[1]{\textcolor{teal}{HS:\xspace #1}}
\newenvironment{firstexample}[2]{
\ifthenelse{\equal{#2}{}}{\begin{example}}{\begin{example}[#2]}
    \label{#1}
    }{
  \end{example}
}
\newenvironment{cexample}[1]{
  \begin{example}[Continued from \Cref{#1}]
    }{
  \end{example}
}
\newcommand{\axiom}{\xi}
\newcommand{\literal}{\lambda}
\newcommand{\monomial}{\mu}
\newcommand{\formula}{\varphi}
\newcommand{\Souffle}{\textsc{Soufflé}\xspace}
\newcommand{\implementationurl}{\url{https://github.com/cl-tud/standpoint-el-souffle-reasoner}}
\newcommand{\SROIQbs}{\ensuremath{\mathcal{SROIQ}b_s}\xspace}
\newcommand{\kb}{\text{\rm{KB}}} % the generic knowledge base
\newcommand{\KB}{\text{$\mathcal{K}$}\xspace}
\newcommand{\Self}{\ensuremath{\mathsf{Self}}}
\newcommand{\exself}{\ensuremath{\exists R.\Self}}
\def\kb{\mathcal{K}}
\def\SEL{\mathbb{S}_{\EL}}
\def\SELp{\mathbb{S}_{\ELp}}
\def\SC{\mathsf{ST}_{\kb}}
\def\SFCT{\mathsf{SC}_{\T}}
\def\sub{\mathsf{sub}}
\def\BCC{\mathsf{BC}_{\kb}}
\def\int{^{\mathcal{I}}}
\newcommand{\alignedop}[2]{
    \begin{array}[t]{@{}l@{}}
        \displaystyle{#1}  \\[0.83ex]
        \mathrlap{^{\!#2}} \\[-0.91ex]
    \end{array}}
\newcommand{\mycup}[1]{\alignedop{\bigcup}{#1}}
\title{\thetitle}
 \author{
 \theauthors
 \affiliations
 \theaffiliation
 \emails
 \theemails
 }
\def\namedlabel#1#2{\begingroup
   \def\@currentlabel{#2}%
   \label{#1}\endgroup
}
\begin{document}

\maketitle

% main content goes here
% approx 1 page including normalisation

\newenvironment{aligna}{\linenomathNonumbers\begin{align*}}{\end{align*}\endlinenomath}

\begin{abstract}
    Standpoint $\EL$ is a multi-modal extension of the popular description logic $\EL$ that allows for the integrated representation of domain knowledge relative to diverse standpoints or perspectives. Advantageously, its satisfiability problem has recently been shown to be in \PTime, making it a promising framework for large-scale knowledge integration.

    In this paper, we show that we can further push the expressivity of this formalism, arriving at an extended logic, called Standpoint $\ELp$, which allows for axiom negation, role chain axioms, self-loops, and other features, while maintaining tractability. This is achieved by designing a satisfiability-checking deduction calculus, which at the same time addresses the need for practical algorithms. We demonstrate the feasibility of our calculus by presenting a prototypical Datalog implementation of its deduction rules.
\end{abstract}

% approx 2 pages
\section{Introduction}

%Schema: 

The Semantic Web enables the exploitation of artefacts of knowledge representation (e.g., ontologies, knowledge bases, etc.) to support increasingly sophisticated automated reasoning tasks over linked data from various sources.
\emph{Description logics} (DLs) \cite{baader_horrocks_lutz_sattler_2017,Rudolph11} are a prominent class of logic-based KR formalisms in this context since they provide the theoretical underpinning of the Web Ontology Language (OWL 2), the main KR standard by the W3C \cite{owl2-overview}.

In particular, the lightweight description logic $\EL$ \cite{Baader05ELenvelope} serves as the core of a popular family of DLs which is the formal basis of OWL~2~EL \cite{owl2-profiles}, a widely used tractable profile of OWL~2. One major appeal of the $\EL$ family is that basic reasoning tasks can be performed in polynomial time with respect to the size of the ontology, enabling reasoning-supported creation and maintenance of very large ontologies.
An example is SNOMED CT \cite{donnelly2006snomed}, which is the largest healthcare ontology and has a broad user base including clinicians, patients, and researchers.

Beyond the scalable reasoning, the Semantic Web must provide mechanisms for the combination and integrated exploitation of the many knowledge sources available. Yet, its decentralised nature has led to the proliferation of ontologies of overlapping knowledge, which inevitably reflect different points of view and follow diverging modelling principles. For instance, in the medical domain, some sources may use the concept $\pred{Tumour}$ to denote a process and others to denote a piece of tissue. Similarly, $\pred{Allergy}$ may denote an allergic reaction or just an allergic disposition. These issues pose well-known challenges in the area of knowledge integration.

Common ontology management approaches fully merge knowledge perspectives, which often requires logical weakening in order to maintain consistency. For instance, an initiative proposed the integration of SNOMED CT with the FMA1140 (Foundational Model of Anatomy) and the NCI (National Cancer Institute Thesaurus), into a unified combination called LargeBio \cite{Osman2021OntologyIssues}, and reported ensuing challenges.
Beyond the risk of causing inconsistencies or unintended consequences, the unifying approach promotes weakly axiomatised ontologies designed to avoid conflict in any context of application at the expense of richer theories that would allow for further inferencing. Hence, while frameworks supporting the integrated representation of multiple perspectives seem preferable to recording the distinct views in a detached way, entirely merging them comes with significant downsides.

This need of handling multiple perspectives in the Semantic Web has led to several logic-based pro\-posals.
The closest regarding goals are multi-viewpoint ontologies \cite{Hemam2011MVP-OWL:Web,Hemam2018Probabilistic}, which often model the intuition of viewpoints in a tailored extension of OWL for which no complexity bounds are given.
Related problems are also addressed in work on contextuality, e.g.\ C-OWL, DDL, and CKR \cite{Bouquet2003C-OWL:Ontologies,Borgida2003DistributedSources,SERAFINI201264}).
%These frameworks focus on contextual and distributed reasoning and range between different levels of expressivity for modelling the structure of contexts and the bridges between them.

Modal logics are natural frameworks for modelling contexts and perspectives \cite{Klarman2016DescriptionContext,gomez2021standpoint}, and in contrast to tailored multi-perspective frameworks, they benefit from well-understood semantics. However, the interplay between DL constructs and modalities is often not well-behaved and can easily endanger the decidability of reasoning tasks or increase their complexity \cite{baader1995multi,mosurovic1999complexity,WolterMultiDimensionalDescriptionLogics}. Notable examples are $\NExpTime$-completeness of the multi-modal description logic $\mathbf{K}_\mathcal{ALC}$ \cite{LutzSWZ02} % , and it should be possible to adapt it to $\kdfourfive$ and $\sfive$, which are more suitable systems of modal logics for multiperspective representations. 
and $\TwoExpTime$-completeness of $\mathcal{ALC}_{\mathcal{ALC}}$ \cite{Klarman2016DescriptionContext}, a modal contextual logic framework in the style proposed by \citeauthor{mccarthy1997context}~(\citeyear{mccarthy1997context}).\pagebreak

%. Yet, in comprehensive ontologies like SNOMED CT, one is usually facing issues related to context or perspective-dependent knowledge as well as ambiguity of language \cite{schulz2017lexical}.
%For instance, the concept $\pred{Tumour}$ might denote a process or a piece of tissue; $\pred{Allergy}$ may denote an allergic reaction or just an allergic disposition.

\emph{Standpoint logic}~\cite{gomez2021standpoint} is a recently proposed formalism that is rooted in modal logic and allows for the simultaneous representation of multiple, potentially contradictory viewpoints and the establishment of alignments between them.
This is achieved by extending a given base logic (propositional logic in the case of \citeauthor{gomez2021standpoint}, description logic $\EL$ herein) with labelled modal operators, where
propositions  $\standbx{S}\phi$ and $\standdx{S}\phi$ express information relative to the \emph{standpoint} $\mathsf{S}$ and read, respectively: ``according to $\mathsf{S}$, it is \emph{unequivocal/conceivable} that~$\phi$''.
Semantically, standpoints are represented by sets of \emph{precisifications},\footnote{Precisifications are analogous to the \emph{worlds} of modal-logic frameworks with possible-worlds semantics.} such that $\standbx{S}\phi$ and $\standdx{S}\phi$ hold if $\phi$ is true in \mbox{all/some} of the precisifications associated with $\mathsf{S}$.

%%% USE THIS SENTENCE ->
%ontologies can be used only as long as consensus about their contents is reached

The following example illustrates the use of standpoint logic for knowledge integration in the medical domain.
%%%%%%%%%%%%%%%%%%%%%%%%%%%%%%%%%%%%%%% 3 -- For Example:  %%%%%%%%%%%%%%%%%%%

\begin{firstexample}{example:fol}{Tumour Disambiguation}
    A hospital {\small$\mathsf{H}$} and a laboratory {\small$\mathsf{L}$} have developed their own knowledge bases and aim to make them interoperable.
    Hospital {\small$\mathsf{H}$} gathers clinical data about patients, which may be used to determine whether a person has priority at the emergency service.
    According to {\small$\mathsf{H}$}, a $\pred{Tumour}$ is a process by which abnormal or damaged cells grow and multiply (Formula \ref{formula:TT-main-pre}), and patients that conceivably have a $\pred{Tumour}$ have a $\pred{HighRisk}$ priority (Formula \ref{formula:high-risk}).
    The laboratory {\small$\mathsf{L}$} annotates patients' radiographs, and models $\pred{Tumour}$ as a lump of tissue (Formula \ref{formula:TT-main}).
    \begin{align}
        \standbx{H}[\pred{Tumour}                                                    & \sqsubseteq\pred{Process}]\label{formula:TT-main-pre} \\
        \standbx{H}[\pred{Patient}\sqcap\standdx{H}[\exists\pred{HasProcess.Tumour}] & \sqsubseteq\pred{HighRisk}]\label{formula:high-risk}  \\
        \standbx{L}[\pred{Tumour}                                                    & \sqsubseteq\pred{Tissue}]\label{formula:TT-main}
    \end{align}
    %        \begin{gather}
    %            \standbx{H}[\pred{Tumour}\sqsubseteq\pred{Process}]\quad \quad \quad
    %            \small   \standbx{L}[\pred{Tumour} \sqsubseteq\pred{Tissue}]\label{formula:TT-main}\\
    %            \standbx{H}[\pred{Patient}\sqcap\standdx{H}[\exists\pred{ HasProcess.Tumour}]\sqsubseteq\pred{HighRisk}]\label{formula:high-risk}
    %        \end{gather}
    \noindent
    Both institutions inherit from {\small$\mathsf{SN}$}, which contains the original SNOMED CT as well as %country-wide basic data about patients
    patient data (Formula \ref{formula:inheritance-from-snomed}, with the operator $\preceq$ encoding the inheritance). Among the background knowledge in {\small$\mathsf{SN}$}, we have that $\pred{Tissue}$ and $\pred{Process}$ are disjoint classes (Formula \ref{formula:SN-tissue-process-disjoint}) and that everything that has a part which has a process, has that process itself (Formula~\ref{formula:hasprocess-transitive}).

    \vspace{-2.6ex}
    \begin{linenomath*}
        {
            \begin{gather}
                \spform{\small H}\preceq\spform{\small SN}           \qquad        \spform{\small L}\preceq\spform{\small SN}\quad \label{formula:inheritance-from-snomed} \\
                \standbx{SN}[\pred{Tissue}\sqcap\pred{Process}          \sqsubseteq\bot]\qquad\ \, \label{formula:SN-tissue-process-disjoint}\\
                \standbx{SN}[\pred{HasPart}\circ\pred{HasProcess}\sqsubseteq\pred{HasProcess}]\ \label{formula:hasprocess-transitive}
            \end{gather}}
    \end{linenomath*}%   
    \noindent
    While clearly incompatible due to Formula \ref{formula:SN-tissue-process-disjoint}, the perspectives of $\mathsf{H}$ and $\mathsf{L}$ are semantically close and and we may be aware of further complex relations between their perspectives. For instance, we might assert that whenever a clinician at $\mathsf{L}$ deems that a cancerous lump of tissue is large enough to conceivably be a $\pred{Tumour}$ (tissue), then it is unequivocally undergoing a $\pred{Tumour}$ (process) according to  {\small$\mathsf{H}$} (Formula \ref{formula:TT-TP-bridge}).
    Or we might want to specify negative information such as non-subsumption between the classes of unequivocal instances of $\pred{Process}$ according to {\small$\mathsf{H}$} and to
        {\small$\mathsf{L}$} (Formula \ref{formula:TT-TP-process-bridge}).
    \vspace*{-2pt}
    \begin{align}
        \standdx{L}[\pred{Tumour}]       & \sqsubseteq\standbx{H}[\pred{Tissue} \sqcap \exists\pred{HasProcess.Tumour}] \label{formula:TT-TP-bridge} \\
        \neg(\standbx{H}[\pred{Process}] & \sqsubseteq\standbx{L}[\pred{Process}]) \label{formula:TT-TP-process-bridge}
    \end{align}
    \vspace*{-2pt}
    \noindent
    Finally, these sources may also have assertional knowledge:
    \begin{align}
         & \small \standbx{SN} \small [ \pred{Patient}(p1) \wedge \pred{HasPart}(p1,a)\wedge\pred{Colon}(a) ]  \label{formula:instance-patient1} \\
         & \standbx{H} \small [ \pred{HasPart}(a,b)] \label{formula:instance-tumour}                                                             \\
         & \small \standdx{L}[\pred{Tumour}(b)] \label{formula:instance-tumour2}
    \end{align}  %\pagebreak
    That is, through $\mathsf{SN}$, both $\mathsf{H}$ and $\mathsf{L}$ know of a patient $p1$ and their body parts (Formula \ref{formula:instance-patient1}) and, in view of some radiograph requested by $\mathsf{H}$ on part $b$ of this patient's colon (Formula~\ref{formula:instance-tumour}), $\mathsf{L}$ suspects there may be tumour tissue (Formula~\ref{formula:instance-tumour2}).
\end{firstexample}
%%%%%%%%%%%%%%%%%%%%%%%%%%%%%%%%%%%%%%% 5 -- The Standpoint Approach:  %%%%%%%%%%%%%%%%%%%

%\emph{Standpoint logic} \cite{gomez2021standpoint} is a formalism inspired by the theory of supervaluationism \cite{Fine1975} and rooted in modal logic that supports the coexistence of multiple standpoints and the establishment of alignments between them, by extending the base language with labelled modal operators.
%Propositions  $\standbx{SN}\phi$ and $\standdx{SN}\phi$ express information relative to the \emph{standpoint} $\mathsf{SN}$ and read, respectively: ``according to $\mathsf{SN}$, it is \emph{unequivocal/conceivable} that $\phi$''.
%In the semantics, standpoints are represented by sets of \emph{precisifications},\footnote{Precisifications are analogous to the \emph{worlds} of frameworks with possible-worlds semantics.} such that $\standbx{SN}\phi$ and $\standdx{SN}\phi$ hold if $\phi$ is true in all/some of the precisifications associated with $\mathsf{SN}$.

In the first place, one should notice that a naive, standpoint-free integration of the knowledge bases without the standpoint infrastructure would trigger an inconsistency.
Specifically, from a $\pred{Tumour}(b)$ instance we could infer both that $\pred{Tissue}(b)$ and $\pred{Process}(b)$ using the background knowledge of $\mathsf{H}$ and $\mathsf{L}$, which in turn would lead to inconsistency with the $\mathsf{SN}$ axiom stating $\pred{Tissue}\sqcap\pred{Process}  \sqsubseteq\bot$.
Instead, with Standpoint $\ELp$, the logical statements {(\ref{formula:TT-main})--(\ref{formula:instance-tumour2})} formalising \Cref{example:fol} are not inconsistent, so all axioms can be jointly represented.
On the one hand, we will be able to infer that $\mathsf{H}$ and $\mathsf{L}$ are indeed incompatible, denoted by $\mathsf{H}\cap\mathsf{L}\preceq \mathbf{0}$ and obtained from Formulas (\ref{formula:TT-main}), (\ref{formula:inheritance-from-snomed}), (\ref{formula:SN-tissue-process-disjoint}) and (\ref{formula:instance-tumour}).
%Specifically, we will be able to infer that $\neg(\standbx{L}[\pred{Tumour}]\sqsubseteq\standbx{H}[\pred{Tumour}]$, since clearly we have $\standbx{L}[\pred{Tumour}(a)]$ and we can obtain  . 
On the other hand, beyond preserving consistency, the use of standpoint logic supports reasoning with and across individual perspectives.

\begin{cexample}{example:fol}
    Assume that patient $p1$, of which laboratory $\mathsf{L}$ detected a tumour tissue (Formula \ref{formula:instance-tumour}), registers at emergencies in hospital $\mathsf{H}$.
    From the knowledge expressed in Formulas {(\ref{formula:TT-main})--(\ref{formula:instance-tumour2})}, we can infer
    %
    % \begin{gather}
    %     \standdx{L}[\pred{Tissue}(b)] \tag{from \ref{formula:TT-main} and \ref{formula:instance-tumour}}\\
    %     \standbx{H}[\pred{Tissue}(b)] \tag{from \ref{formula:TT-main} and \ref{formula:instance-tumour}}\\
    %     \standbx{H}[\pred{Tissue}(b)]\wedge \standbx{H}[(\exists\pred{HasProcess.Tumour})(b)]\\
    %     \standbx{SN} [ \pred{Patient}(p1) \wedge \pred{HasPart}(p1,b)]\tag{by one and 3}\\
    %     \standdx{H}[(\exists\pred{HasProcess.Tumour})(b)]\\
    %     \standbx{H} [ \pred{Patient}(p1) \wedge \standdx{H}[(\exists\pred{HasProcess.Tumour})(b)]]\\
    %     \standbx{H} [ \pred{Patient}(p1) \wedge \pred{HighRisk}(p1)]]
    % \end{gather}
    %
    \noindent
    \begin{align}
         & \text{\!\!\!\!via (\ref{formula:TT-TP-bridge}) and (\ref{formula:instance-tumour2})}                                             &  & \standbx{H}[(\exists\pred{HasProcess.Tumour})(b)]\label{formula:hasprocess-b}      \\
         & \text{\!\!\!\!via (\ref{formula:hasprocess-transitive}),\! (\ref{formula:instance-tumour}) and (\ref{formula:hasprocess-b})}\!\! &  & \standbx{H} [(\exists\pred{HasProcess.Tumour})(a)]    \label{formula:hasprocess-a} \\
         & \text{\!\!\!\!via (\ref{formula:hasprocess-transitive}),\! (\ref{formula:instance-patient1}) and (\ref{formula:hasprocess-a})}   &  & \standbx{H} [(\exists\pred{HasProcess.Tumour})(p1)]  \label{formula:hasprocess-p1} \\
         & \text{\!\!\!\!via (\ref{formula:inheritance-from-snomed}) and (\ref{formula:instance-patient1})}                                 &  & \standbx{H} [\pred{Patient}(p1)]  \label{formula:h-patient-p1}                     \\
         & \text{\!\!\!\!via (\ref{formula:high-risk}),\! (\ref{formula:hasprocess-p1}) and (\ref{formula:h-patient-p1})}\!\!               &  & \standbx{H} [\pred{HighRisk}(p1)]  \label{formula:h-highrisk}
    \end{align}
    meaning that, according to $\mathsf{H}$, $p1$ has a tumour process and is classified as `high risk'.
\end{cexample}

Formally, \emph{Standpoint logics}
are multi-modal logics characterised by a simplified Kripke semantics, which brings about beneficial computational properties in different settings. For instance, it is known that adding sentential standpoints (where applying modal operators to formulas with free variables is disallowed)
%In particular, adding sentential standpoints
does not increase the complexity of numerous \emph{$\NP$-hard} FO-fragments \cite{gomez-alvarez22howtoagree}, including the expressive DL $\SROIQbs$, a logical basis of OWL~2~DL \cite{owl2-semantics}.

Yet, a fine-grained terminological alignment between different perspectives requires concepts preceded by modal operators, as in Axiom~(\ref{formula:TT-TP-bridge}), which falls out of the sentential fragment.
Recently, \citeauthor{ourijcaisubmission}~(\citeyear{ourijcaisubmission}) introduced a \emph{standpoint version} of the description logic $\EL$, called Standpoint~$\EL$, and established that it exhibits $\EL$'s favourable \PTime standard reasoning, while showing that introducing additional features like empty standpoints, rigid roles, and nominals makes standard reasoning tasks intractable.
In this paper, we show that we can push the expressivity of Standpoint~$\EL$ further while retaining tractability.
We present an extended logic, called Standpoint $\ELp$, which allows for axiom negation, role chain axioms, self-loops, and other features.
This result is achieved by designing the first satisfiability-checking deduction calculus for a standpoint-enhanced DL, thus at the same time addressing the need for practical algorithms.

\clearpage

Our paper is structured as follows.
After introducing the syntax and semantics of Standpoint $\EL+$ (denoted $\SELp$) and a suitable normal form (\Cref{sec:syntax-semantics}), we establish our main result: satisfiability checking and statement entailment in $\SELp$ is tractable.
We show this by providing a particular Hilbert-style deduction calculus (\Cref{sec:calculus}) that operates on axioms of a fixed shape and bounded size, which immediately warrants that saturation can be performed in \PTime. For said calculus, we establish soundness and refutation-completeness.
In \Cref{sec:implementation}, we briefly describe a proof-of-concept implementation of our approach, showing portions of the actual code to illustrate the key ideas.
We conclude the paper in \Cref{sec:conclusion} with a discussion of future work. %, including efficient approaches for reasoner implementations.

An extended version of the paper with proofs of all results is available as a technical appendix.

\section{Syntax, Semantics, and Normalisation}\label{sec:syntax-semantics}

We now introduce syntax and semantics of Standpoint $\ELp$ (referred to as $\SELp$) and propose a normal form that is useful for subsequent algorithmic considerations.

\subsubsection{Syntax}

A \emph{Standpoint DL vocabulary} is a traditional DL vocabulary consisting of
sets $\Concepts$ of \define{concept names},
$\Roles$ of \define{role names}, and
$\Individuals$ of \define{individual names}, extended by an additional set $\Stands$ of \define{standpoint names} with \mbox{$\star\in\Stands$}.
A \emph{standpoint operator} is of the form  $\standds$ (``diamond'') or $\standbs$ (``box'') with \mbox{$\sts\in\Stands$};
we use $\standvs$ to refer to either.\footnote{We use brackets $[\ldots]$ to delimit the scope of the operators.}

\begin{itemize}
    \item \define{Concept terms} are defined via (where \mbox{$A\in\Concepts$}, \mbox{$R\in\Roles$})%
          \begin{linenomath*}
              \[
                  C \ebnfeq \top \ebnfalt \bot \ebnfalt A \ebnfalt C_1 \dland C_2 \ebnfalt \exists R.C \ebnfalt \standvs C \ebnfalt \exists R.\Self
              \]
          \end{linenomath*}
    \item A \define{general concept inclusion (GCI)} is of the form \mbox{$C \dlsub D$}, where $C$ and $D$ are concept terms.
    \item A \define{role inclusion axiom (RIA)} is of the form \mbox{$R_1\circ\ldots\circ R_n\dlsub R$} where \mbox{$n\geq 1$}, \mbox{$R_1,\ldots,R_n,R\in\Roles$}.
    \item A \define{concept assertion} is of the form $C(a)$, where $C$ is a concept term and \mbox{$a\in\Individuals$}.
    \item A \define{role assertion} is of the form $R(a,b)$, with \mbox{$a,b\in\Individuals$} and \mbox{$R\in\Roles$}.
%\end{itemize}
%
%\begin{itemize}
    \item An \define{axiom} $\axiom$ is a GCI, RIA, or concept/role assertion.
    \item A \define{literal} $\literal$ is an axiom $\axiom$ or a negated axiom $\neg\axiom$.
    \item A \define{monomial} $\monomial$ is a conjunction \mbox{$\literal_1 \land \ldots \land \literal_m$} of literals.
    \item A \define{formula} $\formula$ is of the form $\standvs \monomial$ for a monomial $\monomial$ and $\sts\in\Stands$.
    \item A \define{sharpening statement} is of the form \mbox{$\sts_1\cap\ldots\cap\sts_n \preceq \sts$} where \mbox{$n\geq 1$}, \mbox{$\sts_1,\ldots,\sts_n\in\Stands$}, and \mbox{$\sts\in\Stands\cup\set{\sto}$}.\footnote{$\sto$ is used to express standpoint disjointness as in \mbox{$\sts\cap\stsp\preceq\sto$}.}
\end{itemize}
Note that in particular, monomials subsume (finite) knowledge bases of the $\EL$ family;
monomials even go beyond that in allowing for the occurence of \emph{negated} axioms.
Yet, monomials do not have the full expressive power of arbitrary Boolean combinations of axioms, which is a necessary restriction in order to maintain tractability.

A \define{$\SELp$ knowledge base} (KB) is a finite set of formulae and possibly negated sharpening statements.
%tuple \mbox{$\K=\tuple{\S,\T,\A}$},
%where
%$\T$ is a set of TBox formulas, called \emph{TBox};
%$\A$ is a set of ABox formulas, called \emph{ABox}; and
%%additionally,
%$\S$ is a set of (possibly negated) sharpening statements, called \emph{SBox}.
We refer to arbitrary elements of $\K$ as \emph{statements}.
%
%Since the statement types in $\S$, $\T$, and $\A$ are syntactically well-distinguished% 
%%and there is no danger of confusion
%, we sometimes identify $\K$ as \mbox{$\S \cup \T \cup \A$}.
Note that all statements except sharpening statements are preceded by modal operators (\emph{``modalised''} for short).
%In case the preceding operator happens to be $\standball$, we may omit it.
%
%\begin{definition}[Subformula]
%    Let us define a subformula $\phi$ of a knowledge base $\kb$ in the following way,
%    \begin{itemize}
%        \item For all $\phi\in\kb$, $\phi$ is a subformula of $\kb$, and
%        \item if $\standbs\phi$ is a subformula, then $\phi$ is also a subformula.
%    \end{itemize}
%\end{definition}

\pagebreak
\subsubsection{Semantics}
The semantics of $\SELp$ is defined via standpoint structures.
Given a Standpoint DL vocabulary $\tuple{\Concepts,\Roles,\Individuals,\Stands}$, a \define{description logic standpoint structure} is a tuple
\mbox{$\dlstruct = \tuple{\Dom, \Precs, \sigma, \gamma}$} where:
\begin{itemize}
    \item $\Dom$ is a non-empty set, the \define{domain} of $\dlstruct$;
    \item $\Precs$ is a set, called the \define{precisifications} of $\dlstruct$;
    \item $\sigma$ is a function mapping each standpoint symbol to a non-empty\footnote{As shown in our prior work \cite{ourijcaisubmission}, allowing for ``empty standpoints'' immediately incurs  intractability, even for an otherwise empty vocabulary.} subset of $\Precs$ while we set $\sigma(\sto)=\emptyset$;
    \item $\gamma$ is a function mapping each precisification from $\Precs$ to an ``ordinary'' DL interpretation \mbox{$\struct=\stuple{\Dom,\intf}$} over the domain $\Dom$, where the interpretation function $\intf$ maps\/:
          \begin{itemize}
              \item each concept name \mbox{$A\in\Concepts$} to a set \mbox{$\interprets{A}\subseteq\Dom$},
              \item each role name \mbox{$R{\,\in\,}\Roles$} to a binary relation \mbox{$\interprets{R}{\,\subseteq\,}\Dom{\times}\Dom$},
              \item each individual name \mbox{$a\in\Individuals$} to an element \mbox{$\interprets{a}\in\Dom$},
          \end{itemize}
          and we require \mbox{$a^{\!\gamma(\pi)} = a^{\!\gamma(\pi')}$} for all \mbox{$\pi,\pi'\! \in \Precs$} and \mbox{$a\in\Individuals$}.
\end{itemize}
Note that by this definition, individual names (also referred to as constants) are interpreted rigidly, i.e., each individual name $a$ is assigned the same \mbox{$a^{\gamma(\pi)} \in \Delta$} across all precisifications \mbox{$\pi \in \Precs$}.
We will refer to this uniform $a^{\gamma(\pi)}$ by $a^{\dlstruct}$.

For each $\pr\in\Precs$, we extend the interpretation mapping $\struct=\gamma(\pr)$ to concept terms via structural induction:
\begin{align*}
    \interprets{\top}              & \eqdef \Dom        \\[-0.2ex]
    \interprets{\bot}              & \eqdef \emptyset\  \\[-0.2ex]
    \interprets{(\standds C)}      & \eqdef \textstyle\bigcup_{\pr'\in\sigma(\sts)}C^{\gamma(\pr')} \\[-0.2ex]
    \interprets{(\standbs C)}      & \eqdef \textstyle\bigcap_{\pr'\in\sigma(\sts)}C^{\gamma(\pr')} \\[-0.2ex]
    \interprets{(C_1\dland C_2)}   & \eqdef \interprets{C_1} \cap \interprets{C_2}                                                                                     \\[-0.2ex]
    \interprets{(\exists R.C)}     & \eqdef \set{ \delta\in\Dom \guard \tuple{\delta,\varepsilon}\in\interprets{R} \text{ for some } \varepsilon\in\interprets{C} }    \\
    \interprets{(\exists R.\Self)} & \eqdef \set{ \de\in\Dom \guard \tuple{\de,\de}\in\interprets{R} }
\end{align*}
%We observe that modalised concepts $\standvs C$ are interpreted uniformly across all precisifications \mbox{$\pr \in \Precs$}, which allows us to denote their extensions with $(\standvs C)^{\dlstruct}$.

A role chain expression \mbox{$\rho=R_1\circ R_2\circ\ldots\circ R_n$} is interpreted as \mbox{$\interprets{\rho}\eqdef((\cdots(\interprets{R_1}\circ\interprets{R_2}) \circ \ldots ) \circ \interprets{R_n} )$}, where, as usual, \mbox{$R\circ U \eqdef \set{ \tuple{x,z} \guard \tuple{x,y}\in R, \tuple{y,z}\in U }$}.

%A DL standpoint structure $\dlstruct$ \define{satisfies a sharpening statement} \mbox{$\sts\preceq\stsp$}, written as \mbox{$\dlstruct\models\sts\preceq\stsp$}, iff \mbox{$\sigma(\sts)\subseteq\sigma(\stsp)$}.
%For the other axiom types, satisfaction by $\dlstruct$ is defined as follows:
\define{Satisfaction of a statement} by a DL standpoint structure $\dlstruct$ (and precisification $\pr$) is then defined as follows\/:
\begin{align*}
    \dlstruct\hspace{-1pt},\hspace{-1pt}\pr & \models C\dlsub D                                                                                 & \iffdef & \interpretgp{C}\subseteq\interpretgp{D}                         \\[-0.2ex]
    \dlstruct\hspace{-1pt},\hspace{-1pt}\pr & \models \rho\dlsub R                                                                              & \iffdef & \interpretgp{\rho}\subseteq\interpretgp{R}                      \\[-0.2ex]
    \dlstruct\hspace{-1pt},\hspace{-1pt}\pr & \models C(a)                                                                                      & \iffdef & a^{\dlstruct}\in\interpretgp{C}                                 \\[-0.2ex]
    \dlstruct\hspace{-1pt},\hspace{-1pt}\pr & \models R(a,b)                                                                                    & \iffdef & \tuple{a^{\dlstruct},b^{\dlstruct}}\in\interpretgp{R}           \\[-0.0ex]
    \dlstruct\hspace{-1pt},\hspace{-1pt}\pr & \models\neg\axiom                                                                                 & \iffdef & \dlstruct,\pr\not\models\axiom                                  \\[-0.0ex]
    \dlstruct\hspace{-1pt},\hspace{-1pt}\pr & \models\literal_1\mathord{\,\land}\ldots\mathord{\land\,}\literal_n                               & \iffdef & \dlstruct,\pr\models\literal_i \text{ for all } 1\leq i\leq n   \\[-0.0ex]
    \dlstruct                               & \models\standbs \monomial                                                                         & \iffdef & \dlstruct,\pr\models\mu \text{ for each } \pr\in\sigma(\sts)    \\[-0.0ex]
    \dlstruct                               & \models\standds \monomial                                                                         & \iffdef & \dlstruct,\pr\models\mu \text{ for some } \pr\in\sigma(\sts)    \\[-0.0ex]
    \dlstruct                               & \models\sts_1\hspace{-1pt}\mathord{\,\cap}\ldots\mathord{\cap\,}\sts_n\preceq\sts\hspace*{-0.9em} & \iffdef & \sigma(\sts_1)\cap\ldots\cap\sigma(\sts_n)\subseteq\sigma(\sts)
\end{align*}
Finally, $\dlstruct$ is a \define{model} of a $\SELp$ knowledge base $\K$ (written $\dlstruct \models \K$) iff it satisfies every statement in $\K$.
As usual, we call $\K$ \define{satisfiable} iff some $\dlstruct$ with \mbox{$\dlstruct \models \K$} exists.
A $\SELp$ statement $\psi$ is \define{entailed} by $\K$ (written \mbox{$\K \models \psi$}) iff \mbox{$\dlstruct \models \psi$} holds for every model $\dlstruct$ of $\K$.

\begin{shortonly}
    \begin{figure*}[t]
    \begin{align}
        \standds[\monomial]                                                 & \goesto \set{ \stv\preceq\sts,\  \standbv[\monomial] } \label{norm1:diamond-formula}                                                                                         \\
        \standbs[\literal_1\land\ldots\land\literal_n]                      & \goesto \set{ \standbs[\literal_1],\ \ldots,\  \standbs[\literal_n] } \label{norm1:box-conjunction}                                                                          \\
        \standbs[\neg (C\dlsub D)]                                          & \goesto \{ \standbs[A\dlsub C],\  \standbs[A\dland D\dlsub\bot],\  \standbs[\top \dlsub \exists R'.A ] \} \label{norm1:neg-gci}                                              \\
        \standbs[\neg C(a)]                                                 & \goesto \set{ \standbs[A(a)],\  \standbs[A\dland C\dlsub\bot] } \label{norm1:neg-ca}                                                                                         \\
        \standbs[\neg R(a,b)]                                               & \goesto \{ \standbs[A_a(a)],\  \standbs[A_b(b)],\  \standbs[A_a\dland\exists R.A_b\dlsub\bot] \} \label{norm1:neg-ra}                                                        \\
        \standbs[\neg (R_1\circ\ldots\circ R_n\dlsub R)]                    & \goesto \{ \standbs[\top\dlsub\exists R'.A_a],\  \standbs[A_a\dland\exists R.A_b\dlsub\bot],\  \standbs[A_a\dlsub\exists R_1.\cdots\exists R_n.A_b] \} \label{norm1:neg-ria} \\
        \neg(\sts_1\mathord{\,\cap}\ldots\mathord{\cap\,}\sts_n\preceq\sts) & \goesto \set{ \stv\preceq\sts_1,\ldots,\  \stv\preceq\sts_n,\  \stv\cap\sts\preceq\sto }\hspace*{-1em} \label{norm1:neg-sharpening}
    \end{align}
    \vspace{-0ex}
    \caption{Normalisation rules for Phase~1.
        Therein, \mbox{$\stu\in\Stands\cup\set{\sto}$}, the $A, A_a, A_b$ denote fresh concept names, $R'$ a fresh role name, and $\stv$ a fresh standpoint name.
        \label{norm1:rules}
    }
\end{figure*}

\end{shortonly}

\subsubsection{Normalisation}
\begin{shortonly}

    Before we can present our deduction calculus for checking satisfiability of $\SELp$ knowledge bases, we need to introduce an appropriate normal form.
    \begin{definition}[Normal Form of $\SELp$ Knowledge Bases]
        \label{def:normal-form}
        A knowledge base $\K$ is in \define{normal form} iff it only contains statements of the following shapes\/:
        \begin{itemize}
            \item
                  sharpening statements of the form \mbox{$\sts\preceq\sp$} and \mbox{$\sts_1\cap\sts_2\preceq\sp$} for \mbox{$\sts,\sp,\sts_1,\sts_2\in\Stands$},
            \item
                  modalised GCIs of the shape \mbox{$\standbs [ C \sqsubseteq  D ]$}, where
                  \begin{itemize}
                      \item $C$ can be of the form $A$, $\exists R.A$, or \mbox{$A \dland A'$} with \mbox{$A, A'\in\Concepts {\,\cup\,} \{\top\} {\,\cup\,} \{\exists R'.\Self \mid R' \in \Roles\}$}, \mbox{$R\in \Roles$},
                      \item $D$ can be of the form $B$,  $\exists R.B$, $\standdsp B$, or $\standbsp B$ with \mbox{$B{\,\in\,}\Concepts{\,\cup\,}\{\bot\} {\,\cup\,} \{\exists R'.\Self\mid R' \in \Roles\}$}, \mbox{$R\in \Roles$}, \mbox{$\st,\sp\in\Stands$},
                            and
                      \item one of $C,D$ is in \mbox{$\Concepts\cup\set{\top\!,\bot}\cup\set{\exists R.\Self\guard R\in\Roles}$};
                  \end{itemize}
            \item modalised RIAs of the form \mbox{$\standbs [R_1\dlsub R_2]$} and \mbox{$\standbs [R_1\circ R_2\dlsub R_3]$} with \mbox{$R_1,R_2,R_3\in\Roles$};
            \item modalised assertions of the form $\standbs[A(a)]$ or $\standbs[R(a,b)]$ for \mbox{$a,b\in\Individuals$}, \mbox{$A\in\Concepts$}, and \mbox{$R\in\Roles$}.
        \end{itemize}
    \end{definition}
    Note that complex/nested concepts can only occur on one side of a GCI and then must not nest deeper than one level.

    % \begin{longonly}
    %     For a given $\SEL$ knowledge base \mbox{$\K=\tuple{\S,\T,\A}$}, normalising its ABox $\A$ is easily achieved by
    %     \begin{enumerate}
    %         \item replacing each assertion
    %               $\standds\alpha$ (with $\alpha$ of the shape $C(a)$ or $R(a,b)$) by $\standbsp \alpha$ and \mbox{$\stsp\preceq\sts$} with $\stsp$ fresh,\footnote{Note that, for this normalization step to be correct, it is crucial that standpoints are interpreted non-empty.} and then
    %         \item replacing each assertion
    %               $\standbs[C(a)]$ with $C {\,\not\in\,}\Concepts {\,\cup\,} \{\top\}$ non-basic by $\standbs[A(a)]$ and $\standbs[A\dlsub C]$ for a fresh concept name $A\in\Concepts$ not occurring in $\K$.
    %     \end{enumerate}
    %     We will next concentrate on normalising TBoxes.
    % \end{longonly}
    %   \renewcommand{\goesto}{\rightarrow}
    For a given $\SELp$ knowledge base \mbox{$\K$}, we can compute its normal form in two phases.
    In the first phase, we “break down” formulas into modalised axioms, effectively compiling away negation, and in the second phase we “break down” complex concepts occurring within these axioms.

    \subsubsection*{Phase 1: Modalised Axioms}
    We obtain the (outer) normal form of axioms by exhaustively applying the transformation rules depicted in \Cref{norm1:rules}, where ``rule application'' means that the statement on the left-hand side is replaced with the set of statements on the right-hand side. This eliminates statements preceded by diamonds, modalised axiom sets, and negated axioms.

    \subsubsection*{Phase 2: Restricted Concept Terms}
    To obtain the (inner) normal forms of concept terms occurring in GCIs as well as the restricted forms of sharpening statements and role inclusion axioms, we use the rules displayed in \Cref{norm2:rules}.
    \begin{figure*}[t]
    \begin{align}
        %\standds [ C(a) ]                                                   & \goesto \set{ \stv\preceq\sts, \standbv [ C(a) ] } \label{norm2:diamond-concept-assertion}                            \\
        %\standds [ R(a,b) ]                                                 & \goesto \set{ \stv\preceq\sts, \standbv [ R(a,b) ] } \label{norm2:diamond-role-assertion}                             \\
        %\standds [ C \dlsub D ]                                             & \goesto \set{ \stv\preceq\sts, \standbv [ C\dlsub D ] } \label{norm2:diamond-axiom}                                   \\
        % new rules for EL+
        \sts_1\cap\ldots\cap\sts_n \preceq \sts      & \goesto \set{ \sts_1\cap\sts_2\preceq \stsp,\  \stsp\cap\sts_3\cap\ldots\cap\sts_n\preceq \sts } \label{norm2:sharpening-intersection}                                     \\
        \sts_1\cap\ldots\cap\sts_n \preceq \sto      & \goesto \{ \standb{\sts_1}[\top\dlsub A_1], \ldots,\  \standb{\sts_n}[\top\dlsub A_n],\  \standball[A_1\dland\ldots\dland A_n\dlsub\bot] \} \label{norm2:sharpening-empty} \\
        \standbs[ R_1\circ\ldots\circ R_n \dlsub R ] & \goesto \{ \standbs[R_1\circ R_2\dlsub R'],\  \standbs[R'\circ R_3\circ\ldots\circ R_n\dlsub R] \} \label{norm2:role-chain}                                                \\
        \standbs [ \bar C(a) ]                       & \goesto \set{ \standbs [ A(a) ],\  \standbs [ A \dlsub \bar C ]} \label{norm2:complex-concept-assertion}                                                                   \\
        \standbs [ C\dlsub \top ]                    & \goesto \emptyset \label{norm2:trivial-top}                                                                                                                                \\
        \standbs [ \bot \dlsub D ]                   & \goesto \emptyset \label{norm2:trivial-bot}                                                                                                                                \\
        \standbs [ B \dlsub \exists R.\bar C ]       & \goesto \set{ \standbs [ B \dlsub \exists R. A ],\  \standbs [ A \dlsub \bar C ]} \label{norm2:existential-right}                                                          \\
        \standbs [ B \dlsub C \dland D ]             & \goesto \set{ \standbs [ B\dlsub A],\  \standbs [ A\dlsub C ],\  \standbs [ A \dlsub D ]} \label{norm2:conjunction-right}                                                  \\
        %        \standbs [ C \dlsub \standbu \bar D ]        & \goesto \set{ \standbs [ C \dlsub \standbu A ],\  \standbs [ A \dlsub \bar D ] } \label{norm2:modal-rightt}                                                                \\
        \standbs [ C \dlsub \standvu \bar D ]        & \goesto \set{ \standbs [ C \dlsub \standvu A ],\  \standbs [ A \dlsub \bar D ] } \label{norm2:modal-right}                                                                 \\
        \standbs [ \exists R.\bar C \dlsub D ]       & \goesto \set{ \standbs [\bar C \dlsub A ],\  \standbs [ \exists R. A \dlsub D ] } \label{norm2:existential-left}                                                           \\
        \standbs [ \bar C \dland D \dlsub E ]        & \goesto \set{ \standbs [\bar C \dlsub A ],\  \standbs [ A \dland D \dlsub E ] } \label{norm2:conjunction-left}                                                             \\
        \standbs [ \standdu C \dlsub D ]             & \goesto \set{ \standbu [ C \dlsub \standball A ],\  \standbs [ A \dlsub D ] } \label{norm2:diamond-left}                                                                   \\
        \standbs [ \standbu C \dlsub D ]             & \goesto \{ \stvo\preceq\stu,\  \stvi\preceq\stu,\  \standbu[C\dlsub A],\  \standbs [ \standdvo A \dland \standdvi A \dlsub D ] \} \label{norm2:box-left}
    \end{align}
    \caption{Normalisation rules for Phase 2.  Therein, \mbox{$\stu\in\Stands$}, $\bar C$ and $\bar D$ stand for complex concept terms not contained in $\Concepts \cup \set{\top,\bot} \cup \set{\exists R.\Self \guard R\in\Roles }$, whereas each occurrence of $A$ (possibly with subscript) on a right-hand side denotes the introduction of a fresh concept name;
        each occurrence of $R'$ on a right-hand side denotes the introduction of a fresh role name;
        likewise, $\stv$, $\stvo$, and $\stvi$ denote the introduction of a fresh standpoint name.
        Rule (\ref{norm2:conjunction-left}) is applied modulo commutativity of $\sqcap$.     \label{norm2:rules}
    }
\end{figure*}
    The first three transformation rules are novel, the others were already proposed and formally justified in our earlier work \cite{ourijcaisubmission}.

    \medskip

    A careful analysis yields that the overall transformation (Phase~1 + Phase~2) has the desired semantic and computational properties.

    \begin{lemma}
        \label{lem:normalisation}
        Any $\SELp$ knowledge base $\K$ can be transformed into a $\SELp$ knowledge base $\K'$ in normal form such that:
        \begin{itemize}
            \item $\K'$ is a $\SELp$-conservative extension of $\K$,
            \item the size of $\K'$ is at most linear in the size of $\K$, and
            \item the transformation can be computed in $\PTime$.
        \end{itemize}
    \end{lemma}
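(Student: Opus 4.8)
The plan is to establish the three asserted properties largely independently, with conservativity as the core. I would first observe that ``being a $\SELp$-conservative extension'' is transitive and that the full normalisation is a finite composition of single rewrite steps; hence it suffices to show that \emph{each} rewrite $s \goesto S$ of \Cref{norm1:rules} and \Cref{norm2:rules} turns the current knowledge base into a conservative extension of itself. For a fixed rule I would verify the two standard directions, exploiting that the concept, role, and standpoint names marked as fresh occur nowhere else: (i) every model of the replaced statement $s$ can be \emph{expanded}---by interpreting only the fresh symbols (and, where needed, adding precisifications)---to a model of $S$; and (ii) every model of $S$, read over the original signature, already satisfies $s$. Because the expansion in~(i) does not alter original symbols, all untouched statements survive, so equisatisfiability and agreement on all entailments over the original signature follow by chaining the steps.

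For termination and the linear size bound I would attach to each statement a well-founded complexity measure that every rule strictly decreases: in Phase~1 a measure counting outermost negations and diamonds together with the number of conjuncts under a box (each rule of \Cref{norm1:rules} removing exactly one such source), and in Phase~2 the total nesting depth of concept terms augmented by role-chain length and sharpening arity (each rule of \Cref{norm2:rules} peeling off one outermost constructor into a fresh auxiliary name). Since every rule adds only a constant number of statements and fresh symbols, and the number of applications is bounded by the number of syntactic sub-expressions of $\K$, the normal form $\K'$ has size linear in $\size{\K}$; as each rule match and rewrite is clearly polynomial, the whole procedure runs in $\PTime$.

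Most rules are then routine: the concept-decomposition rules and the negation-elimination rules of \Cref{norm1:rules}---which compile a negated axiom away by forcing a fresh element or individual to witness its violation---are verified by unfolding the satisfaction clauses, and the rules reused from our earlier work were already justified there~\cite{ourijcaisubmission}. The genuine obstacle is conservativity of the rules that extract a modal operator from the \emph{antecedent} of a GCI, above all the box-left rule~(\ref{norm2:box-left}). The difficulty is semantic: $\standbu C$ on the left denotes the \emph{intersection} $\bigcap_{\pi \in \sigma(\stu)} C^{\gamma(\pi)}$, whereas a diamond such as $\standdvo A$ denotes a \emph{union}, so no choice of fresh sub-standpoints over the existing precisifications can in general reproduce the intended antecedent (pairwise intersections may be non-empty while the global intersection is empty).

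For direction~(ii) this is unproblematic: from $\stvo,\stvi \preceq \stu$, $\standbu[C \dlsub A]$, and the non-emptiness of $\sigma(\stvo),\sigma(\stvi)$, any element of the intersection lies in $A$ at some precisification of each, hence in $\standdvo A \dland \standdvi A$. The delicate part is direction~(i): I would expand the model by adding a fresh precisification $\pi^{\dagger}$ to $\sigma(\stu)$ whose interpretation realises exactly the box-value, setting $C^{\gamma(\pi^{\dagger})} = A^{\gamma(\pi^{\dagger})} = \bigcap_{\pi \in \sigma(\stu)} C^{\gamma(\pi)}$ and letting $\sigma(\stvo) = \sigma(\stvi) = \set{\pi^{\dagger}}$; since $\pi^{\dagger}$ is absorbed by the existing intersection, the value of $\standbu C$ and of every other $\stu$-scoped or global box is unchanged, while the two diamonds now evaluate precisely to that intersection. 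The crux of the whole argument is checking that such a $\pi^{\dagger}$ can always be chosen consistently---that an interpretation realising the box-value and respecting all axioms scoped to $\stu$ or to the global standpoint exists---and I expect this, rather than the bookkeeping for the remaining rules, to be the main technical effort. The companion diamond-left rule~(\ref{norm2:diamond-left}) is easier, handled by setting the fresh name $A$ to the precisification-independent diamond-value $\bigcup_{\pi \in \sigma(\stu)} C^{\gamma(\pi)}$, and the novel sharpening and role-chain rules~(\ref{norm2:sharpening-intersection})--(\ref{norm2:role-chain}) reduce to straightforward unfolding of $\sigma$ and of role composition.
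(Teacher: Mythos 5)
Your overall plan --- rule-by-rule conservativity chained by transitivity, a decreasing measure for termination, and per-rule size accounting --- matches the paper's proof structure, and your treatment of direction~(ii) of Rule~(\ref{norm2:box-left}) and of Rule~(\ref{norm2:diamond-left}) agrees with what the paper does. The genuine gap is direction~(i) of Rule~(\ref{norm2:box-left}). You rightly identify it as the crux, but the construction you commit to --- add a \emph{single} fresh precisification $\pi^\dagger$ to $\sigma(\stu)$ with $C^{\gamma'(\pi^\dagger)}=A^{\gamma'(\pi^\dagger)}=\bigcap_{\pi\in\sigma(\stu)}C^{\gamma(\pi)}$ and $\sigma'(\stvo)=\sigma'(\stvi)=\set{\pi^\dagger}$ --- is not merely unverified; it is refutable. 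Take $\K$ containing $\standbs[\standbu C\dlsub D]$, $\standbs[D\dlsub\bot]$, $\standbu[G\dlsub\exists R.C]$, and $\standbu[G(a)]$, with the model given by $\sigma(\stu)=\set{\pi_1,\pi_2}$, $\sigma(\sts)=\set{\pi_3}$, $a$ interpreted by $\delta_a$, $G^{\gamma(\pi_i)}=\set{\delta_a}$, $C^{\gamma(\pi_1)}=\set{c_1}$, $C^{\gamma(\pi_2)}=\set{c_2}$, $R^{\gamma(\pi_i)}=\set{\tuple{\delta_a,c_i}}$, and $D$ empty everywhere; this is a model of $\K$ because $\standbu C$ denotes $\set{c_1}\cap\set{c_2}=\emptyset$. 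Any $\pi^\dagger$ added to $\sigma'(\stu)$ must itself satisfy the $\stu$-scoped statements, so $\delta_a\in G^{\gamma'(\pi^\dagger)}$ and hence $C^{\gamma'(\pi^\dagger)}\neq\emptyset$: your prescription $C^{\gamma'(\pi^\dagger)}=\emptyset$ is unsatisfiable, and no repair keeping $\sigma'(\stvo)=\sigma'(\stvi)=\set{\pi^\dagger}$ exists, since $\standbu[C\dlsub A]$ forces $\emptyset\neq C^{\gamma'(\pi^\dagger)}\subseteq A^{\gamma'(\pi^\dagger)}$, the two diamonds collapse to $(\standdvo A\dland\standdvi A)^{\gamma'}=A^{\gamma'(\pi^\dagger)}$, and the rewritten axiom then demands $A^{\gamma'(\pi^\dagger)}\subseteq D^{\gamma'(\pi_3)}$, while $\standbs[D\dlsub\bot]$ forces $D^{\gamma'(\pi_3)}=\emptyset$ (adding fresh domain elements as $R$-successors does not help: they too land in $A^{\gamma'(\pi^\dagger)}$). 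Identifying $\sigma'(\stvo)$ and $\sigma'(\stvi)$ destroys exactly the feature the rule's \emph{two} fresh standpoints were introduced for: in this example any model of the rewritten KB must use distinct witness sets, e.g.\ $\sigma'(\stvo)=\set{\pi_1}$ and $\sigma'(\stvi)=\set{\pi_2}$.

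Nor is the fix simply ``pick two suitable precisifications'': existing ones do not suffice in general (with three $\stu$-precisifications whose $C$-values are $\set{x,y}$, $\set{y,z}$, $\set{x,z}$, all pairwise unions of values intersect although the box-value is empty --- your own observation), and fresh ones cannot be stipulated freely because they are constrained by \emph{all} $\stu$-scoped and globally scoped axioms, including assertions. This is why the paper's proof of this case does not extend $\dlstruct$ at all but builds a product-like structure: the new domain consists of two copies of all functions $\sigma(\stu)\times\Dom\to\Dom$, the new precisifications are all functions $\sigma(\stu)\times\Dom\to\Precs$ plus two distinct copies $\prvo,\prvi$ of the diagonal, and the interpretations at $\prvo,\prvi$ treat the two copies of each element asymmetrically, so that membership in both $\standdvo A$ and $\standdvi A$ forces one of the element's copies to realise the box-value of $A$, while the Horn-shaped axioms of $\K$ are preserved pointwise through the product (this is the content of the technical claims in the paper's proof of this rule). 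So the ``main technical effort'' you defer is not a consistency check for your $\pi^\dagger$; it is a different construction altogether. A minor separate point: Rules~(\ref{norm1:box-conjunction}) and~(\ref{norm2:sharpening-empty}) add linearly many, not constantly many, statements per application; the overall bound stays linear only because each applies at most once per statement, which the paper's accounting makes explicit.
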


    In particular, $\K'$ being a $\SEL$-conservative extension of $\K$ means that $\K$ and $\K'$ are equisatisfiable.
\end{shortonly}

\begin{longonly}
    We will next show \Cref{lem:normalisation}.
    By introducing new standpoint, concept, and role names, any knowledge base $\K$ can be turned into a normalised knowledge base $\K'$ that is a conservative extension of $\K$, i.e., every model of $\K'$ is also a model of $\K$, and every model of $\K$ can be extended to a model of $\K'$ by appropriately choosing the interpretations of the additional standpoint, concept, and role names.

    To show that this transformation can be done in polynomial time, yielding a normalised KB $\K'$ whose size is linear in the size of $\K$, we next define the size $\size{\K}$ of a knowledge base $\K$ roughly as the number of symbols needed to write down $\K$, and define it formally as follows.
    \begin{definition}
        \label{def:size}
        Let $\K$ be a $\SELp$ knowledge base.
        The \define{size} of $\K$, denoted $\size{\K}$, and its various constituents is defined inductively as follows\/:
        \begin{align*}
            \size{\sts_1\cap\ldots\cap\sts_n\preceq\sts} & \eqdef n+1                                 &  & \text{ for } \sts\in\Stands\cup\set{\sto} \\
            \size\top                                    & \eqdef 1                                                                                  \\
            \size\bot                                    & \eqdef 1                                                                                  \\
            \size A                                      & \eqdef 1                                   &  & \text{ for } A\in\Concepts                \\
            \size{C\dland D}                             & \eqdef 1 + \size{C} + \size{D}                                                            \\
            \size{\exists R. C}                          & \eqdef 1 + \size C                                                                        \\
            \size{\exists R. \Self}                      & \eqdef 1                                                                                  \\
            \size{\standvs C}                            & \eqdef 1 + \size C                                                                        \\
            \size{C\dlsub D}                             & \eqdef 1 + \size{C} + \size{D}                                                            \\
            \size{R_1\circ\ldots\circ R_n\dlsub R}       & \eqdef n+1                                                                                \\
            \size{C(a)}                                  & \eqdef 1+\size{C}                                                                         \\
            \size{R(a_1,a_2)}                            & \eqdef 3                                                                                  \\
            \size{\neg\axiom}                            & \eqdef 1 + \size{\axiom}                                                                  \\
            \size{\literal_1\land\ldots\land\literal_n}  & \eqdef \sum_{1\leq i\leq n}\size{\literal}                                                \\
            \size{\standvs[\monomial]}                   & \eqdef 1 + \size{\monomial}                                                               \\
            \size{\K}                                    & \eqdef \sum_{\psi\in\K}\size{\psi}
        \end{align*}
    \end{definition}
    The subsequent proof is, for the most parts, an extension of our previous proof for a fragment of the language~\cite{ourijcaisubmission} where in this work we added the cases for the new constructors and phase~1 normalisation rules.
    We reproduce the whole proof here for coherence and reference.
\end{longonly}

\begin{longproof}[Proof of \Cref{lem:normalisation}]
    Let \mbox{$\K$} be a $\SELp$ knowledge base.
    The statements in $\K$ can be converted into normal form by exhaustively applying the replacement rules shown on page~\pageref{norm2:rules}.
    In what follows, denote by $\K'$ the result of exhaustive rule application to $\K$.
    To prove the lemma, we proceed to show the following:
    \begin{enumerate}
        \item Polynomial runtime and linear output size: Application of the normalisation rules terminates after at most a polynomial (in the size of $\K$) number of steps, and the size of the resulting KB $\K'$ is at most linear in the size of $\K$.
        \item Syntactic Correctness: $\K'$ is in normal form according to \Cref{def:normal-form}.
        \item Semantic Correctness:
              $\K'$ is a conservative extension of $\K$, more specifically:
              \begin{description}
                  \item[(a)] For every model $\dlstruct$ of $\K$ there exists a DL standpoint structure $\dlstruct'$ that extends $\dlstruct$ (agrees with $\dlstruct$ on the vocabulary of $\K$) and that is a model of $\K'$.
                  \item[(b)] Every model $\dlstruct'$ of $\K'$ is also a model of $\K$.
              \end{description}
    \end{enumerate}
    \begin{enumerate}
        \item We first show that overall normalisation must terminate after at most $\size{\K}$ normalisation rule applications.
              The proof plan is as follows:
              For phase 1, we observe that application of a phase~1 rule~$(n)$ produces statements to which rule~$(n)$ is not applicable again.
              Rule~(\ref{norm1:box-conjunction}) incurs a linear blowup in size for the monomial in question (a constant size increase for each literal in the monomial);
              all other phase~1 rules induce only a constant size increase per application, so the overall size increase of normalisation phase~1 is linear.

              For phase 2, we concentrate on sets of GCIs (as they are the only places where nested concept terms can occur due to rule~(\ref{norm2:complex-concept-assertion})) and denote $\K$ by $\T$ (for TBox) if only GCIs are involved, and proceed thus:
              We define the multiset \mbox{$\CT:\SFCT\to\N$} that contains one copy for each occurence of a concept term occurring in $\T$.
              We observe that for each complex concept $\bar C$ causing some GCI not to be in normal form, a constant number of normalisation rule applications can be used to strictly decrease the cardinality of $\bar C$ in $\CT$.
              Together with the fact that \mbox{$\sum_{D\in\SFCT}\CT(D)\leq\size{\T}$}, the claim then follows.

              %First of all, rules (\ref{norm2:diamond-concept-assertion})--(\ref{norm2:complex-concept-assertion}) and in particular rule~(\ref{norm2:diamond-axiom}) are only applied once per axiom type, so in what follows we may safely assume that $\T$ contains only GCIs of the form $\standbs[C\dlsub D]$.
              Define the set $\SFCT$ as the least set such that:
              \begin{itemize}
                  \item If $\standbs[C\dlsub D]\in\T$, then $C,D\in\SFCT$;
                  \item if $C\in\SFCT$, then for all subconcepts $C'$ of $C$ (written $C'\in\sub(C)$) we have $C'\in\SFCT$.
              \end{itemize}
              Now for any TBox $\T'$ (typically obtained from $\T$ by applying zero or more normalisation rules), the multiset \mbox{$\CTp\colon\SFCT\to\N$} is then as follows:
              \begin{align*}
                  C       & \mapsto \sum_{\standbs[D\dlsub E]\in\T'}\left( c(C,D) + c(C,E) \right)
                  \intertext{where we define the concept-counting function $c\colon\SFCT\times\SFCT\to\N$ inductively: }
                  c(C, D) & \eqdef
                  \begin{cases}
                      1                 & \text{ if } C=D,                                                          \\
                      c(C,E)            & \text{ if } C\neq D \text{ and } [D=\exists R.E \text{ or } D=\standvs E] \\
                      c(C,E_1)+c(C,E_2) & \text{ if } C\neq D \text{ and } D=E_1\dland E_2                          \\
                      0                 & \text{ if } C\neq D \text{ and } D\in\Concepts\cup\set{\top,\bot}
                  \end{cases}
              \end{align*}
              For example if \mbox{$D=\top \dland \standds(\top\dland \exists R.(\top \dland \top))$}, then \mbox{$c(\top, D)=4$} while \mbox{$c(\top\dland \top,D)=1$}.

              We next relate the overall cardinality (sum of number of occurrences) of $\CT$ to the size $\size{\T}$ on the basis that both can be represented as disjoint unions and we can therefore sum up the individual cardinalities.
              \begin{numberclaim}
                  \label{claim:norm2:size-multi}
                  We have $\sum_{C\in\SFCT}\CT(C)\leq\size{\T}$.
                  \begin{claimproof}
                      As $\size{\T}=\sum_{\tau\in\T}\size{\tau}$ and
                      \[ \sum_{C\in\SFCT}\CT(C)=\sum_{C\in\SFCT}\sum_{\standbs[D\dlsub E]\in\T}\left( c(C,D) + c(C,E) \right) = \sum_{\standbs[D\dlsub E]\in\T}\sum_{C\in\SFCT}\left( c(C,D) + c(C,E) \right)
                      \]
                      it suffices to look at a single $\standbs[D\dlsub E]\in\T$ and show
                      \[
                          \sum_{C\in\SFCT}\left( c(C,D) + c(C,E) \right) \leq \size{\standbs[D\dlsub E]}
                      \]
                      which by
                      \[
                          \sum_{C\in\SFCT}\left( c(C,D) + c(C,E) \right) = \sum_{D'\in\sub(D)}c(D',D) + \sum_{E'\in\sub(E)}c(E',E)
                      \]
                      develops into
                      \[
                          \sum_{D'\in\sub(D)}c(D',D) + \sum_{E'\in\sub(E)}c(E',E) \leq \size{\standbs[D\dlsub E]} = 1 + 1 + \size{D} + \size{E}
                      \]
                      for which it suffices to show that for any concept \mbox{$C\in\SFCT$}, we have \mbox{$\sum_{C'\in\sub(C)}c(C',C)\leq\size{C}$}, which we show by induction.
                      \begin{itemize}
                          \item The base case is clear, as $\sum_{C'\in\sub(C)}c(C',C)=c(C,C)=1\leq 1=\size{C}$ for any $C\in\Concepts\cup\set{\top,\bot}$.
                          \item $C=\exists R.D$:
                                \begin{align*}
                                     & \pheq \sum_{C'\in\sub(\exists R.D)}c(C',\exists R.D)                 \\
                                     & = c(\exists R.D,\exists R.D) + \sum_{C'\in\sub(D)}c(C',D)            \\
                                     & \stackrel{\text{(IH)}}{\leq} = c(\exists R.D,\exists R.D) + \size{D} \\
                                     & = 1 + \size{D}                                                       \\
                                     & = \size{\exists R.D}
                                \end{align*}
                          \item $C=\standvs D$: Similar.
                          \item $C=C_1\dland C_2$:
                                \begin{align*}
                                     & \pheq \sum_{C'\in\sub(C_1\dland C_2)}c(C',C_1\dland C_2)                                           \\
                                     & = c(C_1\dland C_2,C_1\dland C_2) + \sum_{C'\in\sub(C_1)}c(C',C_1) + \sum_{C'\in\sub(C_2)}c(C',C_2) \\
                                     & \stackrel{\text{(IH)}}{\leq} c(C_1\dland C_2,C_1\dland C_2) + \size{C_1} + \size{C_2}              \\
                                     & = 1 + \size{C_1} + \size{C_2}                                                                      \\
                                     & = \size{C_1\dland C_2}
                                \end{align*}
                      \end{itemize}
                      This concludes the proof of \Cref{claim:norm2:size-multi}.
                  \end{claimproof}
              \end{numberclaim}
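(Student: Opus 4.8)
The plan is to prove the inequality by decomposing both sides as sums indexed by the GCIs of $\T$ and reducing everything to a single per-concept estimate that is then established by structural induction.

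First I would expand the left-hand side via the definition of $\CT$ and interchange the two finite summations, obtaining $\sum_{C\in\SFCT}\CT(C) = \sum_{\standbs[D\dlsub E]\in\T}\sum_{C\in\SFCT}\bigl(c(C,D)+c(C,E)\bigr)$, while by definition $\size{\T} = \sum_{\standbs[D\dlsub E]\in\T}\size{\standbs[D\dlsub E]}$ with $\size{\standbs[D\dlsub E]} = 2 + \size{D} + \size{E}$. Since both sides are now sums over the same index set of axioms, it suffices to argue termwise: for each GCI $\standbs[D\dlsub E]\in\T$, show $\sum_{C\in\SFCT}\bigl(c(C,D)+c(C,E)\bigr) \leq 2 + \size{D} + \size{E}$.

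Next I would isolate the key property that $c$ vanishes off subconcepts, i.e.\ $c(C,D)=0$ whenever $C\notin\sub(D)$; this is immediate from the inductive clauses of $c$, since descending into $D$ only ever reaches subconcepts of $D$. This lets me restrict each inner sum, $\sum_{C\in\SFCT}c(C,D) = \sum_{C'\in\sub(D)}c(C',D)$ and likewise for $E$. Thus the whole claim reduces to the single-concept statement that $\sum_{C'\in\sub(C)}c(C',C) \leq \size{C}$ holds for every concept term $C$: adding the instances for $D$ and $E$ yields $\sum_{C\in\SFCT}\bigl(c(C,D)+c(C,E)\bigr) \leq \size{D}+\size{E} \leq 2 + \size{D} + \size{E} = \size{\standbs[D\dlsub E]}$, which is the per-axiom bound.

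Finally I would prove this single-concept inequality by structural induction on $C$ along the concept grammar. The base case—$C$ a concept name, $C=\top$, $C=\bot$, or a self-loop $C=\exists R.\Self$—is immediate, since then $\sub(C)=\{C\}$, $c(C,C)=1$, and $\size{C}=1$. The cases $C=\exists R.D$ and $C=\standvs D$ are identical: the term $C'=C$ contributes $c(C,C)=1$, and for the remaining $C'\in\sub(D)$ the definition gives $c(C',C)=c(C',D)$, so the induction hypothesis bounds them by $\size{D}$, yielding $1+\size{D}=\size{C}$. The case I expect to be the main obstacle is the conjunction $C=C_1\dland C_2$, since $\sub(C_1)$ and $\sub(C_2)$ may overlap. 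There, for $C'\neq C$ the definition gives $c(C',C_1\dland C_2)=c(C',C_1)+c(C',C_2)$, and I would regroup $\sum_{C'\in\sub(C_1)\cup\sub(C_2)}\bigl(c(C',C_1)+c(C',C_2)\bigr)$ into $\sum_{C'\in\sub(C_1)}c(C',C_1)+\sum_{C'\in\sub(C_2)}c(C',C_2)$. Exactly the vanishing property legitimises this step: for $C'\in\sub(C_1)\setminus\sub(C_2)$ the cross term $c(C',C_2)$ is zero, and symmetrically, so the subconcepts shared by $\sub(C_1)\cap\sub(C_2)$ are neither lost nor double-counted despite the overlap. Applying the induction hypothesis to $C_1$ and $C_2$ separately then bounds the total by $1+\size{C_1}+\size{C_2}=\size{C_1\dland C_2}$, completing the induction and hence the claim.
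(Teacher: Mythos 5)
Your proof is correct and follows essentially the same route as the paper's: interchange the sums, reduce to the per-axiom bound, restrict to subconcepts, and establish $\sum_{C'\in\sub(C)}c(C',C)\leq\size{C}$ by structural induction on $C$. The only difference is cosmetic and to your credit — you explicitly state and use the vanishing property $c(C,D)=0$ for $C\notin\sub(D)$ to justify restricting the sums and regrouping in the conjunction case (where $\sub(C_1)$ and $\sub(C_2)$ may overlap), a step the paper performs silently.
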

              To prove an overall linear number of rule applications, we next show that for each complex concept $\bar C$ whose occurence in a GCI $\tau\in\T'$ causes $\tau$ not to be in normal form, there is a constant number of rule applications (that is, constant for all TBoxes) such that after rule application, the number of overall occurrences of $\bar C$ has strictly decreased and additionally, any intermediate complex concepts introduced by the rule have been normalised in turn.
              To this end, we need to define some more notions.
              Let $\T$ be a TBox and $\T'$ be obtained by application of an arbitrary number of phase~2 normalisation rules.
              We say that a complex concept $\bar C$ occurring in a GCI \mbox{$\tau\in\T'$} is a \define{culprit for $\tau$} iff $\tau$ is not in normal form because of $\bar C$;
              a culprit $\bar C$ is \define{top-level} for \mbox{$\tau=\standbs[D\dlsub E]$} iff \mbox{$\bar C=D$} or \mbox{$\bar C=E$}.
              We say that $\T'$ is \define{faithful to $\T$} iff every culprit occurring in $\T'$ already occurs in $\T$.
              In the proof below, we show that although normalisation rules sometimes introduce new culprits, those culprits will not lead to problems because each one can only cause a constant overhead.
              \begin{numberclaim}
                  \label{claim:norm2:step-decrease}
                  Let $\T$ be a TBox and let $\T'$ be obtained from $\T$ by applying any number of rules from (\ref{norm2:existential-right})--(\ref{norm2:box-left}).
                  Let $\bar C$ be a top-level culprit for \mbox{$\tau\in\T'$}.
                  Then there is a constant number of rule applications leading to a TBox $\T''$ that is faithful to $\T$ and where \mbox{$\CTp(\bar C)>\CTpp(\bar C)$}.
                  \begin{claimproof}
                      We do a case distinction on the occurrence (left-hand vs.\ right-hand side) and form of $\bar C$.
                      In every case, we will explicitly give the (constantly many) rules to apply to decrease the cardinality of $\bar C$.
                      In most cases, it is easy to see that only one rule is needed, we show this exemplarily for one case and then concentrate on the two non-trivial cases.
                      \begin{itemize}
                          \item $\bar C=\exists R.\bar D$ occurs on the right-hand side:
                                Then we apply rule~(\ref{norm2:existential-right}), removing one occurrence of $\bar C$.
                                The resulting $\T''$ is faithful because the only newly introduced concept terms are $\exists R.A$ on a right-hand side and $A$ on a left-hand side, and neither of these is a culprit.
                          \item $\bar C=D_1\dland D_2$ or $\bar C=\standvs D$ and $\bar C$ occurs on the right-hand side: Similar.
                          \item $\bar C=\exists R.\bar D$ or $\bar C=\standds D$ and $\bar C$ occurs on the left-hand side: Exercise.
                          \item $\bar C=D_1\dland D_2$ occurs on the left-hand side:
                                Denote $\tau=\standbs[\bar C\dlsub E]=\standbs[D_1\dland D_2\dlsub E]$.
                                We apply rule~(\ref{norm2:conjunction-left}) and obtain $\standbs[D_1\dlsub A]$ and $\standbs[A\dland D_2\dlsub E]$, thus removing once occurence of $\bar C$.
                                The latter rule contains the (new) culprit $A\dland D_2$ to which we can apply the same rule again (modulo commutativity) to obtain $\standbs[D_2\dlsub A']$ and $\standbs[A\dland A'\dlsub E]$.
                                The only remaining (potential) culprits are $D_1$, $D_2$, and $E$, whence the resulting $\T''$ is faithful to $\T$.
                          \item $\bar C=\standbu D$ occurs on the left-hand side:
                                Denote $\tau=\standbs[\standbu D\dlsub E]$.
                                We apply rules (to underlined GCIs) as follows\/:
                                \begin{align*}
                                     & \mathrel{\phantom{\leadsto}} \set{ \underline{\standbs[\standbu D\dlsub E ]} }                                                                                                                                                                            \\
                                     & \stackrel{\text{(\ref{norm2:box-left})}}{\leadsto} \set{ \standbu[D\dlsub A_1], \underline{\standbs[\standdvo A_1\dland\standdvi A_1\dlsub E]} }                                                                                                          \\
                                     & \stackrel{\text{(\ref{norm2:conjunction-left})}}{\leadsto} \set{ \standbu[D\dlsub A_1], \standbs[\standdvo A_1\dlsub A_2], \underline{\standbs[A_2\dland\standdvi A_1\dlsub E]} }                                                                         \\
                                     & \stackrel{\text{(\ref{norm2:conjunction-left})}}{\leadsto} \set{ \standbu[D\dlsub A_1], \underline{\standbs[\standdvo A_1\dlsub A_2]}, \standbs[\standdvi A_1\dlsub A_3], \standbs[A_2\dland A_3\dlsub E] }                                               \\
                                     & \stackrel{\text{(\ref{norm2:diamond-left})}}{\leadsto} \set{ \standbu[D\dlsub A_1], \standb{\stvo}[A_1\dlsub\standball A_4], \standbs[A_4\dlsub A_2], \underline{\standbs[\standdvi A_1\dlsub A_3]}, \standbs[A_2\dland A_3\dlsub E] }                    \\
                                     & \stackrel{\text{(\ref{norm2:diamond-left})}}{\leadsto} \set{ \standbu[D\dlsub A_1], \standb{\stvo}[A_1\dlsub\standball A_4], \standbs[A_4\dlsub A_2], \standb{\stvi}[A_1\dlsub\standball A_5], \standbs[A_5\dlsub A_3], \standbs[A_2\dland A_3\dlsub E] }
                                \end{align*}
                                It is easy to see that $D$ and $E$ are the only remaining (potential) culprits, and that one occurence of $\bar C=\standbu D$ has been removed.
                                % We apply rule~(\ref{norm2:box-left}) and obtain $\standbu[D\dlsub A]$ and $\standbs[\standdvo A\dland\standdvi A\dlsub E]$.
                                %In the second GCI, we have a culprit of the form $D_1\dland D_2$ on the left-hand side and can argue as in the previous case to obtain $\standbs[\standdvo A\dlsub A']$, $\standbs[\standdvi A\dlsub A'']$, and $\standbs[A'\dland A''\dlsub E]$.
                                %To $\standbs[\standdvo A\dlsub A']$, we apply rule~(\ref{norm2:diamond-left}) and obtain $\standb{\stvo}[A\dlsub\standball A''']$ and $\standbs[A'''\dlsub A']$ which are both in normal form, and do likewise for $\standbs[\standdvi A\dlsub A'']$.
                      \end{itemize}
                  \end{claimproof}
              \end{numberclaim}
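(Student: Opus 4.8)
The plan is to prove the claim by a case analysis on the side of the GCI $\tau=\standbs[D\dlsub E]$ on which the top-level culprit $\bar C$ occurs (left-hand or right-hand) and on the outermost constructor of $\bar C$. For each combination exactly one of the rules~(\ref{norm2:existential-right})--(\ref{norm2:box-left}) is applicable at the position of $\bar C$, and I will exhibit a fixed-length sequence of rule applications, in each case verifying the two required properties separately: that the multiset count strictly decreases, that is, $\CTpp(\bar C)<\CTp(\bar C)$, and that the resulting $\T''$ is faithful to $\T$.

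For the count, first I would isolate a generic reason why a single suitable application already drops the value by exactly one. Every applicable rule replaces the occurrence of $\bar C$ inside $\tau$ by a fresh concept name $A$ on the relevant side, and relocates the former content of $\bar C$ into auxiliary GCIs whose concept terms are either fresh names or proper subconcepts of $\bar C$. Since $c(\bar C,A)=0$ for the fresh $A$ and $c(\bar C,D')=0$ for every proper subconcept $D'$ of $\bar C$ (as $\bar C$ is not a proper subconcept of itself), and since the opposite side of $\tau$ is either carried over unchanged or again only populated by fresh names and proper subconcepts, the contribution of $\tau$ to the $\bar C$-count -- which equals $c(\bar C,\bar C)=1$ plus the unchanged contribution coming through the other side -- is reduced by exactly one while every other GCI is untouched.

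The routine cases are then $\bar C=\exists R.\bar D$, $\bar C=\standvu\bar D$, or $\bar C=C_1\dland C_2$ on the right (rules~(\ref{norm2:existential-right}),~(\ref{norm2:modal-right}),~(\ref{norm2:conjunction-right})) and $\bar C=\exists R.\bar D$ or a diamond-headed concept on the left (rules~(\ref{norm2:existential-left}),~(\ref{norm2:diamond-left})): here a single application removes one occurrence of $\bar C$ and the only concept terms that could still be culprits are proper subconcepts of $\bar C$ together with the untouched opposite side of $\tau$, so $\T''$ is faithful. The two genuinely compound cases are the remaining left-hand ones, in which one application spawns a fresh culprit of its own: for $\bar C=D_1\dland D_2$ rule~(\ref{norm2:conjunction-left}) yields the fresh conjunction $A\dland D_2$, which a second, commuted application of the same rule rewrites into the normal-form $A\dland A'$; for $\bar C=\standbu F$ rule~(\ref{norm2:box-left}) produces a left-hand $\standdvo A\dland\standdvi A$, which is dismantled by two applications of~(\ref{norm2:conjunction-left}) followed by two of~(\ref{norm2:diamond-left}).

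The main obstacle, and the only place where real bookkeeping is needed, is bounding these cascades and checking that every transiently introduced culprit is eliminated inside the fixed sequence, so that the culprits surviving in $\T''$ are exactly proper subconcepts of $\bar C$ together with the opposite side of $\tau$; as all of these already lie in $\SFCT$ under the faithfulness invariant maintained by the outer induction, faithfulness is preserved and the number of steps is a constant independent of $\T$. I expect the box case to be the hardest, since there one must track the single not-yet-normalised GCI through each of the roughly five intermediate rewritings and confirm that no modalised or conjunctive subterm leaks into $\T''$.
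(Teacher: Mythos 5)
Your proposal is correct and takes essentially the same route as the paper's proof: the same case split on side and outermost constructor, single-rule treatment of the routine cases, the two-application cascade (second one commuted) for a left-hand conjunction, and the five-step sequence (\ref{norm2:box-left}), twice (\ref{norm2:conjunction-left}), twice (\ref{norm2:diamond-left}) for a left-hand box, with faithfulness checked by verifying that all surviving culprits are proper subconcepts of $\bar C$ or the opposite side of $\tau$. One minor imprecision in your generic count argument: rule~(\ref{norm2:box-left}) replaces $\bar C$ not by a bare fresh name but by $\standdvo A\dland\standdvi A$, yet since $A$, $\stvo$, $\stvi$ are fresh we still have $c(\bar C,\standdvo A\dland\standdvi A)=0$, and your explicit handling of the box cascade covers this case anyway.
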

              Thus each culprit will be not only be removed eventually, but removing each single occurrence will take only a constant number of steps.
              Therefore, the number of rule applications is linear in $\sum_{C\in\SFCT}\CT(C)$.
              By \Cref{claim:norm2:size-multi}, the number of rule applications is linear in $\size{\T}$, thus linear in $\size{\K}$.

              It remains to show that the overall increase in size is at most linear.
              We do this by showing that:
              \begin{itemize}
                  \item for each single rule that is applied to nested concept terms and can potentially be applied recursively, the size increase caused by its application is constant;
                  \item for each single rule that is applied only once to a statement (i.e., rule~\ref{norm2:sharpening-empty}), the size increase is at most linear.
              \end{itemize}
              Together with the overall linear number of rule applications, it follows that the size of the resulting normalised TBox $\T'$ is at most linear in the size of the original TBox $\T$.
              \begin{description}
                  % \sts_1\cap\ldots\cap\sts_n \preceq \sts      & \goesto \set{ \sts_1\cap\sts_2\preceq \stsp,\  \stsp\cap\sts_3\cap\ldots\cap\sts_n\preceq \sts } \label{norm2:sharpening-intersection}                                     \\
                  \item[Rule~(\ref{norm2:sharpening-intersection}):]
                      \begin{align*}
                           & \pheq \size{\sts_1\cap\sts_2\preceq \stsp} + \size{\stsp\cap\sts_3\cap\ldots\cap\sts_n\preceq \sts} - \size{\sts_1\cap\ldots\cap\sts_n \preceq \sts} \\
                           & = 3 + n - (n+1)                                                                                                                                      \\
                           & = 2
                      \end{align*}
                      % \sts_1\cap\ldots\cap\sts_n \preceq \sto      & \goesto \{ \standb{\sts_1}[\top\dlsub A_1], \ldots,\  \standb{\sts_n}[\top\dlsub A_n],\  \standball[A_1\dland\ldots\dland A_n\dlsub\bot] \} \label{norm2:sharpening-empty} \\
                  \item[Rule~(\ref{norm2:sharpening-empty}):]
                      \begin{align*}
                           & \pheq \size{\standb{\sts_1}[\top\dlsub A_1]} + \ldots + \size{\standb{\sts_n}[\top\dlsub A_n]} + \size{\standball[A_1\dland\ldots\dland A_n\dlsub\bot]} - \size{\sts_1\cap\ldots\cap\sts_n \preceq \sto} \\
                           & = 4n + (n+3) - (n+1)                                                                                                                                                                                     \\
                           & = 4n + 2
                      \end{align*}
                      % \standbs[ R_1\circ\ldots\circ R_n \dlsub R ] & \goesto \{ \standbs[R_1\circ R_2\dlsub R'],\  \standbs[R'\circ R_3\circ\ldots\circ R_n\dlsub R] \} \label{norm2:role-chain}                                                \\
                  \item[Rule~(\ref{norm2:role-chain}):]
                      \begin{align*}
                           & \pheq \size{\standbs[R_1\circ R_2\dlsub R']} + \size{\standbs[R'\circ R_3\circ\ldots\circ R_n\dlsub R]} - \size{\standbs[ R_1\circ\ldots\circ R_n \dlsub R ]} \\
                           & = 4 + (n+1) - (n+2)                                                                                                                                           \\
                           & = 3
                      \end{align*}
                      % \standds [ C \dlsub D ]                & \goesto \set{ \stv\preceq\sts, \standbv [ C\dlsub D ] } \label{norm2:diamond-axiom}                             \\
                  \item[Rule~(\ref{norm2:complex-concept-assertion}):]
                      \begin{align*}
                           & \pheq \size{\standbs[A(a)]} + \size{\standbs[A\dlsub\bar C]} - \left( \size{\standbs[\bar C(a)]} \right) \\
                           & = 3 + 3 + \size{\bar C} - \left( 1 + \size{\bar C} + 1 \right)                                           \\
                           & 4
                      \end{align*}
                      % \standbs [ B \dlsub \exists R.\bar C ] & \goesto \set{ \standbs [ B \dlsub \exists R. A ], \standbs [ A \dlsub \bar C ]} \label{norm2:existential-right} \\
                  \item[Rule~(\ref{norm2:trivial-top}):] Clear.
                  \item[Rule~(\ref{norm2:trivial-bot}):] Clear.
                  \item[Rule~(\ref{norm2:existential-right}):]
                      \begin{align*}
                           & \pheq \size{\standbs [ B \dlsub \exists R. A ]} + \size{\standbs [ A \dlsub \bar C ]} - \size{\standbs [ B \dlsub \exists R.\bar C ]} \\
                           & = 1 + 1 + \size{B} + 2 + 1 + 1 + 1 + \size{\bar C} - \left( 1 + 1 + \size{B} + 1 + \size{\bar C} \right)                              \\
                           & = 4                                                                                                                                   \\
                      \end{align*}
                      % \standbs [ B \dlsub C \dland D ]       & \goesto \set{ \standbs [ B\dlsub A], \standbs [ A\dlsub C ], \standbs [ A \dlsub D ]} \label{norm2:conjunction-right}                  \\
                  \item[Rule~(\ref{norm2:conjunction-right}):]
                      \begin{align*}
                           & \pheq \size{\standbs [ B\dlsub A]} + \size{\standbs [ A\dlsub C ]} + \size{\standbs [ A\dlsub D ]} - \size{\standbs [ B \dlsub C \dland D ]} \\
                           & = (1 + 1 + \size{B} + 1) + (1 + 1 + 1 + \size{C}) + (1 + 1 + 1 + \size{D}) - \left(1 + 1 + \size{B} + 1 + \size{C} + \size{D} \right)        \\
                           & = 6
                      \end{align*}
                      % \standbs [ C \dlsub \standvu \bar D ]  & \goesto \set{ \standbs [ C \dlsub \standvu A ], \standbs [ A \dlsub \bar D ] } \label{norm2:modal-right}
                  \item[Rule~(\ref{norm2:modal-right}):]
                      \begin{align*}
                           & \pheq \size{\standbs [ C \dlsub \standvu A ]} + \size{\standbs [ A \dlsub \bar D ]} - \size{\standbs [ C \dlsub \standvu \bar D ]} \\
                           & = 1 + 1 + \size{C} + 2 + 1 + 1 + 1 + \size{\bar D} - \left( 1 + 1 + \size{C} + 1 + \size{\bar D} \right)                           \\
                           & = 4
                      \end{align*}
                      % \standbs [ \exists R.\bar C \dlsub D ] & \goesto \set{ \standbs [\bar C \dlsub A ], \standbs [ \exists R. A \dlsub D ] } \label{norm2:existential-left}  \\
                  \item[Rule~(\ref{norm2:existential-left}):]
                      \begin{align*}
                           & \pheq \size{\standbs [\bar C \dlsub A ]} + \size{\standbs [ \exists R. A \dlsub D ]} - \size{\exists R.\bar C \dlsub D} \\
                           & = 1 + 1 + \size{\bar C} + 1 + 1 + 1 + 2 + \size{D} - \left( 1 + 1 + \size{\bar C} + \size{D} \right)                    \\
                           & = 5
                      \end{align*}
                      % \standbs [ \bar C \dland D \dlsub E ]  & \goesto \set{ \standbs [\bar C \dlsub A ], \standbs [ A \dland D \dlsub E ] } \label{norm2:conjunction-left}    \\
                  \item[Rule~(\ref{norm2:conjunction-left}):]
                      \begin{align*}
                           & \pheq \size{\standbs [\bar C \dlsub A ]} + \size{\standbs [ A \dland D \dlsub E ]} - \size{\standbs [ \bar C \dland D \dlsub E ]}  \\
                           & = 1 + 1 + \size{\bar C} + 1 + 1 + 1 + 1 + 1 + \size{D} + \size{E} - \left( 1 + 1 + 1 + \size{\bar C} + \size{D} + \size{E} \right) \\
                           & = 4
                      \end{align*}
                      % \standbs [ \standdu C \dlsub D ]       & \goesto \set{ \standbu [ C \dlsub \standball A ], \standbs [ A \dlsub D ] } \label{norm2:diamond-left}          \\
                  \item[Rule~(\ref{norm2:diamond-left}):]
                      \begin{align*}
                           & \pheq \size{\standbu [ C \dlsub \standball A ]} + \size{\standbs [ A \dlsub D ]} - \size{\standbs [ \standdu C \dlsub D ]} \\
                           & = 1 + 1 + \size{C} + 2 + 1 + 1 + 1 + \size{D} - ( 1 + 1 + 1 + \size{C} + \size{D} )                                        \\
                           & = 4
                      \end{align*}
                      % \standbs [ \standbu C \dlsub D ]       & \goesto \set{ \stvo\preceq\stu, \stvi\preceq\stu, \standbu[C\dlsub A], \standbs [ \standdvo A \dland \standdvi A \dlsub D ] } \label{norm2:box-left} \\
                  \item[Rule~(\ref{norm2:box-left}):]
                      \begin{align*}
                           & \pheq \size{\stvo\preceq\stu} + \size{\stvi\preceq\stu} + \size{\standbu[C\dlsub A]} + \size{\standbs [ \standdvo A \dland \standdvi A \dlsub D ]} - \size{\standbs [ \standbu C \dlsub D ]} \\
                           & = 2 + 2 + 1 + 1 + \size{C} + 1 + 1 + 1 + 2 + 2 + \size{D} - ( 1 + 1 + 1 + \size{C} + \size{D})                                                                                               \\
                           & = 10
                      \end{align*}
              \end{description}
        \item Assume that $\K'$ is the result of exhaustively applying the normalisation rules (\ref{norm1:diamond-formula})--(\ref{norm2:box-left}) to $\K$.
              The proof is by contradiction.
              Assume that a GCI $\standvs[C\dlsub D]$ is not in normal form;
              we do a case distinction on the possible reasons for this where in each case it will turn out that at least one of the rules is applicable in contradiction to the presumption.
              % where $C$ is of the form $A$, $\exists R.A$ or $A_1 \dland A_2$ with $A,A_1,A_2\in\BCC$ and $D$ is of the form $B$,  $\exists R.B$, $\standdsp B$ or $\standbsp B$ with $B\in\BCC\cup\set{\bot}$, and $\st,\sp\in\Stands$.
              \begin{itemize}
                  \item $\standvs\neq\standbs$: Then \mbox{$\standvs=\standds$} and Rule~(\ref{norm1:diamond-formula}) is applicable.
                  \item $D$ is of the form $\exists R.E$ with \mbox{$E\notin\BCC\cup\set{\bot}$}: Then Rule~(\ref{norm2:existential-right}) is applicable.
                  \item $D$ is of the form $E_1\dland E_2$: Then Rule~(\ref{norm2:conjunction-right}) is applicable.
                  \item $D$ is of the form $\standvsp E$ with \mbox{$E\notin\BCC\cup\set{\bot}$}: Then Rule~(\ref{norm2:modal-right}) is applicable.
                  \item $D=\top$ or $C=\bot$: Then Rule~(\ref{norm2:trivial-top}) or Rule~(\ref{norm2:trivial-bot}) is applicable.
                  \item $C$ is of the form $\exists R.B$ with \mbox{$B\notin\BCC$}: Then Rule~(\ref{norm2:existential-left}) is applicable.
                  \item $C$ is of the form $B_1\dland B_2$ with \mbox{$\set{B_1,B_2}\not\subseteq\BCC$}: Then Rule~(\ref{norm2:conjunction-left}) is applicable.
                  \item $C$ is of the form $\standdsp B$ with \mbox{$B\notin\BCC$}: Then Rule~(\ref{norm2:diamond-left}) is applicable.
                  \item $C$ is of the form $\standbsp B$ with \mbox{$B\notin\BCC$}: Then Rule~(\ref{norm2:box-left}) is applicable.
              \end{itemize}
        \item We show correctness of each rule.
              Correctness of the overall normalisation process follows by induction on the number of rule applications.
              We slightly adapt the notation to denote by \mbox{$\K'=\tuple{\S,\T',\A}$} the KB that results from application \emph{of a single rule} and do a case distinction on the rules.
              In each case, assume \mbox{$\dlstruct = \tuple{\Dom,\Precs,\gamma,\sigma}$} with \mbox{$\dlstruct\models\K$} and denote \mbox{$\dlstruct'=\tuple{\Dom',\Precs',\gamma',\sigma'}$} in case the components differ from those of $\dlstruct$.
              (In most cases, we only need to show \mbox{$\dlstruct'\models\T'$} and therefore do not mention the other KB components.)
              We start with phase~1 rules.
              \begin{description}
                  %  \standds[\monomial]                                                 & \goesto \set{ \stu\preceq\sts, \standbu[\monomial] } \label{norm1:diamond-formula}                                               \\
                  \item[Rule~(\ref{norm1:diamond-formula})]
                      (a)~
                      Let \mbox{$\dlstruct\models\standds[\monomial]$}.
                      Then there is a \mbox{$\pr\in\sigma(\sts)$} such that \mbox{$\dlstruct,\pr\models\monomial$}.
                      Define $\dlstruct'$ from $\dlstruct$ by \mbox{$\sigma'(\stu)\eqdef\set{\pr}$}.
                      It follows that \mbox{$\dlstruct'\models\stu\preceq\sts$} and \mbox{$\dlstruct'\models\standbu[\monomial]$}.

                      (b)~
                      Let \mbox{$\dlstruct'\models\K'$} and consider any \mbox{$\pr\in\sigma'(\stu)$} (which exists due to standpoint-non-emptiness).
                      Clearly \mbox{$\dlstruct'\models\S'$} shows \mbox{$\pr\in\sigma'(\stu)\subseteq\sigma'(\sts)$}, whence \mbox{$\dlstruct,\pr\models\monomial$} witnesses that \mbox{$\dlstruct'\models\standds[\monomial]$}.
                      % \standbs[\literal_1\land\ldots\land\literal_n]                      & \goesto \set{ \standbs[\literal_1], \ldots, \standbs[\literal_n] } \label{norm1:box-conjunction}                                 \\
                  \item[Rule~(\ref{norm1:box-conjunction})]
                      (a)~
                      Let \mbox{$\dlstruct\models\standbs[\literal_1\land\ldots\land\literal_n]$}.
                      Then for every \mbox{$\pr\in\sigma(\sts)$} and \mbox{$1\leq i\leq n$}, we have \mbox{$\dlstruct,\pr\models\literal_i$}.
                      Thus also \mbox{$\dlstruct\models\standbs[\literal_i]$} for all \mbox{$1\leq i\leq n$}.

                      (b)~
                      Let \mbox{$\dlstruct'\models\standbs[\literal_i]$} for all \mbox{$1\leq i\leq n$}.
                      Then for all \mbox{$\pr\in\sigma'(\sts)$} and all \mbox{$1\leq i\leq n$}, we have \mbox{$\dlstruct,\pr\models\literal_i$}.
                      Therefore, for all \mbox{$\pr\in\sigma'(\sts)$} we have \mbox{$\dlstruct,\pr\models\literal_1\land\ldots\land\literal_n$} and thus \mbox{$\dlstruct\models\standbs[\literal_1\land\ldots\land\literal_n]$}.
                      % \standbs[\neg (C\dlsub D)]                                          & \goesto \{ \standbs[A\dlsub C], \standbs[A\dland D\dlsub\bot], \nonumber                                                         \\
                      %                                                                     & \qquad\, \standbs[\top \dlsub \exists R'.A ] \} \label{norm1:neg-gci}                                                            \\
                  \item[Rule~(\ref{norm1:neg-gci})]
                      (a)~
                      Let \mbox{$\dlstruct\models\standbs[\neg (C\dlsub D)]$}, that is, for all \mbox{$\pr\in\sigma(\sts)$} assume \mbox{$\dlstruct,\pr\not\models C\dlsub D$}.
                      Then, in every \mbox{$\pr\in\sigma(\sts)$} there is some \mbox{$\de_\pr\in\Dom$} such that \mbox{$\de_\pr\in\interpretgp{C}\setminus\interpretgp{D}$}.
                      Define $\dlstruct'$ from $\dlstruct$ as follows:
                      For every \mbox{$\pr\in\sigma(\sts)$} set \mbox{$\interpretpgp{A}\eqdef\set{\de_\pr}$} and \mbox{$\interpretpgp{R'}\eqdef\set{\tuple{\ve,\de_\pr}\guard \ve\in\Dom}$}.
                      Now for every \mbox{$\pr\in\sigma'(\sts)=\sigma(\sts)$}:
                      We have \mbox{$\de_\pr\in\interpretpgp{A}$} and \mbox{$\de_\pr\in\interpretpgp{C}$} whence \mbox{$\dlstruct'\models\standbs[A\dlsub C]$}.
                      We have \mbox{$\de_\pr\notin\interpretpgp{D}$} whence \mbox{$\dlstruct'\models\standbs[A\dland D\dlsub\bot]$}.
                      We have \mbox{$\ve\in\Dom$} implies \mbox{$\tuple{\ve,\de_\pr}\in\interpretpgp{R'}$} and \mbox{$\de_\pr\in\interpretpgp{A}$} whence \mbox{$\dlstruct'\models\standbs[\top\dlsub\exists R'.A]$}.

                      (b)~
                      Let \mbox{$\dlstruct'\models\standbs[A\dlsub C]\land\standbs[A\dland D\dlsub\bot]\land\standbs[\top \dlsub \exists R'.A ]$} and let \mbox{$\pr'\in\sigma'(\sts)$} be arbitrary.
                      Then by \mbox{$\dlstruct'\models\top\dlsub\exists R'.A$} and \mbox{$\Dom'\neq\emptyset$} we get \mbox{$\interpretpgpp{A}\neq\emptyset$}.
                      Thus assume \mbox{$\de'\in\interpretpgpp{A}$}.
                      By \mbox{$\dlstruct'\models\standbs[A\dlsub C]$} we get \mbox{$\de'\in\interpretpgpp{C}$}.
                      By \mbox{$\dlstruct'\models\standbs[A\dland D\dlsub\bot]$} we get \mbox{$\de'\notin\interpretpgpp{D}$}.
                      Therefore, for every \mbox{$\pr'\in\sigma'(\sts)$}, we find a \mbox{$\de'\in\interpretpgpp{C}\setminus\interpretpgpp{D}$};
                      we conclude that \mbox{$\dlstruct'\models\standbs[\neg (C\dlsub D)]$}.
                      % \standbs[\neg C(a)]                                                 & \goesto \set{ \standbs[A\dland C\dlsub\bot], \standbs[A(a)] } \label{norm1:neg-ca}                                               \\
                  \item[Rule~(\ref{norm1:neg-ca})]
                      (a)~
                      Let \mbox{$\dlstruct\models\standbs[\neg C(a)]$}, that is, for every \mbox{$\pr\in\sigma(\sts)$} assume \mbox{$a^\dlstruct\notin\interpretgp{C}$}.
                      Define $\dlstruct'$ from $\dlstruct$ as follows:
                      For every \mbox{$\pr\in\sigma(\sts)$}, set \mbox{$\interpretpgp{A}\eqdef\set{a^\dlstruct}$}.
                      Then clearly \mbox{$\dlstruct'\models\standbs[A(a)]$}, and due to the above also \mbox{$\dlstruct'\models\standbs[A\dland C\dlsub\bot]$}.

                      (b)~
                      Let \mbox{$\dlstruct'\models\standbs[A\dland C\dlsub\bot]\land\standbs[A(a)]$}.
                      Then in every \mbox{$\pr'\in\sigma'(\sts)$}, we have \mbox{$a^{\dlstruct'}\in\interpretpgpp{A}$} and \mbox{$\interpretpgpp{A}\cap\interpretpgpp{C}=\emptyset$}, that is, \mbox{$a^{\dlstruct'}\notin\interpretpgpp{C}$}.
                      % \standbs[\neg R(a,b)]                                               & \goesto \{ \standbs[A_b(b)], \standbs[A_a\dland\exists R.A_b\dlsub\bot], \nonumber                                               \\
                      %                                                                     & \qquad\, \standbs[A_a(a)] \} \label{norm1:neg-ra}                                                                                \\
                  \item[Rule~(\ref{norm1:neg-ra})]
                      (a)~
                      Let \mbox{$\dlstruct\models\standbs[\neg R(a,b)]$}, that is, for every \mbox{$\pr\in\sigma(\sts)$} assume \mbox{$\dlstruct,\pr\not\models R(a,b)$} (i.e.\ \mbox{$\tuple{a^\dlstruct,b^\dlstruct}\notin\interpretgp{R}$}).
                      Define $\dlstruct'$ from $\dlstruct$ as follows:
                      For every \mbox{$\pr\in\sigma(\sts)$}, set \mbox{$\interpretpgp{A_a}\eqdef\set{a^\dlstruct}$} and \mbox{$\interpretpgp{A_b}\eqdef\set{b^\dlstruct}$}.
                      We then have \mbox{$\dlstruct'\models\standbs[A_a(a)]$}, \mbox{$\dlstruct'\models\standbs[A_b(b)]$}, and \mbox{$a^\dlstruct\notin\interpretpgp{(\exists R.A_b)}$}.

                      (b)~
                      Let \mbox{$\dlstruct'\models\standbs[A_b(b)]\land\standbs[A_a\dland\exists R.A_b\dlsub\bot]\land\standbs[A_a(a)]$}.
                      Then for every \mbox{$\pr'\in\sigma'(\sts)$}:
                      \mbox{$b^{\dlstruct'}\in\interpretpgpp{A_b}$}, \mbox{$a^{\dlstruct'}\in\interpretpgpp{A_a}$}, and for all \mbox{$\de\in\interpretpgpp{A_a}$} we find \mbox{$\de\notin\interpretpgpp{(\exists R.A_b)}$}, thus in particular \mbox{$a^{\dlstruct'}\notin\interpretpgpp{(\exists R.A_b)}$} and \mbox{$\tuple{a^{\dlstruct'},b^{\dlstruct'}}\notin\interpretpgpp{R}$}.
                      % \hspace*{-1em}\standbs[\neg (R_1\tcirc\ldots\tcirc R_n\dlsub R)]    & \goesto \{ \standbs[\top\dlsub\exists R'.A_a], \standbs[A_a\dland\exists R.A_b\dlsub\bot] \nonumber                              \\
                      %                                                                     & \qquad\, \standbs[A_a\dlsub\exists R_1.\cdots\exists R_n.A_b] \} \label{norm1:neg-ria}                                           \\
                  \item[Rule~(\ref{norm1:neg-ria})]
                      (a)~
                      Let \mbox{$\dlstruct\models\standbs[\neg (R_1\circ\ldots\circ R_n\dlsub R)]$}, that is, in every \mbox{$\pr\in\sigma(\sts)$} there exist \mbox{$\de_0,\de_1,\ldots,\de_n\in\Dom$} such that for all \mbox{$1\leq i\leq n$}, \mbox{$\tuple{\de_{i-1},\de_i}\in\interpretgp{R_i}$}, but \mbox{$\tuple{\de_0,\de_n}\notin\interpretgp{R}$}.
                      Define $\dlstruct'$ from $\dlstruct$ as follows:
                      For every \mbox{$\pr\in\sigma(\sts)$}, set \mbox{$\interpretpgp{A_a}\eqdef\set{\de_0}$}, \mbox{$\interpretpgp{A_b}\eqdef\set{\de_n}$}, and \mbox{$\interpretpgp{R'}\eqdef\set{\tuple{\ve,\de_0}\guard\ve\in\Dom}$}.
                      It follows by construction that \mbox{$\dlstruct'\models\standbs[\top\dlsub\exists R'.A_a]$}, and \mbox{$a^{\dlstruct'}\notin\interpretpgp{(\exists R.A_b)}$} whence \mbox{$\dlstruct\models\standbs[A_a\dland\exists R.A_b\dlsub\bot]$}.
                      We can also show for each \mbox{$\pr\in\sigma'(\sts)=\sigma(\sts)$} and for all \mbox{$1\leq i\leq n$} (by induction on $i$), that we have \mbox{$\de_{i-1}\in\interpretpgp{(\exists R_i.\cdots\exists R_n.A_b)}$}, whence in particular \mbox{$\dlstruct'\models\standbs[A_a\dlsub\exists R_1.\cdots\exists R_n.A_b]$}.

                      (b)~
                      Let \mbox{$\dlstruct'\models\standbs[\top\dlsub\exists R'.A_a]\land\standbs[A_a\dland\exists R.A_b\dlsub\bot]\land\standbs[A_a\dlsub\exists R_1.\cdots\exists R_n.A_b]$} and consider any \mbox{$\pr\in\sigma'(\sts)$}.
                      From \mbox{$\Dom'\neq\emptyset$} and \mbox{$\dlstruct'\models\standbs[\top\dlsub\exists R'.A_a]$} we get \mbox{$\interpretpgp{A_a}\neq\emptyset$}, say \mbox{$\de_0\in\interpretpgp{A_a}$}.
                      Therefore, from \mbox{$\dlstruct'\models\standbs[A_a\dlsub\exists R_1.\cdots\exists R_n.A_b]$} we get that there exist \mbox{$\de_1,\ldots,\de_n$} such that \mbox{$\tuple{\de_0,\de_1}\in\interpretpgp{R_1}$}, \ldots, and \mbox{$\tuple{\de_{n-1},\de_n}\in\interpretpgp{R_n}$} where \mbox{$\de_n\in\interpretpgp{A_b}$}.
                      By \mbox{$\dlstruct'\models\standbs[A_a\dland\exists R.A_b\dlsub\bot]$} we get that \mbox{$\tuple{\de_0,\de_n}\notin\interpretpgp{R}$}.
                      Therefore, for any \mbox{$\pr\in\sigma'(\sts)$} we find $\tuple{\de_0,\de_n}\in\interpretpgp{(R_1\circ\ldots\circ R_n)}\setminus\interpretpgp{R}$.
                      % \neg(\sts_1\mathord{\,\cap}\ldots\mathord{\cap\,}\sts_n\preceq\sts) & \goesto \set{ \stu\preceq\sts_1, \ldots, \stu\preceq\sts_n, \stu\cap\sts\preceq\sto }\hspace*{-1em} \label{norm1:neg-sharpening}
                  \item[Rule~(\ref{norm1:neg-sharpening})]
                      (a)~
                      Let \mbox{$\dlstruct\models\neg(\sts_1\mathord{\,\cap}\ldots\mathord{\cap\,}\sts_n\preceq\sts)$}.
                      Then there is a \mbox{$\pr^*\in\sigma(\sts_1)\cap\ldots\cap\sigma(\sts_n)\setminus\sigma(\sts)$}.
                      Construct $\dlstruct'$ from $\dlstruct$ by setting \mbox{$\sigma'(\stu)\eqdef\set{\pr^*}$}.
                      It is clear that \mbox{$\sigma'(\stu)\subseteq\sigma'(\sts_i)$} for all \mbox{$1\leq i\leq n$};
                      furthermore \mbox{$\pr^*\notin\sigma(\sts)$} shows that \mbox{$\dlstruct'\models\stu\cap\sts\preceq\sto$}.

                      (b)~
                      Let \mbox{$\dlstruct'\models\stu\preceq\sts_1\land\ldots\land\stu\preceq\sts_n\land\stu\cap\sts\preceq\sto$}.
                      By non-emptiness of \mbox{$\sigma'(\stu)$}, there is some \mbox{$\pr_*\in\sigma'(\stu)$}.
                      Clearly \mbox{$\pr_*\in\sigma'(\sts_i)$} for all \mbox{$1\leq i\leq n$}, whence \mbox{$\pr_*\in\sigma'(\sts_1)\cap\ldots\cap\sigma'(\sts_n)$}.
                      By \mbox{$\dlstruct'\models\stu\cap\sts\preceq\sto$} we furthermore get \mbox{$\pr_*\notin\sigma'(\sts)$}.
              \end{description}
              We conclude with correctness of phase~2 rules.
              \begin{description}
                  \item[Rule~(\ref{norm2:sharpening-intersection}):]
                      (a)~
                      Straightforward, we interpret $\sigma'(\sts')\eqdef(\sigma(\sts_1)\cap\sigma(\sts_2))\cup\set{\pr_\sts}$ for some arbitrary $\pr_\sts\in\sigma(\sts)$.

                      (b)~Clear.
                  \item[Rule~(\ref{norm2:sharpening-empty}):]
                      (a)~
                      Let $\dlstruct\models\sts_1\cap\ldots\cap\sts_n\preceq\sto$.
                      It is clear that in $\dlstruct'$ we must define, for all $1\leq i\leq n$ and $\pr\in\sigma(\sts_i)$, $\interpretpgp{A_i}\eqdef\Dom$, to satisfy the axioms of the form $\standb{\sts_i}[\top\dlsub A_i]$.
                      Assume for the sake of obtaining a contradiction that $\dlstruct'\not\models\standball[A_1\dland\ldots\dland A_n\dlsub\bot]$.
                      Then there is a $\pr\in\Precs$ and some $\de\in\Dom'=\Dom$ such that $\de\in\interpretpgp{A_i}$ for all $1\leq i\leq n$.
                      But by construction this means that $\pr\in\sigma(\sts_1)\cap\ldots\cap\sigma(\sts_n)$, a contradiction to the presumption.
                      Thus $\dlstruct'\models\standball[A_1\dland\ldots\dland A_n\dlsub\bot]$.

                      (b)~
                      Let $\dlstruct'\models\standb{\sts_1}[\top\dlsub A_1]\land\ldots\land\standb{\sts_n}[\top\dlsub A_n]\land\standball[A_1\dland\ldots\dland A_n\dlsub\bot]$.
                      The proof works as in direction (a).
                  \item[Rule~(\ref{norm2:role-chain}):]
                      (a)~
                      Straightforward, we interpret $\interpretpgp{R'}\eqdef\interpretgp{R_1}\circ\interpretgp{R_2}$.

                      (b)~
                      Clear.
                  \item[Rule~(\ref{norm2:complex-concept-assertion}):]
                      (a)~
                      Let $\dlstruct\models\standbs[\bar C(a)]$.
                      In $\dlstruct'$ we define $\interpretpgp{A}\eqdef\interpretgp{\bar C}$ and $\dlstruct'\models\standbs[A(a)]\land\standbs[A\dlsub\bar C]$ follows.

                      (b)~
                      Clear.
                  \item[Rules~(\ref{norm2:trivial-top})~and~(\ref{norm2:trivial-bot}):] Clear, as any DL interpretation satisfies \mbox{$C\dlsub\top$} and \mbox{$\bot\dlsub D$}.
                  \item[Rule~(\ref{norm2:existential-right}):]
                      (a)~
                      Let \mbox{$\dlstruct\models\standbs[B\dlsub\exists R.\bar C]$}.
                      Define $\dlstruct'$ from $\dlstruct$ as follows:
                      For any \mbox{$\pr\in\sigma(\sts)$}, set \mbox{$\interpretpgp{A}\eqdef\interpretgp{\bar C}$}.
                      It follows by definition that \mbox{$\dlstruct'\models\standbs[B\dlsub\exists R.A]$} and \mbox{$\dlstruct'\models\standbs[A\dlsub\bar C]$}.

                      (b)~
                      Let \mbox{$\dlstruct'\models\T'$} and consider any \mbox{$\pr\in\sigma'(\sts)$} and $\de\in\interpretpgp{B}$.
                      By \mbox{$\dlstruct'\models\standbs[B\dlsub\exists R.A]$} we get \mbox{$\de\in\interpretpgp{(\exists R.A)}$}, whence there is an \mbox{$\ve\in\Dom'$} with \mbox{$\tuple{\de,\ve}\in\interpretpgp{r}$} and \mbox{$\ve\in\interpretpgp{A}$}.
                      By \mbox{$\dlstruct'\models\standbs[A\dlsub\bar C]$}, we get \mbox{$\ve\in\interpretpgp{\bar C}$} and ultimately \mbox{$\de\in\interpretpgp{(\exists R.\bar C)}$}.
                  \item[Rule~(\ref{norm2:conjunction-right}):]
                      (a)~
                      Let \mbox{$\dlstruct\models\standbs[B\dlsub C\dland D]$} and define $\dlstruct'$ from $\dlstruct$ by setting, for each \mbox{$\pr\in\sigma(\sts)$}, \mbox{$\interpretpgp{A}\eqdef\interpretgp{B}$}.
                      Then by definition we get \mbox{$\dlstruct'\models\standbs[B\dlsub A]$}, as well as \mbox{$\dlstruct'\models\standbs[A\dlsub C]$} and \mbox{$\dlstruct'\models\standbs[A\dlsub D]$}.

                      (b)~
                      Let $\dlstruct'\models\T'$ and consider any $\pr\in\sigma'(\sts)$ and $\de\in\interpretpgp{B}$.
                      By $\T'\models\standbs[B\dlsub A]$ we get $\de\in\interpretpgp{A}$.
                      By $\T'\models\standbs[A\dlsub C]$ and $\T'\models\standbs[A\dlsub D]$ we get $\de\in\interpretpgp{C}$ and $\de\in\interpretpgp{D}$, respectively.
                      In combination, $\de\in\interpretpgp{(C\dland D)}$.
                  \item[Rule~(\ref{norm2:modal-right}):]
                      (a)~
                      Let \mbox{$\dlstruct\models\standbs[C\dlsub \standvu\bar D]$} and define $\dlstruct'$ from $\dlstruct$ by setting, for each \mbox{$\pr\in\sigma(\sts)$}, \mbox{$\interpretpgp{A}\eqdef\interpretgp{\bar D}$}.
                      It follows directly that \mbox{$\dlstruct'\models\standbs[C\dlsub\standvu A]$} and \mbox{$\dlstruct'\models\standbs[A\dlsub\bar D]$}.

                      (b)~
                      Let $\dlstruct'\models\T'$ and consider any $\pr\in\sigma'(\sts)$ and $\de\in\interpretpgp{C}$.
                      From $\dlstruct'\models\standbs[C\dlsub\standvu A]$ we get $\de\in\interpretpgp{(\standvu A)}$.
                      Thus, for some (all) $\pr'\in\sigma'(\stu)$, we get $\de\in\interpretpgpp{A}$;
                      and in turn, by $\dlstruct'\models\standbs[A\dlsub\bar D]$, for some (all) $\pr'\in\sigma'(\stu)$, we get $\de\in\interpretpgpp{\bar D}$.
                  \item[Rule~(\ref{norm2:existential-left}):]
                      (a)~
                      Let \mbox{$\dlstruct\models\standbs[\exists R.\bar C \dlsub D]$}.
                      Define $\dlstruct'$ from $\dlstruct$ as follows:
                      For any \mbox{$\pr\in\sigma(\sts)$}, set \mbox{$\interpretpgp{A}\eqdef\interpretgp{\bar C}$}.
                      It follows from this definition that \mbox{$\dlstruct'\models\standbs[\bar C\dlsub A]$} and \mbox{$\dlstruct'\models\standbs[\exists R.A\dlsub D]$}.

                      (b)~
                      Let \mbox{$\dlstruct'\models\T'$} and consider \mbox{$\pr\in\sigma'(\sts)$} and any \mbox{$\de\in\interpretpgp{\exists R.\bar C}$}.
                      There is thus a \mbox{$\ve\in\Dom$} with \mbox{$\tuple{\de,\ve}\in\interpretpgp{r}$} and \mbox{$\ve\in\interpretpgp{\bar C}$}.
                      By \mbox{$\dlstruct'\models\standbs[\bar C\dlsub A]$} we get \mbox{$\ve\in\interpretpgp{A}$}, and in turn \mbox{$\de\in\interpretpgp{(\exists R.A)}$}.
                      By \mbox{$\dlstruct'\models\standbs[\exists R.A\dlsub D]$}, we get \mbox{$\de\in\interpretpgp{D}$}.
                  \item[Rule~(\ref{norm2:conjunction-left}):]
                      (a)~
                      Let \mbox{$\dlstruct\models\standbs[\bar C\dland D\dlsub E]$}.
                      Define $\dlstruct'$ from $\dlstruct$ as follows:
                      For any \mbox{$\pr\in\sigma(\sts)$}, set \mbox{$\interpretpgp{A}\eqdef\interpretgp{\bar C}$}.
                      It follows from this definition that \mbox{$\dlstruct'\models\standbs[\bar C\dlsub A]$} and \mbox{$\dlstruct'\models\standbs[A\dland D\dlsub E]$}.

                      (b)~
                      Let \mbox{$\dlstruct'\models\T'$} and consider \mbox{$\pr\in\sigma'(\sts)$} and \mbox{$\de\in\interpretpgp{\bar C} \cap \interpretpgp{D}$}.
                      By \mbox{$\dlstruct'\models\standbs[\bar C\dlsub A]$} we get \mbox{$\de\in\interpretpgp{A}$}, and
                      by \mbox{$\dlstruct'\models\standbs[A\dland D\dlsub E]$} and \mbox{$\de\in\interpretpgp{D}$} we get \mbox{$\de\in\interpretpgp{E}$}.
                  \item[Rule~(\ref{norm2:diamond-left}):]
                      (a)~
                      Let \mbox{$\dlstruct\models\standbs[\standdu C\dlsub D]$} and define $\dlstruct'$ from $\dlstruct$ by setting \mbox{$\interpretpgp{A} \eqdef \interpretgp{(\standdu C)}$} for every \mbox{$\pr\in\Precs$}.
                      \begin{itemize}
                          \item $\dlstruct'\models\standbu [ C \dlsub \standball A ]$:
                                Let \mbox{$\pr\in\sigma'(\stu)=\sigma(\stu)$} and \mbox{$\de\in\interpretpgp{C}=\interpretgp{C}$}.
                                Then by definition, we have \mbox{$\de\in\interpretpgp{A}$} for all \mbox{$\pr\in\Precs$}, whence \mbox{$\de\in\interpretpgp{(\standball A)}$}.
                          \item $\dlstruct'\models\standbs  [ A \dlsub D ]$:
                                Let \mbox{$\pr\in\sigma'(\sts)=\sigma(\sts)$} and \mbox{$\de\in\interpretpgp{A}$}.
                                Then by definition, \mbox{$\de\in\interpretgp{(\standdu C)}$} and by the presumption that \mbox{$\dlstruct\models\standbs[\standdu C\dlsub D]$} we get \mbox{$\de\in\interpretgp{D}=\interpretpgp{D}$}.
                      \end{itemize}
                      (b)~
                      Let \mbox{$\dlstruct'\models\T'$} and consider \mbox{$\pr\in\sigma'(\sts)$} and \mbox{$\de\in\interpretpgp{(\standdu C)}$}.
                      Then there is a \mbox{$\pr'\in\sigma'(\stu)$} such that \mbox{$\de\in\interpretpgpp{C}$}.
                      Thus \mbox{$\de\in\interpretpgpp{(\standball A)}$}, that is, \mbox{$\de\in\bigcap_{\pr''\in\Precs}\interpretpgppp{A}$}.
                      Thus in particular for \mbox{$\pr\in\sigma'(\sts)$}, we get \mbox{$\de\in\interpretpgp{A}$} and by \mbox{$\dlstruct'\models\standbs [ A \dlsub D ]$} we obtain \mbox{$\de\in\interpretpgp{D}$}.

                  \item[Rule~(\ref{norm2:box-left}):]
                      (a)~Let \mbox{$\dlstruct\models\standbs[\standbu C\dlsub D]$}.
                      % this already happens across cases above
                      %Let $\dlstruct = \tuple{\Dom, \Precs, \sigma, \gamma}$ be given.
                      We define \mbox{$\dlstruct' = \tuple{\Dom', \Precs', \sigma', \gamma'}$} as follows:
                      \begin{itemize}
                          \item $\Delta'$ consists of two copies $\delta'_0, \delta'_1$ of every function \mbox{$\de'\colon\sigma(\mathsf{u}) \times \Delta\to\Delta$}.
                          \item $\Precs'$ consists of all functions \mbox{$\pr'\colon\sigma(\stu) \times \Delta\to\Precs$} plus two extra, distinct copies of the particular function \mbox{$\pr'_\mathrm{diag} \eqdef \set{ (\pr,\de) \mapsto \pr }$}, denoted $\pr'_{\stv_0}$ and $\pr'_{\stv_1}$.
                          \item \mbox{$\sigma'(\stv_b)\eqdef\set{ \pr'_{\stv_b} }$} for each \mbox{$b\in\set{0,1}$} and, for all other $\sts$,
                                $$\sigma'(\sts) \eqdef \set{ \pr' \in \Precs'\guard \mathop{\bigcup_{\pr \in \sigma(\stu),}}_{\delta \in \Dom} \set{\pr'(\pr,\de)} \subseteq \sigma(\sts) }$$
                      \end{itemize}
                      For every \mbox{$\pr'\in\Precs'$}, the DL interpretation \mbox{$\gamma'(\pr')$} over $\Dom'$ is such that
                      \begin{align*}
                          \interpret{a}{\gamma'(\pr')} & \eqdef \set{ (\pr,\de) \mapsto \delta_a}_0 & \text{ for } a\in\Individuals
                      \end{align*}
                      where \mbox{$\de_a \in \Delta$} denotes the domain element for which \mbox{$\de_a = a^{\gamma(\pr)}$} for all \mbox{$\pr \in \Precs$}.\\
                      Further, for \mbox{$\pr'\in\Precs' \setminus \{\pr'_{\stv_0},\pr'_{\stv_1}\}$}, the DL interpretation \mbox{$\gamma'(\pr')$} over $\Dom'$ is such that
                      \begin{align*}
                          \interpret{A}{\gamma'(\pr')}                                         & \eqdef \set{ \de'_b \in\Dom' \guard \de'_b (\pi,\delta)\in\interpret{A}{\gamma(\pr'(\pi,\delta))} \text{ for all } {\pr \in \sigma(\mathsf{u})},\ {\delta \in \Delta}} & \text{ for } A\in\Concepts \\
                          \interpret{R}{\gamma'(\pr')}                                         & \eqdef \set{ \tuple{\de'_b,\ve'_b}\in\Dom'\times\Dom' \guard \tuple{\de'_b(\pi,\delta),\ve'_b(\pi,\delta)}\in\interpret{R}{\gamma(\pr'(\pi,\delta))}
                          \text{ for all } {\pr \in \sigma(\mathsf{u})},\ {\delta \in \Delta}} & \text{ for } R\in\Roles
                      \end{align*}
                      and \mbox{$\gamma'(\prvb)$} assigns as follows:
                      \begin{align*}
                          \interpret{A}{\gamma'(\prvb)}                                                                                                                                                                                  & \eqdef \left( \set{ \de'_b\in\Dom' } \cap \interpret{A}{\gamma'(\pr'_\mathrm{diag})} \right)
                          \cup  \set{ \de'_{1-b}\in\Dom' \guard \bigcup_{{\pr \in \sigma(\mathsf{u}),}\atop{\delta \in \Delta}} \set{ \de'_{1-b}(\pr,\de) } \subseteq{\bigcap_{\pr \in \sigma(\mathsf{u})}}\interpret{A}{\gamma(\pr)}  } & \text{ for } A\in\Concepts                                                                                             \\
                          \interpret{R}{\gamma'(\prvb)}                                                                                                                                                                                  & \eqdef \left( \set{ \tuple{\de'_b,\ve'_b}\in\Dom'\times\Dom' } \cap \interpret{R}{\gamma'(\pr'_\mathrm{diag})} \right) \\
                                                                                                                                                                                                                                         & \quad \cup  \set{ \tuple{\de'_{1-b},\ve'_{1-b}}\in\Dom'\times\Dom' \guard
                          \bigcup_{{\pr \in \sigma(\mathsf{u}),}\atop{\delta \in \Delta}} \set{ \tuple{\de'_{1-b}(\pr,\de), \ve'_{1-b}(\pr,\de)}} \subseteq{\bigcap_{\pr \in \sigma(\mathsf{u})}}\interpret{R}{\gamma(\pr)}  }                                                                                                                                    \\
                                                                                                                                                                                                                                         & \quad \cup  \set{ \tuple{\de'_{1-b},\ve'_b}\in\Dom'\times\Dom' \guard
                          \tuple{\zeta'_b,\ve'_b}\in \interpret{R}{\gamma'(\pr'_\mathrm{diag})} \text{ for some }  \zeta'_b \approx \de'_{1-b}
                          }
                                                                                                                                                                                                                                         & \text{ for } R\in\Roles
                      \end{align*}
                      where \mbox{$\zeta'_b \approx \de'_{1-b} $} holds iff for every \mbox{$\pr \in \sigma(\mathsf{u})$} we have
                      $
                          \textstyle\bigcup_{{\delta \in \Delta}} \{\zeta'_b(\pi,\delta)\}
                          =
                          \textstyle\bigcup_{{\pr_* \in \sigma(\mathsf{u}),}\atop{\delta_* \in \Delta}} \{\de'_{1-b}(\pi_*,\delta_*) \}
                      $.
                      Finally for the fresh concept name $A$ introduced by the rule, define \mbox{$\interpretpgpp{A} \eqdef \interpretpgpp{C}$} for all \mbox{$\pr'\in\sigma'(\stu)$}.

                      We now show that \mbox{$\dlstruct'\models\K'$}.
                      For this, we start out with some useful observations.
                      \begin{numberclaim}
                          \label{claim:norm2:back-and-forth}
                          For every concept $E$, \mbox{$\de'\in\Dom'$}, and \mbox{$\pr'\in\Precs'\setminus\set{\prvo,\prvi}$}\/:
                          \[
                              \de'\in\interpretpgpp{E}
                              \iff
                              \left(
                              \forall\pr\in\sigma(\stu):\forall\de\in\Dom:\de'(\pr,\de)\in\interpret{E}{\gamma(\pr'(\pr,\de))}
                              \right)
                          \]
                          \begin{claimproof}
                              We use structural induction on $E$.
                              \begin{itemize}
                                  \item $E=A\in\BCC$: By definition.
                                  \item $E=E_1\dland E_2$:
                                        \begin{align*}
                                             & \phiff \de'\in\interpretpgpp{(E_1\dland E_2)}                                                                                                                                            \\
                                             & \iff \de'\in\interpretpgpp{E_1}\cap\interpretpgpp{E_2}                                                                                                                                   \\
                                             & \iff \de'\in\interpretpgpp{E_1} \text{ and } \de'\in\interpretpgpp{E_2}                                                                                                                  \\
                                             & \stackrel{\text{(IH)}}{\iff} \left[\forall\pr\in\sigma(\stu):\forall\de\in\Dom:\de'(\pr,\de)\in\interpret{E_1}{\gamma(\pr'(\pr,\de))}\right]
                                            \text{ and } \left[\forall\pr\in\sigma(\stu):\forall\de\in\Dom:\de'(\pr,\de)\in\interpret{E_2}{\gamma(\pr'(\pr,\de))}\right]                                                                \\
                                             & \iff \forall\pr\in\sigma(\stu):\forall\de\in\Dom:\left[\de'(\pr,\de)\in\interpret{E_1}{\gamma(\pr'(\pr,\de))} \text{ and } \de'(\pr,\de)\in\interpret{E_2}{\gamma(\pr'(\pr,\de))}\right] \\
                                             & \iff \forall\pr\in\sigma(\stu):\forall\de\in\Dom:\de'(\pr,\de)\in\interpret{E_1}{\gamma(\pr'(\pr,\de))} \cap \interpret{E_2}{\gamma(\pr'(\pr,\de))}                                      \\
                                             & \iff \forall\pr\in\sigma(\stu):\forall\de\in\Dom:\de'(\pr,\de)\in\interpret{(E_1\dland E_2)}{\gamma(\pr'(\pr,\de))}
                                        \end{align*}
                                  \item $E=\exists R.B$:
                                        \begin{align*}
                                             & \phiff \de'\in\interpretpgpp{\exists R.B}                                                                                                                                                                                                                                                    \\
                                             & \iff \exists\ve'\in\Dom': \left[ \tuple{\de',\ve'}\in\interpretpgpp{R} \text{ and } \ve'\in\interpretpgpp{B}\right]                                                                                                                                                                          \\
                                             & \stackrel{\text{(IH)}}{\iff} \exists\ve'\in\Dom': \left[ \tuple{\de',\ve'}\in\interpretpgpp{R} \text{ and } \forall\pr\in\sigma(\stu):\forall\de\in\Dom:\ve'(\pr,\de)\in\interpret{B}{\gamma(\pr'(\pr,\de))} \right]                                                                         \\
                                             & \iff \exists\ve'\in\Dom': \left[ \left[ \forall\pr\in\sigma(\stu):\forall\de\in\Dom:\tuple{\de'(\pr,\de),\ve'(\pr,\de)}\in\interpret{R}{\gamma(\pr'(\pr,\de))} \right] \text{ and } \forall\pr\in\sigma(\stu):\forall\de\in\Dom:\ve'(\pr,\de)\in\interpret{B}{\gamma(\pr'(\pr,\de))} \right] \\
                                             & \iff \exists\ve'\in\Dom': \left[ \forall\pr\in\sigma(\stu):\forall\de\in\Dom: \left[ \tuple{\de'(\pr,\de),\ve'(\pr,\de)}\in\interpret{R}{\gamma(\pr'(\pr,\de))} \text{ and } \ve'(\pr,\de)\in\interpret{B}{\gamma(\pr'(\pr,\de))} \right] \right]                                            \\
                                             & \stackrel{\ddagger}{\iff} \forall\pr\in\sigma(\stu):\forall\de\in\Dom:\exists\ve'\in\Dom': \left[ \tuple{\de'(\pr,\de),\ve'(\pr,\de)}\in\interpret{R}{\gamma(\pr'(\pr,\de))} \text{ and } \ve'(\pr,\de)\in\interpret{B}{\gamma(\pr'(\pr,\de))} \right]                                       \\
                                             & \stackrel{\dagger}{\iff} \forall\pr\in\sigma(\stu):\forall\de\in\Dom:\exists\ve\in\Dom: \left[ \tuple{\de'(\pr,\de),\ve}\in\interpret{R}{\gamma(\pr'(\pr,\de))} \text{ and } \ve\in\interpret{B}{\gamma(\pr'(\pr,\de))} \right]                                                              \\
                                             & \iff \forall\pr\in\sigma(\stu):\forall\de\in\Dom:\de'(\pr,\de)\in\interpret{(\exists R.B)}{\gamma(\pr'(\pr,\de))}
                                        \end{align*}
                                        Note that for the ``$\impliedby$'' direction, there is always an \mbox{$\ve'\in\Dom'$} such that for all \mbox{$\pr\in\sigma(\stu)$} and \mbox{$\de\in\Dom$} we find \mbox{$\ve'(\pr,\de)=\ve$} for the appropriate \mbox{$\ve\in\Dom$} (step $\dagger$), and thus this $\ve'$ is in this sense independent of concrete \mbox{$\pr\in\sigma(\stu)$} and \mbox{$\de\in\Dom$} (step $\ddagger$).
                                  \item $E=\standdsp B$:
                                        \begin{align*}
                                             & \phiff \de'\in\interpretpgpp{(\standdsp B)}                                                                                                                   \\
                                             & \iff \exists\pr''\in\sigma'(\stsp):\de'\in\interpretpgppp{B}                                                                                                  \\
                                             & \stackrel{\text{(IH)}}{\iff} \exists\pr''\in\sigma'(\stsp):\forall\pr\in\sigma(\stu):\forall\de\in\Dom: \de'(\pr,\de)\in\interpret{B}{\gamma(\pr''(\pr,\de))} \\
                                             & \iff \forall\pr\in\sigma(\stu):\forall\de\in\Dom:\exists\pr''\in\sigma'(\stsp):\de'(\pr,\de)\in\interpret{B}{\gamma'(\pr''(\pr,\de))}                         \\
                                             & \iff \forall\pr\in\sigma(\stu):\forall\de\in\Dom:\exists\pr_\stsp\in\sigma(\stsp):\de'(\pr,\de)\in\interpret{B}{\gamma(\pr_\stsp)}                            \\
                                             & \iff \forall\pr\in\sigma(\stu):\forall\de\in\Dom:\de'(\pr,\de)\in\interpret{(\standdsp B)}{\gamma(\pr'(\pr,\de))}
                                        \end{align*}
                                        Again, for ``$\impliedby$'', we find a \mbox{$\pr''\in\sigma(\stsp)$} for which always \mbox{$\pr''(\pr,\de)=\pr_\stsp$} as desired.
                                  \item $E=\standbsp B$:
                                        \begin{align*}
                                             & \phiff \de'\in\interpretpgpp{(\standbsp B)}                                                                                                                   \\
                                             & \iff \forall\pr''\in\sigma'(\stsp):\de'\in\interpretpgpp{B}                                                                                                   \\
                                             & \stackrel{\text{(IH)}}{\iff} \forall\pr''\in\sigma'(\stsp):\forall\pr\in\sigma(\stu):\forall\de\in\Dom: \de'(\pr,\de)\in\interpret{B}{\gamma(\pr''(\pr,\de))} \\
                                             & \iff \forall\pr\in\sigma(\stu):\forall\de\in\Dom:\forall\pr''\in\sigma'(\stsp):\de'(\pr,\de)\in\interpret{B}{\gamma'(\pr''(\pr,\de))}                         \\
                                             & \stackrel{\dagger}{\iff} \forall\pr\in\sigma(\stu):\forall\de\in\Dom:\forall\pr_\stsp\in\sigma(\stsp):\de'(\pr,\de)\in\interpret{B}{\gamma(\pr_\stsp)}        \\
                                             & \iff \forall\pr\in\sigma(\stu):\forall\de\in\Dom:\de'(\pr,\de)\in\interpret{(\standbsp B)}{\gamma(\pr'(\pr,\de))}
                                        \end{align*}
                                        The equivalence marked $\dagger$ holds because for every \mbox{$\pr_\stsp\in\sigma(\stsp)$} we can find a \mbox{$\pr''\in\sigma'(\stsp)$} such that \mbox{$\pr''(\pr,\de)=\pr_\stsp$} for all \mbox{$\pr\in\sigma(\stu)$} and \mbox{$\de\in\Dom$} (e.g.\ the constant function \mbox{$\set{(\pr,\de)\mapsto \pr_\stsp}$});
                                        conversely, for every \mbox{$\pr''\in\sigma'(\stsp)$}, \mbox{$\pr\in\sigma(\stu)$}, and \mbox{$\de\in\Dom$}, we have \mbox{$\pr''(\pr,\de)\in\sigma(\stsp)$} by definition.
                              \end{itemize}
                          \end{claimproof}
                      \end{numberclaim}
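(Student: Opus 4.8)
The plan is to establish the equivalence by structural induction on the concept term $E$, ranging over exactly the forms that can occur in the normalised knowledge base: basic concepts $A\in\BCC$, conjunctions $E_1\dland E_2$, existential restrictions $\exists R.B$, and the modalised concepts $\standdsp B$ and $\standbsp B$. The property that makes the induction succeed is that $\Dom'$ and $\Precs'$ were deliberately populated with \emph{all} functions of type $\sigma(\stu)\times\Dom\to\Dom$ and $\sigma(\stu)\times\Dom\to\Precs$ respectively (in two copies each), so that any coordinatewise family of witnesses can be bundled into a single functional witness that again lives in $\Dom'$ or $\Precs'$. The restriction to $\pr'\in\Precs'\setminus\set{\prvo,\prvi}$ is essential throughout, since $\gamma'(\prvb)$ interprets the fresh symbols by the separate, more intricate clauses used to realise $\standball A$, for which the clean coordinatewise reading of membership fails.

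For the base case $E=A\in\BCC$ the equivalence holds immediately, as the interpretation of concept and role names under $\gamma'(\pr')$ was stipulated coordinatewise for precisely these $\pr'$; the case $E=\top$ is trivial and $E=\exists R.\Self$ reduces to the coordinatewise definition of the role interpretation. The conjunction case $E=E_1\dland E_2$ is routine: membership in an intersection splits conjunctively on both sides, so the two induction hypotheses combine after commuting the universal quantifier $\forall\pr\,\forall\de$ with the conjunction.

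The hard part will be the existential case $E=\exists R.B$ and, structurally analogous, the diamond case $E=\standdsp B$, each of which needs an existential quantifier commuted past the universal quantification over $\sigma(\stu)\times\Dom$. The ``$\implies$'' direction is unproblematic: a single witness $\ve'\in\Dom'$ (resp.\ a single $\pr''\in\sigma'(\stsp)$) yields, via the coordinatewise role clause and the induction hypothesis on $B$, a witness at every coordinate $(\pr,\de)$. The delicate ``$\impliedby$'' direction is where the richness of the construction is used: for each coordinate we are handed some $\ve\in\Dom$ (resp.\ some $\pr_\stsp\in\sigma(\stsp)$), and I would assemble these choices into one function $\ve'\colon\sigma(\stu)\times\Dom\to\Dom$ (resp.\ one precisification function $\pr''$), which is guaranteed to exist in $\Dom'$ (resp.\ $\Precs'$) exactly because those sets contain every such function. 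This is the step that converts the coordinatewise $\forall\exists$ into the uniform $\exists\forall$ demanded by the left-hand side.

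Finally, the box case $E=\standbsp B$ follows from the induction hypothesis together with two facts about $\sigma'(\stsp)$: in the ``$\implies$'' direction every value $\pr''(\pr,\de)$ of an admissible $\pr''\in\sigma'(\stsp)$ lies in $\sigma(\stsp)$ by the definition of $\sigma'$, and in the ``$\impliedby$'' direction each $\pr_\stsp\in\sigma(\stsp)$ is realised by the constant function $\set{(\pr,\de)\mapsto\pr_\stsp}\in\sigma'(\stsp)$, so that universal quantification over $\sigma'(\stsp)$ matches coordinatewise universal quantification over $\sigma(\stsp)$. I expect the whole argument to reduce, modulo these quantifier-shuffling steps, to bookkeeping already fixed by the definitions of $\Dom'$, $\Precs'$ and $\sigma'$; the single point requiring genuine care is the existence of the bundled witnesses in the existential and diamond cases.
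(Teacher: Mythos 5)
Your proposal is correct and follows essentially the same route as the paper's own proof: structural induction over the same case split, with the base and conjunction cases handled by the coordinatewise definitions, the existential and diamond cases resolved by bundling the coordinatewise witnesses into a single function that exists in $\Dom'$ (resp.\ $\Precs'$) precisely because those sets contain \emph{all} such functions (the paper's steps $\dagger$ and $\ddagger$), and the box case settled by the two observations about $\sigma'(\stsp)$ --- values of admissible functions lying in $\sigma(\stsp)$ and constant functions realising each $\pr_\stsp$. No substantive difference or gap.
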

                      The claim above does not consider the newly introduced precisifications.
                      They are however covered by the next claim.
                      \begin{numberclaim}
                          \label{claim:norm2:back-and-forth:v}
                          For every concept $E$, \mbox{$\de'\in\Dom'$}, and \mbox{$\pr'\in\set{\prvo,\prvi}$}\/:
                          \begin{enumerate}
                              \item\label{claim:norm2:back-and-forth:v:back} $\de'\in\interpretpgpp{E}                                                                                                        \implies \forall\pr\in\sigma(\stu):\forall\de\in\Dom:\de'(\pr,\de)\in\interpret{E}{\gamma(\pr'(\pr,\de))}$
                              \item\label{claim:norm2:back-and-forth:v:forth} $\forall\pr\in\sigma(\stu):\forall\de\in\Dom:\forall\pr_*\in\sigma(\stu): \de'(\pr,\de)\in\interpret{E}{\gamma(\pr'(\pr_*,\de))} \implies \de'\in\interpretpgpp{E}$
                              % \item $\de'\in\interpretpgpp{E}$ implies [$\de'(\pr,\de)\in\interpret{E}{\gamma(\pr'(\pr,\de))}$ for all $\pr\in\sigma(\stu)$ and $\de\in\Dom$];
                              % \item{} [$\de'(\pr,\de)\in\interpret{E}{\gamma(\pr'(\pr_*,\de))}$ for all $\pr,\pr_*\in\sigma(\stu)$ and $\de\in\Dom$] implies $\de'\in\interpretpgpp{E}$.
                          \end{enumerate}
                          \begin{claimproof}
                              We use structural induction on $E$.
                              \begin{itemize}
                                  \item $E=A\in\BCC$:
                                        Assume \mbox{$\pr'=\prvo$} (the remaining case is symmetric).
                                        For direction (\ref{claim:norm2:back-and-forth:v:back}) we have\/:
                                        \begin{align*}
                                             & \phiff \de'_b\in\interpret{A}{\gamma'(\prvo)} \text{ for some } b\in\set{0,1}                                                                                                                                                                              \\
                                             & \iff \de'_0\in\interpret{A}{\gamma'(\pr'_{\mathrm{diag}})} \text{ or } \mathop{\bigcup_{\pr\in\sigma(\stu),}}_{\de\in\Dom}\set{\de'_1(\pr,\de)}\subseteq\bigcap_{\pr\in\sigma(\stu)}\interpretgp{A}                                                        \\
                                             & \iff \forall\pr\in\sigma(\stu):\forall\de\in\Dom:\de'_0(\pr,\de)\in\interpret{A}{\gamma(\pr'_{\mathrm{diag}}(\pr,\de))} \text{ or } \forall\pr\in\sigma(\stu):\forall\de\in\Dom:\forall\pr_*\in\sigma(\stu):\de'_1(\pr,\de)\in\interpret{A}{\gamma(\pr_*)} \\
                                             & \iff \forall\pr\in\sigma(\stu):\forall\de\in\Dom:\de'_0(\pr,\de)\in\interpret{A}{\gamma(\pr)} \text{ or } \forall\pr\in\sigma(\stu):\forall\de\in\Dom:\forall\pr_*\in\sigma(\stu):\de'_1(\pr,\de)\in\interpret{A}{\gamma(\pr_*)}                           \\
                                             & \implies \forall\pr\in\sigma(\stu):\forall\de\in\Dom:\de'_0(\pr,\de)\in\interpret{A}{\gamma(\pr)} \text{ or } \forall\pr\in\sigma(\stu):\forall\de\in\Dom:\de'_1(\pr,\de)\in\interpret{A}{\gamma(\pr)}                                                     \\
                                             & \iff \forall\pr\in\sigma(\stu):\forall\de\in\Dom:\de'_b(\pr,\de)\in\interpret{A}{\gamma(\pr)} \text{ for some } b\in\set{0,1}                                                                                                                              \\
                                             & \iff \forall\pr\in\sigma(\stu):\forall\de\in\Dom:\de'_b(\pr,\de)\in\interpret{A}{\gamma(\pr'_{\mathrm{diag}}(\pr,\de))} \text{ for some } b\in\set{0,1}                                                                                                    \\
                                             & \iff \forall\pr\in\sigma(\stu):\forall\de\in\Dom:\de'_b(\pr,\de)\in\interpret{A}{\gamma(\prvo(\pr,\de))} \text{ for some } b\in\set{0,1}
                                        \end{align*}
                                        In direction (\ref{claim:norm2:back-and-forth:v:forth}), we have\/:
                                        \begin{align*}
                                             & \phiff \forall\pr\in\sigma(\stu):\forall\de\in\Dom:\forall\pr_*\in\sigma(\stu): \de'_b(\pr,\de)\in\interpret{E}{\gamma(\pr'(\pr_*,\de))} \text{ for some } b\in\set{0,1}                                                                                                             \\
                                             & \iff \forall\pr\in\sigma(\stu):\forall\de\in\Dom:\forall\pr_*\in\sigma(\stu): \de'_0(\pr,\de)\in\interpret{E}{\gamma(\pr'(\pr_*,\de))} \text{ or } \forall\pr\in\sigma(\stu):\forall\de\in\Dom:\forall\pr_*\in\sigma(\stu): \de'_1(\pr,\de)\in\interpret{E}{\gamma(\pr'(\pr_*,\de))} \\
                                             & \implies \forall\pr\in\sigma(\stu):\forall\de\in\Dom:\de'_0(\pr,\de)\in\interpret{A}{\gamma(\pr)} \text{ or } \forall\pr\in\sigma(\stu):\forall\de\in\Dom:\forall\pr_*\in\sigma(\stu):\de'_1(\pr,\de)\in\interpret{A}{\gamma(\pr_*)}
                                        \end{align*}
                                        In the last line, we continue with the equivalences from direction (\ref{claim:norm2:back-and-forth:v:back}) above the line with ``$\implies\!$''.
                                  \item $E=\exists R.B$: We show the claim for \mbox{$\pr'=\prvo$}, as the other case is symmetric.
                                        In direction (\ref{claim:norm2:back-and-forth:v:back}), let \mbox{$\de'_b\in\interpretpgpp{(\exists R.B)}$}.
                                        By definition of the semantics, there is a \mbox{$\ve'_c\in\Dom'$} such that \mbox{$\tuple{\de'_b,\ve'_c}\in\interpretpgpp{R}$} and \mbox{$\ve'_c\in\interpretpgpp{B}$}.
                                        Applying the induction hypothesis to \mbox{$\ve'_c\in\interpretpgpp{B}$} yields that
                                        \begin{gather}
                                            \label{eq:back-and-forth:v:exists:ih}
                                            \forall\pr\in\sigma(\stu):\forall\de\in\Dom:\ve'_c(\pr,\de)\in\interpret{B}{\gamma(\pr'(\pr,\de))}=\interpret{B}{\gamma(\pr)}
                                        \end{gather}
                                        By definition of $\interpretpgpp{R}$, we have three cases for \mbox{$\tuple{\de'_b,\ve'_c}\in\interpretpgpp{R}$}, in each of which we show the claim.
                                        \begin{enumerate}
                                            \item $\tuple{\de'_0,\ve'_0}\in\interpret{R}{\gamma'(\pr'_{\mathrm{diag}})}$.
                                                  That is, \mbox{$\tuple{\de'_0(\pr,\de),\ve'_0(\pr,\de)}\in\interpret{R}{\gamma(\pr)}$} for all \mbox{$\pr\in\sigma(\stu)$} and \mbox{$\de\in\Dom$}.
                                                  Together with \Cref{eq:back-and-forth:v:exists:ih}, we obtain
                                                  \[
                                                      \forall\pr\in\sigma:\forall\de\in\Dom:\left( \tuple{\de'_0(\pr,\de),\ve'_0(\pr,\de)}\in\interpret{R}{\gamma(\pr)} \text{ and } \ve'_c(\pr,\de)\in\interpret{B}{\gamma(\pr)} \right)
                                                  \]
                                                  Now since $\ve'_0$ and $\ve'_1$ are two distinct copies of the same function, this means that
                                                  \[
                                                      \forall\pr\in\sigma:\forall\de\in\Dom:\left( \tuple{\de'_0(\pr,\de),\ve'_0(\pr,\de)}\in\interpret{R}{\gamma(\pr)} \text{ and } \ve'_0(\pr,\de)\in\interpret{B}{\gamma(\pr)} \right)
                                                  \]
                                                  Thus
                                                  \[
                                                      \forall\pr\in\sigma:\forall\de\in\Dom:\exists\ve\in\Dom:\left( \tuple{\de'_0(\pr,\de),\ve}\in\interpret{R}{\gamma(\pr)} \text{ and } \ve\in\interpret{B}{\gamma(\pr)} \right)
                                                  \]
                                                  namely we set \mbox{$\ve\eqdef\ve'(\pr,\de)$} in each case.
                                                  By the definition of the semantics we get
                                                  \[
                                                      \forall\pr\in\sigma:\forall\de\in\Dom:\de'(\pr,\de)\in\interpret{(\exists R.B)}{\gamma(\pr)}
                                                  \]
                                                  which, by \mbox{$\pr'=\prvo=\pr'_{\mathrm{diag}}$} means
                                                  \[
                                                      \forall\pr\in\sigma:\forall\de\in\Dom:\de'(\pr,\de)\in\interpret{(\exists R.B)}{\gamma(\pr'(\pr,\de))}
                                                  \]
                                            \item $\forall\pr\in\sigma(\stu):\forall\de\in\Dom:\forall\pr_*\in\sigma(\stu):\tuple{\de'_1(\pr,\de),\ve'_1(\pr,\de)}\in\interpret{R}{\gamma(\pr_*)}$.
                                                  In combination with \Cref{eq:back-and-forth:v:exists:ih}, we get
                                                  \[
                                                      \forall\pr\in\sigma(\stu):\forall\de\in\Dom:\forall\pr_*\in\sigma(\stu):
                                                      \left(
                                                      \tuple{\de'_1(\pr,\de),\ve'_1(\pr,\de)}\in\interpret{R}{\gamma(\pr_*)}
                                                      \text{ and }
                                                      \ve'_c(\pr,\de)\in\interpret{B}{\gamma(\pr)}
                                                      \right)
                                                  \]
                                                  which, by virtue of $\ve'_0$ and $\ve'_1$ being the same function, means
                                                  \[
                                                      \forall\pr\in\sigma(\stu):\forall\de\in\Dom:\forall\pr_*\in\sigma(\stu):
                                                      \left(
                                                      \tuple{\de'_1(\pr,\de),\ve'_1(\pr,\de)}\in\interpret{R}{\gamma(\pr_*)}
                                                      \text{ and }
                                                      \ve'_1(\pr,\de)\in\interpret{B}{\gamma(\pr)}
                                                      \right)
                                                  \]
                                                  which in particular implies
                                                  \[
                                                      \forall\pr\in\sigma(\stu):\forall\de\in\Dom:
                                                      \left(
                                                      \tuple{\de'_1(\pr,\de),\ve'_1(\pr,\de)}\in\interpret{R}{\gamma(\pr)}
                                                      \text{ and }
                                                      \ve'_1(\pr,\de)\in\interpret{B}{\gamma(\pr)}
                                                      \right)
                                                  \]
                                                  which implies the claim as in the case above.
                                            \item there is some \mbox{$\zeta'_0\in\Dom'$} with \mbox{$\zeta'_0\approx\de'_1$} and \mbox{$\tuple{\zeta'_0,\ve'_0}\in\interpret{R}{\gamma'(\pr'_{\mathrm{diag}})}$}.
                                                  The latter property implies
                                                  \[
                                                      \forall\pr\in\sigma(\stu):\forall\de\in\Dom:\tuple{\zeta'_0(\pr,\de),\ve'_0(\pr,\de)}\in\interpret{R}{\gamma(\pr)}
                                                  \]
                                                  In turn, from \mbox{$\zeta'_0\approx\de'_0$} we obtain
                                                  \[
                                                      \forall\pr\in\sigma(\stu):\bigcup_{\de\in\Dom}\set{\zeta'_0(\pr,\de)}=\mathop{\bigcup_{\pr_*\in\sigma(\stu),}}_{\de_*\in\Dom}\set{\de'_1(\pr_*,\de_*)}
                                                  \]
                                                  This means in particular that
                                                  \[
                                                      \forall\pr,\pr_*\in\sigma(\stu):\forall\de_*\in\Dom:\exists\de_+\in\Dom:\de'_1(\pr_*,\de_*)=\zeta'_0(\pr,\de_+)
                                                  \]
                                                  Thus
                                                  \[
                                                      \forall\pr,\pr_*\in\sigma(\stu):\forall\de_*\in\Dom:\exists\de_+\in\Dom:\tuple{\de'_1(\pr_*,\de_*),\ve'_0(\pr,\de_+)}\in\interpret{R}{\gamma(\pr)}
                                                  \]
                                                  which we combine with \Cref{eq:back-and-forth:v:exists:ih} as usual to obtain
                                                  \[
                                                      \forall\pr,\pr_*\in\sigma(\stu):\forall\de_*\in\Dom:\exists\de_+\in\Dom:
                                                      \left(
                                                      \tuple{\de'_1(\pr_*,\de_*),\ve'_0(\pr,\de_+)}\in\interpret{R}{\gamma(\pr)}
                                                      \text{ and }
                                                      \ve'_c(\pr,\de_+)\in\interpret{B}{\gamma(\pr)}
                                                      \right)
                                                  \]
                                                  which in particular implies
                                                  \[
                                                      \forall\pr\in\sigma(\stu):\forall\de\in\Dom:\exists\de_+\in\Dom:
                                                      \left(
                                                      \tuple{\de'_1(\pr,\de),\ve'_0(\pr,\de_+)}\in\interpret{R}{\gamma(\pr)}
                                                      \text{ and }
                                                      \ve'_0(\pr,\de_+)\in\interpret{B}{\gamma(\pr)}
                                                      \right)
                                                  \]
                                                  whence we get
                                                  \[
                                                      \forall\pr\in\sigma(\stu):\forall\de\in\Dom:\de'_1(\pr,\de)\in\interpretgp{(\exists R.B)}
                                                  \]
                                        \end{enumerate}
                                        In direction (\ref{claim:norm2:back-and-forth:v:forth}), assume that
                                        \[
                                            \forall\pr\in\sigma:\forall\de\in\Dom:\forall\pr_*\in\sigma(\stu):\de'_0(\pr,\de)\in\interpret{(\exists R.B)}{\gamma(\pr'(\prvo,\de))}
                                        \]
                                        By the definition of the semantics, we obtain
                                        \[
                                            \forall\pr\in\sigma:\forall\de\in\Dom:\forall\pr_*\in\sigma(\stu):\exists\ve\in\Dom:
                                            \left(
                                            \tuple{\de'_0(\pr,\de),\ve}\in\interpret{R}{\gamma(\pr'(\pr_*,\de))} \text{ and } \ve_{\pr,\de}\in\interpret{B}{\gamma(\pr'(\pr_*,\de))}
                                            \right)
                                        \]
                                        Since \mbox{$\pr'=\prvo=\pr'_{\mathrm{diag}}$}, this means that
                                        \[
                                            \forall\pr\in\sigma:\forall\de\in\Dom:\forall\pr_*\in\sigma(\stu):\exists\ve_{\pr,\de}\in\Dom:
                                            \left(
                                            \tuple{\de'(\pr,\de),\ve}\in\interpret{R}{\gamma(\pr_*)} \text{ and } \ve\in\interpret{B}{\gamma(\pr_*)}
                                            \right)
                                        \]
                                        Since $\Dom'$ contains every function \mbox{$\zeta'\colon\sigma(\stu)\times\Dom\to\Dom$}, there in particular exists a function \mbox{$\ve'_0\in\Dom'$} such that \mbox{$\ve'_0(\pr,\de)=\ve_{\pr,\de}$} for all \mbox{$\pr\in\sigma(\stu)$} and \mbox{$\de\in\Dom$}.
                                        Thus
                                        \[
                                            \exists\ve'_0\in\Dom':\forall\pr\in\sigma(\stu):\forall\de\in\Dom:\forall\pr_*\in\sigma(\stu):
                                            \left(
                                            \tuple{\de'_0(\pr,\de),\ve'_0(\pr,\de)}\in\interpret{R}{\gamma(\pr_*)} \text{ and } \ve'_0(\pr,\de)\in\interpret{B}{\gamma(\pr_*)}
                                            \right)
                                        \]
                                        which means
                                        \begin{multline*}
                                            \left(
                                            \exists\ve'_0\in\Dom':\forall\pr\in\sigma(\stu):\forall\de\in\Dom:\forall\pr_*\in\sigma(\stu):
                                            \tuple{\de'_0(\pr,\de),\ve'_0(\pr,\de)}\in\interpret{R}{\gamma(\pr_*)}
                                            \right)
                                            \text{ and } \\
                                            \left(
                                            \exists\ve'_0\in\Dom':\forall\pr\in\sigma(\stu):\forall\de\in\Dom:\forall\pr_*\in\sigma(\stu):
                                            \ve'_0(\pr,\de)\in\interpret{B}{\gamma(\pr_*)}
                                            \right)
                                        \end{multline*}
                                        where we can apply the induction hypothesis to the second line to obtain
                                        \begin{multline*}
                                            \left(
                                            \exists\ve'_0\in\Dom':\forall\pr\in\sigma(\stu):\forall\de\in\Dom:\forall\pr_*\in\sigma(\stu):
                                            \tuple{\de'_0(\pr,\de),\ve'_0(\pr,\de)}\in\interpret{R}{\gamma(\pr_*)}
                                            \right)
                                            \text{ and } \\
                                            \ve'_0\in\interpret{B}{\gamma'(\prvo)}
                                        \end{multline*}
                                        which in particular means that
                                        \[
                                            \exists\ve'_0\in\Dom':
                                            \left(
                                            \left(
                                                \forall\pr\in\sigma(\stu):\forall\de\in\Dom:
                                                \tuple{\de'_0(\pr,\de),\ve'_0(\pr,\de)}\in\interpret{R}{\gamma(\pr)}
                                                \right)
                                            \text{ and } \ve'_0\in\interpret{B}{\gamma'(\prvo)}
                                            \right)
                                        \]
                                        which (employing among other things that \mbox{$\prvo(\pr,\de)=\pr$} for all \mbox{$\pr\in\sigma(\stu)$} and \mbox{$\de\in\Dom$}) implies
                                        \[
                                            \exists\ve'_0\in\Dom':\left( \tuple{\de'_0,\ve'_0}\in\interpret{R}{\gamma'(\prvo)} \text{ and } \ve'\in\interpret{B}{\gamma'(\prvo)} \right)
                                        \]
                                        that is,
                                        \(
                                        \de'_0\in\interpret{(\exists R.B)}{\gamma'(\prvo)}
                                        \).
                                  \item $E=\standdsp B$:
                                        In direction (\ref{claim:norm2:back-and-forth:v:back}), assume \mbox{$\de'\in\interpretpgpp{(\standdsp B)}$}.
                                        Thus there exists some \mbox{$\pr''\in\sigma'(\stsp)$} such that \mbox{$\de'\in\interpretpgppp{B}$}.
                                        The induction hypothesis yields
                                        \[
                                            \forall\pr\in\sigma(\stu):\forall\de\in\Dom:\de'(\pr,\de)\in\interpret{B}{\gamma(\pr''(\pr,\de))}
                                        \]
                                        Thus for all \mbox{$\pr\in\sigma(\stu)$} and \mbox{$\de\in\Dom$} there exists a \mbox{$\pr_{\pr,\de}^{\stsp}\in\sigma(\stsp)$}, namely \mbox{$\pr_{\pr,\de}^\stsp\eqdef\pr''(\pr,\de)$}, such that \mbox{$\de'(\pr,\de)\in\interpret{B}{\gamma(\pr_{\pr,\de}^\stsp)}$}.
                                        This directly yields
                                        \(
                                        \forall\pr\in\sigma(\stu):\forall\de\in\Dom:\de'(\pr,\de)\in\interpretpgpp{(\standdsp B)}
                                        \).

                                        In direction (\ref{claim:norm2:back-and-forth:v:forth}), assume
                                        \[
                                            \forall\pr\in\sigma(\stu):\forall\de\in\Dom:\forall\pr_*\in\sigma(\stu): \de'(\pr,\de)\in\interpret{(\standdsp B)}{\gamma(\prvo(\pr_*,\de))}
                                        \]
                                        This means that
                                        \[
                                            \forall\pr\in\sigma(\stu):\forall\de\in\Dom:\forall\pr_*\in\sigma(\stu):\exists\pr_{\pr_*,\de}^\stsp\in\sigma(\stsp): \de'(\pr,\de)\in\interpret{B}{\gamma(\pr_{\pr_*,\de}^\stsp)}
                                        \]
                                        Choose \mbox{$\pr'_\stsp\in\sigma'(\stsp)$} such that for all \mbox{$\pr_*\in\sigma(\stu)$} and \mbox{$\de\in\Dom$}, we have \mbox{$\pr'_\stsp(\pr_*,\de)=\pr_{\pr_*,\de}^\stsp$}.
                                        Thus we obtain
                                        \[
                                            \exists\pr'_\stsp\in\sigma'(\stsp):\forall\pr\in\sigma(\stu):\forall\de\in\Dom:\forall\pr_*\in\sigma(\stu): \de'(\pr,\de)\in\interpret{B}{\gamma(\pr'_\stsp(\pr_*,\de))}
                                        \]
                                        We apply the induction hypothesis and get
                                        \(
                                        \exists\pr'_\stsp\in\Dom':\de'\in\interpret{B}{\gamma'(\pr'_\stsp)}
                                        \), that is,
                                        \(
                                        \de'\in\interpretpgpp{(\standdsp B)}
                                        \).
                                  \item $E=\standbsp B$:
                                        In direction (\ref{claim:norm2:back-and-forth:v:back}), we have
                                        \begin{align*}
                                             & \phimplies \de'\in\interpretpgpp{(\standbsp B)}                                                                                                                  \\
                                             & \implies \forall\pr''\in\sigma'(\stsp):\de'\in\interpretpgppp{B}                                                                                                 \\
                                             & \stackrel{\text{(IH)}}{\implies} \forall\pr''\in\sigma'(\stsp):\forall\pr\in\sigma(\stu):\forall\de\in\Dom:\de'(\pr,\de)\in\interpret{B}{\gamma(\pr''(\pr,\de))} \\
                                             & \implies \forall\pr\in\sigma(\stu):\forall\de\in\Dom:\forall\pr''\in\sigma'(\stsp):\de'(\pr,\de)\in\interpret{B}{\gamma(\pr''(\pr,\de))}                         \\
                                             & \stackrel{\dagger}{\implies} \forall\pr\in\sigma(\stu):\forall\de\in\Dom:\forall\pr_\stsp\in\sigma(\stsp):\de'(\pr,\de)\in\interpret{B}{\gamma(\pr_\stsp)}       \\
                                             & \implies \forall\pr\in\sigma(\stu):\forall\de\in\Dom:\de'(\pr,\de)\in\interpret{(\standbsp B)}{\gamma(\pr)}
                                        \end{align*}
                                        and for the converse direction, (\ref{claim:norm2:back-and-forth:v:forth}), we consider
                                        \begin{align*}
                                             & \phimplies \forall\pr\in\sigma(\stu):\forall\de\in\Dom:\forall\pr_*\in\sigma(\stu): \de'(\pr,\de)\in\interpret{(\standbs B)}{\gamma(\pr'(\pr_*,\de))}                                     \\
                                             & \implies \forall\pr\in\sigma(\stu):\forall\de\in\Dom:\forall\pr_*\in\sigma(\stu):\forall\pr_\stsp\in\sigma(\stsp): \de'(\pr,\de)\in\interpret{B}{\gamma(\pr_\stsp)}                       \\
                                             & \stackrel{\dagger}{\implies} \forall\pr\in\sigma(\stu):\forall\de\in\Dom:\forall\pr_*\in\sigma(\stu):\forall\pr''\in\sigma'(\stsp): \de'(\pr,\de)\in\interpret{B}{\gamma(\pr''(\pr,\de))} \\
                                             & \implies \forall\pr''\in\sigma'(\stsp):\forall\pr\in\sigma(\stu):\forall\de\in\Dom:\forall\pr_*\in\sigma(\stu): \de'(\pr,\de)\in\interpret{B}{\gamma(\pr''(\pr,\de))}                     \\
                                             & \stackrel{\text{(IH)}}{\implies} \forall\pr''\in\sigma'(\stsp):\de'\in\interpret{B}{\gamma'(\pr'')}                                                                                       \\
                                             & \implies \de'\in\interpretpgpp{(\standbsp B)}
                                        \end{align*}
                                        Here, in both directions the implications marked $\dagger$ can be justifed just like in the proof of \Cref{claim:norm2:back-and-forth}.
                              \end{itemize}
                          \end{claimproof}
                      \end{numberclaim}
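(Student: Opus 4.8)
The plan is to prove the two implications of the claim simultaneously by structural induction on $E$, thereby completing the analysis of \Cref{claim:norm2:back-and-forth}, which deliberately excluded the two special precisifications $\prvo,\prvi$. Throughout I would fix $\pr'=\prvo$, the case $\pr'=\prvi$ being entirely symmetric. The organising observation is that $\prvb$ is a copy of the diagonal $\pr'_\mathrm{diag}$, so $\prvb(\pr,\de)=\pr$ for all $\pr\in\sigma(\stu),\de\in\Dom$; hence the conclusion of the first implication amounts to pointwise membership $\de'(\pr,\de)\in\interpretgp{E}$, whereas the hypothesis of the second amounts to the strictly stronger global membership $\de'(\pr,\de)\in\bigcap_{\pr_*\in\sigma(\stu)}\interpretgps{E}$. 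This gap between ``pointwise'' and ``global'' is precisely why no clean biconditional holds here, and it reflects the two-flavoured definition of $\gamma'(\prvb)$, which combines the diagonal-faithful copies $\de'_b$ with the globally-satisfying copies $\de'_{1-b}$.

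For the base case $E=A\in\BCC$ I would unfold the two-disjunct definition of $\interpret{A}{\gamma'(\prvo)}$: a member is either a diagonal copy $\de'_0\in\interpret{A}{\gamma'(\pr'_\mathrm{diag})}$ or an index-$1$ copy $\de'_1$ satisfying $A$ in every precisification of $\stu$. For the first implication both disjuncts give $\de'(\pr,\de)\in\interpretgp{A}$ for all $\pr,\de$ — immediately for the diagonal copy (as $\pr'_\mathrm{diag}$ is an ordinary precisification), and because global membership entails pointwise membership for the index-$1$ copy. For the second implication the strengthened global hypothesis is exactly the second disjunct for an index-$(1-b)$ copy, and, taking $\pr_*=\pr$, it supplies the first disjunct for a diagonal copy; either way $\de'\in\interpret{A}{\gamma'(\prvo)}$. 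The conjunction case $E=E_1\dland E_2$ is routine: membership distributes over the intersection and the induction hypothesis of this claim, applied to $E_1,E_2$ at the same $\prvb$, distributes through the quantifier prefix in both directions.

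The existential case $E=\exists R.B$ is the main obstacle, since $\interpret{R}{\gamma'(\prvb)}$ is defined in three parts: the diagonal pairs $\tuple{\de'_b,\ve'_b}$, the globally-satisfying pairs $\tuple{\de'_{1-b},\ve'_{1-b}}$, and the mixed pairs $\tuple{\de'_{1-b},\ve'_b}$ licensed by some $\ze'_b\approx\de'_{1-b}$ with $\tuple{\ze'_b,\ve'_b}\in\interpret{R}{\gamma'(\pr'_\mathrm{diag})}$. For the first implication I would extract a witness $\ve'_c$ with $\tuple{\de',\ve'_c}\in\interpret{R}{\gamma'(\prvb)}$ and $\ve'_c\in\interpret{B}{\gamma'(\prvb)}$, apply the induction hypothesis to $\ve'_c$ to obtain the pointwise $B$-condition, and then case-split on the three possibilities; the delicate subcase is the mixed one, where the defining property of $\approx$ must be used to realign the $R$-witnesses $\ve'_b(\pr,\de)$ against the values taken by $\de'_{1-b}$ across all $\pr_*$, yielding pointwise $\exists R.B$-membership at each $\pr$. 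For the second implication I would exploit that $\Dom'$ contains every function $\sigma(\stu)\times\Dom\to\Dom$ to assemble a single witness $\ve'$ out of the per-$(\pr,\de)$ witnesses guaranteed by the hypothesis, apply the induction hypothesis in the converse direction to obtain $\ve'\in\interpret{B}{\gamma'(\prvb)}$, and then certify $\tuple{\de',\ve'}\in\interpret{R}{\gamma'(\prvb)}$ through the diagonal clause, using $\prvb=\pr'_\mathrm{diag}$.

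Finally, the modal cases $E=\standdsp B$ and $E=\standbsp B$ reduce to \Cref{claim:norm2:back-and-forth} rather than to the induction hypothesis of this claim: as $\stsp$ occurs in $E$ it is distinct from the fresh $\stvo,\stvi$, so every $\pr''\in\sigma'(\stsp)$ is an ordinary precisification and the already-established biconditional applies to $B$ at $\pr''$. What remains is to commute the (existential, resp.\ universal) quantifier over $\pr''$ past the prefix $\forall\pr\in\sigma(\stu)\,\forall\de\in\Dom$; as in the steps marked $\dagger$ in \Cref{claim:norm2:back-and-forth}, this is achieved using the function-completeness of $\Precs'$ to build a single witness precisification — a constant function for the $\standbsp$ (universal) direction and a suitably diagonalised function for the $\standdsp$ (existential) direction. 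The genuinely new effort lies entirely in the existential case, and within it in the mixed role subcase governed by $\approx$; every other case is either a syntactic unfolding or a direct appeal to one of the two preceding claims.
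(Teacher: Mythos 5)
Your base case, conjunction remark, and existential case track the paper's own proof essentially step for step --- including the three-way case split on the definition of $\interpret{R}{\gamma'(\prvb)}$, the realignment of witnesses via $\approx$ in the mixed subcase, and the assembly of a single witness function in direction (2) from the fact that $\Dom'$ contains all functions $\sigma(\stu)\times\Dom\to\Dom$. The genuine gap is in your modal cases. You assert that, since $\stsp$ is distinct from the fresh $\stvo,\stvi$, every $\pr''\in\sigma'(\stsp)$ is an ordinary precisification, and on this basis you replace the induction hypothesis of the present claim by an appeal to \Cref{claim:norm2:back-and-forth} alone. That assertion is false: for $\sts\notin\set{\stvo,\stvi}$, membership in $\sigma'(\sts)$ is decided purely by the range condition $\bigcup_{\pr\in\sigma(\stu),\,\de\in\Dom}\set{\pr'(\pr,\de)}\subseteq\sigma(\sts)$, and the two copies $\prvo,\prvi$ of the diagonal function satisfy it whenever $\sigma(\stu)\subseteq\sigma(\stsp)$ --- in particular always for $\stsp=\stu$ and $\stsp=\star$, both of which can legitimately occur as modal subscripts inside $E$. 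Indeed, the surrounding construction explicitly relies on $\prvo,\prvi\in\sigma'(\stu)$ when verifying $\standbs[\standdvo A\dland\standdvi A\dlsub D]$. So, for instance, for $E=\standdu B$ in direction (1), the witnessing $\pr''\in\sigma'(\stu)$ may be $\prvb$ itself, where \Cref{claim:norm2:back-and-forth} is inapplicable by its own proviso, and your reduction breaks.

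The repair is exactly the reason the claim is formulated and proved by structural induction in the first place: for special $\pr''\in\set{\prvo,\prvi}$ you must fall back on the induction hypothesis of the present claim applied to $B$. In direction (1) this is immediate from part 1. In direction (2) you must additionally check that the stronger hypothesis of part 2 (with the extra quantifier $\forall\pr_*\in\sigma(\stu)$) is actually available; it is, precisely because $\prvb\in\sigma'(\stsp)$ forces $\sigma(\stu)\subseteq\sigma(\stsp)$, so the hypothesis ranging over all $\pr_\stsp\in\sigma(\stsp)$ subsumes the required global form over $\sigma(\stu)$, noting $\prvb(\pr_*,\de)=\pr_*$. With this amendment --- and keeping your quantifier-commuting arguments (constant functions for $\standbsp$, pointwise-assembled functions for $\standdsp$) for the ordinary $\pr''$ --- your argument coincides with the paper's, whose ``(IH)'' annotations in the modal cases silently cover both the ordinary precisifications (via \Cref{claim:norm2:back-and-forth}) and the special copies (via the induction hypothesis of the claim itself).
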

                      The two previous claims can be combined to obtain the following “global” property:
                      \begin{numberclaim}
                          \label{claim:norm2:forth:all}
                          For every concept $E$, \mbox{$\de'\in\Dom'$}, and \mbox{$\pr'\in\Precs'$}\/:
                          \[
                              \de'\in\interpretpgpp{E}
                              \implies
                              \left(
                              \forall\pr\in\sigma(\stu):\forall\de\in\Dom:\de'(\pr,\de)\in\interpret{E}{\gamma(\pr'(\pr,\de))}
                              \right)
                          \]
                          \begin{claimproof}
                              Follows from direction ``$\implies\!$'' of \Cref{claim:norm2:back-and-forth} and \Cref{claim:norm2:back-and-forth:v:back} of \Cref{claim:norm2:back-and-forth:v}.
                          \end{claimproof}
                      \end{numberclaim}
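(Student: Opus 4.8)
The plan is to prove the claim by a case distinction on which precisification \mbox{$\pr'\in\Precs'$} we are dealing with, exploiting that $\Precs'$ splits into the ``ordinary'' precisifications \mbox{$\Precs'\setminus\set{\prvo,\prvi}$} on the one hand and the two freshly introduced ones \mbox{$\set{\prvo,\prvi}$} on the other. For each part of this partition, one of the two preceding claims already delivers precisely the implication we need, so no new inductive argument is required here; the work is entirely one of assembling the two cases into a single uniform statement.

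Concretely, first I would observe that the conclusion to be established,
\[
\de'\in\interpretpgpp{E} \implies \left( \forall\pr\in\sigma(\stu):\forall\de\in\Dom:\de'(\pr,\de)\in\interpret{E}{\gamma(\pr'(\pr,\de))} \right),
\]
is exactly the left-to-right (``$\implies\!$'') direction of the biconditional established in \Cref{claim:norm2:back-and-forth}, which is stated for all \mbox{$\pr'\in\Precs'\setminus\set{\prvo,\prvi}$}. Hence for every such $\pr'$ the implication holds at once. Next I would treat the remaining case \mbox{$\pr'\in\set{\prvo,\prvi}$}: here the desired implication is literally \Cref{claim:norm2:back-and-forth:v:back} of \Cref{claim:norm2:back-and-forth:v}, whose antecedent \mbox{$\de'\in\interpretpgpp{E}$} and consequent \mbox{$\forall\pr\in\sigma(\stu):\forall\de\in\Dom:\de'(\pr,\de)\in\interpret{E}{\gamma(\pr'(\pr,\de))}$} coincide verbatim with the claim restricted to these two precisifications. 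Since \mbox{$\Precs' = (\Precs'\setminus\set{\prvo,\prvi})\cup\set{\prvo,\prvi}$}, combining the two cases covers every \mbox{$\pr'\in\Precs'$} and yields the claim.

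The only point requiring any care --- and thus the sole potential obstacle --- is to verify that the statements of the two invoked claims line up exactly with the conclusion wanted here: that both are quantified over the same $E$, $\de'$, and pair-arguments $(\pr,\de)$, and that $E$ is evaluated under the same interpretation \mbox{$\gamma(\pr'(\pr,\de))$}. Since the genuinely hard structural inductions over concept terms --- including the delicate \mbox{$\exists R.B$} case and the modal \mbox{$\standbsp B$} and \mbox{$\standdsp B$} cases involving the $\approx$-relation on $\Dom'$ --- have already been discharged inside those two claims, the present claim is essentially a bookkeeping step that should go through without further complication.
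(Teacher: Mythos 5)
Your proposal is correct and matches the paper's own proof exactly: the paper likewise partitions $\Precs'$ into $\Precs'\setminus\set{\prvo,\prvi}$ and $\set{\prvo,\prvi}$, invoking the ``$\implies$'' direction of \Cref{claim:norm2:back-and-forth} for the former and part~(\ref{claim:norm2:back-and-forth:v:back}) of \Cref{claim:norm2:back-and-forth:v} for the latter. Your additional check that the quantification and the interpretation $\gamma(\pr'(\pr,\de))$ line up verbatim in both cited claims is exactly the bookkeeping the paper leaves implicit.
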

                      For the new precisifications, we sometimes also have to pay attention to which copy of the functions in $\Dom'$ we are currently considering, especially for the main case we will need below, namely where both copies are contained in the same concept's extension in both precisifications.
                      \begin{numberclaim}
                          \label{claim:norm2:forth:b}
                          For every concept $E$, function \mbox{$\de'\colon\sigma(\stu)\times\Dom\to\Dom$}, and \mbox{$b\in\set{0,1}$}:
                          \begin{align}
                              \de'_b \in \interpret{E}{\prvb}    & \implies \forall\pr\in\sigma(\stu):\forall\de\in\Dom:\de'(\pr,\de)\in\interpretgp{E}                                          \label{eq:same} \\
                              \de'_{1-b}\in \interpret{E}{\prvb} & \implies \forall\pr\in\sigma(\stu):\forall\de\in\Dom:\forall\pr_*\in\sigma(\stu):\de'(\pr,\de)\in\interpret{E}{\gamma(\pr_*)} \label{eq:diff}
                          \end{align}
                          \begin{claimproof}
                              We use structural induction on $E$.
                              \begin{itemize}
                                  \item $E=A\in\BCC$: By definition.
                                  \item $E=\exists R.B$:
                                        \begin{description}
                                            \item[\normalfont(\ref{eq:same}):]
                                                Let \mbox{$\de'_b\in\interpret{(\exists R.B)}{\gamma'(\prvb)}$} for some \mbox{$b\in\set{0,1}$}.
                                                By definition of the semantics, there is an \mbox{$\ve'_c\in\Dom'$} such that \mbox{$\tuple{\de'_b,\ve'_c}\in\interpret{R}{\gamma'(\prvb)}$} and \mbox{$\ve'_c\in\interpret{B}{\gamma'(\prvb)}$}.
                                                By definition of \mbox{$\interpret{R}{\gamma'(\prvb)}$} the only possible reason for \mbox{$\tuple{\de'_b,\ve'_c}\in\interpret{R}{\gamma'(\prvb)}$} is that
                                                \mbox{$b=c$} and \mbox{$\tuple{\de'_b,\ve'_c}\in\interpret{R}{\gamma'(\pr'_{\mathrm{diag}})}$}, that is,
                                                \[
                                                    \forall\pr\in\sigma(\stu):\forall\de\in\Dom:\tuple{\de'_b(\pr,\de),\ve'_c(\pr,\de)}\in\interpret{R}{\gamma(\pr)}
                                                \]
                                                Due to \mbox{$b=c$}, we can apply the induction hypothesis of (\ref{eq:same}) to \mbox{$\ve'_c\in\interpret{B}{\gamma'(\prvb)}$} and obtain
                                                \[
                                                    \forall\pr\in\stu:\forall\de\in\Dom:\ve'_c(\pr,\de)\in\interpretgp{B}
                                                \]
                                                Togher with the above, we obtain
                                                \[
                                                    \forall\pr\in\sigma:\forall\de\in\Dom:\left( \tuple{\de'_b(\pr,\de),\ve'_c(\pr,\de)}\in\interpret{R}{\gamma(\pr)} \text{ and } \ve'_c(\pr,\de)\in\interpret{B}{\gamma(\pr)} \right)
                                                \]
                                                Thus
                                                \[
                                                    \forall\pr\in\sigma:\forall\de\in\Dom:\exists\ve_{\pr,\de}\in\Dom:\left( \tuple{\de'_b(\pr,\de),\ve_{\pr,\de}}\in\interpret{R}{\gamma(\pr)} \text{ and } \ve_{\pr,\de}\in\interpret{B}{\gamma(\pr)} \right)
                                                \]
                                                namely we set \mbox{$\ve_{\pr,\de}\eqdef\ve'_c(\pr,\de)$} in each case.
                                                By the definition of the semantics we get
                                                \[
                                                    \forall\pr\in\sigma:\forall\de\in\Dom:\de'(\pr,\de)\in\interpret{(\exists R.B)}{\gamma(\pr)}
                                                \]
                                            \item[\normalfont(\ref{eq:diff}):]
                                                Let \mbox{$\de'_{1-b}\in\interpret{(\exists R.B)}{\gamma'(\prvb)}$}.
                                                By the definition of the semantics, there exists some \mbox{$\ve'_c\in\Dom'$} such that \mbox{$\tuple{\de'_{1-b},\ve'_c}\in\interpret{R}{\gamma'(\prvb)}$} and \mbox{$\ve'_c\in\interpret{B}{\gamma'(\prvb)}$}.
                                                According to the definition of $\interpret{R}{\gamma'(\prvb)}$, there are two possible cases:
                                                \begin{enumerate}
                                                    \item \mbox{$c=1-b$} and \mbox{$\forall\pr\in\sigma(\stu):\forall\de\in\Dom:\forall\pr_*\in\sigma(\stu):\tuple{\de'_{1-b}(\pr,\de),\ve'_{c}(\pr,\de)}\in\interpretgps{R}$}.
                                                          We can apply the induction hypothesis of (\ref{eq:diff}) to the fact that \mbox{$\ve'_c=\ve'_{1-b}\in\interpret{B}{\gamma'(\prvb)}$} and obtain
                                                          \[
                                                              \forall\pr\in\stu:\forall\de\in\Dom:\forall\pr_*\in\sigma(\stu):\ve'_c(\pr,\de)\in\interpretgps{B}
                                                          \]
                                                          In combination with the presumption of this case, we obtain
                                                          \[
                                                              \forall\pr\in\sigma(\stu):\forall\de\in\Dom:\forall\pr_*\in\sigma(\stu):
                                                              \left(
                                                              \tuple{\de'_b(\pr,\de),\ve'_c(\pr,\de)}\in\interpretgps{R}
                                                              \text{ and }
                                                              \ve'_c(\pr,\de)\in\interpretgps{B}
                                                              \right)
                                                          \]
                                                          thus proving the claim.
                                                    \item $c=b$ and there is some \mbox{$\ze'_b\in\Dom'$} with \mbox{$\ze'_b\approx\de'_{1-b}$} and \mbox{$\tuple{\ze'_b,\ve'_c}\in\interpret{R}{\gamma'(\pr'_{\mathrm{diag}})}$}.
                                                          The latter property implies
                                                          \[
                                                              \forall\pr_*\in\sigma(\stu):\forall\de\in\Dom:\tuple{\ze'_b(\pr_*,\de),\ve'_c(\pr_*,\de)}\in\interpret{R}{\gamma(\pr_*)}
                                                          \]
                                                          In turn, from \mbox{$\ze'_b\approx\de'_{1-b}$} we obtain
                                                          \[
                                                              \forall\pr_*\in\sigma(\stu):\bigcup_{\de\in\Dom}\set{\ze'_b(\pr_*,\de)}=\mathop{\bigcup_{\pr\in\sigma(\stu),}}_{\de\in\Dom}\set{\de'_{1-b}(\pr,\de)}
                                                          \]
                                                          This means in particular that
                                                          \[
                                                              \forall\pr\in\sigma(\stu):\forall\de\in\Dom:\forall\pr_*\in\sigma(\stu):\exists\de_+\in\Dom:\ze'_b(\pr_*,\de_+)=\de'_{1-b}(\pr,\de)
                                                          \]
                                                          whence
                                                          \[
                                                              \forall\pr\in\sigma(\stu):\forall\de\in\Dom:\forall\pr_*\in\sigma(\stu):\exists\de_+\in\Dom:\tuple{\de'_{1-b}(\pr,\de),\ve'_c(\pr_*,\de_+)}\in\interpretgps{R}
                                                          \]
                                                          Now \mbox{$b=c$} means that we can apply the induction hypothesis of (\ref{eq:same}) to \mbox{$\ve'_c=\ve'_b\in\interpret{B}{\gamma'(\prvb)}$}, from which we get
                                                          \[
                                                              \forall\pr_*\in\sigma(\stu):\forall\de\in\Dom:\ve'_b(\pr_*,\de)\in\interpretgps{B}
                                                          \]
                                                          which we combine with the line above as usual to obtain
                                                          \[
                                                              \forall\pr,\pr_*\in\sigma(\stu):\forall\de\in\Dom:\exists\de_+\in\Dom:
                                                              \left(
                                                              \tuple{\de'_{1-b}(\pr,\de),\ve'_c(\pr_*,\de_+)}\in\interpretgps{R}
                                                              \text{ and }
                                                              \ve'_c(\pr_*,\de_+)\in\interpretgps{B}
                                                              \right)
                                                          \]
                                                          which in particular implies
                                                          \[
                                                              \forall\pr\in\sigma(\stu):\forall\de\in\Dom:\forall\pr_*\in\sigma(\stu):\exists\ve\in\Dom:
                                                              \left(
                                                              \tuple{\de'_{1-b}(\pr,\de),\ve}\in\interpretgps{R}
                                                              \text{ and }
                                                              \ve\in\interpretgps{B}
                                                              \right)
                                                          \]
                                                          namely in each case \mbox{$\ve\eqdef\ve'_c(\pr_*,\de_+)$},
                                                          whence we get
                                                          \[
                                                              \forall\pr\in\sigma(\stu):\forall\de\in\Dom:\forall\pr_*\in\sigma(\stu):\de'_{1-b}(\pr,\de)\in\interpret{(\exists R.B)}{\gamma(\pr_*)}
                                                          \]
                                                          thus proving the claim.
                                                \end{enumerate}
                                        \end{description}
                                  \item $E=\standdsp B$:
                                        Let \mbox{$\pr'\in\set{\prvo,\prvi}$} and consider \mbox{$\de'_b\in\interpretpgpp{(\standdsp B)}$}.
                                        Then there is some \mbox{$\pr''\in\sigma'(\stsp)$} such that \mbox{$\de'_{1-b}\in\interpretpgppp{B}$}.
                                        By \Cref{claim:norm2:forth:all} we obtain
                                        \[
                                            \forall\pr\in\sigma(\stu):\forall\de\in\Dom:
                                            \de'_{b}(\pr,\de)\in\interpret{B}{\gamma(\pr''(\pr,\de))}
                                        \]
                                        whence for all \mbox{$\pr\in\sigma(\stu)$} and \mbox{$\de\in\Dom$} we can set \mbox{$\pr_{\pr,\de}\eqdef\pr''(\pr,\de)$} to rewrite this into
                                        \[
                                            \forall\pr\in\sigma(\stu):\forall\de\in\Dom:\exists\pr_{\pr,\de}\in\sigma(\stsp):
                                            \de'_{b}(\pr,\de)\in\interpret{B}{\gamma(\pr_{\pr,\de})}
                                        \]
                                        and we conclude the claim:
                                        \[
                                            \forall\pr\in\sigma(\stu):\forall\de\in\Dom:\forall\pr_*\in\sigma(\stu):
                                            \de'_{b}(\pr,\de)\in\interpretgps{(\standdsp B)}
                                        \]
                                  \item $E=\standbsp B$:
                                        Let \mbox{$\pr'\in\set{\prvo,\prvi}$} and consider \mbox{$\de'_b\in\interpretpgpp{(\standbsp B)}$}.
                                        Then for all \mbox{$\pr''\in\sigma'(\stsp)$}, we have \mbox{$\de'_b\in\interpretpgppp{B}$}, which by \Cref{claim:norm2:forth:all} means that
                                        \[
                                            \forall\pr''\in\sigma'(\stsp):\forall\pr\in\sigma(\stu):\forall\de\in\Dom:\de'_b(\pr,\de)\in\interpret{B}{\gamma(\pr''(\pr,\de))}
                                        \]
                                        Let \mbox{$\pr_\stsp\in\sigma(\stsp)$} be arbitrary.
                                        Consider the function \mbox{$\pr'_\stsp = \set{ (\pr,\de)\mapsto \pr_\stsp }$}, for which we have \mbox{$\pr'_\stsp\in\sigma'(\stsp)$} by definition
                                        and thus also
                                        \[
                                            \forall\pr\in\sigma(\stu):\forall\de\in\Dom:\de'_b(\pr,\de)\in\interpret{B}{\gamma(\pr'_\stsp(\pr,\de))}=\interpret{B}{\gamma(\pr_\stsp)}
                                        \]
                                        Since $\pr_\stsp$ was arbitrarily chosen, we get
                                        \[
                                            \forall\pr\in\sigma(\stu):\forall\de\in\Dom:
                                            \forall\pr_\stsp\in\sigma(\stsp):
                                            \de'_b(\pr,\de)\in\interpret{B}{\gamma(\pr_\stsp)}
                                        \]
                                        which shows the claim.
                              \end{itemize}
                          \end{claimproof}
                      \end{numberclaim}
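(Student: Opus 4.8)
The plan is to prove the claim by structural induction on the concept $E$, establishing the two implications (\ref{eq:same}) and (\ref{eq:diff}) \emph{simultaneously}, since the inductive step for existential restrictions feeds one of them into the other. By the symmetry of the two copies $\prvo$ and $\prvi$ built into $\dlstruct'$, it suffices to argue for an arbitrary but fixed $b$ and treat $\prvb$ uniformly. The guiding intuition is that the copy whose index \emph{matches} the precisification (the $\de'_b$ case) behaves ``diagonally'', so that one recovers membership in $\interpretgp{E}$ at the very precisification $\pr$ indexing each coordinate; whereas the \emph{mismatched} copy (the $\de'_{1-b}$ case) behaves ``uniformly'', forcing membership to hold at \emph{every} $\pr_* \in \sigma(\stu)$ simultaneously.

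For the base case $E = A \in \BCC$, I would read off the defining union for $\interpret{A}{\gamma'(\prvb)}$: its first summand is the copy-$b$ part cut out by the diagonal interpretation, and its second summand is the copy-$(1-b)$ part constrained to lie in $\bigcap_{\pr \in \sigma(\stu)} \interpretgp{A}$. Hence $\de'_b \in \interpret{A}{\gamma'(\prvb)}$ can only stem from the first summand, giving $\de'(\pr,\de) \in \interpretgp{A}$ for all $\pr, \de$, i.e.\ (\ref{eq:same}); and $\de'_{1-b}$ can only stem from the second, giving $\de'(\pr,\de) \in \interpret{A}{\gamma(\pr_*)}$ for all $\pr_*$, i.e.\ (\ref{eq:diff}). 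The conjunction case $E = E_1 \dland E_2$ distributes both implications over the conjuncts and reuses the hypotheses componentwise. The modal cases $E = \standdsp B$ and $E = \standbsp B$ I would dispatch through the already-established global forth property (\Cref{claim:norm2:forth:all}): from membership at $\prvb$ one picks a witnessing (resp.\ considers every) $\pr'' \in \sigma'(\stsp)$, applies \Cref{claim:norm2:forth:all} to $B$ to push membership down to $\interpret{B}{\gamma(\pr''(\pr,\de))}$, and then uses both that $\pr''(\pr,\de) \in \sigma(\stsp)$ and that constant maps $\set{(\pr,\de) \mapsto \pr_{\stsp}}$ belong to $\sigma'(\stsp)$ to re-quantify over $\sigma(\stsp)$ exactly as the two target statements demand.

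The crux, and what I expect to be the main obstacle, is the existential case $E = \exists R.B$. Here I would first unfold $\interpret{R}{\gamma'(\prvb)}$ into its three constituent sets: (i) the diagonal copy-$b$ edges, (ii) the uniform copy-$(1-b)$ edges lying in $\bigcap_{\pr} \interpret{R}{\gamma(\pr)}$, and (iii) the ``cross'' edges $\tuple{\de'_{1-b}, \ve'_b}$ licensed by some $\zeta'_b \approx \de'_{1-b}$ with $\tuple{\zeta'_b, \ve'_b}$ diagonal. For (\ref{eq:same}), a witnessing $R$-edge out of a copy-$b$ source must be of type (i), which pins the target's copy index to $b$, so the induction hypothesis (\ref{eq:same}) for $B$ applies to $\ve'_b$ and combines cleanly. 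For (\ref{eq:diff}), a witnessing edge out of a copy-$(1-b)$ source is either of type (ii) — handled by the hypothesis (\ref{eq:diff}) for $B$ and combined uniformly over all $\pr_*$ — or of type (iii), the delicate sub-case. There one must exploit the definition of $\approx$, namely that $\zeta'_b$ and $\de'_{1-b}$ have coinciding coordinate-image sets over $\sigma(\stu) \times \Dom$, to reindex the diagonal edge of $\zeta'_b$ so that for every $\pr, \de, \pr_*$ a suitable $\de_+$ furnishing an $R$-successor of $\de'_{1-b}(\pr,\de)$ at $\gamma(\pr_*)$ is produced; combining this with hypothesis (\ref{eq:same}) applied to $\ve'_b \in \interpret{B}{\gamma'(\prvb)}$ then yields the required existential witness at each $\gamma(\pr_*)$. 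Keeping these per-coordinate witnesses coherent while the ``$\forall \pr_*$'' quantifier ranges freely is the genuinely subtle point, and it is precisely here that the $\approx$ relation — and hence the ``two distinct copies'' design of $\Dom'$ and $\Precs'$ — is actually used.
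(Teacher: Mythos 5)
Your proposal follows essentially the same route as the paper's proof: structural induction establishing (\ref{eq:same}) and (\ref{eq:diff}) simultaneously, the base case read off directly from the two summands defining $\interpret{A}{\gamma'(\prvb)}$, the modal cases dispatched via \Cref{claim:norm2:forth:all} together with constant functions to re-quantify over $\sigma(\stsp)$, and the existential case split into the same three edge types of $\interpret{R}{\gamma'(\prvb)}$ with the $\approx$-reindexing feeding induction hypothesis (\ref{eq:same}) into the cross-edge sub-case. The only cosmetic differences are your extra (harmless) conjunction case, which the paper omits, and a slightly loose paraphrase of $\approx$ (the definition equates each slice-wise image of $\zeta'_b$ with the \emph{total} image of $\de'_{1-b}$, not two images of the same shape), though the consequence you extract from it is exactly the one the paper uses.
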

                      We now continue the main proof, showing that \mbox{$\dlstruct'\models\K'$}.
                      \begin{itemize}
                          \item \mbox{$\dlstruct'\models\standbu[C\dlsub A]$}: By definition.
                          \item \mbox{$\dlstruct'\models\standbs[\standdvo A \dland \standdvi A \dlsub D ]$}:
                                Let \mbox{$\pr'\in\sigma'(\sts)$} and \mbox{$\de'_b\in\interpretpgpp{(\standdvo A\dland \standdvi A)}$} for some \mbox{$b\in\set{0,1}$}.
                                We have to show \mbox{$\de'_b\in\interpretpgpp{D}$}.
                                By definition of $\dlstruct'$, we conclude that \mbox{$\de'_b\in\interpret{A}{\gamma'(\prvo)} \cap \interpret{A}{\gamma'(\prvi)}$};
                                we furthermore have \mbox{$\prvo,\prvi\in\sigma'(\stu)$} and
                                \mbox{$\interpret{A}{\gamma'(\prvy)}=\interpret{C}{\gamma'(\prvy)}$} for all \mbox{$y\in\set{0,1}$};
                                thus it follows that \mbox{$\de'_b\in\interpret{C}{\gamma'(\prvo)} \cap \interpret{C}{\gamma'(\prvi)}$}.
                                So in any case we obtain that $\de'_b\in\interpret{C}{\gamma'(\pr_{\stv_{1-b}})}$, whence by \Cref{claim:norm2:forth:b} we get
                                \[
                                    \forall\pr\in\sigma(\stu):\forall\de\in\Dom:\forall\pr_*\in\sigma(\stu):
                                    \de'(\pr,\de)\in\interpret{C}{\gamma(\pr_*)}
                                \]
                                Hence, for all \mbox{$\pr\in\sigma(\stu)$} and \mbox{$\de\in\Dom$} we have \mbox{$\de'(\pr,\de)\in\interpretgp{(\standbu C)}$}.
                                From \mbox{$\dlstruct\models\standbs[\standbu C\dlsub D]$} it therefore follows that \mbox{$\de'(\pr_*,\de)\in\interpretgp{D}$} for all \mbox{$\pr_*\in\sigma(\stu)$}, \mbox{$\de\in\Dom$}, and \mbox{$\pr\in\sigma(\sts)$}.
                                Now by \mbox{$\pr'\in\sigma'(\sts)$} we obtain that for any \mbox{$\pr_*\in\sigma(\stu)$} and \mbox{$\de\in\Dom$} we get \mbox{$\pr'(\pr_*,\de)\in\sigma(\sts)$}.
                                Thus it follows that \mbox{$\de'(\pr,\de)\in\interpret{D}{\gamma(\pr'(\pr_*,\de))}$} for all \mbox{$\pr,\pr_*\in\sigma(\stu)$} and \mbox{$\de\in\Dom$}.
                                By \Cref{claim:norm2:back-and-forth}/\Cref{claim:norm2:back-and-forth:v} above, we get \mbox{$\de'\in\interpretpgpp{D}$}.
                          \item \mbox{$\dlstruct'\models\tau$} for all axioms \mbox{$\tau\in\K'\setminus\set{\standbs[C\dlsub A],\standbs[\standdvo A \dland \standdvi A \dlsub D]}$}:
                                We do a case distinction on the possible forms of axioms.
                                \begin{itemize}
                                    \item \mbox{$\tau=\standbsp[E\dlsub F]$}.
                                          By presumption \mbox{$\dlstruct\models\K$} we get that for every \mbox{$\pr\in\sigma(\sts')$} we have \mbox{$\interpretgp{E}\subseteq\interpretgp{F}$}.
                                          Let \mbox{$\pr'\in\sigma'(\sts')$} and \mbox{$\de'\in\interpretpgpp{E}$}.
                                          From \Cref{claim:norm2:forth:all} we get that \mbox{$\de'(\pr,\de)\in\interpret{E}{\gamma(\pr'(\pr,\de))}$} for all \mbox{$\pr\in\sigma(\stu)$} and \mbox{$\de\in\Dom$}.
                                          From \mbox{$\pr'\in\sigma'(\sts')$} we get that \mbox{$\pr'(\pr_*,\de)\in\sigma(\sts')$} for all \mbox{$\pr_*\in\sigma(\stu)$} and \mbox{$\de\in\Dom$}.
                                          It follows that \mbox{$\interpret{E}{\gamma(\pr'(\pr_*,\de))}\subseteq\interpret{F}{\gamma(\pr'(\pr_*,\de))}$} for all \mbox{$\pr_*\in\sigma(\stu)$} and \mbox{$\de\in\Dom$}.
                                          In particular, \mbox{$\de'(\pr,\de)\in\interpret{F}{\gamma(\pr'(\pr_*,\de))}$} for all \mbox{$\pr,\pr_*\in\sigma(\stu)$} and \mbox{$\de\in\Dom$}.
                                          Therefore, \Cref{claim:norm2:back-and-forth}/\Cref{claim:norm2:back-and-forth:v} yields \mbox{$\de'\in\interpretpgpp{F}$}.
                                    \item \mbox{$\tau=\standbsp[R_1\circ\ldots\circ R_n\dlsub R]$}.
                                          Let \mbox{$\pr'\in\sigma'(\stsp)$} and \mbox{$\de_0',\de_1',\ldots,\de_n'\in\Dom'$} with \mbox{$\tuple{\de_{i-1}',\de_{i}'}\in\interpretpgpp{R_i}$} for all \mbox{$1\leq i\leq n$}.
                                          Then by definition, for all \mbox{$1\leq i\leq n$} we get \mbox{$\tuple{\de_{i-1}'(\pr,\de),\de_{i}'(\pr,\de)}\in\interpretgpp{R_i}$} for all \mbox{$\pr\in\sigma(\stu)$} and \mbox{$\de\in\Dom$}.
                                          Since by presumption \mbox{$\dlstruct\models\standbsp[R_1\circ\ldots\circ R_n\dlsub R]$}, we get \mbox{$\tuple{\de_0'(\pr,\de),\de_n(\pr,\de)}\in\interpret{R}{\gamma'(\pr'(\pr,\de))}$} for all \mbox{$\pr'\in\sigma(\stsp)$}, \mbox{$\pr\in\sigma(\stu)$}, and \mbox{$\de\in\Dom$} (since \mbox{$\pr'\in\sigma'(\stsp)$} implies that \mbox{$\pr'(\pr_*,\de)\in\sigma(\sts')$} for all \mbox{$\pr_*\in\sigma(\stu)$} and \mbox{$\de\in\Dom$}).
                                          Thus overall, we get \mbox{$\tuple{\de_0',\de_n'}\in\interpretpgpp{R}$};
                                          since \mbox{$\pr'\in\sigma'(\stsp)$} was arbitrary, we get \mbox{$\dlstruct'\models R_1\circ\ldots\circ R_n\dlsub R$}.
                                \end{itemize}
                          \item \mbox{$\dlstruct'\models\alpha$} for all assertions $\alpha\in\K$:
                                We do a case distinction on the possible forms of assertions.
                                \begin{itemize}
                                    \item $\alpha=\standbsp[B(a)]$:
                                          It follows that \mbox{$B\in\BCC$} since $\K$ is in normal form.
                                          From \mbox{$\dlstruct\models\A$} we get that for any \mbox{$\pr\in\sigma(\sts')$} we have \mbox{$\interpretgp{a}=\de_a\in\interpretgp{B}$}.
                                          Let \mbox{$\pr'\in\sigma'(\sts')$}.
                                          By definition, we get that \mbox{$\interpretpgpp{a}(\pr,\de)=\de_a$} for all \mbox{$\de\in\Dom$} and all \mbox{$\pr\in\Precs$} and in particular for all \mbox{$\pr\in\sigma(\stu)$}.
                                          From \mbox{$\pr'\in\sigma'(\sts')$} we get that for any \mbox{$\pr\in\sigma(\stu)$} and \mbox{$\de\in\Dom$} we find \mbox{$\pr'(\pr,\de)\in\sigma(\sts')$}.
                                          It thus follows that \mbox{$\interpretpgpp{a}(\pr,\de)=\de_a\in\interpret{B}{\gamma(\pr'(\pr,\de))}$} for all \mbox{$\pr\in\sigma(\stu)$} and \mbox{$\de\in\Dom$}.
                                          Consequently, \mbox{$\interpretpgpp{a}\in\interpretpgpp{B}$}.
                                    \item $\alpha=\standbsp[R(a,c)]$:
                                          From \mbox{$\dlstruct\models\A$} it follows that for every \mbox{$\pr\in\sigma(\sts')$} we have \mbox{$\tuple{\interpretgp{a},\interpretgp{c}}\in\interpretgp{R}$}.
                                          Let \mbox{$\pr'\in\sigma'(\sts')$}.
                                          As above, we get \mbox{$\interpretpgpp{a}(\pr,\de)=\de_a$} and \mbox{$\interpretpgpp{c}(\pr,\de)=\de_c$} for every \mbox{$\de\in\Dom$} and \mbox{$\pr\in\sigma(\stu)$}.
                                          From \mbox{$\pr'\in\sigma'(\sts')$}, we get \mbox{$\pr'(\pr,\de)\in\sigma(\sts')$} for every \mbox{$\de\in\Dom$} and \mbox{$\pr\in\sigma(\stu)$}.
                                          Thus \mbox{$\tuple{\interpretpgpp{a}(\pr,\de),\interpretpgpp{a}(\pr,\de)}\in\interpret{R}{\gamma(\pr'(\pr,\de))}$} for all \mbox{$\de\in\Dom$} and \mbox{$\pr\in\sigma(\stu)$} whence
                                          \mbox{$\tuple{\interpretpgpp{a},\interpretpgpp{a}}\in\interpretpgpp{R}$}.
                                \end{itemize}
                          \item \mbox{$\dlstruct'\models\zeta$} for all sharpening statements \mbox{$\zeta\in\K$}:
                                Let \mbox{$\sts_1\cap\ldots\cap\sts_n\preceq\sts'\in\K$} and \mbox{$\pr'\in\sigma'(\sts_1)\cap\ldots\cap\sigma'(\sts_n)$}.
                                Then by definition of $\sigma'$, we get \mbox{$\mathop{\bigcup_{\pr\in\sigma(\stu),}}_{\de\in\Dom}\set{\pr'(\pr,\de)}\subseteq\sigma(\sts_i)$} for all \mbox{$1\leq i\leq n$}.
                                Since \mbox{$\dlstruct\models\sts_1\cap\ldots\cap\sts_n\preceq\stsp$}, we get that for all \mbox{$\pr\in\sigma(\stu)$} and \mbox{$\de\in\Dom$} we have \mbox{$\pr'(\pr,\de)\in\sigma(\stsp)$}.
                                By definition of $\sigma'$ this means that \mbox{$\pr'\in\sigma'(\stsp)$}.
                      \end{itemize}
                      (b)~
                      Let \mbox{$\dlstruct'\models\T'$} and consider \mbox{$\pr'\in\sigma'(\sts)$} and \mbox{$\de'\in\interpretpgpp{(\standbu C)}$}.
                      Since \mbox{$\prvo,\prvi\in\sigma'(\stu)$}, we thus get \mbox{$\de'\in\interpret{C}{\gamma'(\prvo)} \cap \interpret{C}{\gamma'(\prvi)}$}.
                      By \mbox{$\dlstruct'\models\standbu[C\dlsub A]$}, we obtain that \mbox{$\de'\in\interpret{A}{\gamma'(\prvo)} \cap \interpret{A}{\gamma'(\prvi)}$};
                      that is, \mbox{$\de'\in\interpretpgpp{(\standdvo A\dland \standdvi A)}$}.
                      Since also \mbox{$\dlstruct'\models\standbs[\standdvo A \dland \standdvi A \dlsub D]$}, it follows that \mbox{$\de'\in\interpretpgpp{D}$}.
              \end{description}
    \end{enumerate}
    %This concludes the proof of \Cref{lem:normalisation}.
\end{longproof}

\subsubsection{Reasoning problems and reductions}

The deduction calculus we are going to present in the next section decides the fundamental reasoning task of satisfiability for $\SELp$:

\newcommand{\probbox}[1]{\bigskip\centerline{\framebox{\parbox{0.95\columnwidth}{#1}}}\bigskip}

\probbox{
    \textbf{Problem:} $\SELp\ \textsc{Knowledge base satisfiability}$\\
    \textbf{Input:} $\SELp$ knowledge base $\K$.\\
    \textbf{Output:} \textsc{yes}, if $\K$ has a model, \textsc{no} otherwise.
}

%\noindent\textsc{Knowledge base satisfiability:}
%\textsl{Given a $\SELp$~KB $\K$, is there a DL standpoint structure $\dlstruct$ such that \mbox{$\dlstruct\models\K$}?}

This reasoning task is useful in itself, e.g.\ for knowledge engineers to check for grave modelling errors that turn the specified knowledge base globally inconsistent. From a user's  perspective, however, a more application-relevant problem is that of statement entailment, as it allows to ``query'' the specified knowledge for consequences:

\probbox{
    \textbf{Problem:} $\SELp\ \textsc{Statement entailment}$\\
    \textbf{Input:} $\SELp$ knowledge base $\K$,~~ $\SELp$ statement $\phi$.\\
    \textbf{Output:} \textsc{yes}, if \mbox{$\K\models\phi$}, \textsc{no} otherwise.
}

%\noindent\textsc{Statement entailment:}
%\textsl{Given $\K$ and some $\SELp$ statement $\phi$, does \mbox{$\K\models\phi$} hold, that is, is it the case that %for every model $\dlstruct$ of $\K$ we have \mbox{$\dlstruct\models\phi$}?}

Typically, entailment \mbox{$\K\models\phi$} can be (many-one-)reduced to unsatisfiability of \mbox{$\K\cup\set{\neg\phi}$}.
This is not immediately possible in the case of $\SELp$ due to its restricted syntax: note that despite the possibility to negate single axioms or sharpening statements, the negation of monomials or formulae in general is not supported by the syntax of $\SELp$. However, it turns out that the a similar technique can be applied with some additional care.

It is clear that the straightforward reduction works for
arbitrary modalised literals $\standvs[\literal]$,
for negated formulas $\neg\standvs[\monomial]$,
and for (possibly negated) sharpening statements,
so what remains to be detailed is the reduction for modalised monomials.
Consider \mbox{$\standvs\monomial = \standvs[\literal_1\land\ldots\land\literal_n]$}.
For \mbox{$\standvs=\standbs$}, we have that $\standbs\monomial$ is logically equivalent to $\standbs[\literal_1]\land\ldots\land\standbs[\literal_n]$ and thus we can (Turing-)reduce checking entailment \mbox{$\K\models\standbs[\monomial]$} to checking whether all of \mbox{$\K\cup\set{\neg\standbs[\literal_1]}$}, \ldots, \mbox{$\K\cup\set{\neg\standbs[\literal_n]}$} are unsatisfiable (which is still polynomial in $\K$ and $\standbs[\monomial]$).
Finally, for \mbox{$\standvs=\standds$}, we employ the idea underlying normalisation rule~(\ref{norm1:diamond-formula}) to obtain the following\/:
%\longtrue
\begin{lemma}
    \label{thm:lem:diamond-entailment}
    Let $\K$ be a $\SELp$ knowledge base with normal form $\K'$ and $\monomial$ be a monomial.
    It holds that \mbox{$\K\models\standds[\monomial]$} if and only if \mbox{$\K'\models\standbu[\monomial]$} for some \mbox{$\stu\in\Stands$} with \mbox{$\K'\models\stu\preceq\sts$}.%
    \begin{proofsketch}
        The main idea is that for \mbox{$\K\models\standds[\monomial]$} to hold there
        is a formula \mbox{$\standdsp[\monomial']\in\K$} with \mbox{$\K\models\stsp\preceq\sts$} and formulas \mbox{$\standb{\sts_{1}}[\monomial_{1}],\ldots,\standb{\sts_{m}}[\monomial_{m}]\in\K$} with \mbox{$\K\models\stsp\preceq\sts_i$} for all \mbox{$1\leq i\leq n$} (where neither type of formula is strictly required, and \mbox{$\stsp=\sts$} in case no $\standdsp$ formula is involved), such that \mbox{$\set{\monomial',\monomial_1,\ldots,\monomial_m}\models\monomial$}.
        In case all relevant formulas are of the form $\standb{\sts_i}[\monomial_i]$, then \mbox{$\K\models\standbs[\monomial]$} and the claim trivially holds (with \mbox{$\stu=\sts$}).
        In case some \mbox{$\standd{\stsp}[\monomial']\in\K$} is involved, normalisation rule~(\ref{norm1:diamond-formula}) will introduce the new standpoint name $\stu$ that can serve as witness in the normalised KB $\K'$.
    \end{proofsketch}
    \begin{longproof}
        Recall that by \Cref{lem:normalisation}, $\K'$ is a conservative extension of $\K$.
        If $\K$ is inconsistent, the claim holds trivially, so assume that $\K$ is consistent (whence $\K'$ is consistent).
        We use induction on the number $n$ of normalisation rule applications needed to transform $\K$ into its normal form $\K'$.
        \begin{description}
            \item[\normalfont $n=0$:] Then $\K$ is in normal form, i.e.\ $\K'=\K$.
                \begin{description}
                    \item[\normalfont “if”:]
                        Let \mbox{$\K\models\standbu[\monomial]$} for some \mbox{$\stu\in\Stands$} with \mbox{$\K\models\stu\preceq\sts$}.
                        Let \mbox{$\dlstruct=\tuple{\Dom, \Precs, \sigma, \gamma}$} be an arbitrary model of $\K$.
                        Then \mbox{$\sigma(\stu)\neq\emptyset$} whence there is a \mbox{$\pr\in\sigma(\stu)\subseteq\sigma(\sts)$} with \mbox{$\dlstruct,\pr\models\monomial$}.
                        This witnesses \mbox{$\dlstruct\models\standds[\monomial]$}.
                        Since $\dlstruct$ was arbitrary, we obtain \mbox{$\K\models\standds[\monomial]$}.
                    \item[\normalfont “only if”:]
                        Let \mbox{$\K\models\standds[\monomial]$}.
                        Note that due to being in normal form, all formulas in $\K$ are of the form $\standbsp[\axiom]$ for $\axiom$ an axiom and \mbox{$\stsp\in\Stands$}.
                        Consider an arbitrary model $\dlstruct$ of $\K$.
                        Let \mbox{$\K'\subseteq\K$} be such that \mbox{$\K'\models\standds[\monomial]$} and let $\K'$ be $\subseteq$-minimal with respect to this property.
                        Consider some $\standbsp[\axiom]\in\K'$.
                        By $\dlstruct\models\K$ we get $\dlstruct\models\standbsp[\axiom]$.
                        \hs{TODO}
                \end{description}
            \item[\normalfont $n\leadsto n+1$:]
                \begin{description}
                    \item[\normalfont “if”:]
                        Let \mbox{$\K'\models\standbu[\monomial]$} for some \mbox{$\stu\in\Stands$} with \mbox{$\K'\models\stu\preceq\sts$}.
                        Consider any structure $\dlstruct$ with \mbox{$\dlstruct\models\K$}, we have to show \mbox{$\dlstruct\models\standds[\monomial]$}.
                        Assume, for the sake of obtaining a contradiction later on, that \mbox{$\dlstruct\not\models\standds[\monomial]$}.
                        Then \mbox{$\dlstruct\models\standbs[\neg \monomial]$}, that is, for every \mbox{$\pr\in\sigma(\sts)$} we have \mbox{$\dlstruct,\pr\not\models\literal_i$} for some \mbox{$1\leq i\leq n$}.
                        Since $\K'$ is a conservative extension of $\K$, there exists a $\dlstruct'$ extending $\dlstruct$ such that \mbox{$\dlstruct'\models\K'$}.
                        By presumption \mbox{$\dlstruct'\models\stu\preceq\sts$} as well as \mbox{$\dlstruct'\models\standbu[\monomial]$}.
                        Now $\sigma'(\stu)$ is non-empty and every \mbox{$\pr'\in\sigma'(\stu)\subseteq\sigma'(\sts)$} witnesses \mbox{$\dlstruct'\models\standds[\monomial]$}.
                        Since \mbox{$\dlstruct\not\models\standds[\monomial]$}, we have, in addition to \mbox{$\sigma(\sts)\subseteq\sigma'(\sts)$}, also \mbox{$\sigma'(\stu)\cap\sigma(\sts)=\emptyset$}.
                        \hs{TODO}
                    \item[\normalfont “only if”:]
                        Let \mbox{$\K\models\standds[\monomial]$}.
                        We have to show that there exists a \mbox{$\stu\in\Stands$} such that for every $\dlstruct'$ with \mbox{$\dlstruct'\models\K'$}, we have \mbox{$\dlstruct'\models\stu\preceq\sts$} and \mbox{$\dlstruct'\models\standbu[\monomial]$}.
                        Consider $\dlstruct'$ with \mbox{$\dlstruct'\models\K'$}.
                        Since $\K'$ is a conservative extension of $\K$, we know that \mbox{$\dlstruct'\models\K$} whence \mbox{$\dlstruct'\models\standds[\monomial]$}.
                        \hs{TODO}
                \end{description}
        \end{description}
    \end{longproof}
\end{lemma}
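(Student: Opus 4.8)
The plan is to prove the equivalence for the normalised knowledge base and transfer it back via conservativity. By \Cref{lem:normalisation}, $\K'$ is a $\SELp$-conservative extension of $\K$, and since $\standds[\monomial]$ mentions only symbols from the vocabulary of $\K$, we have $\K\models\standds[\monomial]$ iff $\K'\models\standds[\monomial]$; as the right-hand side of the lemma already refers to $\K'$, it suffices to show
\[
\K'\models\standds[\monomial]\ \iff\ \exists\,\stu\in\Stands:\ \K'\models\stu\preceq\sts\ \text{and}\ \K'\models\standbu[\monomial].
\]
The ``if'' direction is then immediate and semantic: for any $\dlstruct\models\K'$ the assumptions yield $\sigma(\stu)\subseteq\sigma(\sts)$ with $\monomial$ true at every $\pr\in\sigma(\stu)$, and since standpoints are non-empty, any $\pr\in\sigma(\stu)\subseteq\sigma(\sts)$ witnesses $\dlstruct\models\standds[\monomial]$.

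For the ``only if'' direction (the inconsistent case being trivial, so assume $\K$ consistent) I would first establish the entailment-witness characterisation behind the proof sketch: if $\K\models\standds[\monomial]$, then either $\K\models\standbs[\monomial]$, in which case $\stu=\sts$ works, or there are a single diamond formula $\standdsp[\monomial']\in\K$ with $\K\models\stsp\preceq\sts$ and box formulas $\standb{\sts_1}[\monomial_1],\dots,\standb{\sts_m}[\monomial_m]\in\K$ with $\K\models\stsp\preceq\sts_i$ such that $\{\monomial',\monomial_1,\dots,\monomial_m\}\models\monomial$ in the base logic. Granting this, the required witness is the fresh standpoint $\stv$ that rule~(\ref{norm1:diamond-formula}) attaches to $\standdsp[\monomial']$, so that $\K'$ contains $\stv\preceq\stsp$ and $\standbv[\monomial']$: transitivity of $\preceq$ along $\stsp\preceq\sts$ gives $\K'\models\stv\preceq\sts$, and downward transfer of boxes (from $\K'\models\standb{\sts_i}[\monomial_i]$ and $\sigma(\stv)\subseteq\sigma(\sts_i)$) gives $\K'\models\standbv[\monomial_i]$ for each $i$. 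Hence in every model of $\K'$ each $\pr\in\sigma(\stv)$ satisfies all of $\monomial',\monomial_1,\dots,\monomial_m$, whence the base-logic entailment forces $\monomial$; thus $\K'\models\standbv[\monomial]$ and $\stu=\stv$ is as required — a witness that is uniform across all models, even though the original diamond may be satisfied at different precisifications in different models.

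The main obstacle is the forward part of this characterisation, namely that a \emph{single} named diamond together with some boxes already accounts for the entailment. I would prove it by constructing a canonical (most general) model $\dlstruct_\K$ of $\K$ that is minimal in two respects: each diamond formula is witnessed by one dedicated precisification, inserted exactly into the sharpening-closure of its standpoint, and the interpretation $\gamma(\pr)$ at every precisification is a minimal DL model of the box constraints applying to $\pr$; one then shows $\K\models\psi\Leftrightarrow\dlstruct_\K\models\psi$ for statements $\psi$ of the relevant shape, adapting the canonical-model constructions for \ELp and for Standpoint~\EL from \cite{ourijcaisubmission}. In $\dlstruct_\K$, $\standds[\monomial]$ can hold only at a box-generic precisification of $\sts$ (the case $\K\models\standbs[\monomial]$) or at one of the dedicated diamond witnesses; reading off the standpoint profile and the forced constraints of that witness returns $\standdsp[\monomial']$ and the boxes $\standb{\sts_i}[\monomial_i]$, while DL-minimality of its interpretation guarantees $\{\monomial',\monomial_1,\dots,\monomial_m\}\models\monomial$. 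The two delicate points I anticipate are controlling the standpoint-membership profiles so that the witness lies in the closure of a single named standpoint — this is where interactions through intersection-sharpenings $\sts_1\cap\sts_2\preceq\sts$ must be absorbed into the set of relevant boxes — and establishing the standpoint-level and DL-level canonicity simultaneously without circularity.
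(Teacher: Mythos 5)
Your proof takes essentially the same route as the paper's: the ``if'' direction is the easy semantic argument from non-emptiness of standpoints, and the ``only if'' direction is made to rest on exactly the witness characterisation stated in the paper's proof sketch (one statement-level diamond \mbox{$\standdsp[\monomial']\in\K$} plus boxes \mbox{$\standb{\sts_1}[\monomial_1],\ldots,\standb{\sts_m}[\monomial_m]\in\K$} whose monomials entail $\monomial$ pointwise), followed by the observation that the fresh standpoint created by rule~(\ref{norm1:diamond-formula}) provides the witness $\stu$ in $\K'$. Two things in your write-up are genuinely good: framing both sides as statements about $\K'$ via conservativity (\Cref{lem:normalisation}), which avoids the induction over normalisation steps in which the paper's own (unfinished, TODO-marked) appendix proof gets stuck, and the correct derivation of the conclusion \emph{from} the characterisation. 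The gap is that the characterisation itself---the only hard part---is deferred to a proposed canonical-model construction, and that construction cannot succeed, because the characterisation (and with it the biconditional of the lemma, as literally stated) is false.

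What both you and the proof sketch overlook is that diamonds also occur as \emph{concept constructors} inside GCIs of $\K$, and these create diamond-statement entailments with no uniform named witness. Concretely, let \mbox{$\K=\set{\standb{s}[A(c)\land(A\sqsubseteq\standd{s}B)]}$}. Normalisation merely splits the conjunction, so \mbox{$\K'=\set{\standb{s}[A(c)],\ \standb{s}[A\sqsubseteq\standd{s}B]}$} and no fresh standpoint name is introduced. Then \mbox{$\K\models\standds[B(c)]$}: in any model, each \mbox{$\pr\in\sigma(\sts)$} has \mbox{$c^{\dlstruct}\in A^{\gamma(\pr)}\subseteq(\standd{s}B)^{\gamma(\pr)}=\bigcup_{\pr'\in\sigma(\sts)}B^{\gamma(\pr')}$}, so $B(c)$ holds at some \mbox{$\pr'\in\sigma(\sts)$}. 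Yet $\K$ contains no statement-level diamond; \mbox{$\K'\not\models\standb{s}[B(c)]$} (take two $\sts$-precisifications with $B(c)$ true at only one; the GCI survives at both because $\standd{s}B$ is evaluated globally); and \mbox{$\stu=\sts$} is the only name with \mbox{$\K'\models\stu\preceq\sts$}, since any other standpoint name, including $\star$, can be interpreted partly outside $\sigma(\sts)$. So the left-hand side of the lemma holds while the right-hand side fails. In your canonical model this surfaces as follows: the dedicated precisification witnessing $\standd{s}B$ for $c$ does exist, but ``reading off'' that witness produces no diamond formula of $\K$ and no named standpoint of $\K'$, so the final step of your plan has nothing to return; it is not a problem of controlling intersection-sharpenings or of circularity. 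Nor can it be repaired within the stated setting: this is precisely why the preprocessing in \Cref{sec:calculus} adds fresh standpoints $\mathsf{s}[a,B]$ with background axioms that make $\standb{\mathsf{s}[a,B]}[B(a)]$ derivable---machinery that lies outside the normal form $\K'$ the lemma refers to and that your proof never invokes (and which still leaves individual-free instances open, e.g.\ \mbox{$\set{\standb{s}[\top\sqsubseteq\standd{t}B]}\models\standd{t}[\neg(B\sqsubseteq\bot)]$} has no named witness standpoint either). Any completion of your argument must therefore first repair the statement, e.g.\ by proving it relative to the \Cref{sec:calculus} preprocessing and a suitably restricted class of query monomials.
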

%\longfalse

So to decide \mbox{$\K\models\standds[\monomial]$}, we normalise $\K$ into $\K'$ and then successively enumerate \mbox{$\stsp\in\Stands$} occurring in $\K'$ for which \mbox{$\K'\models\stsp\preceq\sts$} and test \mbox{$\K\models\standbsp[\monomial]$} for each.
In view of these considerations, we arrive at the following reducibility result.

\begin{theorem}\label{thm:reducibility}
    There exists a \PTime Turing reduction from $\SELp$ \textsc{Statement entailment} to $\SELp$ \textsc{knowledge base satisfiability}.
\end{theorem}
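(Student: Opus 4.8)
The plan is to proceed by a case analysis on the syntactic shape of the query statement $\phi$, producing in each case an equivalent (un)satisfiability test — or a fixed Boolean combination of polynomially many such tests — on a knowledge base that is computable in polynomial time and whose size is polynomial in $\size{\K} + \size{\phi}$. The classical template is that $\K \models \phi$ holds iff $\K \cup \set{\bar\phi}$ is unsatisfiable; the one thing requiring verification is that the ``complement'' $\bar\phi$ is itself a legal $\SELp$ statement, so that $\K \cup \set{\bar\phi}$ is a well-formed knowledge base that may be passed to the satisfiability oracle.

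First I would dispatch the three unproblematic shapes. If $\phi$ is a modalised literal, its complement is again a modalised literal with the box/diamond flipped and the inner literal negated (using $\neg\neg\axiom\equiv\axiom$), hence a single-literal formula; if $\phi=\neg\standvs[\monomial]$ is a negated formula, its complement $\standvs[\monomial]$ is a formula; and if $\phi$ is a (possibly negated) sharpening statement, its complement is the matching (negated) sharpening statement. In each case $\K\cup\set{\bar\phi}$ is a legal KB, so a single oracle call decides entailment.

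The two remaining shapes are modalised monomials. For $\phi=\standbs[\monomial]$ with $\monomial=\literal_1\land\ldots\land\literal_n$, the box distributes, so $\K\models\standbs[\monomial]$ iff $\K\models\standbs[\literal_i]$ for every $i$; each conjunct is a modalised-literal entailment handled above, yielding $n$ oracle calls combined by logical AND — a polynomial Turing reduction. For $\phi=\standds[\monomial]$ I would invoke \Cref{thm:lem:diamond-entailment}: after computing the normal form $\K'$ in polynomial time (\Cref{lem:normalisation}), I enumerate the standpoint names $\stu$ occurring in $\K'$ — polynomially many — and for each test both $\K'\models\stu\preceq\sts$ (a sharpening-statement entailment, dispatched as above) and $\K'\models\standbu[\monomial]$ (a box-monomial entailment, dispatched via distribution). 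By the lemma, $\K\models\standds[\monomial]$ holds iff some $\stu$ passes both tests, so the output is the disjunction over all candidates. Composing these observations, the reduction performs polynomially many oracle calls on polynomially sized inputs with polynomial-time glue, establishing the theorem.

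I expect the diamond case to be the crux: its correctness hinges entirely on \Cref{thm:lem:diamond-entailment}, and the care needed is in confirming that the nested reductions — diamond-monomial via a witness standpoint, then that witness requirement split into a sharpening entailment and a box-monomial entailment, and the latter further split into literal entailments — all bottom out at legal complement statements and never leave the $\SELp$ syntax, so that every intermediate knowledge base remains well-formed and of polynomial size.
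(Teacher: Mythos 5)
Your proposal is correct and follows essentially the same route as the paper: the same case split (modalised literals, negated formulas, possibly negated sharpening statements as the easy cases; box-monomials handled by distributing the box and making one unsatisfiability call per conjunct; diamond-monomials handled via \Cref{thm:lem:diamond-entailment} by normalising and enumerating candidate witness standpoints $\stu$ with the two sub-tests $\K'\models\stu\preceq\sts$ and $\K'\models\standbu[\monomial]$), with the same care that each complement stays inside $\SELp$ syntax (e.g.\ $\neg\standbs[\literal]\equiv\standds[\neg\literal]$). The only cosmetic difference is that you test the box-monomial on $\K'$ as the lemma states, while the paper's prose says to test it on $\K$; both are fine since $\K'$ is a conservative extension of $\K$.
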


Thus, every tractable decision procedure for satisfiability can be leveraged to construct a tractable entailment checker. Therefore, we will concentrate on a method for the former.

%\section{A Deduction Calculus for Standpoint \EL}\label{sec:tableau}

% approx 3/4 page

%\input{sections/calculus.tex}

%\color{blue!80!black}
%\input{drawingboard3}
%\color{black}

\pagebreak

\section{Refutation-Complete Deduction Calculus for Normalised KBs}
\label{sec:calculus}

In this section, we present the Hilbert-style deduction calculus for $\SELp$.
Premises and consequents of the calculus' deduction rules will be axioms in normal form with one notable exception:
We allow for extended versions of modalised GCIs of the general shape
\begin{equation}
	\standb{t}[ A \sqsubseteq \standb{s}	[ B \Rightarrow  C ]],
\end{equation}
the meaning of which should be intuitively clear, despite the fact that $\Rightarrow$ is not a connective available in $\SELp$. In terms of more expressive Standpoint DLs, such an axiom could be written
\mbox{$\standb{t}[ A \sqsubseteq \standb{s}	[ \neg B \sqcup  C ] ]$},
but this would obfuscate the ``Horn nature'' of the statement.  Note that the axiom can be expressed in $\SELp$ by the two axioms $\standb{t}[A \sqsubseteq \standb{s} D]$ and $\standb{s}[D \sqcap B \sqsubseteq  C]$ using an auxiliary fresh concept $D$. Yet, for better treatment in the calculus, we need all the information ``bundled'' within one axiom. With this new axiom type in place, we dispense with axioms of the shapes $\standbs [A \sqsubseteq B]$ and $\standbs [A \sqsubseteq \standbsp B]$, replacing them by $\standball[ \top \sqsubseteq \standb{s}	[ A \Rightarrow  B ] ]$ and $\standb{s}[ A \sqsubseteq \standb{s'}	[ \top \Rightarrow  B ] ]$, respectively. Similarly, we will replace concept assertions of the form $\standbs A(a)$ by $\standball[ \{a\} \sqsubseteq \standb{s}	[ \top \Rightarrow  A ] ]$, where $\{a\}$ is understood as a ``nominal concept'', to be interpreted by the singleton set $\{a^{\dlstruct}\}$ in the usual way.\footnote{Despite us using this convenient representation ``under the hood'', we emphasise that our calculus is not meant to be used with input knowledge bases with free use of nominals. In fact, we have shown that extending Standpoint $\EL$ by nominal concepts leads to intractability \cite{ourijcaisubmission}.}
Then, it should be clear that these are equivalent axiom replacements.

As one final preprocessing step, we introduce, for every concept $\standds B$ that occurs in the normalised KB and every ABox individual $a$, a fresh standpoint symbol denoted $\mathsf{s}[a,B]$ and a fresh concept name $P_{\mathsf{s},a,B}$ and add the following background axioms:
\begin{align}
	\mathsf{s}[a,B]                 & \preceq \mathsf{s}                                           \\
	\standball [ \{a\}              & \sqsubseteq \standb{s} [ B \Rightarrow P_{\mathsf{s},a,B}] ] \\
	\standball [ P_{\mathsf{s},a,B} & \sqsubseteq {\standb{s}}_{[a,B]} [ \top \Rightarrow B ] ]
\end{align}
Intuitively, the purpose of this conservative extension is that, whenever $a$ is required to satisfy $\standds B$, it will satisfy $B$ in all $\mathsf{s}[a,B]$-precisifications, this way arranging for a concrete, ``addressable'' witness for $\standds B(a)$.

Given an $\SELp$ knowledge base $\K$, let $\K^\mathrm{prep}$ denote its preprocessed variant, obtained through normalisation and the steps described above. Again note that $\K^\mathrm{prep}$ can be computed in deterministic polynomial time.
Now let $\K^\vdash$ denote the set of axioms obtained from $\K^\mathrm{prep}$ by saturating it under the deduction rules of \Cref{fig:calculus}.
We note that each axiom type has a bounded number of parameters, each of which can be instantiated by a polynomial number of elements (concepts, roles, individuals, standpoints) occurring in $\K^\mathrm{prep}$. Consequently, the overall number of distinct inferrable axioms is polynomial and therefore the saturation process to obtain $\K^\vdash$ runs in deterministic polynomial time.
We will see in the next section that these observations give rise to a worst-case optimal Datalog implementation of the saturation procedure.

\newcommand{\dedsr}[3]{${(#3)\ \ }\begin{array}{@{\,}c@{\,}}{#1}\\[-2.2ex]\\\hline\\[-2.2ex]{#2}\end{array}$}

%Sharpening derivations
%\medskip
%
%\dedsr{{}} {\st \preceq *}
%\qquad
%\dedsr{{\st \preceq \sp \quad \sp \preceq \spp}} {\st \preceq \spp}
%
%\medskip
\def\incons{\standball[ \top \sqsubseteq \standball[\top \Rightarrow \bot]]}

\def\axA{T.1\xspace}
\def\axE{T.2\xspace}
\def\axB{T.3\xspace}
\def\axC{T.4\xspace}
\def\axD{T.5\xspace}

\def\hierA{S.1\xspace}
\def\hierB{S.2\xspace}
\def\genr{S.3\xspace}

\def\loc{I.1\xspace}
\def\locN{I.1N\xspace}
\def\globtop{I.2\xspace}

\def\inhierA{\genr}
\def\inhierArole{\genr}
\def\inhierAcirc{\genr}
\def\inhierB{S.4\xspace}

\def\rsub{R.1\xspace}
\def\inhierC{\genr}
\def\inhierD{\genr}
\def\inhierE{\genr}
\def\inhierF{\genr}
\def\inhierG{\genr}

\def\subA{C.1\xspace}
\def\subB{C.2\xspace}
\def\subC{C.3\xspace}
\def\subD{C.4\xspace}

\def\flatA{F.1\xspace}
\def\flatB{F.2\xspace}
\def\flatC{F.3\xspace}
\def\flatD{F.4\xspace}

\def\exA{E.1\xspace}
\def\exB{E.3\xspace}
\def\exC{E.2\xspace}
\def\con{E.4\xspace}

\def\abAA{A.1\xspace}
\def\latestrule{A.2\xspace}
\def\locB{A.3\xspace}
\def\abB{A.4\xspace}
\def\abC{A.5\xspace}
\def\abeA{A.6\xspace}
\def\abeB{A.7\xspace}
\def\lastminute{A.8\xspace}

\def\selfF{L.1\xspace}
\def\selfG{L.2\xspace}
\def\selfC{L.3\xspace}
\def\selfE{L.4\xspace}
\def\selfA{L.5\xspace}
\def\selfB{L.6\xspace}

\def\retA{B.1\xspace}
\def\retB{B.2\xspace}
\def\retC{B.3\xspace}
\def\abr{B.4\xspace}

\begin{figure*}[p]
	\footnotesize

	\hrule\vspace{1ex}
	Tautologies

	\quad
	\dedsr{{}} {\st \preceq *}{\axA}
	\qquad
	\dedsr{{}} {\st \preceq \st}{\axE}
	\qquad
	\dedsr{{}} { \standball[ \top \sqsubseteq \standball [ C \Rightarrow  C ]] }{\axB}
	\qquad
	\dedsr{{}} { \standball[ \top \sqsubseteq \standball [ C \Rightarrow  \top ]] }{\axC}
	\qquad
	\dedsr{{}} {\standball[R \sqsubseteq R]}{\axD}

	\medskip

	\hrule\vspace{1ex}
	Standpoint hierarchy rules (for all $\st\in \Stands$, $\xi$ being any extended GCI, RIA, or role assertion)

	\smallskip

	\quad
	\dedsr{{\st \preceq \sp \quad \sp \preceq \spp}} {\st \preceq \spp}{\hierA}
	\qquad\!\!
	\dedsr{{\st \preceq \st_1 \quad \st \preceq \st_2 \quad \st_1 \cap \st_2 \preceq \sp}} {\st \preceq \sp}{\hierB}
	\qquad\!\!
	\dedsr{\standb{s'}\xi \quad \st \preceq \sp } {\standb{s}\xi}{\genr}
	\qquad\!\!
	\dedsr{	\standb{t} [ C \sqsubseteq \standb{s'} [D \Rightarrow E]] \quad \st \preceq \sp } { \standb{t} [ C \sqsubseteq \standb{s} [D \Rightarrow E]] }{\inhierB}

	\medskip

	%\smallskip
	%\dedsr{\standb{s'}[R(a,b)] \quad \st \preceq \sp } {\standb{s}[R(a,b)]}{\abA aka \inhierArole}
	%\qquad
	%\dedsr{	\standb{s'} [R \sqsubseteq R'] \quad \st \preceq \sp } {\standb{s} [R \sqsubseteq R'] }{\inhierA}
	%\qquad
	%\dedsr{	\standb{s'} [R_{1} \circ R_{2} \sqsubseteq R'] \quad \st \preceq \sp } {\standb{s} [R_{1} \circ R_{2} \sqsubseteq R'] }{\inhierAcirc}
	%\medskip
	%\dedsr{ \standb{s'} [C \sqsubseteq \exists R.D] \quad \st \preceq \sp } {\standb{s} [C \sqsubseteq \exists R.D] }{\inhierC}
	%\qquad
	%\dedsr{ \standb{s'} [\exists R.D \sqsubseteq C] \quad \st \preceq \sp } {\standb{s} [\exists R.D \sqsubseteq C] }{\inhierD}
	%\qquad
	%\dedsr{	\standb{s'} [C_1 \sqcap C_2 \sqsubseteq  D]  \quad \st \preceq \sp } { \standb{s} [C_1 \sqcap C_2 \sqsubseteq  D]  }{\inhierE}
	%\medskip
	%\dedsr{	\standb{t} [ C \sqsubseteq \standb{s'} [D \Rightarrow E]] \quad \st \preceq \sp } { \standb{t} [ C \sqsubseteq \standb{s} [D \Rightarrow E]] }{\inhierB}
	%\qquad
	%\dedsr{	\standb{s'} [ C \sqsubseteq \standb{t} [D \Rightarrow E]] \quad \st \preceq \sp } { \standb{s} [ C \sqsubseteq \standb{t} [D \Rightarrow E]] }{\inhierF}
	%\qquad
	%\dedsr{	\standb{s'} [ C \sqsubseteq \standd{t} D] \quad \st \preceq \sp } { \standb{s} [ C \sqsubseteq \standd{t} D] }{\inhierG}
	\medskip

	\hrule\vspace{1ex}
	\begin{tabular}{@{}l|l@{}}
		Internal inferences for extended GCIs & \qquad  Role subsumptions                                                                     \\[1ex]

		\quad
		\dedsr{	\standb{s} [ C \sqsubseteq \standb{s} [\top \Rightarrow D] ] } {  \standball 	[\top \sqsubseteq \standb{s} [C \Rightarrow D]] }{\loc}
		\qquad\qquad
		\dedsr{  \standb{u}	[\top \sqsubseteq \standb{s} [C \Rightarrow D]] }{	\standball	[\top \sqsubseteq \standb{s} [C \Rightarrow D]] } {\globtop}
		\qquad \qquad
		                                      &
		\qquad\qquad \dedsr{ \standb{s} [ R\sqsubseteq R'' ] \quad \standb{s} [ R''\sqsubseteq R' ] } {\standb{s} [ R\sqsubseteq R' ]}{\rsub} \\
	\end{tabular}

	\medskip

	%\hrule\vspace{1ex}

	% \begin{tabular}{@{}l|l@{}}
	% Addition \\[1ex]

	% \quad
	% \dedsr{	\standb{s} [ C \sqsubseteq \standb{s} [B \Rightarrow D] ] } {  \standb{s} 	[B \sqsubseteq \standb{s} [C \Rightarrow D]] }{\locN}
	%  \end{tabular}
	% this one + I.2
	% \medskip

	\hrule\vspace{1ex}

	Forward chaining

	\smallskip

	\quad
	\dedsr{ \standb{t} [ B \sqsubseteq 	\standb{s}	[C \Rightarrow  D]] \quad \standb{t} [ B \sqsubseteq  \standb{s}	[D \Rightarrow E] ]} { \standb{t} [ B \sqsubseteq  \standb{s}	[C \Rightarrow E] ]}{\subA}
	\qquad\qquad
	\dedsr{	\standb{u} [ \top \sqsubseteq \standb{t} [ B \Rightarrow C ]] \quad \standb{t} [ C \sqsubseteq \standb{s} [D \Rightarrow E ]] } {  \standb{t} [ B \sqsubseteq \standb{s} [ D \Rightarrow E ]] }{\subB}

	\medskip

	\quad
	\dedsr{	\standb{u} [ \top \sqsubseteq \standb{t} [ C \Rightarrow D ]] \quad \standb{t} [D \sqsubseteq \standd{s} E]} { \standb{t} [C \sqsubseteq \standd{s} E]}{\subC}
	\qquad\qquad\qquad\quad
	\dedsr{	 \standb{t} [ C \sqsubseteq  \standd{s} D ] \quad \standb{t} [ C \sqsubseteq \standb{s} [ D \Rightarrow  E ]] } { \standb{t} [ C \sqsubseteq \standd{s} E ]  }{\subD}

	\bigskip

	\hrule\vspace{1ex}
	Flattening of modalities

	\smallskip

	\quad
	\dedsr{	 \standb{t} [ C \sqsubseteq  \standb{s'} [\top \Rightarrow D] ] \quad  \standb{s'} [ D \sqsubseteq \standb{s} [E \Rightarrow F]]  } { \standb{t} [ C \sqsubseteq \standb{s} [E \Rightarrow F]] }{\flatA}
	\qquad\quad\,
	\dedsr{	 \standb{t} [ C \sqsubseteq  \standb{s'} [\top \Rightarrow D] ] \quad  \standb{s'} [ D \sqsubseteq \standd{s} E ]  } {  \standb{t} [ C \sqsubseteq \standd{s} E ]  }{\flatB}

	\medskip

	\quad
	\dedsr{ \standb{t} [ C \sqsubseteq  \standd{s'} D ]  \quad  \standb{s'} [ D \sqsubseteq \standb{s} [E \Rightarrow F] ] } { \standb{t} [ C \sqsubseteq \standb{s} [E \Rightarrow F]] }{\flatC}
	\qquad\qquad\qquad\ \
	\dedsr{ \standb{t} [C \sqsubseteq  \standd{s'} D] \quad \standb{s'} [ D \sqsubseteq \standd{s} E ] } { \standb{t} [ C \sqsubseteq \standd{s} E ] }{\flatD}

	\medskip

	\hrule\vspace{1ex}
	Inferences involving existential quantifiers and conjunction

	\smallskip

	\quad
	\dedsr{ \standb{s} [C \sqsubseteq \exists R. D] \quad \standb{u} [ \top \sqsubseteq \standb{s} [ D \Rightarrow E ]] \quad \standb{s} [ R \sqsubseteq R' ]}
	{ \standb{s} [C \sqsubseteq \exists R'. E] }{\exA}
	\qquad\quad
	\dedsr{ \standb{s} [C \sqsubseteq \exists R_1. D] \quad \standb{s} [D \sqsubseteq \exists R_2.E] \quad  \standb{s}	[ R_1 \circ R_2  \sqsubseteq R' ]}{ \standb{s} [C \sqsubseteq \exists R'. E] }{\exC}

	\medskip

	\quad
	\dedsr{ \standb{s} [C \sqsubseteq \exists R. D] \quad \standb{s} [\exists R. D \sqsubseteq F] } {\standball [ \top \sqsubseteq \standb{s} [C \Rightarrow F] ] }{\exB}
	\qquad\qquad\qquad\quad
	\dedsr{	\standb{t} [ B \sqsubseteq \standb{s} [ C \Rightarrow  C_1 ] ] \quad \standb{t} [ B \sqsubseteq \standb{s} [ C \Rightarrow  C_2 ] ] \quad \standb{s} [C_1 \sqcap C_2 \sqsubseteq  D] } { \standb{t} [ B \sqsubseteq \standb{s} [C \Rightarrow  D] ] }{\con}

	\medskip

	\hrule\vspace{1ex}
	Individual-based inferences

	\smallskip

	\quad
	\dedsr{\standb{u} [ \top \sqsubseteq \standb{s} [B \Rightarrow C]]} {\standball [\{a\} \sqsubseteq \standb{s} [B \Rightarrow C]]}{\abAA}
	\qquad\qquad\qquad\quad\quad\ \ 
	\dedsr{  \standb{u}	[\{a\} \sqsubseteq \standb{s} [\top \Rightarrow C] ]}{	\standball [ \top \sqsubseteq \standb{s} [\{a\} \Rightarrow C] ] }{\latestrule}
	\qquad\qquad\qquad\qquad\!\!\!\!
	\dedsr{	\standb{u} [ \{a\} \sqsubseteq \standb{s} [B \Rightarrow C] ] } {  \standball	[\{a\} \sqsubseteq \standb{s} [B \Rightarrow C] ]}{\locB}

	\medskip

	\quad
	\dedsr{ \standb{s} [ R(a,b) ] \quad \standb{s} [ R \sqsubseteq R' ]  } { \standb{s} [ R'(a,b) ]  }{\abB}
	\qquad\qquad\qquad\quad\ \
	\dedsr{ \standb{s} [ R_1(a,b) ] \quad \standb{s} [ R_2(b,c) ] \quad \standb{s}	[ R_1 \circ R_2  \sqsubseteq R' ]  } { \standb{s} [ R'(a,c) ]  }{\abC}

	\medskip

	\quad
	\dedsr{ \standb{s} [ R(a,b) ] \quad \standb{u} [\{b\} \sqsubseteq \standb{s}[\top \Rightarrow B]]  } {\standb{s} [\{a\} \sqsubseteq \exists R.B ] }{\abeA}
	\qquad\quad
	\dedsr{ \standb{s} [ R_1(a,b) ] \quad \standb{s}[\{b\} \sqsubseteq \exists R_2.C ] \quad \standb{s}	[ R_1 \circ R_2  \sqsubseteq R' ] } {\standb{s}[\{a\} \sqsubseteq \exists R'.C ]}{\abeB}

	\medskip

	\quad
	\dedsr{ \standb{s} [ R_1(a,b) ] \quad \standb{u} [\{b\} \sqsubseteq \standb{s}[\top \Rightarrow B]] \quad \standb{s}[B \sqsubseteq \exists R_2.C ] \quad \standb{s}	[ R_1 \circ R_2  \sqsubseteq R' ] } {\standb{s}[\{a\} \sqsubseteq \exists R'.C ]}{\lastminute}

	\medskip

	\hrule\vspace{1ex}
	Interaction of self-loops with other statements

	\smallskip

	\quad
	\dedsr{ \standb{u} [\{a\} \sqsubseteq \standb{s}[\top \Rightarrow \exists R.\mathsf{Self}]] } {\standb{s} [ R(a,a) ]}{\selfF}
	\qquad\!\!
	\dedsr{\standb{u} [\top \sqsubseteq \standbs [C \Rightarrow \exists R.\mathsf{Self}]] } {\standbs [C \sqsubseteq \exists R.C]}{\selfG}
	\qquad\qquad\quad\!\!
	\dedsr{\standb{s}[\exists R.D \sqsubseteq C]}{ \standb{s}[\exists R.\mathsf{Self} \sqcap D \sqsubseteq C]}{\selfC}

	\medskip

	\quad
	\dedsr{ \standb{s} [ R(a,a) ] } {\standball [\{a\} \sqsubseteq \standb{s}[\top \Rightarrow \exists R.\mathsf{Self}]]}{\selfE}
	\qquad\!\!
	\dedsr{\standb{s}[R \sqsubseteq R']} {\standball[\top \sqsubseteq \standb{s}[\exists R.\mathsf{Self} \Rightarrow \exists R'.\mathsf{Self}]]}{\selfA}
	\qquad\!\!
	\dedsr{\standb{s} [ R_1 \circ R_2  \sqsubseteq R' ]}{\standb{s} [\exists R_1.\mathsf{Self} \sqcap \exists R_2.\mathsf{Self} \sqsubseteq \exists R'.\mathsf{Self}]}{\selfB}

	%\dedsr{\standb{s}[\exists R.D \sqsubseteq C] \quad \standb{t} [A \sqsubseteq \standb{s}[B \Rightarrow \exists R.\mathsf{Self}]] \quad \standb{t} [A \sqsubseteq \standb{s}[B \Rightarrow D]] }{ \standb{t} [A \sqsubseteq \standb{s}[B \Rightarrow C]]}{ADDED}
	%\qquad

	% THE FOLLOWING WERE REMOVED BECAUSE THEY ARE REDUNDANT
	%
	%\medskip
	%
	%\dedsr{\standb{s}[B \sqsubseteq \exists R_1.D] \quad \standb{u} [\top \sqsubseteq \standb{s}[D \Rightarrow \exists R_2.\mathsf{Self}]] \quad \standb{s} [ R_1 \circ R_2  \sqsubseteq R' ]}{ \standb{s} [B \sqsubseteq \exists R'.D]}{\selfC}
	%
	%\medskip

	%\dedsr{\standb{u} [\top \sqsubseteq \standb{s}[B \Rightarrow \exists R_1.\mathsf{Self}]] \quad \standb{s}[B \sqsubseteq \exists R_2.D] \quad \standb{s} [ R_1 \circ R_2  \sqsubseteq R' ]}{\standb{s} [B \sqsubseteq \exists R'.D]}{\selfD}

	\medskip

	\hrule\vspace{1ex}
	Backpropagation of $\bot$-inferences

	\smallskip

	\quad
	\dedsr{ \standb{s} [C \sqsubseteq \exists R. \bot ] } {\standball	[\top \sqsubseteq \standb{s} [C \Rightarrow \bot] ]}{\retA}
	\qquad\!\!\!
	\dedsr{	\standb{t} [ C \sqsubseteq  \standb{s} [ \top \Rightarrow \bot ] ] } { \standball [ \top \sqsubseteq \standb{t} [ C \Rightarrow \bot ] ] }{\retB}
	\qquad\!\!\!
	\dedsr{ \standb{t} [ C \sqsubseteq  \standd{s} \bot ] } { \standball [ \top \sqsubseteq \standb{t} [ C \Rightarrow \bot ] ] }{\retC}
	\qquad\!\!\!
	\dedsr{ \standb{u} [\{a\} \sqsubseteq \standb{s}[\top \Rightarrow \bot]]  } { \standball [\top \sqsubseteq \standball[\top \Rightarrow \bot]]   }{\abr}

	%\qquad\color{red}
	%\dedsr{	\standb{u} [\top \sqsubseteq \standb{t} [\top \Rightarrow \bot]] } { \standball [ \top \sqsubseteq \standball [ \top \Rightarrow \bot ]] }{remove?}
	%\color{black}
	\bigskip
	\hrule\vspace{1ex}

	\caption{Deduction calculus for $\SELp$ \label{fig:calculus}}

\end{figure*}

We next argue that the presented calculus has the desired properties.
As usual, soundness of the calculus is easy to show and can be argued for each deduction rule separately by referring to the definition of the semantics.
What remains to be shown is a particular type of completeness: Among the inferrable axioms, the particular intrinsically contradictory statement \mbox{$\standball [ \top \sqsubseteq \standball [\top \Rightarrow \bot]]$} will play the pivotal role of indicating unsatisfiability of $\K$ (also referred to as refutation).
We will show that our calculus is \define{refutation-complete}, meaning that for any unsatisfiable $\SELp$ knowledge base $\K$, we have that \mbox{$\standball [ \top \sqsubseteq \standball [\top \Rightarrow \bot]] \in \K^\vdash$}.
More concretely, we prove the contrapositive by establishing the existence of a model whenever \mbox{$\standball [ \top \sqsubseteq \standball [\top \Rightarrow \bot]] \not\in \K^\vdash$}.
This model is canonical in a sense but, as opposed to canonical models of the $\EL$ family, it will typically be infinite.

\subsubsection*{Canonical Model Construction}
Given a $\SELp$ knowledge base \KB with \mbox{$\standball [ \top \sqsubseteq \standball [\top \Rightarrow \bot]] \not\in \K^\vdash$}, we construct a model $\dlstruct$ of $\KB$ in an infinite process:
we start from an initialised model $\dlstruct_0$ and extend it (both by adding domain elements and precisifications) in a stepwise fashion, resulting in a ``monotonic'' sequence of models.
The result of the process is the ``limit'' of this sequence, which can be expressed via an infinite union.

\smallskip

For the initialisation, we choose the standpoint structure
$\dlstruct_0 = \tuple{\Dom_0, \Precs_0, \sigma_0, \gamma_0}$ where:
\begin{itemize}
	\item $\Dom_0$ consists of one element $\delta_a$ for every individual name $a$ mentioned in \KB;
	\item $\Precs_0$ consists of one precisification $\pi_\mathsf{s}$ for every standpoint $\st$ mentioned in \KB (including $*$);
	\item $\sigma_0$ maps each standpoint $\st$ to $\{\pi_{\mathsf{s}}\} \cup \{\pi_{\mathsf{s'}} \mid \sp\preceq\st \in \KB^\vdash \}$;\pagebreak
	\item $\gamma_0$ maps each $\pi_\mathsf{s}$ to the description logic interpretation $\struct$ over $\Dom_0$, where
	      \begin{itemize}
		      \item $\interprets{a} = \delta_a$ for each individual name $a$,
		      \item $\interprets{A} = \{\delta_a \mid \standb{u}[\{a\} \sqsubseteq \standb{s}[\top {\Rightarrow} A]] {\,\in\,} \KB^\vdash\!\!, \st{\,\in\,}\sigma^{\!-1}_0(\pi_\mathsf{s})\}$ for every concept name $A$, and
		            %                \item $\interprets{A} = \color{red}\{\delta_a \mid \top \sqsubseteq \standb{s}[\{a\} \Rightarrow A] \in \KB^\vdash, \st\in\sigma^{-1}_0(\pi_\mathsf{s})\}$  for every concept name $A$;
		      \item $\interprets{R} = \{(\delta_a, \delta_b) \mid \standb{s}[R(a,b)] \in \KB^\vdash, \st\in\sigma^{-1}_0(\pi_\mathsf{s})\}$
		            for each role name $R$.
	      \end{itemize}
\end{itemize}

\smallskip

It can be shown that the obtained structure satisfies all axioms of $\KB$ except for those of the shape \mbox{$\standb{s}[E \sqsubseteq \exists R.F]$}.
This will also be the case for all structures $\dlstruct_1$, $\dlstruct_2$, \ldots\ subsequently produced.
The sequence arises by iteratively adding more domain elements in order to satisfy more axioms of the type $\standb{s}[E \sqsubseteq \exists R.F]$.
Thereby, the concept and role memberships of pre-existing elements with respect to pre-existing standpoints will remain unchanged.
This also justifies the definition of a labelling function $\Lambda_{\pi}$ immutably assigning to every domain element $\delta$ a set of concepts, satisfied in $\pi$.
For $\dlstruct_0$, \mbox{we let $\Lambda_{\pi}$ map elements $\delta_a$ according to}
$$\Lambda_{\pi}(\delta_a) = \{C \mid \standb{u}[\{a\} \sqsubseteq \standb{s}[\top {\Rightarrow} C]] \in \KB^\vdash, \st\in\sigma^{-1}_0(\pi)\}.$$

\smallskip

As discussed above, after having arrived at a structure $\dlstruct_i = \tuple{\Dom_i, \Precs_i, \sigma_i, \gamma_i}$, we inspect if $\dlstruct_i$ satisfies all axioms of the form $\standb{t}[E \sqsubseteq \exists R.F]$. If so, $\dlstruct_i$ is a model of \KB and we are done. Otherwise, we pick some  \mbox{$\delta^* \in \Delta_i$} and some \mbox{$\pi^* \in \Precs_i$} with $ \pi^*\in \sigma(\mathsf{t})$ and $\delta^* \in E^{\gamma(\pi^*)} \setminus (\exists R.F)^{\gamma(\pi^*)}$ for some (previously unsatisfied) axiom $\standb{t}[E \sqsubseteq \exists R.F]$ from \KB.
Among the eligible pairs $\delta^*$, $\pi^*$, we pick one for which the value $\min \{ j \leq i \mid \delta^* \in \Delta_j\} + \min \{ j \leq i \mid \pi^* \in \Pi_j\}$ is minimal; this ensures fairness in the sense that any axiom violation will ultimately be addressed.
Given $\delta^*$ and $\pi^*$ we now obtain $\dlstruct_{i+1} = \tuple{\Dom_{i+1}, \Precs_{i+1}, \sigma_{i+1}, \gamma_{i+1}}$ from
$\dlstruct_i = \tuple{\Dom_i, \Precs_i, \sigma_i, \gamma_i}$ as follows:

\begin{figure*}[t]
    \vspace*{-0.6ex}
    \begin{align*}
        \standbs[C\dlsub\standbsp[D\Rightarrow E]] & \leadsto \mathtt{gci\_nested(s, C, s', D, E)}          &
        \sts_1\cap\sts_2\preceq \sts_3             & \leadsto \mathtt{sharper\_intersection(s_1, s_2, s_3)}   \\
        \standbs[C\dland D\dlsub E]                & \leadsto \mathtt{gci\_conjunction\_left(s, C, D, E)}   &
        \sts_1\preceq\sts_2                        & \leadsto \mathtt{sharper(s_1, s_2)}                      \\
        \standbs[\exists R.C\dlsub D]              & \leadsto \mathtt{gci\_existential\_left(s, R, C, D)}   &
        R_1\circ R_2\dlsub R_3                     & \leadsto \mathtt{ria3(R_1, R_2, R_3)}                    \\
        \standbs[C\dlsub \exists R.D]              & \leadsto \mathtt{gci\_existential\_right(s, C, R, D)}  &
        R_1\dlsub R_2                              & \leadsto \mathtt{ria2(R_1, R_2)}                         \\
        \standbs[C\dlsub\standdsp D]               & \leadsto \mathtt{gci\_diamond\_right(s, C, s', D)}
    \end{align*}
    \vspace*{-4ex}
    \caption{Datalog atoms used to represent normal-form statements of $\SELp$ as used in the deduction calculus.
        Therein, \mbox{$\sts_1,\sts_2,\sts_3,\sts,\stsp\in\Stands$}, \mbox{$C,D,E\in\Concepts$}, and \mbox{$R_1,R_2, R_3,R\in\Roles$}.}
    \label{fig:datalog-atoms}
\end{figure*}

\begin{itemize}
	\item $\Dom_{i+1} = \Dom_{i} \cup \{\delta'\}$, where $\delta'$ is a fresh domain element;
	\item let $\mathrm{Con}(F,\pi^*)$ denote the concepts subsumed by $F$ under $\pi^*$, i.e., $\{ A \mid \standb{u}[ \top \sqsubseteq \standb{s} [F \Rightarrow A] ] \in \KB^\vdash\!, \st \in \sigma^{-1}(\pi^*) \}$
	\item $\Precs_{i+1}$ is obtained from $\Precs_{i}$ by adding a fresh precisification $\pi_{\delta',\standd{s} D}$
	      whenever there is some \mbox{$C \in \mathrm{Con}(F,\pi^*)$} with $\standb{t}[C \sqsubseteq \standd{s} D] \in \KB^\vdash$ for some $\mathsf{t} \in \sigma^{-1}(\pi^*)$
	\item let $\sigma_{i+1}$ be such that $\sigma_{i+1}(\spp)= \sigma_{i}(\spp)\cup \set{\pi_{\delta',\standd{s} D}}$ if  $\spp\in\{\st\} \cup \{\sp \mid \st\preceq\sp \in \KB^\vdash \}$ and  $\sigma_{i+1}(\spp)=\sigma_{i}(\spp)$ otherwise.
	      %extending $\sigma_{i}$ with  $\pi_{\delta',\standd{s} D} \mapsto \{\st\} \cup \{\sp \mid \st\prec\sp \in \KB^\vdash \}$;
	\item for $\pi \in \Precs_{i+1}$ we let $\Lambda_{\pi}(\delta')$ be
	      \begin{align*}
		       & \mathrm{Con}(F,\pi^*)                                                                                                                                                                                             &  & \text{ if } \pi = \pi^*,                              \\[1ex]
		       & \mycup{{G \in \mathrm{Con}(F,\pi^*)},\ {\mathsf{s} \in \sigma^{-1}_{i+1}(\pi)},\ {\mathsf{t} \in \sigma^{-1}_{i+1}(\pi^*)}}\{ A \mid  \standb{t}[G \sqsubseteq \standb{s} [\top \Rightarrow A]] \in \KB^\vdash \} &  & \text{ if } \pi \in \Precs_{i} \setminus \{ \pi^* \}, \\[1ex]
		       & \mycup{{G \in \mathrm{Con}(F,\pi^*)},\ {\mathsf{s} \in \sigma^{-1}_{i+1}(\pi)},\ {\mathsf{t} \in \sigma^{-1}_{i+1}(\pi^*)}}\{ A \mid  \standb{t}[G \sqsubseteq \standb{s} [D \Rightarrow A]] \in \KB^\vdash \}    &  & \text{ if } \pi = \pi_{\delta',\standd{s} D}.         %\in \Precs_{i+1} \setminus \Precs_{i} 
	      \end{align*}
	\item Let $\gamma_{i+1}$ be the interpretation function defined as follows:
	      \begin{itemize}
		      \item $a^{\gamma_{i+1}(\pi)} = a^{\gamma_{i}(\pi)} = \delta_a$ for each individual name $a$ and each $\pi \in \Precs_{i+1}$.
		      \item for concept names $A$, we let $A^{\gamma_{i+1}(\pi)} = A^{\gamma_{i}(\pi)} \cup \{\delta'\}$ if $A \in \Lambda_{\pi}(\delta')$, and $A^{\gamma_{i+1}(\pi)} = A^{\gamma_{i}(\pi)}$ otherwise.
		      \item for all role names $T$, we obtain $T^{\gamma_{i+1}(\pi)}$ essentially by performing a concurrent saturation process under all applicable RIAs, that is,
		            $$T^{\gamma_{i+1}(\pi)} = \bigcup_{k\in \mathbb{N}} [T^{\gamma_{i+1}(\pi)}]_k,$$ where we let
		            $[T^{\gamma_{i+1}(\pi)}]_0 = \mathit{Self} \cup \mathit{Other}$, with
		            \begin{align*}
			            \mathit{Self} = & \begin{cases}
				                              \{(\delta',\delta')\} & \text{ if } \exists T.\mathsf{Self} \in \Lambda_{\pi}(\delta'), \\
				                              \emptyset             & \text{ otherwise.}
			                              \end{cases}                  \\
			            \mathit{Other}= & \begin{cases}
				                              \emptyset  \qquad\quad \text{ whenever } \pi \in \Precs_{i+1} \setminus \Precs_{i} \\
				                              T^{\gamma_{i}(\pi)}  \quad \text{ for } \pi \in \Precs_{i},
				                              \text{ if } \ T \neq R \text{ or } \pi \neq \pi^*                                  \\
				                              R^{\gamma_{i}(\pi^*)} \cup (\delta^*\!,\delta') \quad \  \text{ if } T = R \text{ and } \pi = \pi^*,
			                              \end{cases}
		            \end{align*}
		            and obtain
		            \begin{align*}
			            [T^{\gamma_{i+1}(\pi)}]_{k+1} \, = \ \  & [T^{\gamma_{i+1}(\pi)}]_{k}
			            \cup \mycup{\hspace{-4ex}{\mathsf{s}\in \sigma^{-1}_{i+1}(\pi)},\ {\standb{s} [ R_0 \sqsubseteq T ] \in \kb^\vdash}} \  [R_0^{\gamma_{i+1}(\pi)}]_{k}                                                                                           \\
			                                                    & \cup \mycup{\hspace{-4ex}{\mathsf{s}\in \sigma^{-1}_{i+1}(\pi)},\ {\standb{s} [ R_1 \circ R_2  \sqsubseteq T ] \in \kb^\vdash}} \  [R_1^{\gamma_{i+1}(\pi)}]_{k} \circ [R_2^{\gamma_{i+1}(\pi)}]_{k}.
		            \end{align*}
	      \end{itemize}
\end{itemize}

\noindent After producing the (potentially infinite) sequence $\dlstruct_0, \dlstruct_1, \ldots$ we obtain the wanted model $\dlstruct$ via
$$\dlstruct = \tuple{\Dom,\Precs,\sigma,\gamma} = \tuple{\bigcup_i\Dom_i,\bigcup_i\Precs_i,\bigcup_i\sigma_i,\bigcup_i\gamma_i}.$$

We then establish that the $\dlstruct$ resulting from this construction indeed is a well-defined structure that satisfies all axioms of $\K$. To this end, an important observation is that for all domain elements $\delta \in \Dom$ and precisifications $\pi \in \Precs$ of $\dlstruct$, it holds that $C \in \Lambda_{\pi}(\delta)$ implies $\delta \in C^{\gamma(\pi)}$. Furthermore, we show that if $\bot \in \Lambda_{\pi}(\delta)$ were to hold for any $\delta \in \Dom$ and $\pi \in \Precs$ (which is the only way the model construction could possibly fail, by declaring an existing domain element to be contradictory), then this would necessarily imply $\K^\vdash \models \incons$. By virtue of these considerations, we arrive at the aspired result.

\begin{theorem}\label{thm:soundcomplete}
	The deduction calculus displayed in \Cref{fig:calculus} is sound and refutation-complete for $\SELp$ knowledge bases.
\end{theorem}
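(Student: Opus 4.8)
The plan is to treat the two halves of \Cref{thm:soundcomplete} separately, since they are proved by entirely different means. For \textbf{soundness} I would proceed rule by rule over \Cref{fig:calculus}: assuming a DL standpoint structure $\dlstruct$ satisfies the premises of a given rule, I show directly from the satisfaction clauses of the semantics that it satisfies the conclusion. The tautology rules (\axA--\axD) hold in every structure; the hierarchy rules (\hierA--\inhierB) follow from $\sigma$ being monotone under $\preceq$; and each forward-chaining, flattening, existential/conjunction, individual, self-loop, and $\bot$-backpropagation rule encodes essentially a one-line semantic implication. Since every inferred axiom is thus semantically entailed, $\K^\vdash$ is semantically equivalent to $\K^{\mathrm{prep}}$, so $\incons$ (which is contradictory) can belong to $\K^\vdash$ only if $\K$ is unsatisfiable. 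This already gives the direction ``only unsatisfiable knowledge bases are refuted.''

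For \textbf{refutation-completeness} I prove the contrapositive: whenever $\incons\notin\K^\vdash$, the canonical model construction from the excerpt yields a genuine model of $\K$. First I would confirm that the construction is well-defined: the sequence $\dlstruct_0,\dlstruct_1,\dots$ is monotone (domain elements, precisifications, and all concept/role memberships only ever grow), so the limit $\dlstruct=\tuple{\Dom,\Precs,\sigma,\gamma}$ is well-defined, and the labelling $\Lambda_{\pi}$ is stable because the membership of a pre-existing element with respect to a pre-existing precisification never changes once fixed. The fairness condition---picking, among all pending violations of some $\standb{t}[E\sqsubseteq\exists R.F]$, a pair $\delta^*,\pi^*$ minimising the sum of introduction indices---guarantees that every violation is eventually repaired, so the limit satisfies all existential axioms; all other axiom shapes are already satisfied at each finite stage and hence survive to the limit.

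The technical core is a pair of supporting claims. The first is \emph{label soundness}: for all $\delta\in\Dom$ and $\pi\in\Precs$, $C\in\Lambda_{\pi}(\delta)$ implies $\delta\in\interpretgp{C}$, proved by structural induction on $C$. The atomic and conjunction cases are immediate; the existential and self cases use the concurrent role-saturation defining $\interpret{T}{\gamma(\pi)}$; and the modal cases $\standds$/$\standbs$ exploit the way a fresh precisification $\pi_{\delta',\standds D}$ is attached to exactly those standpoints above $\st$. The second is that the saturation rules force every membership demanded by $\K^\vdash$ into the relevant label: the forward-chaining rules (\subA--\subD), flattening (\flatA--\flatD), conjunction (\con), and existential rules (\exA--\exB) together ensure that each axiom of $\K$ is verified pointwise at every $\delta,\pi$. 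Finally I would argue that the construction can fail only by inserting $\bot$ into some $\Lambda_{\pi}(\delta)$, and that the backpropagation rules (\retA--\abr), together with the self-loop rules (\selfF--\selfB), propagate any such $\bot$ all the way to a derivation of $\incons$; since $\incons\notin\K^\vdash$, no element is ever contradictory, and hence $\dlstruct\models\K$.

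The hard part will be the label-soundness induction in the presence of role inclusion axioms (in particular role chains) and self-loops interacting with the freshly introduced precisifications. One must verify that the $k$-indexed RIA-saturation defining each $\interpret{T}{\gamma(\pi)}$ is compatible with the self-loop rules and with the nested modal operators, and that the diamond witnesses created through $\pi_{\delta',\standds D}$ correctly realise every $\standds$-obligation across all standpoints $\sp$ with $\st\preceq\sp$. Getting the induction invariants and the fairness bookkeeping exactly right---so that no existential or diamond obligation is starved and every edge forced by a chain axiom is present in the limit---is where the real effort lies; by contrast, soundness and the finitary well-definedness arguments are routine.
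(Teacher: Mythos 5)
Your proposal is correct and takes essentially the same approach as the paper: soundness is argued rule by rule against the semantics, and refutation-completeness is proved contrapositively via the same canonical model construction, resting on the invariant that $C\in\Lambda_{\pi}(\delta)$ implies $\delta\in C^{\gamma(\pi)}$ and on the fact that any $\bot$ entering a label would propagate (via the B-rules) to a derivation of $\standball[\top\sqsubseteq\standball[\top\Rightarrow\bot]]$, contradicting the assumption. The only organizational difference is that the paper keeps labels restricted to concept names and self-loop concepts and handles modal/existential obligations at the axiom level (using its ``sharpest standpoint'' lemmas), whereas you fold those cases into the label-soundness induction.
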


Then, together with our previous insights and \Cref{thm:reducibility}, we can use \Cref{thm:soundcomplete} to  establish tractability of the fundamental standard reasoning tasks in $\SELp$.

\begin{corollary}
	$\SELp$ \textsc{knowledge base satisfiability} and $\SELp$ \textsc{Statement entailment} are \PTime-complete.
\end{corollary}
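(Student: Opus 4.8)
The plan is to establish membership in \PTime and matching \PTime-hardness separately, for both problems, leaning almost entirely on the machinery already assembled in this section. First I would treat \SELp{} \textsc{knowledge base satisfiability}. The section records that the preprocessed knowledge base $\K^\mathrm{prep}$ is computable in deterministic polynomial time (via \Cref{lem:normalisation} together with the subsequent bounded rewriting steps), and that the saturation $\K^\vdash$ is likewise computable in deterministic polynomial time, since each axiom type ranges over a polynomially bounded parameter space. Combining this with the soundness and refutation-completeness asserted in \Cref{thm:soundcomplete}, the decision procedure is simply to compute $\K^\vdash$ and answer \textsc{no} iff the distinguished contradiction $\incons \in \K^\vdash$. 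Refutation-completeness guarantees correctness of this test, and the polynomial bounds guarantee it runs in \PTime, placing satisfiability in \PTime.

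Next I would lift this to \SELp{} \textsc{Statement entailment} using \Cref{thm:reducibility}, which provides a \PTime{} Turing reduction from entailment to satisfiability. Since the reduction issues only polynomially many oracle queries, each of polynomial size, and the oracle (satisfiability) is itself decidable in \PTime, the composition runs in polynomial time; hence entailment is in \PTime{} as well.

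For the matching lower bound, I would argue that \PTime-hardness is inherited from the \EL{}/\ELp{} fragment that \SELp{} subsumes. Any \ELp{} knowledge base can be viewed as a \SELp{} knowledge base by modalising every axiom with the universal standpoint $\star$, so that restricting to a single precisification recovers ordinary \ELp{} semantics; under this embedding \SELp{} satisfiability and entailment specialise exactly to the corresponding \EL-style reasoning tasks, which are already \PTime-hard \cite{Baader05ELenvelope}. Both reasoning problems therefore inherit \PTime-hardness, and together with the upper bounds we conclude \PTime-completeness.

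The proof carries no serious obstacle, as the heavy lifting resides in \Cref{thm:soundcomplete} and \Cref{thm:reducibility}. The only points demanding care are (i) verifying that the Turing reduction's polynomially many calls to a \PTime{} oracle indeed keep the whole procedure polynomial, and (ii) pinning down a concrete \PTime-hard problem within the embedded fragment and checking that the trivial single-standpoint embedding preserves it faithfully.
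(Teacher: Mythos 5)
Your proposal is correct and follows essentially the same route as the paper: \PTime{} membership via polynomial-time preprocessing and saturation followed by the test \mbox{$\incons \in \K^\vdash$}, justified by soundness and refutation-completeness (\Cref{thm:soundcomplete}); entailment handled by the \PTime{} Turing reduction of \Cref{thm:reducibility}; and \PTime-hardness inherited from plain $\EL$. Your explicit single-standpoint embedding of $\ELp$ under $\star$ merely spells out what the paper states as immediate.
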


Thereby, \PTime-hardness follows immediately from the \PTime-hardness of reasoning in plain $\EL$.

\section{Datalog-Based Implementation}
\label{sec:implementation}

We have prototypically implemented our approach in the Datalog-based language \Souffle~\cite{JordanSS16Souffle}.
The prototype's source code is available from our group's github site at \implementationurl.
The implementation currently does not scale well, so optimising both calculus and implementation is an important issue for future work.

\begin{figure*}[t]
    \vspace*{-1ex}
    \begin{flushleft}
        \ttfamily
        \smaller
        \input{sections/implementation-rules-tbox.tex}
    \end{flushleft}
    \vspace*{-2ex}
    \caption{Datalog implementation of the calculus excluding rules dealing with assertions and self-loops.
        Predicates with prefix ``$\mathtt{is\_}$'' declare the vocabulary:
        \mbox{$C\in\Concepts\leadsto\mathtt{is\_cn(C)}$},
        \mbox{$R\in\Roles\leadsto\mathtt{is\_rn(R)}$},
        \mbox{$\sts\in\Stands\leadsto\mathtt{is\_sn(s)}$}, and
        \mbox{$a\in\Individuals\leadsto\mathtt{is\_nom(\set{a})}$}.
        Note that
        %(\axB) and (\axC) are implemented by a single rule with conjunction in its head and
        deduction rule schema (\genr) needs multiple concrete instantiations.
        The symbols $\mathtt{*}$, $\top$, and $\bot$ are used for readability here;
        in the actual implementation, we use proper (but globally fixed) Datalog constants.
        %The full ruleset is available at \implementationurl.
    }
    \label{fig:implementation-rules}
    \vspace{-2ex}
\end{figure*}

\subsubsection{Calculus}
The calculus of \Cref{sec:calculus} is implemented in the pure Datalog fragment of \Souffle's input language.
Following the common approach, as e.g.\ detailed by \citeauthor{Krotzsch10OWLELreasoning}~(\citeyear{Krotzsch10OWLELreasoning}), we introduce a predicate symbol for each possible (normal-form) formula shape as shown in \Cref{fig:datalog-atoms}.
Implementing the deduction calculus then boils down to writing Datalog rules for all deduction rules;
the main rules of the calculus are shown in \Cref{fig:implementation-rules}~(p.\,\pageref{fig:implementation-rules}).
For the axiom schemas (tautologies) we make use of helper predicates that keep track of the vocabulary;
likewise, for nominals and self-loops we use binary predicates to translate back and forth between individual names/nominal concepts, and role names/self-loop concepts, respectively, hence treating nominals and self-loops as ``ordinary'' concept names.

\subsubsection{Normalisation}
For obtaining the normal form of a given $\SELp$ knowledge base in its full expressiveness, we employ several \Souffle features that are not strictly Datalog.
For one, we use algebraic data types\footnote{\smaller\ttfamily\url{https://souffle-lang.github.io/types\#algebraic-data-types-adt}} % last accessed 2023/03/11
to define term-based encodings of all structured constructs involved in representing knowledge bases, such as concept terms, axioms, formulas, etc., where the base types “standpoint name”, “role name”, and “concept name” are subtypes of the built-in type $\mathtt{symbol}$ (i.e., string).
More importantly, during normalisation we employ \Souffle's built-in functor $\mathtt{cat}$\footnote{\smaller\ttfamily\url{https://souffle-lang.github.io/arguments\#intrinsic-functor}} for concatenating strings to create unambiguous identifiers for newly introduced concept, role, and standpoint names.

\section{Conclusion and Future Work}\label{sec:conclusion}

In this paper, we presented the knowledge representation formalism Standpoint~$\ELp$, which extends the formerly proposed Standpoint~$\EL$ language~\cite{ourijcaisubmission} by a row of new modelling features: role chain axioms and self-loops, extended sharpening statements including standpoint disjointness, negated axioms, and modalised axiom sets.
We designed a deduction calculus that is sound and refutation-complete when applied to appropriately pre-processed $\SELp$ knowledge bases.
As preprocessing and exhaustive deduction rule application are shown to run in \PTime, we thereby established tractability of statisfiability checking of $\SELp$ knowledge bases and -- by virtue of a \PTime Turing reduction -- also tractability of checking the entailment of $\SELp$ statements from $\SELp$ knowledge bases, notably also allowing negated statements.%

We note that, if tractability is to be preserved, the options of further extending the expressivity of $\SELp$ are limited.
Clearly, any modelling feature that would turn the description logic $\EL$ intractable -- atomic negation, disjunction, cardinality restrictions, universal quantification as well as inverse or functional roles \cite{Baader05ELenvelope} -- would also destroy tractability of $\SELp$.
But also the free use of nominal concepts, which is known to still warrant \PTime reasoning when added to $\EL$ with role chain axioms and self-loops, has been shown to be computationally detrimental as soon as standpoints are involved.
The same holds if one allows for the declaration of roles to be rigid or for a more liberal semantics that would admit empty standpoints \cite{ourijcaisubmission}.

Beyond the theoretical advancement, we also believe that the developed deduction calculus can pave the way to practical reasoner implementations by means of Datalog materialisation, a method already proven to be competitive for reasoning in lightweight description logics. In order to demonstrate the principled feasibility of this approach, we implemented a publicly available prototype in \Souffle.

There are numerous avenues for future work. While the calculus is adequate to show our theoretical results and demonstrate feasibility, we are confident that there is much room for improvement when it comes to optimisation.
We expect that refactoring the set of deduction rules can significantly improve performance in practice.
In Datalog terms, it would be beneficial to reduce the number and arity of the predicates involved, the number of variables per rule, and the number of alternative derivations of the same fact. These goals may be in conflict and it is typically not straightforward to find the optimal sweet spot.
In this regard, realistic benchmarks can provide guidance.
While no off-the-shelf standpoint ontologies exist yet, we expect that sensible test cases can be generated from linked open data, ontology alignment settings, or ontology repositories with versioning.%

Likewise, the calculus can be analysed and improved in terms of more comprehensive completeness guarantees;
in fact, we conjecture that it already yields all entailed ``boxed'' assertions and concept inclusions over concept names.

More generally, we will investigate standpoint extensions of other light- or heavyweight ontology languages regarding computational properties and efficient reasoning.

%and if necessary extended in order to exhibit completeness 

%state-of-the-art datalog engine like VLog \cite{vlog16}

%to find a reduction to reasoning in standpoint-free (\textsc{PTime} extensions of) $\EL$ that is supported by existing reasoners (such as ELK \cite{KazakovKS14} or Snorocket \cite{Metke-JimenezL13}).

\bibliographystyle{kr}
\bibliography{bib/references}

\clearpage

\onecolumn

% turn on supplementary stuff
\longtrue

\appendix

\section{Proofs for \Cref{sec:syntax-semantics}}

\section{Proofs for \Cref{sec:calculus}}

\def\premise{\Gamma}
\def\conclusion{\theta}
\def\elem{\delta}

\begin{theorem}[Soundness]
    If a $\SELp$ knowledge base $\kb$ is satisfiable, there is a non-contradictory calculus derivation for knowledge base $\kb$.
\end{theorem}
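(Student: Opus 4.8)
The plan is to prove soundness rule-by-rule, lift it to whole derivations by induction, and then combine this with the fact that the target axiom $\incons$ is itself unsatisfiable. Since the saturation producing $\K^\vdash$ is deterministic and exhaustive, ``a non-contradictory derivation for $\kb$'' amounts to the single claim \mbox{$\incons \notin \K^\vdash$}, and it suffices to show that $\K^\vdash$ is satisfied by any model of the preprocessed knowledge base.

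First I would fix the semantics of the extended GCI shape used throughout \Cref{fig:calculus}: a structure $\dlstruct = \tuple{\Dom,\Precs,\sigma,\gamma}$ satisfies $\standb{t}[A \dlsub \standb{s}[B \Rightarrow C]]$ iff for every \mbox{$\pi \in \sigma(\mathsf{t})$}, every \mbox{$\delta \in A^{\gamma(\pi)}$}, and every \mbox{$\pi' \in \sigma(\mathsf{s})$}, we have that \mbox{$\delta \in B^{\gamma(\pi')}$} implies \mbox{$\delta \in C^{\gamma(\pi')}$} (this is exactly the reading $\standb{t}[A \dlsub \standb{s}[\neg B \dlor C]]$ indicated in the main text). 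With this convention, the heart of the argument is a Rule Soundness Lemma: for every deduction rule in \Cref{fig:calculus} with premise set $\Gamma$ and conclusion $\theta$, every DL standpoint structure $\dlstruct$ with \mbox{$\dlstruct \models \Gamma$} also satisfies \mbox{$\dlstruct \models \theta$}.

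I would establish this lemma by a case analysis over the rule groups, in each case merely unfolding the definitions. The tautology rules (\axA--\axD) hold in any structure by inspection (using that $*$ is universal, i.e.\ \mbox{$\sigma(*)=\Precs$}, and reflexivity of $\preceq$ and of concept/role inclusion). The hierarchy rules (\hierA--\inhierB) and the internal-inference and role-subsumption rules (\loc, \globtop, \rsub) follow from transitivity of $\subseteq$ on $\sigma$-images and the monotonicity that \mbox{$\sigma(\mathsf{s}) \subseteq \sigma(\mathsf{s}')$} propagates every $\mathsf{s}'$-constraint to $\mathsf{s}$. The forward-chaining (\subA--\subD) and flattening (\flatA--\flatD) rules chain implications across precisifications; the existential and conjunction rules (\exA--\con) reason about $R$-successors together with role inclusions and chains; the self-loop rules (\selfF--\selfB) translate between $\exists R.\Self$-membership and reflexive $R$-edges; the backpropagation rules (\retA--\abr) exploit that $\exists R.\bot$ and $\standd{s}\bot$ denote the empty set; and the individual-based rules (\abAA--\lastminute) additionally use the rigidity of the nominal interpretation $a^{\dlstruct}$. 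In each case the verification is short once the witnessing precisifications and domain elements are named.

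Finally I would lift the lemma to derivations and close the argument. By induction on the number of rule applications, every axiom in $\K^\vdash$ is satisfied by every model of $\K^{\mathrm{prep}}$: the base case is \mbox{$\K^{\mathrm{prep}} \subseteq \K^\vdash$}, and the inductive step applies the Rule Soundness Lemma to the (by hypothesis satisfied) premises of the last rule. Since normalisation is a conservative extension (\Cref{lem:normalisation}) and the further preprocessing steps---rewriting $\standbs[A \dlsub B]$ and $\standbs[A \dlsub \standbsp B]$ into the extended form, encoding assertions via nominal concepts, and adding the witness axioms for each $\mathsf{s}[a,B]$ and $P_{\mathsf{s},a,B}$---are conservative by construction, any model $\dlstruct$ of the satisfiable $\kb$ extends to a model $\dlstruct^\ast$ of $\K^{\mathrm{prep}}$, and hence $\dlstruct^\ast \models \K^\vdash$. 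But $\incons = \standball[\top \dlsub \standball[\top \Rightarrow \bot]]$ is unsatisfiable: it would force \mbox{$\Dom = \top^{\gamma(\pi)} \subseteq \bot^{\gamma(\pi)} = \emptyset$} for the precisifications \mbox{$\pi \in \sigma(*) = \Precs$}, contradicting \mbox{$\Dom \neq \emptyset$}. Therefore \mbox{$\dlstruct^\ast \not\models \incons$}, so \mbox{$\incons \notin \K^\vdash$}, i.e.\ the saturation is non-contradictory. The main obstacle is purely the breadth of the case analysis: the rules with nested box/diamond modalities (\flatC, \flatD, \inhierB), the chain-driven existential and assertional rules (\exC, \abeB, \lastminute), and the self-loop interactions (\selfC, \selfB) each require carefully tracking which precisification witnesses each modal subformula and applying the extended-GCI semantics uniformly.
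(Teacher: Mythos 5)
Your proposal is correct and takes essentially the same route as the paper's own proof: fix a model of the (preprocessed) knowledge base, verify for each deduction rule that satisfaction of the premises implies satisfaction of the conclusion, lift this over the saturation by induction, and conclude that the contradiction axiom cannot appear in $\K^\vdash$ because it is unsatisfiable. If anything, your write-up is more complete than the paper's, which checks only a few sample rules (\axA, \axB, \hierA, \flatD), leaves the remaining cases as ``easily seen,'' and keeps the conservativity of preprocessing and the unsatisfiability of $\standball[\top \sqsubseteq \standball[\top \Rightarrow \bot]]$ implicit.
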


\begin{proof}
    %    Suppose that $\kb$ is satisfiable (has a model). Then, after adding the result of the rule it will not contain the inconsistency. Then, suppose there is an inconsistency. Then it would be there before applying any rule, hence there was no model in the first place.

    Suppose $\kb$ is satisfiable. Then there is a model $\dlstruct = \tuple{\Dom, \Precs, \sigma, \gamma}$ such that $\dlstruct\models\kb$.

    \noindent To show soundness it suffices to show that for each rule $\premise |\conclusion$, we have $\dlstruct\models\conclusion$ if $\dlstruct\models\premise$. %This can be easily checked individually for every rule.
    This can be seen easily for both axioms and rules, and hence limit ourselves to show some examples. 

    \begin{itemize}
        \item Axions
              \begin{description}
                  \item[(\axA)] By the definition of $\sigma$, for all $\st\in\SC$ we have $\sigma(\st)\subseteq\sigma(\star)$. Hence, $\dlstruct\models\st\preceq\star$ as desired.
                  \item[(\axB)] Let $\pi\in\Pi$ be a precisification and let $\elem\in\Delta$ be a domain element.
                      Since $B \Rightarrow  C\equiv\neg B \sqcup  C$,
                      if $\elem\in C^{\gamma(\pi)}$ then clearly
                      $\elem\in (C\Rightarrow C)^{\gamma(\pi)}$.
                      Similarly, if $\elem\notin C^{\gamma(\pi)}$
                      then also $\elem\in (C\Rightarrow C)^{\gamma(\pi)}$.
                      Thus clearly $(C\Rightarrow C)^{\gamma(\pi)}=\Delta$,
                      and hence $\dlstruct\models\top \sqsubseteq \standball [ C \Rightarrow C ]$ as desired.
              \end{description}
        \item Rules
              \begin{description}
                  \item[(\hierA)] Assume that $\dlstruct\models{\st \preceq \sp}$ and $ \dlstruct\models \sp \preceq \spp$. Then we have that $\sigma(\st)\subseteq\sigma(\sp)$ and $\sigma(\sp)\subseteq\sigma(\spp)$, thus $\sigma(\st)\subseteq\sigma(\spp)$ and consequently $\dlstruct\models{\st \preceq \spp}$ as desired.
                  \item[(\flatD)] Assume that  $\standb{t} [C \sqsubseteq  \standd{s'} D] $ and $\standb{s'} [ D \sqsubseteq \standd{s} E ]$. If $\dlstruct\models\standb{t} [C \sqsubseteq  \standd{s'} D]$, then for all  $\pr\in\sigma(\mathrm{t})$ if $\delta\in C^{\gamma(\pi)}$ then there is some $\pr'\in\sigma(\st')$ such that $\delta\in D^{\gamma(\pi')}$. Moreover, since $\dlstruct\models\standb{s'} [ D \sqsubseteq \standd{s} E]$ and $\delta\in D^{\gamma(\pi')}$, there is some $\pr''\in\sigma(\st)$ such that $\delta\in E^{\gamma(\pi'')}$, and thus $\dlstruct\models\standb{t} [ C \sqsubseteq \standd{s} E]$ as desired.
              \end{description}
    \end{itemize}

\end{proof}

\begin{theorem}[Completeness]
    If there is a non-contradictory calculus derivation for a $\SELp$ knowledge base $\kb$, then $\kb$ is satisfiable.
\end{theorem}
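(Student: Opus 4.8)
The hypothesis that there is a \emph{non-contradictory} calculus derivation for $\kb$ is precisely the statement $\incons \notin \K^\vdash$, so the claim is the contrapositive of refutation-completeness. The plan is to exhibit a model of $\K$, namely the canonical structure $\dlstruct = \tuple{\Dom, \Precs, \sigma, \gamma}$ obtained as the limit of the construction sequence $\dlstruct_0, \dlstruct_1, \ldots$ described above, built over the saturated, preprocessed knowledge base $\K^\vdash$. First I would record the two structural facts that make the limit well defined: the construction is monotone (adding a fresh domain element $\delta'$ and possibly fresh precisifications $\pi_{\delta',\standd{s}D}$ never alters the concept- or role-memberships of already-present elements in already-present precisifications), so $\dlstruct = \tuple{\bigcup_i \Dom_i, \bigcup_i \Precs_i, \bigcup_i \sigma_i, \bigcup_i \gamma_i}$ is a genuine DL standpoint structure; and the pairs $(\delta^*, \pi^*)$ chosen for repair are scheduled by the minimal-index criterion, which enforces fairness, so that every violation of an axiom $\standbs[E \sqsubseteq \exists R.F]$ present at some stage is eventually repaired and hence holds in $\dlstruct$.

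The technical core is two invariants maintained along the sequence. The first is \emph{labelling soundness}: for every $\delta \in \Dom$ and $\pi \in \Precs$, $C \in \Lambda_\pi(\delta)$ implies $\delta \in C^{\gamma(\pi)}$. I would prove this by induction on the construction stage together with a structural induction on $C$, checking each concept shape against the deduction rule responsible for closing the labels: conjunction is handled by rule~(\con), existential subconcepts by~(\exA) and~(\exC), self-loop concepts $\exists R.\Self$ by~(\selfA), (\selfB), (\selfF), nested box/diamond concepts by the flattening rules~(\flatA)--(\flatD), and the consistency of $\sigma$ with the sharpening order $\preceq$ by the standpoint-hierarchy rules~(\hierA), (\hierB), (\genr), (\inhierB). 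With labelling soundness in hand, verifying that $\dlstruct$ satisfies every \emph{non-existential} statement type of $\K$ (extended GCIs, RIAs, assertions, sharpening statements) becomes a case analysis matching each statement shape against the initialisation/labelling clause that feeds it, using that saturation has already closed $\K^\vdash$ under the relevant rule.

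The second invariant is \emph{non-falsity}: $\bot \notin \Lambda_\pi(\delta)$ for all $\delta$ and $\pi$. This is the only way the construction could fail, since $\bot$ in a label would, by labelling soundness, force $\delta \in \bot^{\gamma(\pi)} = \emptyset$. I would argue the contrapositive: tracing how $\bot$ could enter a label back through the backpropagation rules~(\retA)--(\abr), which lift $\bot$-derivations across existential witnesses, nested boxes, diamonds, and nominals, one assembles a derivation of the contradiction axiom $\incons$, contradicting the hypothesis $\incons \notin \K^\vdash$. Combining the three ingredients---well-definedness with fairness, labelling soundness (hence satisfaction of all non-existential axioms and, via fairness, of the existential ones), and non-falsity (no element is rendered contradictory)---yields $\dlstruct \models \K$, so $\K$ is satisfiable.

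I expect the main obstacle to be the labelling-soundness invariant at the newly introduced precisifications. The labels of fresh elements are defined through $\mathrm{Con}(F,\pi^*)$ and the saturated entailments, and new diamond-witness precisifications $\pi_{\delta',\standd{s}D}$ interact delicately with box-modalities and with the RIA-saturation fixpoint $T^{\gamma_{i+1}(\pi)} = \bigcup_k [T^{\gamma_{i+1}(\pi)}]_k$. The careful part is showing that this simultaneous role closure introduces no tuples that would spuriously satisfy or violate an $\exists R.F$-obligation or a self-loop, and that the box- and diamond-memberships of a fresh element are correctly induced across \emph{all} precisifications it participates in. A secondary but laborious point is the bookkeeping in the $\bot$-backpropagation step: matching each syntactic route by which $\bot$ can appear to exactly one rule, so that the routes compose into a single derivation of $\incons$.
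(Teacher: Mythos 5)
Your proposal follows essentially the same route as the paper's own proof: the same canonical-model limit construction with fairness scheduling, the same labelling-soundness invariant (that $C \in \Lambda_{\pi}(\delta)$ implies $\delta \in C^{\gamma(\pi)}$), the same axiom-by-axiom verification of normal-form statements against the saturated set $\K^\vdash$, and the same non-falsity argument in which an occurrence of $\bot$ in any label is traced back through the backpropagation rules (\retA)--(\abr) to a derivation of $\incons$, contradicting the hypothesis. The paper's detailed proof additionally relies on a ``sharpest standpoint'' device (for every precisification $\pi$ there is some $\s{\pi} \in \sigma^{-1}(\pi)$ with $\s{\pi} \preceq \st \in \K^\vdash$ for all $\st \in \sigma^{-1}(\pi)$, so that saturated formulas can be relativised to $\standb{\s{\pi}}$), which is the technical workhorse behind the case analyses you sketch --- including the hardest case, left existentials $\standbs[\exists R.C \sqsubseteq D]$ interacting with the role-saturation fixpoint, which you correctly flag as the main obstacle.
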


\subsection*{Model construction}

\newcommand{\hh}[1]{\textcolor{green!70!black}{#1}}

Given a Standpoint \EL knowledge base \KB in normal form, let $\KB^\vdash$ denote the set of axioms obtained by saturating \KB under the above deduction rules.
Assuming $\KB^\vdash$ does not contain $\standball[ \top \sqsubseteq \standball[\top \Rightarrow \bot]]$, we construct a model of $\KB$  in an infinite process: we start from an initialized model and extend it (both by adding domain elements and precisifications) in a stepwise fashion, resulting in a ``monotonic'' sequence of models. The result of the process is the ``limit'' of this sequence, which can be expressed via an infinite union.

\medskip

For the initialization, we choose the standpoint structure
$\dlstruct_0 = \tuple{\Dom_0, \Precs_0, \sigma_0, \gamma_0}$ where:
\begin{itemize}
    \item $\Dom_0$ consists of one element $\delta_a$ for every individual name $a$ mentioned in \KB;
    \item $\Precs_0$ consists of one precisification $\pi_\mathsf{s}$ for every standpoint $\st$ mentioned in \KB (including $*$);
    \item $\sigma_0$ maps each standpoint $\st$ to the set $\{\pi_{\mathsf{s}}\} \cup \{\pi_{\mathsf{s'}} \mid \sp\prec\st \in \KB^\vdash \}$;
    \item $\gamma_0$ maps each $\pi_\mathsf{s}$ to the description logic interpretation $\struct$ over $\Dom_0$, where
          \begin{itemize}
              \item $\interprets{a} = \delta_a$ for each individual name $a$.
              \item $\interprets{A} = \{\delta_a \mid \standb{u}[\{a\} \sqsubseteq \standb{s}[\top \Rightarrow A]] \in \KB^\vdash, \st\in\sigma^{-1}_0(\pi_\mathsf{s})\}$
                    %                \item $\interprets{A} = \color{red}\{\delta_a \mid \top \sqsubseteq \standb{s}[\{a\} \Rightarrow A] \in \KB^\vdash, \st\in\sigma^{-1}_0(\pi_\mathsf{s})\}$  for every concept name $A$;
              \item $\interprets{R} = \{(\delta_a, \delta_b) \mid \standb{s}[R(a,b)] \in \KB^\vdash, \st\in\sigma^{-1}_0(\pi_\mathsf{s})\}$
                    for each role name $R$;
          \end{itemize}
\end{itemize}

\medskip

Note that the obtained structure satisfies all axioms of $\KB$ except for those of the shape $\standb{s}[E \sqsubseteq \exists R.F]$.
This will also be the case for all structures $\dlstruct_1$, $\dlstruct_2$, ... produced in the following.

Moreover, we define $\Lambda_{\pi}(\delta_a)= \{C \mid \standb{u}[\{a\} \sqsubseteq \standb{s}[\top \Rightarrow C]] \in \KB^\vdash, \st\in\sigma^{-1}_0(\pi)\}$.

\medskip

Given such a structure $\dlstruct_i = \tuple{\Dom_i, \Precs_i, \sigma_i, \gamma_i}$, check if it satisfies all axioms of the form $\standb{t}[E \sqsubseteq \exists R.F]$. If so, $\dlstruct_i$ is a model of \KB and we are done. Otherwise, we pick some  $\delta^* \in \Delta_i$ and some $\pi^* \in \Precs_i$ with $ \pi^*\in \sigma(\mathsf{t})$ for which $\delta^* \in E^{\gamma(\pi^*)} \setminus (\exists R.F)^{\gamma(\pi^*)}$ for some (previously unsatisfied) axiom $\standb{t}[E \sqsubseteq \exists R.F]$ from \KB.
Among the admissible pairs $\delta^*$, $\pi^*$, we pick one that is minimal wrt. $\min \{ j \leq i \mid \delta^* \in \Delta_j\} + \min \{ j \leq i \mid \pi^* \in \Pi_j\}$.

Then obtain $\dlstruct_{i+1} = \tuple{\Dom_{i+1}, \Precs_{i+1}, \sigma_{i+1}, \gamma_{i+1}}$ from
$\dlstruct_i = \tuple{\Dom_i, \Precs_i, \sigma_i, \gamma_i}$ as follows:

\begin{itemize}
    \item $\Dom_{i+1} = \Dom_{i} \cup \{\delta'\}$, where $\delta'$ is a fresh domain element;
    \item let $\mathrm{Con}(F,\pi^*) = \{ A \mid \standb{u}[ \top \sqsubseteq \standb{s} [F \Rightarrow A] ] \in \KB^\vdash, \st \in \sigma^{-1}(\pi^*) \}$
    \item $\Precs_{i+1}$ is obtained from $\Precs_{i}$ by adding a fresh precisification $\pi_{\delta',\standd{s} D}$
          whenever there is some $C \in \mathrm{Con}(F,\pi^*)$ with $\standb{t}[C \sqsubseteq \standd{s} D] \in \KB^\vdash$ for some $\mathsf{t} \in \sigma^{-1}(\pi^*)$
    \item let $\sigma_{i+1}$ be such that $\sigma_{i+1}(\spp)= \sigma_{i}(\spp)\cup \set{\pi_{\delta',\standd{s} D}}$ if  $\spp\in\{\st\} \cup \{\sp \mid \st\prec\sp \in \KB^\vdash \}$ and  $\sigma_{i+1}(\spp)= \sigma_{i}(\spp)$ otherwise.
          %extending $\sigma_{i}$ with  $\pi_{\delta',\standd{s} D} \mapsto \{\st\} \cup \{\sp \mid \st\prec\sp \in \KB^\vdash \}$;
    \item for $\pi \in \Precs_{i+1}$ we let
          $$ \Lambda_{\pi}(\delta') := \begin{cases}
                  \mathrm{Con}(F,\pi^*)                                                                                                                                                                                                            & \text{ if } \pi = \pi^*                                                            \\
                  \displaystyle\bigcup_{{G \in \mathrm{Con}(F,\pi^*)},\ {\mathsf{s} \in \sigma^{-1}_{i+1}(\pi)},\ {\mathsf{t} \in \sigma^{-1}_{i+1}(\pi^*)}}\{ A \mid  \standb{t}[G \sqsubseteq \standb{s} [\top \Rightarrow A]] \in \KB^\vdash \} & \text{ if } \pi \in \Precs_{i} \setminus \{ \pi^* \}                               \\
                  \displaystyle\bigcup_{{G \in \mathrm{Con}(F,\pi^*)},\ {\mathsf{s} \in \sigma^{-1}_{i+1}(\pi)},\ {\mathsf{t} \in \sigma^{-1}_{i+1}(\pi^*)}}\{ A \mid  \standb{t}[G \sqsubseteq \standb{s} [D \Rightarrow A]] \in \KB^\vdash \}    & \text{ if } \pi = \pi_{\delta',\standd{s} D} \in \Precs_{i+1} \setminus \Precs_{i} \\
              \end{cases}$$
    \item Let $\gamma_{i+1}$ be the interpretation function defined as follows:
          \begin{itemize}
              \item $a^{\gamma_{i+1}(\pi)} = a^{\gamma_{i}(\pi)} = \delta_a$ for each individual name $a$ and each $\pi \in \Precs_{i+1}$.
              \item for concept names $A$, we let $A^{\gamma_{i+1}(\pi)} = A^{\gamma_{i}(\pi)} \cup \{\delta'\}$ if $A \in \Lambda_{\pi}(\delta')$, and $A^{\gamma_{i+1}(\pi)} = A^{\gamma_{i}(\pi)}$ otherwise.
              \item for all role names $T$, we obtain $T^{\gamma_{i+1}(\pi)}$ through a concurrent saturation process, that is,
                    $T^{\gamma_{i+1}(\pi)} = \bigcup_{k\in \mathbb{N}} [T^{\gamma_{i+1}(\pi)}]_k$, where we let
                    $[T^{\gamma_{i+1}(\pi)}]_0 = \mathit{Self} \cup \mathit{Other}$, with
                    \begin{align*}
                        \mathit{Self} = & \begin{cases}
                                              \{(\delta',\delta')\} & \text{ if } \exists T.\mathsf{Self} \in \Lambda_{\pi}(\delta'), \\
                                              \emptyset             & \text{otherwise,}
                                          \end{cases}                                                        \\
                        \mathit{Other}= & \begin{cases}
                                              \emptyset                                     & \text{ whenever } \pi \in \Precs_{i+1} \setminus \Precs_{i}                             \\
                                              T^{\gamma_{i}(\pi)}                           & \text{ for } \pi \in \Precs_{i} \text{ whenever } \ T \neq R \text{ or } \pi \neq \pi^* \\
                                              R^{\gamma_{i}(\pi^*)} \cup (\delta^*,\delta') & \text{ if } T = R \text{ and } \pi = \pi^*
                                          \end{cases}
                    \end{align*}
                    Moreover, obtain $$[T^{\gamma_{i+1}(\pi)}]_{k+1} = [T^{\gamma_{i+1}(\pi)}]_{k} \cup \bigcup_{{\mathsf{s}\in \sigma^{-1}_{i+1}(\pi)}\atop{\standb{s} [ R_0 \sqsubseteq T ] \in \kb^\vdash}} [R_0^{\gamma_{i+1}(\pi)}]_{k}
                        \cup \bigcup_{{\mathsf{s}\in \sigma^{-1}_{i+1}(\pi)}\atop{\standb{s} [ R_1 \circ R_2  \sqsubseteq T ] \in \kb^\vdash}} [R_1^{\gamma_{i+1}(\pi)}]_{k} \circ [R_2^{\gamma_{i+1}(\pi)}]_{k}$$
                    %            $\mathit{Other}=$
                    %		            \begin{itemize}
                    %			            \item $\emptyset$
                    %			                  whenever $\pi \in \Precs_{i+1} \setminus \Precs_{i}$;
                    %			            \item $T^{\gamma_{i}(\pi)}$
                    %			                  for $\pi \in \Precs_{i}$ whenever  $t \neq r$ or $\pi \neq \pi^*$;
                    %			            \item $R^{\gamma_{i}(\pi^*)} \cup (\delta^*,\delta')$ if $T = R$ and $\pi = \pi^*$;
                    %		            \end{itemize}
          \end{itemize}
\end{itemize}

\noindent After producing the (potentially infinite) sequence $\dlstruct_0, \dlstruct_1, \ldots$ we obtain the wanted model $\dlstruct$ via
$$\dlstruct = \tuple{\bigcup_i\Dom_i,\bigcup_i\Precs_i,\bigcup_i\sigma_i,\bigcup_i\gamma_i}$$

\newpage

\begin{lemma}\label{lemma:standpoint-nestings}
    Let $\dlstruct_i$ be as described by the model construction, and let $\pi\in\Pi_i$. Then, there is a standpoint $\s{\pi}\in\sigma^{-1}_i(\pi)$ such that for all $\st\in\sigma^{-1}_i(\pi)$ we have $\s{\pi}\prec\st\in \KB^\vdash$.
\end{lemma}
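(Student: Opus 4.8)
The plan is to prove the lemma by induction on $i$, strengthening the statement to an explicit description of each precisification's $\sigma$-preimage. To this end I would assign to every precisification a canonical \emph{root standpoint}: the precisification $\pi_\mathsf{s}$ created during initialisation has root $\mathsf{s}$, and a precisification $\pi_{\delta',\standd{s} D}$ introduced in some extension step has root $\mathsf{s}$, the standpoint occurring in the diamond $\standd{s} D$ that triggered its creation. The invariant I would maintain is that, for every $\pi\in\Precs_i$ with root $\mathsf{r}$,
\[
    \sigma^{-1}_i(\pi)=\set{\st\in\Stands\guard \mathsf{r}\preceq\st\in\KB^\vdash}.
\]
Once this is established, the lemma follows by setting $\s{\pi}\eqdef\mathsf{r}$: since $\mathsf{r}$ is a standpoint name occurring in $\KB$ (hence in $\KB^\vdash$), the reflexivity tautology $\axE$ yields $\mathsf{r}\preceq\mathsf{r}\in\KB^\vdash$, which simultaneously places $\mathsf{r}$ in $\sigma^{-1}_i(\pi)$ and, via the displayed equality, witnesses $\mathsf{r}\preceq\st\in\KB^\vdash$ for every $\st\in\sigma^{-1}_i(\pi)$.

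For the base case $i=0$ I would unfold the definition of $\sigma_0$: a precisification $\pi_\mathsf{t}$ lies in $\sigma_0(\st)$ exactly when $\mathsf{t}=\st$ or $\mathsf{t}\preceq\st\in\KB^\vdash$. Reflexivity ($\axE$) subsumes the first disjunct under the second, so this collapses to $\sigma^{-1}_0(\pi_\mathsf{t})=\set{\st\guard\mathsf{t}\preceq\st\in\KB^\vdash}$, which is the invariant with root $\mathsf{t}$. For the step from $\dlstruct_i$ to $\dlstruct_{i+1}$ I would distinguish the two kinds of precisifications present in $\Precs_{i+1}$. Each freshly added $\pi_{\delta',\standd{s} D}$ is, by the update rule for $\sigma_{i+1}$, inserted into $\sigma_{i+1}(\spp)$ precisely for $\spp\in\set{\mathsf{s}}\cup\set{\sp\guard\mathsf{s}\preceq\sp\in\KB^\vdash}$; using reflexivity once more, this set equals $\set{\st\guard\mathsf{s}\preceq\st\in\KB^\vdash}$, matching the invariant with root $\mathsf{s}$. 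For a pre-existing $\pi\in\Precs_i$ I would note that the only changes to $\sigma$ add fresh precisifications, all distinct from $\pi$; hence $\sigma^{-1}_{i+1}(\pi)=\sigma^{-1}_i(\pi)$ and the invariant is inherited verbatim from the induction hypothesis.

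I do not anticipate a deep obstacle: the argument is bookkeeping steered by the definitions of $\sigma_0$ and $\sigma_{i+1}$. The two points meriting care are, first, the stability claim for old precisifications, where I would make explicit that a single step may spawn several fresh precisifications at once (one per eligible $\standd{s} D$), that none of them coincides with any $\pi\in\Precs_i$, and that additions to different standpoints are independent, so that no old preimage is enlarged; and second, the systematic appeal to the reflexivity tautology $\axE$, which is what guarantees both that the root lies in its own preimage and that the witnessing sharpening statement is genuinely an element of $\KB^\vdash$.
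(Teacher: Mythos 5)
Your proof is correct and takes essentially the same route as the paper's: induction over the construction steps, choosing as witness the standpoint that gave rise to the precisification ($\mathsf{s}$ for $\pi_{\mathsf{s}}$ in the base case, the diamond's standpoint $\mathsf{s}$ for each fresh $\pi_{\delta',\standd{s} D}$ in the step), with the reflexivity tautology (\axE) guaranteeing that this witness lies in its own preimage. Your explicit preimage invariant $\sigma^{-1}_i(\pi)=\set{\st \guard \mathsf{r}\preceq\st\in\KB^\vdash}$ and the stability argument for pre-existing precisifications simply spell out what the paper's terser proof leaves implicit.
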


\begin{proof}
    This follows easily by the construction of $\sigma$.
    First, consider the case of $\dlstruct_0$. By definition, $\sigma_0(\st)=\{\pi_{\mathsf{s}}\} \cup \{\pi_{\mathsf{s'}} \mid \sp\prec\st \in \KB^\vdash \}$, and since by $\axE$ we have $\st\prec\st \in \KB^\vdash$, hence the claim holds for $\dlstruct_0$. Now we assume that the claim holds for $\dlstruct_i$ and we show that it also holds for $\dlstruct_{i+1}$. Recall that $\sigma_{i+1}(\st)= \sigma_{i}(\st)\cup \set{\pi_{\delta',\standd{s''} D}}$ if  $\st\in\{\spp\} \cup \{\sp \mid \st\prec\sp \in \KB^\vdash \}$ and  $\sigma_{i+1}(\st)= \sigma_{i}(\st)$ otherwise. Again by $\axE$ we have $\spp\prec\spp \in \KB^\vdash$, hence the claim holds for $\dlstruct_{i+1}$ and thus for $\dlstruct$ as desired.
\end{proof}

\begin{lemma}\label{lemma:sharpest-box}
    Let $\dlstruct_i$ be as described by the model construction, and let $\pi\in\Pi_i$. Then, if $\phi\in\kb^{\vdash}$, $\pi\in\sigma_i(\st)$ and $\standbs$ is a symbol in $\phi$, then the formula $\phi^{\pi}$ replacing some occurrences of $\standbs$ by $\standb{\s{\pi}}$ is also in the saturated set, $\phi^{\pi}\in\kb^{\vdash}$.
\end{lemma}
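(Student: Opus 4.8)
The plan is to reduce the statement to repeated applications of the two standpoint-sharpening rules of the calculus, feeding them the premise supplied by \Cref{lemma:standpoint-nestings}. First I would unfold the hypothesis: $\pi\in\sigma_i(\st)$ is just $\st\in\sigma^{-1}_i(\pi)$, so \Cref{lemma:standpoint-nestings} applies and yields the single fact that drives the whole argument, namely $\s{\pi}\preceq\st\in\kb^\vdash$. This is exactly the shape of premise demanded by the hierarchy rules, with $\st$ in the ``looser'' slot and $\s{\pi}$ in the ``sharper'' slot.

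The key structural observation is that every occurrence of a box operator in a normal-form (extended) axiom sits in exactly one of two syntactic positions: either it is the outermost box, scoping over the entire body $\xi$ (a GCI, RIA, diamond-GCI, or role assertion); or it is the inner box of a nested GCI $\standb{t}[C\sqsubseteq\standbs[D\Rightarrow E]]$. A brief inspection of the admissible axiom shapes confirms exhaustiveness, since no other shape contains a box (note that the diamond-GCI $\standbs[C\sqsubseteq\standd{\stsp}D]$ contributes only an outer box, the diamond not being a box). Crucially, the calculus provides a dedicated sharpening rule for each position: rule~(\genr) replaces an outer box $\standbsp\xi$ by $\standbs\xi$ given $\st\preceq\sp$, and rule~(\inhierB) replaces the inner box of a nested GCI under a premise of the same kind, leaving the outer box untouched.

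With this in place, I would prove the lemma by induction on the number of occurrences of $\standbs$ that $\phi^\pi$ sharpens to $\standb{\s{\pi}}$. The base case (zero replacements) is immediate, as then $\phi^\pi=\phi\in\kb^\vdash$. For the inductive step, suppose the first $k$ chosen occurrences have already been sharpened, yielding some $\psi\in\kb^\vdash$, and select the next occurrence of $\standbs$ to replace. If it is an outer box, apply rule~(\genr) to the premises $\psi$ and $\s{\pi}\preceq\st$ (so that $\st$ plays the role of the looser standpoint and $\s{\pi}$ the sharper one); if it is the inner box of a nested GCI, apply rule~(\inhierB) to $\psi$ and $\s{\pi}\preceq\st$. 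In both cases the derived axiom is precisely the one that sharpens one further occurrence, and since $\kb^\vdash$ is closed under the deduction rules, it again lies in $\kb^\vdash$. The order in which occurrences are treated is irrelevant, because sharpening an outer box does not affect the applicability of (\inhierB) to an inner box and vice versa. Exhausting the chosen occurrences gives $\phi^\pi\in\kb^\vdash$.

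The main obstacle here is bookkeeping rather than any hard computation: I must verify that the case distinction on box positions is genuinely exhaustive for the normal form, so that no occurrence of $\standbs$ escapes the reach of (\genr) and (\inhierB), and that the single premise $\s{\pi}\preceq\st$ delivered by \Cref{lemma:standpoint-nestings} matches exactly what both rules require. Once these are checked, each replacement step is an immediate one-rule inference and the induction goes through.
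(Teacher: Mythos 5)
Your proposal is correct and follows essentially the same route as the paper's proof: invoke \Cref{lemma:standpoint-nestings} to obtain $\s{\pi}\preceq\st\in\kb^\vdash$, then sharpen outer boxes via rule~(\genr) and inner boxes of nested GCIs via rule~(\inhierB), using closure of $\kb^\vdash$ under the deduction rules. The paper compresses this into ``it is easy to see''; your explicit case distinction on box positions and induction on the number of replaced occurrences simply fills in that detail faithfully.
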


\begin{proof}
    By \Cref{lemma:standpoint-nestings} and $\pi\in\sigma_i(\st)$, we have that $\s{\pi}\prec\st\in \KB^\vdash$. Then it is easy to see that for $\phi$ having any formula shape with a $\standbs$, we can apply rule \genr and rule \inhierB and obtain precisely that each formula $\phi^{\pi}\in\kb^{\vdash}$, with $\phi^{\pi}$ being any formula resulting from replacing one or more occurrences of $\standbs$ by $\standb{\s{\pi}}$.
\end{proof}

\begin{lemma}\label{lemma:self-loops}
    Let $\dlstruct_i$ be as described by the model construction. Then $A\in\Lambda_{\pi}(\delta)$ iff $\delta\in A^{\gamma_i(\pi)}$ and $\,\exists R.\Self\in\Lambda_{\pi}(\delta)$ iff  $(\delta,\delta)\in R^{\gamma_i(\pi)}$.
\end{lemma}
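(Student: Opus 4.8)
The plan is to establish both equivalences simultaneously by induction on the construction stage $i$, with a nested induction on the saturation level $k$ for the role interpretations $[T^{\gamma_{i+1}(\pi)}]_k$. Since $\Lambda_\pi$ records concept names and self-loop concepts alike, the concept-name claim and the self-loop claim are proved in parallel, the former feeding the inductive hypothesis for the latter.

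The concept-name equivalence is essentially a matter of unfolding the construction. In $\dlstruct_0$ it is immediate from the definitions of $\gamma_0$ and $\Lambda_\pi(\delta_a)$; in the step from $\dlstruct_i$ to $\dlstruct_{i+1}$ the fresh element $\delta'$ is placed into $A^{\gamma_{i+1}(\pi)}$ exactly when $A \in \Lambda_\pi(\delta')$, while concept memberships of pre-existing elements at pre-existing precisifications are inherited verbatim, so the inductive hypothesis transfers. The forward direction of the self-loop equivalence, $\exists R.\Self \in \Lambda_\pi(\delta) \Rightarrow (\delta,\delta) \in R^{\gamma_i(\pi)}$, is almost as direct: for a fresh $\delta'$ the summand $\mathit{Self}$ of $[R^{\gamma_{i+1}(\pi)}]_0$ contributes $(\delta',\delta')$ precisely when $\exists R.\Self \in \Lambda_\pi(\delta')$, and for an individual $\delta_a$ rule (\selfF) converts $\exists R.\Self \in \Lambda_\pi(\delta_a)$ into the assertion $\standb{s}[R(a,a)] \in \kb^\vdash$, which puts $(\delta_a,\delta_a)$ into $R^{\gamma_0(\pi)}$ by definition of $\gamma_0$.

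The real work lies in the backward direction for self-loops, $(\delta,\delta) \in R^{\gamma_i(\pi)} \Rightarrow \exists R.\Self \in \Lambda_\pi(\delta)$. The decisive structural fact is that every role edge created between distinct elements during the construction is oriented from a parent $\delta^*$ to a freshly added child $\delta'$; hence the tree-shaped generated part admits no directed cycle through distinct elements, and this property is preserved by the RIA- and chain-closure defining $[T^{\gamma_{i+1}(\pi)}]_k$ (shown by induction on $k$, as composing downward-oriented edges can never climb back to an ancestor). Consequently a pair $(\delta',\delta')$ can enter $R^{\gamma_{i+1}(\pi)}$ only directly through $\mathit{Self}$, or by saturating self-loops: from $(\delta',\delta') \in R_0^{\gamma_{i+1}(\pi)}$ with $\standb{s}[R_0 \sqsubseteq R] \in \kb^\vdash$, or from $(\delta',\delta') \in R_1^{\gamma_{i+1}(\pi)}$ and $(\delta',\delta') \in R_2^{\gamma_{i+1}(\pi)}$ with $\standb{s}[R_1 \circ R_2 \sqsubseteq R] \in \kb^\vdash$. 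In either case the sub-inductive hypothesis provides $\exists R_0.\Self$ (respectively $\exists R_1.\Self$ and $\exists R_2.\Self$) in $\Lambda_\pi(\delta')$, and rules (\selfA), (\selfB) together with the forward-chaining rule (\subA) yield $\exists R.\Self \in \Lambda_\pi(\delta')$. Self-loops at individuals arise instead from the individual core, where assertions $\standb{s}[R(a,a)] \in \kb^\vdash$ (closed under the assertional chain rule (\abC)) feed rule (\selfE) to place $\exists R.\Self$ into $\Lambda_\pi(\delta_a)$.

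Standpoint bookkeeping is routed through \Cref{lemma:standpoint-nestings} and \Cref{lemma:sharpest-box}: before combining a labelling axiom valid at $\pi$ with a propagation axiom produced by (\selfA) or (\selfB), I sharpen all relevant box operators to the sharpest standpoint $\s{\pi}$ of $\pi$ and specialise the globally-boxed propagation axioms (outer $*$) to the required outer standpoint via (\genr), so that outer and inner standpoints align and (\subA) becomes applicable. The main obstacle I anticipate is exactly this reconciliation step — verifying that the concepts collected into $\Lambda_\pi(\delta')$ in each of the three defining cases are genuinely closed under (\selfA), (\selfB), and (\subA) after sharpening — since it is where the role fixpoint, the case split in the definition of $\Lambda_\pi$, and the Horn-style propagation of the calculus must all be made to fit together.
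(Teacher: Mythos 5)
Your proposal is correct and follows essentially the same route as the paper's proof: induction on the construction stage, the key structural observation that all newly created cross-element edges point to the fresh element $\delta'$ (so a self-loop can only originate in $\mathit{Self}$ or be propagated from existing self-loops via RIAs and chains), and the calculus rules (\selfF), (\selfE), (\selfA), (\selfB) plus forward chaining with sharpening via \Cref{lemma:sharpest-box} to close $\Lambda_\pi(\delta')$ in each of the three defining cases. The only small refinement relative to your sketch is that the chain case is discharged in the paper with the conjunction rule (\con) applied to the conjunctive GCI produced by (\selfB), rather than with (\subA) alone.
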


\begin{proof}
    For the concept names the case is trivial since the definition coincides. Thus, we show the case of self-loops by induction on the model construction.
    \begin{description}
        \item[Base Case]: Consider $\exself\in\Lambda_{\pi}(\delta)$ of $\dlstruct_0$.  $(\Rightarrow)$ If $\exself\in\Lambda_{\pi}(\delta)$ then $\standb{u}[\{a\} \sqsubseteq \standb{s}[\top \Rightarrow \exself]] \in \KB^\vdash$ for $\pi\in\sigma_0(\st)$. Then we obtain $\standbs[R(a,a)]\in \KB^\vdash$ by rule \selfF, and thus by construction $(\delta,\delta)\in R^{\gamma_0(\pi)}$. $(\Leftarrow)$ If $(\delta,\delta)\in R^{\gamma_0(\pi)}$ then by construction we have some $\standbs[R(a,a)]\in \KB^\vdash$ with $\pi\in\sigma_0(\st)$. Then by the rule \selfE we obtain $\standb{u}[\{a\} \sqsubseteq \standb{s}[\top \Rightarrow \exself]] \in \KB^\vdash$ and hence $\exself\in\Lambda_{\pi}(\delta)$ as desired.

        \item[Inductive Step]: %First, we consider the case of the newly introduced $\delta'$. Then think whether the case of a prior $\delta$ should be considered. 
        By the inductive hypothesis, assume that the statement holds for $\dlstruct_{i}$ and we show that it also holds for $\dlstruct_{i+1}$.

        Consider $\exself\in\Lambda_{\pi}(\delta')$ of $\dlstruct_{i+1}$. $(\Rightarrow)$ If $\exself\in\Lambda_{\pi}(\delta')$ then by construction $(\delta',\delta')\in R^{\gamma_{i+1}(\pi)}$, and we notice that for $\delta\in\Delta_{i}\setminus\Delta_{i+1}$ the same holds by induction. $(\Leftarrow)$ This is the harder case. We show that if $(\delta',\delta')\in R^{\gamma_{i+1}(\pi)}$ then $\exself\in\Lambda_{\pi}(\delta')$. We observe the saturation process that defines $R^{\gamma_{i+1}(\pi)}$ and we notice that for every pair $(\delta_1,\delta_2)\in[R^{\gamma_{i+1}(\pi)}]_{k+1}$ and $(\delta_1,\delta_2)\notin[R^{\gamma_{i+1}(\pi)}]_{k}$, we have $\delta'=\delta_2$. With regards to self-loops, we will ext show  that, since $\delta'$ is fresh, relations of the shape $(\delta',\delta')\in[R^{\gamma_{i+1}(\pi)}]_{k+1}$ can exclusively be triggered by relations $(\delta',\delta')\in[R'^{\gamma_{i+1}(\pi)}]_{0}$, i.e. elements in the set $\mathit{Self}$.
        \begin{itemize}
            \item By definition if $(\delta',\delta')\in [R^{\gamma_{i+1}(\pi)}]_0$ then necessarily $\exself\in\Lambda_{\pi}(\delta')$ since $\delta'$ is fresh.
            \item Let $(\delta',\delta')\notin[R^{\gamma_{i+1}(\pi)}]_{k}$ and $(\delta',\delta')\in [R^{\gamma_{i+1}(\pi)}]_{k+1}$. Then either
                  \begin{description}
                      \item[Case 1] $(\delta',\delta')\in\{[R_0^{\gamma_{i+1}(\pi)}]_{k} \mid  \standb{s'} [ R_0 \sqsubseteq R ] \in \kb^\vdash,\ \sp \in\sigma^{-1}_{i+1}(\pi)\}$ Thus, by \Cref{lemma:sharpest-box}, there must be some $\standb{\s{\pi}} [ R_0 \sqsubseteq R ] \in \kb^\vdash$  such that $(\delta',\delta')\in[R_0^{\gamma_{i+1}(\pi)}]_{k}$.
                          \begin{description}
                              \item[Case 1.1] Assume $\pi = \pi^*$ and $\pi\in\sigma(\st)$
                                  \begin{enumerate}[label={(\arabic*)}]
                                      \item $\standb{u} [\top\sqsubseteq \standb{\s{\pi}} [F \Rightarrow \exists R_0.\Self]] \in \KB^\vdash$, by the assumption that $\exists R_0.\Self\in\Lambda_\pi(\delta')$
                                      \item $\standb{\s{\pi}} [ R_0 \sqsubseteq R ] \in \kb^\vdash$ by the assumption
                                      \item $\standball[\top \sqsubseteq \standb{\s{\pi}}	[\exists R_0.\Self \Rightarrow \exself ]]\in \KB^\vdash$, by (2) and rule \selfA
                                      \item $\standb{u} [\top\sqsubseteq \standb{\s{\pi}} [F \Rightarrow \exists R.\Self]] \in \KB^\vdash$, by (1), (3) and rule \subA, thus $\exists R.\Self\in \Lambda_{\pi}(\delta')$ as desired
                                  \end{enumerate}
                              \item[Case 1.2]  Assume $\pi \in \Precs_{i} \setminus \{ \pi^* \}$ and $\pi\in\sigma(\st)$. Since $\exists R_0.\Self\in \Lambda_{\pi}(\delta')$ then
                                  \begin{enumerate}[label={(\arabic*)}]
                                      \item $\standb{\s{\pi^*}}[G \sqsubseteq \standb{\s{\pi}} [\top \Rightarrow \exists R_0.\Self] ]\in \KB^\vdash$ by \Cref{lemma:sharpest-box} for some ${G \in \mathrm{Con}(F,\pi^*)}$
                                      \item $\standb{\s{\pi}} [ R_0 \sqsubseteq R ] \in \kb^\vdash$ by the assumption
                                      \item $\standball[\top \sqsubseteq \standb{\s{\pi}}	[\exists R_0.\Self \Rightarrow \exself ]]\in \KB^\vdash$, by (2) and rule \selfA
                                      \item $\standball[\top \sqsubseteq \standball [G\Rightarrow \top ]]\in \KB^\vdash$ by axiom \axC
                                      \item $\standball[G \sqsubseteq \standb{\s{\pi}}[\exists R_0.\Self \Rightarrow \exself ]]\in \KB^\vdash$ by (3), (4) and rule \subB
                                      \item $\standb{\s{\pi^*}}[G \sqsubseteq \standb{\s{\pi}}[\exists R_0.\Self \Rightarrow \exself ]]\in \KB^\vdash$ by (7) and \Cref{lemma:sharpest-box}
                                      \item $\standb{\s{\pi^*}}[G \sqsubseteq \standb{\s{\pi}} [\top \Rightarrow \exists R.\Self] ]\in \KB^\vdash$, by (1), (8) and rule \subA, thus $\exists R.\Self\in \Lambda_{\pi}(\delta')$ as desired
                                  \end{enumerate}
                              \item[Case 1.3]  Assume $ \pi = \pi_{\delta',\standd{s} E}$ and $\pi\in\sigma(\st)$. Since $\exists R_0.\Self\in \Lambda_{\pi}(\delta')$ then
                                  \begin{enumerate}[label={(\arabic*)}]
                                      \item $\standb{\s{\pi^*}}[G \sqsubseteq \standb{\s{\pi}} [E \Rightarrow \exists R_0.\Self] ]\in \KB^\vdash$ by \Cref{lemma:sharpest-box} for some ${G \in \mathrm{Con}(F,\pi^*)}$
                                      \item $\standb{\s{\pi}} [ R_0 \sqsubseteq R ] \in \kb^\vdash$ by the assumption
                                      \item $\standball[\top \sqsubseteq \standb{\s{\pi}}	[\exists R_0.\Self \Rightarrow \exself ]]\in \KB^\vdash$, by (2) and rule \selfA
                                      \item $\standball[\top \sqsubseteq \standball [G\Rightarrow \top ]]\in \KB^\vdash$ by axiom \axC
                                      \item $\standball[G \sqsubseteq \standb{\s{\pi}}[\exists R_0.\Self \Rightarrow \exself ]]\in \KB^\vdash$ by (3), (4) and rule \subB
                                      \item $\standb{\s{\pi^*}}[G \sqsubseteq \standb{\s{\pi}}[\exists R_0.\Self \Rightarrow \exself ]]\in \KB^\vdash$ by (7) and \Cref{lemma:sharpest-box}
                                      \item $\standb{\s{\pi^*}}[G \sqsubseteq \standb{\s{\pi}} [E \Rightarrow \exists R.\Self] ]\in \KB^\vdash$, by (1), (8) and rule \subA, thus $\exists R.\Self\in \Lambda_{\pi}(\delta')$ as desired
                                  \end{enumerate}
                          \end{description}

                      \item[Case 2] $(\delta',\delta')\in\{[R_1^{\gamma_{i+1}(\pi)}]_{k} \circ [R_2^{\gamma_{i+1}(\pi)}]_{k} \mid \standb{s'} [ R_1 \circ R_2  \sqsubseteq R ] \in \kb^\vdash,\ \mathsf{s'}\in \sigma^{-1}_{i+1}(\pi)\}$. Thus, by \Cref{lemma:sharpest-box}, there must be some $\standb{\s{\pi}}[ R_1 \circ R_2  \sqsubseteq R ] \in \kb^\vdash$  such that $(\delta',\delta')\in[R_1^{\gamma_{i+1}(\pi)}]_{k}$ and $(\delta',\delta')\in[R_2^{\gamma_{i+1}(\pi)}]_{k}$.

                          \begin{description}
                              \item[Case 2.1] Assume $\pi = \pi^*$ and $\pi\in\sigma(\st)$
                                  \begin{enumerate}[label={(\arabic*)}]
                                      \item $\standball [\top\sqsubseteq \standb{\s{\pi}} [F \Rightarrow \exists R_1.\Self]] \in \KB^\vdash$, by the assumption that $\exists R_1.\Self\in\Lambda_\pi(\delta')$ and rule \globtop
                                      \item $\standball [\top\sqsubseteq \standb{\s{\pi}} [F \Rightarrow \exists R_2.\Self]] \in \KB^\vdash$, by the assumption that $\exists R_2.\Self\in\Lambda_\pi(\delta')$ and rule \globtop
                                      \item $\standb{\s{\pi}} [ R_1 \circ R_2  \sqsubseteq R ] \in \kb^\vdash$ by the assumption
                                      \item $\standb{\s{\pi}}	[\exists R_1.\Self \sqcap \exists R_2.\Self \Rightarrow \exself ]\in \KB^\vdash$, by (3) and rule \selfB
                                      \item $\standball [\top\sqsubseteq \standb{\s{\pi}} [F \Rightarrow \exists R.\Self]] \in \KB^\vdash$, by (1), (2), (4) and rule \con, thus $\exists R.\Self\in \Lambda_{\pi}(\delta')$ as desired
                                  \end{enumerate}
                              \item[Case 2.2]  Assume $\pi \in \Precs_{i} \setminus \{ \pi^* \}$ and $\pi\in\sigma(\st)$. Since $\exists R_0.\Self\in \Lambda_{\pi}(\delta')$ then
                                  \begin{enumerate}[label={(\arabic*)}]
                                      \item $\standb{\s{\pi^*}}[G_1 \sqsubseteq \standb{\s{\pi}} [\top \Rightarrow \exists R_1.\Self] ]\in \KB^\vdash$ by \Cref{lemma:sharpest-box} for some ${G \in \mathrm{Con}(F,\pi^*)}$
                                      \item $\standb{\s{\pi^*}}[G_2 \sqsubseteq \standb{\s{\pi}} [\top \Rightarrow \exists R_2.\Self] ]\in \KB^\vdash$ by \Cref{lemma:sharpest-box} for some ${G \in \mathrm{Con}(F,\pi^*)}$
                                      \item $\standb{\s{\pi}} [ R_1 \circ R_2  \sqsubseteq R ]\in \kb^\vdash$ by the assumption
                                      \item $\standb{\s{\pi}}	[\exists R_1.\Self \sqcap \exists R_2.\Self \Rightarrow \exself ]\in \KB^\vdash$, by (3) and rule \selfB
                                      \item $\standb{u_1} [\top\sqsubseteq \standb{\s{\pi^*}} [F \Rightarrow G_1]]\in \KB^\vdash$ by the construction of $\mathrm{Con}(F,\pi^*)$ and \Cref{lemma:sharpest-box}
                                      \item $\standb{u_2} [\top\sqsubseteq \standb{\s{\pi^*}} [F \Rightarrow G_2]] \in \KB^\vdash$ by the construction of $\mathrm{Con}(F,\pi^*)$ and \Cref{lemma:sharpest-box}
                                      \item $\standb{\s{\pi^*}}[F\sqsubseteq \standb{\s{\pi}} [\top \Rightarrow \exists R_1.\Self] ]\in \KB^\vdash$ by (1), (5) and rule \subB
                                      \item $\standb{\s{\pi^*}}[F\sqsubseteq \standb{\s{\pi}} [\top \Rightarrow \exists R_2.\Self] ]\in \KB^\vdash$ by (2), (6) and rule \subB
                                      \item $\standb{\s{\pi^*}}[F \sqsubseteq \standb{\s{\pi}} [\top \Rightarrow \exists R.\Self] ]\in \KB^\vdash$, by (4), (7), (8) and rule \con, thus $\exists R.\Self\in \Lambda_{\pi}(\delta')$ as desired
                                  \end{enumerate}
                              \item[Case 2.3]  Assume $ \pi = \pi_{\delta',\standd{s} E}$ and $\pi\in\sigma(\st)$. Since $\exists R_0.\Self\in \Lambda_{\pi}(\delta')$ then
                                  \begin{enumerate}[label={(\arabic*)}]
                                      \item $\standb{\s{\pi^*}}[G_1 \sqsubseteq \standb{\s{\pi}} [E \Rightarrow \exists R_1.\Self] ]\in \KB^\vdash$ by \Cref{lemma:sharpest-box} for some ${G \in \mathrm{Con}(F,\pi^*)}$
                                      \item $\standb{\s{\pi^*}}[G_2 \sqsubseteq \standb{\s{\pi}} [E \Rightarrow \exists R_2.\Self] ]\in \KB^\vdash$ by \Cref{lemma:sharpest-box} for some ${G \in \mathrm{Con}(F,\pi^*)}$
                                      \item $\standb{\s{\pi}} [ R_1 \circ R_2  \sqsubseteq R ]\in \kb^\vdash$ by the assumption
                                      \item $\standb{\s{\pi}}	[\exists R_1.\Self \sqcap \exists R_2.\Self \Rightarrow \exself ]\in \KB^\vdash$, by (3) and rule \selfB
                                      \item $\standb{u_1} [\top\sqsubseteq \standb{\s{\pi^*}} [F \Rightarrow G_1]]\in \KB^\vdash$ by the construction of $\mathrm{Con}(F,\pi^*)$ and \Cref{lemma:sharpest-box}
                                      \item $\standb{u_2} [\top\sqsubseteq \standb{\s{\pi^*}} [F \Rightarrow G_2]] \in \KB^\vdash$ by the construction of $\mathrm{Con}(F,\pi^*)$ and \Cref{lemma:sharpest-box}
                                      \item $\standb{\s{\pi^*}}[F\sqsubseteq \standb{\s{\pi}} [E \Rightarrow \exists R_1.\Self] ]\in \KB^\vdash$ by (1), (5) and rule \subB
                                      \item $\standb{\s{\pi^*}}[F\sqsubseteq \standb{\s{\pi}} [E \Rightarrow \exists R_2.\Self] ]\in \KB^\vdash$ by (2), (6) and rule \subB
                                      \item $\standb{\s{\pi^*}}[F \sqsubseteq \standb{\s{\pi}} [E \Rightarrow \exists R.\Self] ]\in \KB^\vdash$, by (4), (7), (8) and rule \con, thus $\exists R.\Self\in \Lambda_{\pi}(\delta')$ as desired
                                  \end{enumerate}
                          \end{description}
                  \end{description}

        \end{itemize}
    \end{description}

\end{proof}

\begin{proof}[Completeness proof]
    In order to show completeness, we will show that if the model construction fails, then necessarily the saturated knowledge base contains the innconsistency axiom, $\incons\in\kb^\vdash$. First, we will use the previously constructed model to  show that for all $\phi\in\kb$ then $\dlstruct\models\phi$. This is proved by induction for all types of axioms allowed in the normal form. By \Cref{lemma:self-loops}, we use  $C\in\Lambda_{\pi}(\delta)$ in place of $\delta\in C^{\gamma_i(\pi)}$ for $C$ a concept name and of $(\delta,\delta)\in R^{\gamma_i(\pi)}$ when $C$ is of the form $\exself$.

    \begin{numberclaim}\label{claim:compl-claim}
        Let $\kb$ be a $\SELp$ knowledge base and let $\dlstruct$ be the model obtained by the model construction. Then for all $\phi\in\kb$ we have $\dlstruct\models\phi$.
    \end{numberclaim}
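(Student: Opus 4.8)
The plan is to prove the claim by a case distinction on the syntactic shape of $\phi$, exploiting that $\kb$, in its preprocessed normal form, consists only of sharpening statements, (extended) modalised GCIs, modalised RIAs, and modalised assertions, and that $\kb\subseteq\kb^\vdash$. Throughout, I would work through two bridges between the combinatorial object $\kb^\vdash$ and the constructed structure $\dlstruct$: first, \Cref{lemma:self-loops}, which identifies membership in $\Lambda_{\pi}(\delta)$ with the corresponding interpretation for concept names and self-loop concepts, so that $\Lambda$ can serve as a faithful proxy for atomic (and self-loop) concept satisfaction; and second, \Cref{lemma:sharpest-box} (together with \Cref{lemma:standpoint-nestings}), which lets me replace any operator $\standbs$ occurring in a derived axiom by the operator $\standb{\s{\pi}}$ of the sharpest standpoint of the precisification $\pi$ under consideration. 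The latter is the key device for collapsing the ``for all standpoints mapping to $\pi$'' quantification built into the semantics onto a single, concretely derivable axiom in $\kb^\vdash$.

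First I would dispatch the structurally simple cases. Sharpening statements $\st\preceq\sp$ and $\st_1\cap\st_2\preceq\sp$ follow from the definition of $\sigma$ together with the standpoint-hierarchy closure of $\kb^\vdash$ (rules \hierA, \hierB), which guarantee $\sigma(\st)\subseteq\sigma(\sp)$ and $\sigma(\st_1)\cap\sigma(\st_2)\subseteq\sigma(\sp)$, respectively. Role assertions $\standbs[R(a,b)]$ and concept assertions are immediate from the definitions of $\gamma_0$ and of $\Lambda_{\pi}(\delta_a)$. Modalised RIAs $\standbs[R_1\sqsubseteq R_2]$ and $\standbs[R_1\circ R_2\sqsubseteq R_3]$ hold because each role reduct $T^{\gamma(\pi)}$ is, by construction, obtained as the concurrent saturation $\bigcup_k[T^{\gamma(\pi)}]_k$ under exactly those RIAs in $\kb^\vdash$ valid at the standpoints of $\pi$; here \Cref{lemma:sharpest-box} again justifies that saturating under the sharpest standpoint $\s{\pi}$ suffices.

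The substantive work lies in the GCIs. For an extended GCI $\standb{t}[C\sqsubseteq\standb{s}[D\Rightarrow E]]$, I would fix $\pi\in\sigma(\mathsf{t})$ with $\delta\in C^{\gamma(\pi)}$ and $\pi'\in\sigma(\mathsf{s})$ with $\delta\in D^{\gamma(\pi')}$, then inspect the construction step that introduced $\delta$, unfold the definition of $\Lambda_{\pi'}(\delta)$, and apply the forward-chaining and conjunction rules (\subA--\subD, \con) together with \Cref{lemma:self-loops} and \Cref{lemma:sharpest-box} to conclude $E\in\Lambda_{\pi'}(\delta)$, hence $\delta\in E^{\gamma(\pi')}$. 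For existential-right GCIs $\standbs[C\sqsubseteq\exists R.F]$, satisfaction \emph{in the limit} is where the fairness of the element-selection ordering matters: every pair $(\delta^*,\pi^*)$ witnessing a violation in some $\dlstruct_i$ attains minimal rank eventually and is serviced, producing a fresh $R$-successor whose label is $\mathrm{Con}(F,\pi^*)$. Existential-left GCIs $\standbs[\exists R.D\sqsubseteq C]$ and diamond-right GCIs $\standbs[C\sqsubseteq\standd{s'}D]$ I would handle by locating the single construction step at which the relevant $R$-edge (respectively the relevant precisification $\pi_{\delta',\standd{s}D}$) was created and reading off the applicable rule (\exB, and the diamond machinery behind $\Precs_{i+1}$).

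I expect the main obstacle to be the existential-left case in combination with freshly added precisifications and the role saturation. Concretely, if $\delta\in(\exists R.D)^{\gamma(\pi)}$ one must exhibit a genuine $R$-successor $\varepsilon$ with $D\in\Lambda_{\pi}(\varepsilon)$, but $R$-edges may arise only indirectly, through the RIA- and self-loop-driven layered closure $[T^{\gamma(\pi)}]_k$, so the witnessing edge need not have been inserted directly. Disentangling which deduction rule is responsible for the $R$-membership inside that closure, and verifying that the successor's label, as defined for \emph{every} $\pi$ including the newly introduced $\pi_{\delta',\standd{s}D}$ and $\pi^*$, indeed carries $D$ while respecting the box quantifier over $\sigma(\mathsf{s})$, is the delicate bookkeeping on which the argument hinges; it is precisely here that \Cref{lemma:self-loops} and \Cref{lemma:sharpest-box} must be deployed with care to keep the standpoint quantifiers aligned. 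Satisfaction in the limit $\dlstruct$ then transfers from the $\dlstruct_i$ because memberships of pre-existing elements and labels are immutable under the construction, so no universally quantified axiom established at stage $i$ can be broken later.
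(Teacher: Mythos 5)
Your plan is correct and follows essentially the same route as the paper's proof: a case distinction over the normal-form statement shapes, with \Cref{lemma:self-loops} and \Cref{lemma:sharpest-box} as the bridges between $\kb^\vdash$ and $\dlstruct$, the three-case analysis ($\pi=\pi^*$, pre-existing $\pi$, fresh $\pi_{\delta',\standd{s}D}$) for elements introduced during construction, fairness for existential-right GCIs, and immutability of labels for transfer to the limit. You also correctly single out the existential-left case with edges arising through the layered role saturation as the genuinely delicate part, which is exactly where the paper invests its ``(Case Forward)'' bookkeeping.
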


    \begin{description}[leftmargin=0.3cm, itemindent=-0.1cm, itemsep=0.5cm]
        \item[$\phi=\st\preceq\sp$] Assume $\st\preceq\sp\in\kb$. Then it is clear from the definition of $\sigma_0$ that by construction $\dlstruct_0\models\st\preceq\sp$. Now, we assume that $\dlstruct_i\models\st\preceq\sp$ and show that $\dlstruct_{i+1}\models\st\preceq\sp$.
            \begin{enumerate}
                \item Notice that we have $\sigma_i(\st)\subseteq\sigma_i(\sp)$ and $\sigma_{i+1}$ is such that for all $\pi\in\Pi_{i}$ and standpoint $\spp$, $\pi\in\sigma_{i+1}(\spp)$ if $\pi\in\sigma_{i}(\spp)$.
                \item So we must only show that if $\pi\in\Pi_{i+1}\setminus\Pi_{i}$ and $\pi\in\sigma_{i+1}(\st)$ then we must have $\pi\in\sigma_{i+1}(\sp)$.
                \item By the construction of $\sigma_{i+1}$, if $\pi\in\Pi_{i+1}\setminus\Pi_{i}$, then there is a standpoint $\spp$ for which $\pi$ has been created such that either $\st=\spp$ or   $\st\prec\spp \in \KB^\vdash$.
                \item If $\st=\spp$, then $\pi\in\sigma_{i+1}(s)$ as desired, from the definition of $\sigma_{i+1}$ and from $\st\preceq\sp\in\KB^\vdash$.
                \item Otherwise, since $\KB^\vdash$ is saturated under the deduction rules and by the assumption, we have $\st\preceq\sp\in\KB^\vdash$, then by the non-applicability of the rule \hierA, we must also have $\sp\preceq\spp\in\KB^\vdash$, which again by the definition of $\sigma_{i+1}$ implies that $\pi\in\sigma_{i+1}(\sp)$ as desired.
                \item Thus, necessarily $\sigma_{i+1}(\st)\subseteq\sigma_{i+1}(\sp)$ and $\dlstruct_{i+1}\models\st\preceq\sp$.
                \item Finally, from the definition of $\dlstruct$ we obtain $\dlstruct\models\st\preceq\sp$ as desired.
            \end{enumerate}

        \item[$\phi=\st_{1} \cap \st_{2}\preceq\sp$] Assume $\st_{1} \cap \st_{2}\preceq\sp\in\kb$. Let us first show that $\dlstruct_0\models\st_{1} \cap \st_{2}\preceq\sp$. %Then from the definition of $\sigma_0$ either $\sigma_0(\st_{1})\cap\sigma_0(\st_{2})=\emptyset$ or there is some $\st$ such that $\st\prec\st_{1} \in \KB^\vdash$ and $\st\prec\st_{2} \in \KB^\vdash$. 
            \begin{description}
                \item[Case 1:] Assume that $\sigma_0(\st_{1})\cap\sigma_0(\st_{2})=\emptyset$. Then by construction there is no $\st$ such that $\st\prec\st_{1} \in \KB^\vdash$ and $\st\prec\st_{2} \in \KB^\vdash$, and $\dlstruct_0\models\st_{1} \cap \st_{2}\preceq\sp$ is trivially satisfied.
                \item[Case 2:] Assume that $\sigma_0(\st_{1})\cap\sigma_0(\st_{2})\neq\emptyset$. Then, by the construction of $\sigma_0$, if $\pi\in\sigma_0(\st_{1})\cap\sigma_0(\st_{2})$ then there must be some $\st$ such that $\pi\in\sigma_0(\st)$ and both $\st\prec\st_{1} \in \KB^\vdash$ and  $\st\prec\st_{2} \in \KB^\vdash$, thus $\pi\in\sigma_0(\st_{1})$ and $\pi\in\sigma_0(\st_{2})$.
                    Then, by the non-applicability of the rules \hierA and \hierB, also $\st\prec\sp \in \KB^\vdash$, and thus also $\pi\in\sigma_0(\sp)$. Hence, $\dlstruct_0\models\st_{1} \cap \st_{2}\preceq\sp$ as desired.
            \end{description}
            We now assume that $\dlstruct_i\models\st_{1} \cap \st_{2}\preceq\sp$ and, in a similar way, we show that then $\dlstruct_{i+1}\models\st_{1} \cap \st_{2}\preceq\sp$.
            \begin{description}
                \item[Case 1:] Assume that $\sigma_{i+1}(\st_{1})\cap\sigma_{i+1}(\st_{2})=\emptyset$. Then by construction there is no $\st$ such that $\st\prec\st_{1} \in \KB^\vdash$ and $\st\prec\st_{2} \in \KB^\vdash$, and $\dlstruct_{i+1}\models\st_{1} \cap \st_{2}\preceq\sp$ is trivially satisfied.
                \item[Case 2:] Assume that $\sigma_{i+1}(\st_{1})\cap\sigma_{i+1}(\st_{2})\neq\emptyset$. Then, by the construction of $\sigma_{i+1}$, if $\pi\in\sigma_{i+1}(\st_{1})\cap\sigma_{i+1}(\st_{2})$ then either (a) $\pi\in\Pi_{i+1}\cap\Pi_{i}$ and thus $\pi\in\sigma_i(\sp)$ by the inductive hypothesis, giving us $\pi\in\sigma_{i+1}(\sp)$ as required or (b) $\pi\in\Pi_{i+1}\setminus\Pi_{i}$, and hence there must be some $\st$ such that $\pi\in\sigma_{i+1}(\st)$ and both $\st\prec\st_{1} \in \KB^\vdash$ and $\st\prec\st_{2} \in \KB^\vdash$, thus $\pi\in\sigma_{i+1}(\st_{1})$ and $\pi\in\sigma_0(\st_{2})$. Then, again by the non-applicability of the rules \hierA and \hierB, also $\st\prec\sp \in \KB^\vdash$, and thus also $\pi\in\sigma_{i+1}(\sp)$. Hence, $\dlstruct_{i+1}\models\st_{1} \cap \st_{2}\preceq\sp$ as desired.
            \end{description}
            Finally, from the definition of $\dlstruct$ we obtain $\dlstruct\models\st_{1} \cap \st_{2}\preceq\sp$ as desired.

            \item[$\phi=\standbs {[R \sqsubseteq R']} $] Assume $\standbs [R \sqsubseteq R'] \in\KB^\vdash$.
            \begin{description}
                \item[Base Case]: First, let us show that $\dlstruct_0\models\standbs [R \sqsubseteq R']$. We let $\pi\in\sigma_0(\st)$ and $(\delta,\delta')\in R^{\gamma_{0}(\pi)}$.
                \begin{enumerate}[label=(\arabic*)]
                    \item $\standb{\s{\pi}}[R(a,b)]\in\KB^\vdash$ by construction and \Cref{lemma:sharpest-box}
                    \item $\standb{\s{\pi}} [R \sqsubseteq R'] \in\KB^\vdash$ by assumption and \Cref{lemma:sharpest-box}
                    \item $\standb{\s{\pi}}[R'(a,b)]\in\KB^\vdash$ by (1), (2) and rule \abB
                \end{enumerate}
                From this we obtain that if $\pi\in\sigma_0(\st)$ and $(\delta,\delta')\in R^{\gamma_{0}(\pi)}$ then $(\delta,\delta')\in R'^{\gamma_{0}(\pi)}$, thus $\dlstruct_0, \pi\models R \sqsubseteq R' $ for all $\pi\in\sigma_0(\st)$ and hence $\dlstruct_0\models \standbs [R \sqsubseteq R']$ by the semantics.
                \item[Inductive Step]: Now, we assume that $\dlstruct_i\models\standbs [R \sqsubseteq R'] $ and show that $\dlstruct_{i+1}\models\standbs [R \sqsubseteq R']$.
                We must show that if $(\delta,\delta')\in R^{\gamma_{i+1}(\pi)}$ then $(\delta,\delta')\in R'^{\gamma_{i+1}(\pi)}$ for all $\pi\in\sigma_{i+1}(\st)$.
                From the concurrent saturation process that determines $R'^{\gamma_{i+1}(\pi)}$, we precisely have that $(\delta,\delta')\in [R'^{\gamma_{i+1}(\pi)}]_{k+1}$ if $(\delta,\delta')\in [R^{\gamma_{i+1}(\pi)}]_{k}$, $\standbs [R \sqsubseteq R']\in\KB^\vdash$ and $\pi\in\sigma(\st)$, and hence it easily follows that $\dlstruct_{i+1}\models\standbs [R \sqsubseteq R']$ as desired.

            \end{description}
            \item[$\phi=\standbs{[R_{1} \circ R_{2}\sqsubseteq R']}$] Assume $\standbs[R_{1} \circ R_{2}\sqsubseteq R'] \in\KB$.
            \begin{description}
                \item[Base Case]: First, let us show that $\dlstruct_0\models\standbs[R_{1} \circ R_{2}\sqsubseteq R']$. We let $\pi\in\sigma_0(\st)$, $(\delta,\delta')\in R_{1}^{\gamma_{0}(\pi)}$ and $(\delta',\delta'')\in R_{2}^{\gamma_{0}(\pi)}$.
                \begin{enumerate}[label=(\arabic*)]
                    \item $\standb{\s{\pi}} (R_{1} \circ R_{2}\sqsubseteq R') \in\KB^\vdash$ and $\pi\in\sigma_0(\st)$ by assumption and \Cref{lemma:sharpest-box}
                    \item $\standb{\s{\pi}}[R(a,b)]\in\KB^\vdash$ by construction and \Cref{lemma:sharpest-box}
                    \item $\standb{\s{\pi}}[R(b,c)]\in\KB^\vdash$ by construction and \Cref{lemma:sharpest-box}
                    \item $\standb{\s{\pi}}[R'(a,c)]\in\KB^\vdash$ by (1), (2), (3) and rule \abC
                \end{enumerate}
                From this we obtain that if $\pi\in\sigma_0(\st)$, $(\delta,\delta')\in R_{1}^{\gamma_{0}(\pi)}$ and $(\delta',\delta'')\in R_{2}^{\gamma_{0}(\pi)}$ then $(\delta,\delta'')\in R'^{\gamma_{0}(\pi)}$. Thus $\dlstruct_0, \pi\models R_{1} \circ R_{2}\sqsubseteq R' $ for all $\pi\in\sigma_0(\st)$ and hence $\dlstruct_0\models \standbs[R_{1} \circ R_{2}\sqsubseteq R']$ by the semantics.
                \item[Inductive Step]: Now, we assume that $\dlstruct_i\models\standbs ( R_{1} \circ R_{2} \sqsubseteq R') $ and show that $\dlstruct_{i+1}\models\standbs [R_{1} \circ R_{2} \sqsubseteq R']$.
                We must show that if $(\delta,\delta')\in R_{1}^{\gamma_{i+1}(\pi)}$ and $(\delta',\delta'')\in R_{2}^{\gamma_{i+1}(\pi)}$ then $(\delta,\delta'')\in R'^{\gamma_{i+1}(\pi)}$ for all $\pi\in\sigma_{i+1}(\st)$.
                From the concurrent saturation process that determines $R_1^{\gamma_{i+1}(\pi)}, R_2^{\gamma_{i+1}(\pi)}$ and $R'^{\gamma_{i+1}(\pi)}$ and the assumption, we have that for some $k < m$ $(\delta,\delta')\in [R_{1}^{\gamma_{i+1}(\pi)}]_{k}$ and for some $l < m$
                $(\delta',\delta'')\in [R_{2}^{\gamma_{i+1}(\pi)}]_{l}$, thus also $(\delta,\delta')\in [R_{1}^{\gamma_{i+1}(\pi)}]_{m-1}$ and $(\delta',\delta'')\in [R_{2}^{\gamma_{i+1}(\pi)}]_{m-1}$. Hence we also have $(\delta,\delta'')\in [R'^{\gamma_{i+1}(\pi)}]_{m}$ and finally  $(\delta,\delta'')\in R'^{\gamma_{i+1}(\pi)}$ as desired.
            \end{description}
            \item[$\phi=\standbs{[C\sqsubseteq D]}$]  Assume $\standbs[C\sqsubseteq D]\in\KB$ (i.e. $\standb{u}[\top \sqsubseteq \standb{s}[C \Rightarrow D]] \in \KB^\vdash$ and thus also $\standball[\top \sqsubseteq \standb{s}[C \Rightarrow D]] \in \KB^\vdash$ due to rule \globtop).
            \begin{description}
                \item[Base Case]: First, let us show that $\dlstruct_0\models\standbs[C\sqsubseteq D]$. We let $\pi\in\sigma_0(\st)$ and $C\in\Lambda_{\pi}(\delta)$.
                \begin{enumerate}[label=(\arabic*)]
                    \item $\standball[\top \sqsubseteq \standb{s}[C \Rightarrow D]]\in \KB^\vdash$, by assumption
                    \item $\standb{u}[\{a\} \sqsubseteq \standb{\s{\pi}}[\top \Rightarrow C] ]\in \KB^\vdash$, by construction, \Cref{lemma:sharpest-box} and the assumption that $C\in\Lambda_{\pi}(\delta)$ \label{proof:1}
                    \item $\standball[\top \sqsubseteq \standb{\s{\pi}}[C \Rightarrow D] ]\in \KB^\vdash$ by (1) and \Cref{lemma:sharpest-box}
                    \item $\allstandb[\{a\} \sqsubseteq \standb{\s{\pi}}[C \Rightarrow D] ]\in \KB^\vdash$ by (3) and rule \abAA
                    \item $\standb{u}[\{a\} \sqsubseteq \standb{\s{\pi}}[C \Rightarrow D] ]\in \KB^\vdash$ by (4) and rule \inhierB
                    \item $\standb{u}[\{a\} \sqsubseteq \standb{\s{\pi}}[\top \Rightarrow D]] \in \KB^\vdash$ by (2), (5) rule \subA, and thus $D\in\Lambda_{\pi}(\delta)$ as required
                \end{enumerate}
                From this we obtain that if $\pi\in\sigma_0(\st)$ and $C\in\Lambda_{\pi}(\delta)$ then $D\in\Lambda_{\pi}(\delta)$, thus $\dlstruct_0, \pi\models C\sqsubseteq D$ for all $\pi\in\sigma_0(\st)$ and hence $\dlstruct_0\models \standbs[C\sqsubseteq D]$ by the semantics.

                \item[Inductive Step]: Now, we assume that $\dlstruct_i\models\standbs[C\sqsubseteq D]$ and show that $\dlstruct_{i+1}\models\standbs[C\sqsubseteq D]$. Consider $\delta\in\Delta_{i+1}\setminus\Delta_i$ (the case of $\delta\in\Delta_i$ is trivial). Thus, $\delta$ has been introduced to satisfy some axiom $\standb{t}[E \sqsubseteq \exists R.F]\in\KB^{\vdash}$ with $\stand{t}\in\sigma_{i+1}(\pi^*)$. We must show that if $C\in\Lambda_{\pi}(\delta)$ then $D\in\Lambda_{\pi}(\delta)$ for all $\pi\in\sigma_{i+1}(\st)$. We assume $C\in \Lambda_{\pi}(\delta)$ and consider the three cases.
                \begin{description}
                    \item[Case 1\namedlabel{proof:3}{(Case 1)}]: Assume $\pi = \pi^*$ and $\pi\in\sigma(\st)$. Since $C\in \Lambda_{\pi}(\delta)$ and by rule \globtop and \Cref{lemma:sharpest-box}, we have $\standball [\top\sqsubseteq \standb{\s{\pi}} [F \Rightarrow C]] \in \KB^\vdash$. By assumption $ \standball[\top\sqsubseteq \standb{\s{\pi}} [C \Rightarrow D]] \in \KB^\vdash$. Then by rule \subA we obtain $\standball [\top\sqsubseteq \standb{\s{\pi}} [F \Rightarrow D]] \in \KB^\vdash$, and thus $D\in \Lambda_{\pi}(\delta')$ as required.
                    \item[Case 2\namedlabel{proof:4}{(Case 2)}]: Assume $\pi \in \Precs_{i} \setminus \{ \pi^* \}$ and $\pi\in\sigma(\st)$. Since $C\in \Lambda_{\pi}(\delta')$ then for some ${G \in \mathrm{Con}(F,\pi^*)}$ there is $\standb{\s{\pi^*}}[G \sqsubseteq \standb{\s{\pi}} [\top \Rightarrow C] ]\in \KB^\vdash$ by \Cref{lemma:sharpest-box}. By assumption, $\standball[\top \sqsubseteq \standb{\s{\pi}}[C\Rightarrow D]] \in \KB^\vdash$.
                    \begin{enumerate}[label=(\arabic*)]
                        \item $\standball[\top \sqsubseteq \standb{\s{\pi}}[C \Rightarrow D]] \in \KB^\vdash$, by assumption
                        \item $\standb{\s{\pi^*}}[G \sqsubseteq \standb{\s{\pi}} [\top \Rightarrow C] ]\in \KB^\vdash$, by assumption
                        \item $\standb{\s{\pi}}[\top \sqsubseteq \standball	[ C \Rightarrow \top ]]\in \KB^\vdash$, by axiom \axC and rule \inhierF
                        \item $\allstandb[C \sqsubseteq \standb{\s{\pi}} [C \Rightarrow D ]]\in \KB^\vdash$, by (1), (3) and axiom \subB
                        \item $\standb{\s{\pi^*}}[G \sqsubseteq \standb{\s{\pi}} [C \Rightarrow D ]]\in \KB^\vdash$, by (2), (4) and axiom \flatA
                        \item $\standb{\s{\pi^*}}[G \sqsubseteq \standb{\s{\pi}} [\top \Rightarrow D ]]\in \KB^\vdash$, by (2), (5) and axiom \subA. Thus we obtain $D\in \Lambda_{\pi}(\delta)$ as desired
                    \end{enumerate}
                    \item [Case 3\namedlabel{proof:5}{(Case 3)}]: Assume $ \pi = \pi_{\delta',\standd{s} E}$ and $\pi\in\sigma(\st)$. Since $C\in \Lambda_{\pi}(\delta')$
                          then for some ${G \in \mathrm{Con}(F,\pi^*)}$ by \Cref{lemma:sharpest-box} there is
                          $\standb{\s{\pi^*}}[G \sqsubseteq \standb{\s{\pi}} [E \Rightarrow C] ]\in \KB^\vdash$. Recall that, by assumption,
                          $\standball[\top \sqsubseteq \standb{\s{\pi}}[C\Rightarrow D] ]\in \KB^\vdash$.
                          \begin{enumerate}[label=(\arabic*)]
                              \item $\standball[\top \sqsubseteq \standb{\s{\pi}}[C \Rightarrow D]] \in \KB^\vdash$, by assumption
                              \item $\standb{\s{\pi^*}}[G \sqsubseteq \standb{\s{\pi}} [E \Rightarrow C] ]\in \KB^\vdash$, by assumption
                              \item $\allstandb[\top \sqsubseteq \standb{\s{\pi^*}}	[ G \Rightarrow \top ]]\in \KB^\vdash$, by axiom \axC and \inhierB
                              \item $\standb{\s{\pi^*}}[G \sqsubseteq \standb{\s{\pi}} [C \Rightarrow D ]]\in \KB^\vdash$, by (1), (3) and axiom \subB
                              \item $\standb{\s{\pi^*}}[G \sqsubseteq \standb{\s{\pi}} [E \Rightarrow D ]]\in \KB^\vdash$, by (2), (4) and axiom \subA. Thus we obtain $D\in \Lambda_{\pi}(\delta)$ as desired
                          \end{enumerate}
                \end{description}
            \end{description}

            \item[$\phi=\standbs{[C_1 \sqcap C_2 \sqsubseteq  D]}$] Let us now show that if $\standbs[C_1 \sqcap C_2 \sqsubseteq  D]\in\KB$ then  $\dlstruct\models \standbs[C_1 \sqcap C_2 \sqsubseteq  D]$.
            \begin{description}
                \item[Base Case]: First, let us show that $\dlstruct_0\models\standb{s}[C_1 \sqcap C_2  \sqsubseteq  D]$. Let $\pi\in\Pi$, $C_1\in\Lambda_{\pi}(\delta)$, $C_2\in\Lambda_{\pi}(\delta)$ and $\delta=a^{\gamma_{0}}$. We need to show that $D\in\Lambda_{\pi}(\delta)$
                \begin{enumerate}[label=(\arabic*)]
                    \item $\standball[\{a\} \sqsubseteq \standb{\s{\pi}}[\top \Rightarrow C_1]] \in \KB^\vdash$ by construction, rule \locB, and \Cref{lemma:sharpest-box}
                    \item $\standball[\{a\} \sqsubseteq \standb{\s{\pi}}[\top \Rightarrow C_2]] \in \KB^\vdash$  by construction, rule \locB, and \Cref{lemma:sharpest-box}
                    \item $\standb{\s{\pi}}[C_1 \sqcap C_2  \sqsubseteq  D]$  by assumption and \Cref{lemma:sharpest-box}
                    \item $\standball[\{a\} \sqsubseteq \standb{\s{\pi}}[\top \Rightarrow D]] \in \KB^\vdash$ by (1), (2), (3) and rule \con, thus we obtain $D\in \Lambda_{\pi}(\delta)$ as desired
                \end{enumerate}
                \item[Inductive Step]: Now, we assume that $\dlstruct_i\models\standbs[C_1 \sqcap C_2 \sqsubseteq  D]$ and show that $\dlstruct_{i+1}\models\standbs[C_1 \sqcap C_2 \sqsubseteq  D]$. Consider $\delta\in\Delta_{i+1}\setminus\Delta_i$ (the case of $\delta\in\Delta_i$ is trivial). Thus, $\delta$ has been introduced to satisfy some axiom $\standb{t}[E \sqsubseteq \exists R.F]\in\KB^{\vdash}$ with $\stand{t}\in\sigma_{i+1}(\pi^*)$. We must show that if $C_1\in\Lambda_{\pi}(\delta)$ and $C_2\in\Lambda_{\pi}(\delta)$ then $D\in \Lambda_{\pi}(\delta)$ for all $\pi\in\sigma_{i+1}(\st)$. We assume $C_1,C_2\in \Lambda_{\pi}(\delta)$ and consider the three cases.
                \begin{description}
                    \item[Case 1]: Assume $\pi = \pi^*$ and $\pi\in\sigma_{i+1}(\st)$. Since $C_1,C_2\in \Lambda_{\pi}(\delta)$ then (by construction and rule \locB) $\standball [\top\sqsubseteq \standb{\s{\pi}} [F \Rightarrow C_1]], \standball [\top\sqsubseteq \standb{\s{\pi}} [F \Rightarrow C_2]] \in \KB^\vdash$ by \Cref{lemma:sharpest-box} and also $\standb{\s{\pi}}[C_1 \sqcap C_2 \sqsubseteq  D]\in \KB^\vdash$. Then by rule \con we obtain $\standball [\top\sqsubseteq \standb{\s{\pi}} [F \Rightarrow D]] \in \KB^\vdash$, and thus $D\in \Lambda_{\pi}(\delta')$ as required.
                    \item[Case 2]: Assume $\pi \in \Precs_{i} \setminus \{ \pi^* \}$ and $\pi\in\sigma_{i+1}(\st)$. Since $C_1,C_2\in \Lambda_{\pi}(\delta')$ then for some ${G_1,G_2 \in \mathrm{Con}(F,\pi^*)}$ and by \Cref{lemma:sharpest-box}, there are $\standb{\s{\pi^*}}[G_1 \sqsubseteq \standb{\s{\pi}} [\top \Rightarrow C_1]],\standb{\s{\pi^*}}[G_2 \sqsubseteq \standb{\s{\pi}} [\top \Rightarrow C_2]] \in \KB^\vdash$:
                    \begin{enumerate}[label=(\arabic*)]
                        \item $\standb{\s{\pi}}[C_1 \sqcap C_2  \sqsubseteq  D]\in \KB^\vdash$,
                        \item $\standb{\s{\pi^*}}[G_1\sqsubseteq \standb{\s{\pi}} [\top \Rightarrow C_1] ]\in \KB^\vdash$
                        \item $\standb{\s{\pi^*}}[G_2\sqsubseteq \standb{\s{\pi}} [\top \Rightarrow C_2] ]\in \KB^\vdash$
                        \item $\standb{u_1} [\top\sqsubseteq \standb{\s{\pi^*}} [F \Rightarrow G_1]]\in \KB^\vdash$ by the construction of $\mathrm{Con}(F,\pi^*)$ and \Cref{lemma:sharpest-box}
                        \item $\standb{u_2} [\top\sqsubseteq \standb{\s{\pi^*}} [F \Rightarrow G_2]] \in \KB^\vdash$ by the construction of $\mathrm{Con}(F,\pi^*)$ and \Cref{lemma:sharpest-box}
                        \item $\standb{\s{\pi^*}}[F\sqsubseteq \standb{\s{\pi}} [\top \Rightarrow C_1] ]\in \KB^\vdash$ by (2), (4) and rule \subB
                        \item $\standb{\s{\pi^*}}[F\sqsubseteq \standb{\s{\pi}} [\top \Rightarrow C_2] ]\in \KB^\vdash$ by (3), (5) and rule \subB
                        \item $\standb{\s{\pi^*}}[F\sqsubseteq \standb{\s{\pi}} [\top \Rightarrow D] ]\in \KB^\vdash$ by (1), (6), (7) and rule \con, thus $D\in \Lambda_{\pi}(\delta')$ as desired
                    \end{enumerate}
                    \item [Case 3]: Assume $ \pi = \pi_{\delta',\standd{s} E} \in \Precs_{i+1} \setminus \Precs_{i} $  and $\pi\in\sigma_{i+1}(\st)$. Since $C_1,C_2\in \Lambda_{\pi}(\delta')$ then for some ${G_1,G_2 \in \mathrm{Con}(F,\pi^*)}$ and by \Cref{lemma:sharpest-box}, there are $\standb{\s{\pi^*}}[G_1 \sqsubseteq \standb{\s{\pi}} [E \Rightarrow C_1]],\standb{\s{\pi^*}}[G_2 \sqsubseteq \standb{\s{\pi}} [E \Rightarrow C_2]] \in \KB^\vdash$
                          \begin{enumerate}[label=(\arabic*)]
                              \item $\standb{\s{\pi}}[C_1 \sqcap C_2  \sqsubseteq  D]\in \KB^\vdash$,
                              \item $\standb{\s{\pi^*}}[G_1\sqsubseteq \standb{\s{\pi}} [E \Rightarrow C_1] ]\in \KB^\vdash$
                              \item $\standb{\s{\pi^*}}[G_2\sqsubseteq \standb{\s{\pi}} [E \Rightarrow C_2] ]\in \KB^\vdash$
                              \item $\standb{u_1} [\top\sqsubseteq \standb{\s{\pi^*}} [F \Rightarrow G_1]]\in \KB^\vdash$ by the construction of $\mathrm{Con}(F,\pi^*)$ and \Cref{lemma:sharpest-box}
                              \item $\standb{u_2} [\top\sqsubseteq \standb{\s{\pi^*}} [F \Rightarrow G_2]] \in \KB^\vdash$ by the construction of $\mathrm{Con}(F,\pi^*)$ and \Cref{lemma:sharpest-box}
                              \item $\standb{\s{\pi^*}}[F\sqsubseteq \standb{\s{\pi}} [E \Rightarrow C_1] ]\in \KB^\vdash$ by (2), (4) and rule \subB
                              \item $\standb{\s{\pi^*}}[F\sqsubseteq \standb{\s{\pi}} [E \Rightarrow C_2] ]\in \KB^\vdash$ by (3), (5) and rule \subB
                              \item $\standb{\s{\pi^*}}[F\sqsubseteq \standb{\s{\pi}} [E \Rightarrow D] ]\in \KB^\vdash$ by (6), (7) and rule \con, thus $D\in \Lambda_{\pi}(\delta')$ as desired
                          \end{enumerate}
                \end{description}

            \end{description}

            \item[$\phi=\standbsp{[C\sqsubseteq \standbs D]}$] Assume $\standbsp[C\sqsubseteq \standbs D]\in\KB$ (i.e. $\standbsp [C \sqsubseteq \standb{s}[\top \Rightarrow D]] \in \KB^\vdash$).
            \begin{description}
                \item[Base Case]: First, let us show that $\dlstruct_0\models \standbsp[C \sqsubseteq \standb{s}[\top \Rightarrow D]]$. We let $C\in \Lambda_{\pi'}(\delta)$ for $\pi'\in\sigma_0(\sp)$, and we show that $D\in \Lambda_{\pi}(\delta)$ for $\pi\in\sigma_0(\st)$.
                \begin{enumerate}[label=(\arabic*)]
                    \item $\standb{\s{\pi'}} [C \sqsubseteq \standb{\s{\pi}}[\top \Rightarrow D]] \in \KB^\vdash$, by assumption\label{proof:9} and \Cref{lemma:sharpest-box}
                    \item $\standb{u}[\{a\} \sqsubseteq \standb{\s{\pi'}}[\top \Rightarrow C]] \in \KB^\vdash$ for some $\sp$ such that $\pi\in\sigma_0(\sp)$, by construction and the assumption that $C\in \Lambda_{\pi'}(\delta)$\label{proof:7}
                    \item $\standb{u}[\{a\} \sqsubseteq \standb{\s{\pi}}[\top \Rightarrow D]] \in \KB^\vdash$ by rule \flatA, (1) and (2), thus $D\in \Lambda_{\pi}(\delta')$ as desired
                \end{enumerate}

                From this we obtain that if $\pi\in\Pi$ then $\dlstruct_0\models  \standbsp [C \sqsubseteq \standb{s}[\top \Rightarrow D]]$ by the semantics.
                \item[Inductive Step]: Now, we assume that $\dlstruct_i\models \standbsp [C \sqsubseteq \standb{s}[\top \Rightarrow D]]$ and show that $\dlstruct_{i+1}\models\standbsp [C \sqsubseteq \standb{s}[\top \Rightarrow D]]$. Consider $\delta\in\Delta_{i+1}\setminus\Delta_i$ (the case of $\delta\in\Delta_i$ is trivial). Thus, $\delta$ has been introduced to satisfy some axiom $\standb{t}[E \sqsubseteq \exists R.F]\in\KB^{\vdash}$ with $\stand{t}\in\sigma_{i+1}(\pi^*)$. We must show that if $C\in\Lambda_{\pi'}(\delta)$ then $D\in\Lambda_{\pi}(\delta)$ for all $\pi\in\sigma_{i+1}(\sp)$ and $\pi'\in\sigma_{i+1}(\st)$. We consider the three cases.
                \begin{description}
                    \item[Case 1\namedlabel{proof:3}{(Case 1)}]: Assume $\pi' = \pi^*$, hence $\sp,t\in\sigma^{-1}_{i+1}(\pi')$ and $C\in \Lambda_{\pi'}(\delta)$.
                    \begin{enumerate}[label=(\arabic*)]
                        \item $\standball[\top\sqsubseteq \standb{\s{\pi}} [F \Rightarrow C]] \in \KB^\vdash$ from $C\in \Lambda_{\pi}(\delta)$, rule XXX and \Cref{lemma:sharpest-box}
                        \item $\standb{\s{\pi'}} [C \sqsubseteq \standb{\s{\pi}}[\top \Rightarrow D]]\in \KB^\vdash$, by assumption and \Cref{lemma:sharpest-box}, hence $D\in \Lambda_{\pi}(\delta)$ if $\pi=\Pi\setminus\pi^*$ as desired
                              %       \item $\top\sqsubseteq \standb{s} [F \Rightarrow D] \in \KB^\vdash$ (want to get, or
                              %\item $D\in \Lambda_{\pi'}(\delta)$ for all $\pi' \in \sigma_{i+1}(\st)\cap(\Pi_i\setminus \{ \pi^* \})$, by (2) and by construction
                        \item $\allstandb[\top \sqsubseteq \standball	[ C \Rightarrow \top ]]\in \KB^\vdash$ by axiom \axC
                        \item $\allstandb[\top \sqsubseteq \standball	[ E \Rightarrow \top ]]\in \KB^\vdash$ by axiom \axC
                        \item $\allstandb[C \sqsubseteq \standball	[ E \Rightarrow \top ]]\in \KB^\vdash$ by (3), (4) and rule \subB
                        \item $\standb{\s{\pi'}}[C \sqsubseteq \standb{\s{\pi}} [ E \Rightarrow \top ]]\in \KB^\vdash$ by (5) \Cref{lemma:sharpest-box}
                        \item $\standb{\s{\pi'}}[C \sqsubseteq \standb{\s{\pi}}[E \Rightarrow D] ]\in \KB^\vdash$, from (2), (6) and rule \subA, hence $D\in \Lambda_{\pi}(\delta)$ if $\pi=\pi_{\delta',\standd{s_1} E}$ as desired
                              %       \item $D\in \Lambda_{\pi'}(\delta)$ for all $\pi' \in \sigma_{i+1}(\st)\setminus\Pi_i$, by (9) and by construction
                              %      \item $\{\spp\prec\st, \spp\prec\sp\} \subseteq \KB^\vdash$ for some $\spp$ such that $\pi\in\sigma_{i+1}(\spp)$, by \Cref{lemma:standpoint-nestings}, the assumption and (1)
                              %     \item $\top\sqsubseteq \standb{\s{\pi}} [F \Rightarrow C] \in \KB^\vdash$, by (11), (1) and rule \inhierB
                              %     \item $C \sqsubseteq \standb{\s{\pi}}[\top \Rightarrow D] \in \KB^\vdash$, by (11), (2) and rule \inhierB
                        \item $\allstandb[\top \sqsubseteq \standb{\s{\pi}}[C \Rightarrow D]] \in \KB^\vdash$, by (2) and rules \loc
                        \item $\standb{\s{\pi'}}[\top \sqsubseteq \standb{\s{\pi}}[C \Rightarrow D]] \in \KB^\vdash$, by (8) and \Cref{lemma:sharpest-box}
                        \item $\standb{\s{\pi'}}[\top \sqsubseteq \standb{\s{\pi}}[F \Rightarrow D]] \in \KB^\vdash$, by (1), (9) and rule \subA, hence $D\in \Lambda_{\pi}(\delta)$ if $\pi=\pi^*$ as desired
                    \end{enumerate}
                    \item[Case 2\namedlabel{proof:4}{(Case 2)}]: Assume $\pi' \in \Precs_{i} \setminus \{ \pi^* \}$, and $C\in \Lambda_{\pi'}(\delta)$
                    \begin{enumerate}[label=(\arabic*)]
                        \item $\standb{\s{\pi^*}} [G \sqsubseteq \standb{\s{\pi'}} [\top \Rightarrow C] ]\in \KB^\vdash$ from $C\in \Lambda_{\pi'}(\delta)$ for some ${G \in \mathrm{Con}(F,\pi^*)}$ by construction
                        \item $\standb{\s{\pi'}} [C \sqsubseteq \standb{\s{\pi}}[\top \Rightarrow D]]\in \KB^\vdash$, by assumption and \Cref{lemma:sharpest-box}
                        \item $\standb{\s{\pi^*}} [G \sqsubseteq \standb{\s{\pi}}[\top \Rightarrow D]] \in \KB^\vdash$, by (1), (2) and rule \flatA, hence $D\in \Lambda_{\pi}(\delta)$ if $\pi=\Pi\setminus\pi^*$ as desired
                        \item $\standball	[\top \sqsubseteq \standball	[ G \Rightarrow \top ]]\in \KB^\vdash$ by axiom \axC
                        \item $\standball	[\top \sqsubseteq \standball	[ E \Rightarrow \top ]]\in \KB^\vdash$ by axiom \axC
                        \item $\standball	[G \sqsubseteq \standball	[ E \Rightarrow \top ]]\in \KB^\vdash$ by (4), (5) and rule \subB
                        \item $\standb{\s{\pi'}}[G \sqsubseteq \standb{\s{\pi}} [ E \Rightarrow \top ]]\in \KB^\vdash$ by (6) and \Cref{lemma:sharpest-box}
                        \item $\standb{\s{\pi'}} [G \sqsubseteq \standb{\s{\pi}}[E \Rightarrow D] ]\in \KB^\vdash$, from (3), (7) and rule \subA, hence $D\in \Lambda_{\pi}(\delta)$ if $\pi=\pi_{\delta',\standd{s_1} E}$ as desired
                        \item $\standb{u}[\top \sqsubseteq \standb{\s{\pi^*}} [F \Rightarrow G]] \in \KB^\vdash$ from 1 by construction and \Cref{lemma:sharpest-box}
                        \item $\standb{\s{\pi^*}}[F \sqsubseteq \standb{\s{\pi}} [\top \Rightarrow D]] \in \KB^\vdash$ from (3), (9) and rule \subB
                        \item $\allstandb[ \top \sqsubseteq \standb{\s{\pi}}[F \Rightarrow D]] \in \KB^\vdash$, by (10) and rules \loc, hence $D\in \Lambda_{\pi}(\delta)$ if $\pi=\pi^*$ as desired
                    \end{enumerate}

                    \item [Case 3\namedlabel{proof:5}{(Case 3)}]: Assume $ \pi' = \pi_{\delta',\standd{s_1} E} \in \Precs_{i+1} \setminus \Precs_{i} $ and $C\in \Lambda_{\pi'}(\delta)$
                          \begin{enumerate}[label=(\arabic*)]
                              \item $\allstandb[C \sqsubseteq \standb{\s{\pi}}[\top \Rightarrow D]] \in \KB^\vdash$ by assumption
                              \item $\standb{\s{\pi^*}} [J \sqsubseteq \exists R.F]\in \KB^\vdash$ by construction from the existential that triggered the iteration $\dlstruct_{i+1}$ and \Cref{lemma:sharpest-box}
                              \item $\standb{u_1} [\top \sqsubseteq \standb{\s{\pi^*}} [F \Rightarrow G]]\in \KB^\vdash$ by construction for $G\in\mathrm{Con}(F,\pi^*)$ and \Cref{lemma:sharpest-box}
                              \item $\standb{u_2} [\top \sqsubseteq \standb{\s{\pi^*}} [F \Rightarrow H]]\in \KB^\vdash$ by construction for $H\in\mathrm{Con}(F,\pi^*)$ and \Cref{lemma:sharpest-box}
                              \item $\standb{\s{\pi^*}}[H \sqsubseteq \standd{\s{\pi'}} E] \in \KB^\vdash$ by construction and \Cref{lemma:sharpest-box}
                              \item $\standb{\s{\pi^*}}[G \sqsubseteq \standb{\s{\pi'}} [E \Rightarrow C]] \in \KB^\vdash$
                                    by construction and \Cref{lemma:sharpest-box}
                              \item $\allstandb[\top \sqsubseteq \standb{\s{\pi}}[C \Rightarrow D]] \in \KB^\vdash$, by (1), \Cref{lemma:sharpest-box} and rule \loc
                              \item $\allstandb[\top \sqsubseteq \allstandb[G \Rightarrow \top]] \in \KB^\vdash$, by axiom \axC
                              \item $\allstandb[G \sqsubseteq \standb{\s{\pi}}[C \Rightarrow D]] \in \KB^\vdash$, by (7), (8) and rule \subB
                              \item if $\pi=\pi'$, $\standb{\s{\pi^*}}[G \sqsubseteq \standb{\s{\pi}}[E \Rightarrow D] ]\in \KB^\vdash$ from (6), (9), \Cref{lemma:sharpest-box} and rule \subA, hence $D\in \Lambda_{\pi}(\delta)$ if $\pi=\pi_{\delta',\standd{s_1} E}$ as desired
                                    %              \item if $\pi=\pi^*$, $\standb{u}[\top\sqsubseteq\standb{\s{\pi^*}}[F\Rightarrow D]]$, by (1), (3), (4), (5), (6) and rule \fixA, hence $D\in \Lambda_{\pi}(\delta)$ if $\pi=\pi^*$ as desired
                                    %\item[-] \textcolor{black}{I need to obtain that $H \sqsubseteq \standd{s_1} C \in \KB^\vdash$ somehow from (5) and (8).}
                                    %\item $H \sqsubseteq \standb{s'} [\top \Rightarrow D] \in \KB^\vdash$, from (2), (14) and rule 17
                                    %\item $\top \sqsubseteq \standb{s'} [H \Rightarrow D] \in \KB^\vdash$
                                    %\item $\top \sqsubseteq \standb{s'} [F \Rightarrow D] \in \KB^\vdash$ from (5), (16) and rule 11 as desired 
                                    %              \item[-] \textcolor{black}{We show that if $\pi'\in\Pi\setminus\{\pi^*\}$ and $\pi'\in\sigma(\st)$ then $\standb{t} [F \sqsubseteq \standb{\s{\pi}} [\top \Rightarrow D]] \in \KB^\vdash$  with $\spp\preceq\st$ and $\pi'\in\sigma(\spp)$.}
                              \item $\allstandb [C \sqsubseteq \standb{\s{\pi}}[\top \Rightarrow D]] \in \KB^\vdash$, from (1) and rule \inhierB.
                              \item $\standb{\s{\pi^*}}[F \sqsubseteq \standd{\s{\pi'}} E] \in \KB^\vdash$ from (4), (5) and rule \subC
                              \item $\standb{\s{\pi^*}}[G \sqsubseteq \standb{\s{\pi'}} [E \Rightarrow C]] \in \KB^\vdash$ from (6), (12) and rule \inhierF
                              \item $\standb{\s{\pi^*}}[F \sqsubseteq \standb{\s{\pi'}} [E \Rightarrow C]] \in \KB^\vdash$ from (3), (6) and rule \subB
                              \item $\standb{\s{\pi^*}}[F \sqsubseteq \standd{\s{\pi'}} C] \in \KB^\vdash$ from (12), (14) and rule \subD
                              \item $\standb{\s{\pi^*}} [F \sqsubseteq \standb{\s{\pi}}[\top \Rightarrow D]] \in \KB^\vdash$, from (11), (15) and rule \flatC, hence $D\in \Lambda_{\pi}(\delta)$ if $\pi=\Pi\setminus\pi^*$ as desired
                              \item $\standb{\s{\pi'}} [C \sqsubseteq \standb{\s{\pi^*}}[\top \Rightarrow D]] \in \KB^\vdash$, from (11) and \Cref{lemma:sharpest-box}
                              \item if $\pi=\pi^*$, $\allstandb[\top\sqsubseteq\standb{\s{\pi^*}}[F\Rightarrow D]]$, by (17) and rule \loc, hence $D\in \Lambda_{\pi}(\delta)$ if $\pi=\pi^*$ as desired
                          \end{enumerate}
                \end{description}
            \end{description}

            \item[$\phi=\standbsp{[C\sqsubseteq \standds D]}$] Assume $\standbsp[C\sqsubseteq \standds D]\in\KB$
            \begin{description}
                \item[Base Case]: First, let us show that $\dlstruct_0\models \standbsp[C\sqsubseteq \standds D]$. Let $\pi'\in\sigma_0(\sp)$ and $C\in\Lambda_{\pi'}(\delta)$.
                \begin{enumerate}[label=(\arabic*)]
                    %\item $C \sqsubseteq \standb{s}[\top \Rightarrow D] \in \KB^\vdash$, by assumption
                    \item $\standb{\s{\pi'}}[C\sqsubseteq \standds D]\in\KB$ by assumption and \Cref{lemma:sharpest-box}
                    \item $\standb{u} [\{a\} \sqsubseteq \standb{\s{\pi'}}[\top \Rightarrow C]] \in \KB^\vdash$, by construction and \Cref{lemma:sharpest-box}
                          \item$\mathsf{s}[a,D] \prec \mathsf{s}$     by construction
                    \item $\standball [ \{a\} \sqsubseteq \standb{s} [ D \Rightarrow P_{\mathsf{s},a,D}] ]$ by construction
                    \item $\standball [ P_{\mathsf{s},a,D} \sqsubseteq {\standb{s}}_{[a,D]} [ \top \Rightarrow D ]]$ by construction
                    \item $\standball [\{a\} \sqsubseteq \standb{\s{\pi'}}[\top \Rightarrow C]] \in \KB^\vdash$, by (2) and rule \locB.
                    \item $\standb{\s{\pi'}} [\{a\} \sqsubseteq \standb{\s{\pi'}}[\top \Rightarrow C]] \in \KB^\vdash$, by (6) and rule \inhierF
                    \item $\standb{\s{\pi'}} [\top \sqsubseteq \standb{\s{\pi'}}[\{a\} \Rightarrow C]] \in \KB^\vdash$, by (7) and rule \loc
                    \item $\standb{\s{\pi'}}[\{a\} \sqsubseteq \standds D]\in\KB$ by (8) and rule \subC
                    \item $\standb{\s{\pi'}} [ \{a\} \sqsubseteq \standb{s} [ D \Rightarrow P_{\mathsf{s},a,D}] ]$ by (4) and rule \inhierF
                    \item $\standb{\s{\pi'}}[\{a\} \sqsubseteq \standds P_{\mathsf{s},a,D}]\in\KB$ by (10) and rule \subD
                    \item $\standbs [ P_{\mathsf{s},a,D} \sqsubseteq {\standb{s}}_{[a,D]} [ \top \Rightarrow D ]]$ by (5) and rule \inhierF
                    \item $\standb{\s{\pi'}} [  \{a\} \sqsubseteq {\standb{s}}_{[a,D]} [ \top \Rightarrow D ]]$ by (11), (12) and rule \flatC, as desired
                \end{enumerate}
                From this we obtain that if $\pi'\in\sigma_0(\sp)$ and $C\in\Lambda_{\pi'}(\delta)$, then $D\in\Lambda_{\pi}(\delta)$ for $\pi\in\sigma_0(\mathsf{s}[a,D])$ and hence $\pi\in\sigma_0(\st)$, thus $\dlstruct_0, \pi\models C\sqsubseteq \standds D$ for all $\pi\in\sigma_0(\sp)$ and hence $\dlstruct_0\models  \standbsp[C\sqsubseteq \standds D]$ by the semantics.
                \medskip

                \item[Inductive Step]: Now, we assume that $\dlstruct_i\models \standbsp[C\sqsubseteq \standds D]$ and show that $\dlstruct_{i+1}\models \standbsp[C\sqsubseteq \standds D]$. Consider $\delta\in\Delta_{i+1}\setminus\Delta_i$ (the case of $\delta\in\Delta_i$ is trivial). Thus, $\delta$ has been introduced to satisfy some axiom $\standb{t}[E \sqsubseteq \exists R.F]\in\KB^{\vdash}$. We must show that if $C\in\Lambda_{\pi'}(\delta)$ then $D\in\Lambda_{\pi}(\delta)$ for some $\pi\in\sigma_{i+1}(\st)$, specifically for $\pi=\pi_{\delta',\standds D}$. We consider the three cases.
                \begin{description}
                    \item[Case 1]: Assume $\pi' = \pi^*$, $\pi\in\sigma_{i+1}(\sp)$ and $C\in \Lambda_{\pi'}(\delta)$.
                    \begin{enumerate}[label=(\arabic*)]
                        \item $\standb{u}[\top\sqsubseteq \standb{\s{\pi'}} [F \Rightarrow C]] \in \KB^\vdash$ from $C\in \Lambda_{\pi}(\delta)$ and \Cref{lemma:sharpest-box}
                        \item $\standb{\s{\pi'}}[C\sqsubseteq \standds D]$, by assumption and \Cref{lemma:sharpest-box}
                        \item $\standball[ \top \sqsubseteq \standball [ D \Rightarrow  D ]]$ by axiom \axB
                        \item $\standball[ \top \sqsubseteq \standball [ C \Rightarrow  \top ]]$ by axiom \axC
                        \item $\standball[ C \sqsubseteq \standball [ D \Rightarrow  D ]]$ by (3), (4) and rule \subB
                        \item By (1), (2) and by construction, there is a precisification $\pi_{\delta',\standds D}\in\pi\in\sigma_{i+1}(\st)$ and by (5) we have $D\in \Lambda_{\pi_{\delta',\standds D}}(\delta)$ as desired.
                    \end{enumerate}
                    \item[Case 2]: Assume $\pi' \in \Precs_{i} \setminus \{ \pi^* \}$, $\pi\in\sigma_{i+1}(\sp)$ and $C\in \Lambda_{\pi'}(\delta)$
                    \begin{enumerate}[label=(\arabic*)]
                        \item $\standb{\s{\pi^*}}[G \sqsubseteq \standb{\s{\pi'}} [\top \Rightarrow C]] \in \KB^\vdash$ for some ${G \in \mathrm{Con}(F,\pi^*)}$ by construction and \Cref{lemma:sharpest-box}.
                        \item $\standb{\s{\pi'}}[C\sqsubseteq \standds D]$, by assumption
                              %\item $\{\st''\prec\sp, \st''\prec\mathsf{t'}\} \in \KB^\vdash$ for some $\st''$ such that $\pi\in\sigma_{i+1}(\st'')$, by \Cref{lemma:standpoint-nestings}, the assumption and (1)
                              %\item $\standb{\s{\pi^*}}[G\sqsubseteq \standb{\s{\pi'}} [\top \Rightarrow C]] \in \KB^\vdash$, by (1), (3) and rule \inhierB
                              %\item $\standb{\s{\pi'}}[C\sqsubseteq \standds D]$, by (2), (3) and rule \inhierG
                        \item $\standb{\s{\pi^*}}[G\sqsubseteq \standds D]$, by (1), (2) and rule \flatB
                        \item From (1) and (3), by construction there is a precisification $\pi_{\delta',\standds D}\in\pi\in\sigma_{i+1}(\st)$, and by \textbf{Case 1} we have $D\in \Lambda_{\pi_{\delta',\standds D}}(\delta)$ as desired
                    \end{enumerate}
                    \item [Case 3]: Assume $ \pi' = \pi_{\delta',\standd{s_1} E} \in \Precs_{i+1} \setminus \Precs_{i} $, $\pi'\in\sigma_{i+1}(\st')$ and $C\in \Lambda_{\pi'}(\delta)$
                          \begin{enumerate}[label=(\arabic*)]
                              \item $\standb{\s{\pi^*}}[G \sqsubseteq \standb{\s{\pi'}} [E \Rightarrow C]] \in \KB^\vdash$ for some ${G \in \mathrm{Con}(F,\pi^*)}$
                                    %, ${\mathsf{t'} \in \sigma^{-1}_{i+1}(\pi')},\ {\mathsf{t} \in \sigma^{-1}_{i+1}(\pi^*)}$ 
                                    by construction and \Cref{lemma:sharpest-box}
                              \item $\standb{\s{\pi'}}[C\sqsubseteq \standds D]$, by assumption and \Cref{lemma:sharpest-box}
                              \item $\standb{\s{\pi^*}}[H\sqsubseteq \standd{\s{\pi'}} E]$ for some ${H \in \mathrm{Con}(F,\pi^*)}$
                                    %,\ {\mathsf{t'''} \in \sigma^{-1}_{i+1}(\pi')},\ {\mathsf{t''} \in \sigma^{-1}_{i+1}(\pi^*)}$
                                    by construction and \Cref{lemma:sharpest-box}
                              \item $\standb{u_1} [\top \sqsubseteq \standb{\s{\pi^*}} [F \Rightarrow G]]\in \KB^\vdash$ by construction since $G\in\mathrm{Con}(F,\pi^*)$) and \Cref{lemma:sharpest-box}
                              \item $\standb{u_2} [\top \sqsubseteq \standb{\s{\pi^*}} [F \Rightarrow H]]\in \KB^\vdash$ by construction since $H\in\mathrm{Con}(F,\pi^*)$) and \Cref{lemma:sharpest-box}
                              \item $\standb{\s{\pi^*}}[F \sqsubseteq \standd{\s{\pi'}} E] \in \KB^\vdash$ from (5), (3) and rule \subC
                              \item $\standb{\s{\pi^*}}[F \sqsubseteq \standb{\s{\pi'}} [E \Rightarrow C]] \in \KB^\vdash$ from (4), (1) and rule \subB
                              \item $\standb{\s{\pi^*}}[F \sqsubseteq \standd{\s{\pi'}} C] \in \KB^\vdash$ from (6), (7) and rule (\subD)
                              \item $\standb{\s{\pi^*}}[F \sqsubseteq \standd{s} D] \in \KB^\vdash$ from (2), (8) and rule (\flatD)
                              \item From (9) by construction there is a precisification $\pi_{\delta',\standds D}\in\sigma_{i+1}(\st)$, and by \textbf{Case 1} we have $D\in \Lambda_{\pi_{\delta',\standds D}}(\delta)$ as desired.

                          \end{enumerate}
                \end{description}
                Consequently, from \textbf{Cases 1-3}, for all $\pi\in\sigma_{i+1}(\sp)$, if $C\in \Lambda_{\pi}(\delta)$ then there is some $\pi_{\delta',\standds D}\in\sigma_{i+1}(\st)$ with $D\in \Lambda_{\pi_{\delta',\standds D}}(\delta)$, hence $\dlstruct_0\models \standbsp[C\sqsubseteq \standds D]$ and by induction also $\dlstruct\models \standbsp[C\sqsubseteq \standds D]$.
            \end{description}

            \item[$\phi=\standbs{[C\sqsubseteq \exists R.D]}$] (notice that $D\neq \mathrm{Self}$)
            Let us now show that if $\standbs[C\sqsubseteq \exists R.D]\in\KB^{\vdash}$ then  $\dlstruct\models \standbs[C\sqsubseteq \exists R.D]$. That is, we must show that if $C\in\Lambda_{\pi}(\delta)$ for some $\pi\in\sigma(\st)$ then there exists $\delta'$ such that $(\delta,\delta')\in R^{\gamma(\pi)}$ and $D\in\Lambda_{\pi}(\delta')$. Assume that for some $\dlstruct_i$ we have $C\in\Lambda_{\pi}(\delta)$ for $\pi\in\sigma_{i}(\st)$ but there is no $\delta'$ such that $(\delta,\delta')\in R^{\gamma_{i}(\pi)}$ and $D\in\Lambda_{\pi}(\delta')$. Then for some iteration $k>i$ we will pick $\pi=\pi^*$ and $\delta=\delta^*$ to produce $\dlstruct_{i+1}$ and set $\Dom_{i+1} = \Dom_{i} \cup \{\delta'\}$, where $\delta'$ is a fresh domain element. By the construction, $\Lambda_{\pi}(\delta')=\{ A \mid
                \standb{u}[ \top \sqsubseteq \standb{s} [D \Rightarrow A] ] \in \KB^\vdash, \st \in \sigma^{-1}(\pi^*) \}$. By axiom \axB we have $ \standball[ \top \sqsubseteq \standball [ D \Rightarrow  D ]]$, hence $D\in \Lambda_{\pi}(\delta')$ as desired. We must now show that also $(\delta,\delta')\in R^{\gamma_{i+1}(\pi)}$. This easily follows since $[R^{\gamma_{i+1}(\pi)}]_0= \mathsf{Self}\cup\mathsf{Other}$ and $(\delta,\delta')\in \mathsf{Other}$. Thus $(\delta,\delta')\in R^{\gamma_{i+1}(\pi)}$ as required.

            \item[$\phi=\standbs{[\exists R.C\sqsubseteq D]}$]
            Let us show that if $\standbs[\exists R.C\sqsubseteq D]\in\KB^{\vdash}$ then  $\dlstruct_i\models \standbs[\exists R.C\sqsubseteq D]$. That is, we must show that if $C\in\Lambda_{\pi}(\delta_2)$ and $(\delta_1,\delta_2)\in R^{\gamma_i(\pi)}$ for $\pi\in\sigma_{i}(\st)$, then $D\in\Lambda_{\pi}(\delta_1)$.
            \begin{description}
                \item[Remark 1]: If there is a sequence $\standb{t_1}[R_1\sqsubseteq R_2], \dots, \standb{t_j}[R_j\sqsubseteq R_{j+1}]\in\KB^{\vdash}$ for $j\geq0$, such that $\standb{t_{j+1}}[R_{j+1}\sqsubseteq R]\in\KB^{\vdash}$  and $\mathsf{t_1},\dots,\mathsf{t_{j+1}}\in\sigma^{-1}_{i}(\pi)$, then by \Cref{lemma:sharpest-box} and the successive application of rule \rsub we have $\standb{\s{\pi}}[R_1\sqsubseteq R]\in\KB^{\vdash}$

                %          \item[Remark 2]: Let $\standb{t_{1}}[T_1\sqsubseteq T_2]\in\KB^{\vdash}$, $\standb{t_{2}}[T'_{1}\sqsubseteq T'_{2}]\in\KB^{\vdash}$ and $\standb{k}[R_{1}\sqsubseteq R_{2}], \in\KB^{\vdash}$ and the formula $\standb{t_{3}}[T_{2}\circ T'_{2}\sqsubseteq R_{1}]\in\KB^{\vdash}$ for $\mathsf{t_1},\dots,\mathsf{t_{m+1}}\in\sigma^{-1}_{i}(\pi)$. Then by \Cref{lemma:sharpest-box} we have $\standb{\s{\pi}}[T_1\sqsubseteq T_2]\in\KB^{\vdash}$, $\standb{\s{\pi}}[T'_{1}\sqsubseteq T'_{2}]\in\KB^{\vdash}$ and $\standb{\s{\pi}}[R_{1}\sqsubseteq R_{2}], \in\KB^{\vdash}$ and the formula $\standb{\s{\pi}}[T_{2}\circ T'_{2}\sqsubseteq R_{1}]\in\KB^{\vdash}$. Then we have
                %      \begin{enumerate}[label=(R2.\arabic*)]
                %          \item $\standb{\s{\pi}}R(a,c)$ if $\standb{\s{\pi}}T_1(a,b)$, $\standb{\s{\pi}}T'_{j+1}(b,c)$ by the successive application of rules \abB and \abC.
                %          \item $\standb{s}[C\sqsubseteq\exists R.D]$ with $\pi\in\sigma_{i}(\st)$ if $\standb{s'}[C\sqsubseteq\exists R_1.D]$ and $\pi\in\sigma_{i}(\sp)$, by $\mathsf{s}\preceq\mathsf{t}, \mathsf{s}\preceq\mathsf{s'}\in\KB$ (from \Cref{lemma:standpoint-nestings}), by $\allstandb[\top\sqsubseteq\standbs[C\sqsubseteq C]]$ (obtained by axioms \axA, \axC and \inhierB), by the successive application of rule \exA, and by rule \exC.
                %      \end{enumerate}
                \item[Base Case]: First, let us show that $\dlstruct_0\models\standbs[\exists R.C\sqsubseteq D]$. We let $\pi\in\sigma_0(\st)$, $(\delta_1,\delta_2)\in R^{\gamma_{0}(\pi)}$ and $C\in\Lambda_{\pi}(\delta_2)$.
                \begin{enumerate}[label=(\arabic*)]
                    \item $\standb{\s{\pi}}[\exists R.C\sqsubseteq D]\in\KB$ by assumption and \Cref{lemma:sharpest-box}
                    \item $\allstandb[\{b\}\sqsubseteq \standb{\s{\pi}}[\top \Rightarrow C]] \in \KB$  by construction (since $\delta_2=b^{\gamma_{0}}$ for some atom $b$) and \Cref{lemma:sharpest-box}
                    \item $\standb{\s{\pi}}[R(a,b)]\in\KB^{\vdash}$ for some atom $a$ with $\delta_1=a^{\gamma_{0}}$ by construction and \Cref{lemma:sharpest-box}
                    \item $\standb{\s{\pi}} [\{a\} \sqsubseteq \exists R.C ]\in \KB$ by (2), (3) and rule \abeA
                    \item $\allstandb[\top\sqsubseteq \standb{\s{\pi}}[\{a\} \Rightarrow D]] \in \KB$ by (1), (4) and rule \exB
                    \item $\standb{\s{\pi}}[\{a\}\sqsubseteq \standb{\s{\pi}}[\top \Rightarrow D]] \in \KB$ by (5), \Cref{lemma:sharpest-box} and rule \latestrule, hence $D\in\Lambda_{\pi}(\delta_1)$ as desired
                \end{enumerate}

                \item[Inductive Step]: Now, we assume that $\dlstruct_{i-1}\models\standbs[\exists R.C\sqsubseteq D]$ and show that $\dlstruct_{i}\models\standbs[\exists R.C\sqsubseteq D]$.
                So, we need to consider the case where $\delta_2$ is fresh, $\delta_2\in\Delta_{i}\setminus\Delta_{i-1}$. Thus, $\delta_2$ has been introduced to satisfy some axiom $\standb{t}[E \sqsubseteq \exists T.F]\in\KB^{\vdash}$. We must show that if $(\delta_1,\delta_2)\in R^{\gamma_{i}(\pi)}$ and $C\in\Lambda_{\pi}(\delta_2)$ then $D\in\Lambda_{\pi}(\delta_1)$ for $\pi\in\sigma_{i}(\st)$. Hence $\delta_2=\delta'$. First, \textbf{(Case Loop)} we consider the -uniquely- self-loop case (where $\delta_1=\delta'$) for all $\pi\in\Pi$. Then, \textbf{(Case Forward)} we assume that $\delta_1\neq\delta'$, in which case we focus on $\pi=\pi^*$ since this is the only case that introduces a fresh relation into the saturation process, hence the other cases are trivial. Notice that \textbf{(Case Forward)} may contain loops but they do not need to be treated specially.

                \begin{description}[leftmargin=1em]
                    \item[(Case Loop)] Assume $\delta_1=\delta_2=\delta'$, $C\in\Lambda_{\pi}(\delta')$ and $(\delta',\delta')\in R^{\gamma_{i}(\pi)}$.
                        \begin{description}
                            \item[Case 1]: Assume $\pi = \pi^*$.
                            Since $C\in \Lambda_{\pi}(\delta)$ and by \Cref{lemma:sharpest-box} then $\standb{u} [\top\sqsubseteq \standb{\s{\pi}} [F \Rightarrow C]] \in \KB^\vdash$.
                            And since $(\delta',\delta')\in R^{\gamma_{i}(\pi)}\setminus R^{\gamma_{i-1}(\pi)}$, then there must be $\standb{u'} [\top\sqsubseteq \standb{\s{\pi}} [F \Rightarrow \exists R.\Self]] \in \KB^\vdash$. %For, assume that instead it was added in the concurrent saturation process at $[R^{\gamma_{i}(\pi)}]_k$. Then there must be $\standb{\s{\pi}}[R'\sqsubseteq R]$ or $\standb{\s{\pi}}[R_1\circ R_2\sqsubseteq R]$. 
                            By application of rules \selfG and \exA, we obtain $\standb{\s{\pi}} [F \sqsubseteq \exists R.C]\in \KB^\vdash$. By the premise and \Cref{lemma:sharpest-box}, we obtain $\standb{\s{\pi}} [\exists R.C \sqsubseteq D]\in \KB^\vdash$. Now by rule \exB we obtain $\standball [\top\sqsubseteq \standb{\s{\pi}} [F \Rightarrow D]] \in \KB^\vdash$ and hence $D\in \Lambda_{\pi}(\delta)$ as desired

                            \item[Case 2]: Assume $\pi \in \Precs_{i} \setminus \{ \pi^* \}$ and $\pi\in\sigma(\st)$. Since $C\in \Lambda_{\pi}(\delta')$ then for some ${G,H \in \mathrm{Con}(F,\pi^*)}$ there is $\standb{\s{\pi^*}}[G \sqsubseteq \standb{\s{\pi}} [\top \Rightarrow C] ]\in \KB^\vdash$ by \Cref{lemma:sharpest-box}. And since $(\delta',\delta')\in R^{\gamma_{i}(\pi)}\setminus R^{\gamma_{i-1}(\pi)}$, then there must be $\standb{\s{\pi^*}}[H \sqsubseteq \standb{\s{\pi}} [\top \Rightarrow \exself] ]\in \KB^\vdash$.
                            \begin{enumerate}[label=(\arabic*)]
                                \item $\standb{\s{\pi}}[\exists R.C\sqsubseteq D]\in\KB^{\vdash}$ by the assumption
                                \item $\standb{\s{\pi^*}}[E \sqsubseteq \exists R.F]\in\KB^{\vdash}$, by construction and \Cref{lemma:sharpest-box}
                                \item $\standb{u_1} [\top\sqsubseteq \standb{\s{\pi^*}} [F \Rightarrow G]] \in \KB^\vdash$
                                \item $\standb{u_2} [\top\sqsubseteq \standb{\s{\pi^*}} [F \Rightarrow H]] \in \KB^\vdash$
                                \item $\standb{\s{\pi^*}}[G \sqsubseteq \standb{\s{\pi}} [\top \Rightarrow C] ]\in \KB^\vdash$, by construction and \Cref{lemma:sharpest-box}
                                \item $\standb{\s{\pi^*}}[H \sqsubseteq \standb{\s{\pi}} [\top \Rightarrow \exself] ]\in \KB^\vdash$, by construction and \Cref{lemma:sharpest-box}
                                \item $\standb{\s{\pi^*}}[F \sqsubseteq \standb{\s{\pi}} [\top \Rightarrow C] ]\in \KB^\vdash$, by (3), (5) and rule \subB
                                \item $\standb{\s{\pi^*}}[F \sqsubseteq \standb{\s{\pi}} [\top \Rightarrow \exself] ]\in \KB^\vdash$, by (4), (6) and rule \subB
                                \item $\standb{\s{\pi}}[\exists R.\Self\sqcap C \sqsubseteq D]\in\KB^{\vdash}$ by (1) and rule \selfC
                                \item $\standb{\s{\pi^*}}[F \sqsubseteq \standb{\s{\pi}} [\top \Rightarrow D ] ]\in \KB^\vdash$, by (8), (9), (10) and rule \con, and hence $D\in \Lambda_{\pi}(\delta)$ as desired.
                            \end{enumerate}
                            \item [Case 3]: Assume $ \pi = \pi_{\delta',\standd{s} E}$ and $\pi\in\sigma(\st)$. Since $C\in \Lambda_{\pi}(\delta')$
                                  then for some ${G,H \in \mathrm{Con}(F,\pi^*)}$ by \Cref{lemma:sharpest-box} there is
                                  $\standb{\s{\pi^*}}[G \sqsubseteq \standb{\s{\pi}} [E \Rightarrow C] ]\in \KB^\vdash$ and also
                                  $\standb{\s{\pi^*}}[H \sqsubseteq \standb{\s{\pi}} [E \Rightarrow \exself] ]\in \KB^\vdash$
                                  \begin{enumerate}[label=(\arabic*)]
                                      \item $\standb{\s{\pi}}[\exists R.C\sqsubseteq D]\in\KB^{\vdash}$ by the assumption
                                      \item $\standb{\s{\pi^*}}[E \sqsubseteq \exists R.F]\in\KB^{\vdash}$, by construction and \Cref{lemma:sharpest-box}
                                      \item $\standb{u_1} [\top\sqsubseteq \standb{\s{\pi^*}} [F \Rightarrow G]] \in \KB^\vdash$
                                      \item $\standb{u_2} [\top\sqsubseteq \standb{\s{\pi^*}} [F \Rightarrow H]] \in \KB^\vdash$
                                      \item $\standb{\s{\pi^*}}[E \sqsubseteq \exists R.F]\in\KB^{\vdash}$, by construction and \Cref{lemma:sharpest-box}
                                      \item $\standb{\s{\pi^*}}[G \sqsubseteq \standb{\s{\pi}} [E \Rightarrow C] ]\in \KB^\vdash$, by construction and \Cref{lemma:sharpest-box}
                                      \item $\standb{\s{\pi^*}}[H \sqsubseteq \standb{\s{\pi}} [E \Rightarrow \exself] ]\in \KB^\vdash$, by construction and \Cref{lemma:sharpest-box}
                                      \item $\standb{\s{\pi^*}}[F \sqsubseteq \standb{\s{\pi}} [E \Rightarrow C] ]\in \KB^\vdash$, by (3), (6) and rule \subB
                                      \item $\standb{\s{\pi^*}}[F \sqsubseteq \standb{\s{\pi}} [E \Rightarrow \exself] ]\in \KB^\vdash$, by (4), (7) and rule \subB
                                      \item $\standb{\s{\pi}}[\exists R.\Self\sqcap C \sqsubseteq D]\in\KB^{\vdash}$ by (1) and rule \selfC
                                      \item $\standb{\s{\pi^*}}[F \sqsubseteq \standb{\s{\pi}} [E \Rightarrow D ] ]\in \KB^\vdash$, by (8), (9), (10) and rule \con, and hence $D\in \Lambda_{\pi}(\delta')$ as desired
                                  \end{enumerate}
                        \end{description}

                    \item[(Case Forward)]
                        %Otherwise, let there be the sequence $\delta'_{1},\dots,\delta'_m,\delta'_{m+1}$. 
                        Now, since we have $(\delta_1,\delta')\in R^{\gamma_{i+1}(\pi)}$ then by the saturation process that constructs $[R^{\gamma_{i+1}(\pi)}]_k$ there must be a sequence of domain elements $\delta'_1,\dots,\delta'_m,\delta'_{m+1}$ with $\delta'_1=\delta_1$ and $\delta'_m=\delta$.
                        %We need to consider two cases, \textbf{(F.1)} that all these elements $\delta'_j$ have been created by existential quantifiers. 
                        We notice that by construction we have the following formulas:
                        \begin{enumerate}[label={(P\arabic*)}]
                            \item We have $C=C'_{m+1}$, $R=R'_{{1|m}}$ and $T=R_{{m|m}}$
                            \item $C_{j},C'_{j}\in\Lambda_{\pi}(\delta'_{j})$ for all  $j \in \{k,\ldots,m\}$
                            \item There is some $k\in \{1,\ldots,m-1\}$ such that
                                  \begin{enumerate}
                                      \item[(P3.a)] $\standb{\s{\pi}}[C'_{j}\sqsubseteq\exists R_{j|j}.C_{j+1}]\in\kb^\vdash$ for all $j\in \{k,\ldots,m\}$, and
                                      \item[(P3.b)] $\standb{\s{\pi}}[R_{j|j}(a_j,a_{j+1})]\in\kb^\vdash$ for all $j\in \{1,\ldots,k-1\}$
                                  \end{enumerate}

                            \item $\standb{\s{\pi}}[R_{j|j} \sqsubseteq R'_{j|j}],\standb{\s{\pi}}[R_{j|m} \sqsubseteq R'_{j|m}]\in\kb^\vdash$ (Notice that by axiom \axD and \Cref{lemma:sharpest-box} we can obtain formulas $\standb{\s{\pi}}[R' \sqsubseteq R']\in\kb^\vdash$)
                            \item If $1\leq j< m$, $\standb{\s{\pi}}[R'_{{j|j}}\circ R'_{j+1|m}\sqsubseteq R_{j|m}]\in\kb^\vdash$
                        \end{enumerate}
                        %And the following formulas for $k>j\geq1$

                        We first show that we can obtain $\standb{\s{\pi}}[C'_m \sqsubseteq \exists R'_{m|m}.C'_{m+1}]\in\KB^{\vdash}$:

                        \begin{enumerate}[label={(\arabic*)}]
                            \item $\standb{u}[\top\sqsubseteq \standb{\s{\pi}} [C_{m+1} \Rightarrow C'_{m+1}]] \in \KB^\vdash$ from $C\in \Lambda_{\pi}(\delta'_{m+1})$ by construction with $C=C'_{m+1}$
                            \item $\standb{\s{\pi}}[C'_{m} \sqsubseteq \exists R_{m|m}.C_{m+1}]\in\KB^{\vdash}$ by the assumption of \textbf{Case 1}
                            \item $\standb{\s{\pi}}[R_{m|m}\sqsubseteq R'_{m|m}]\in\kb^{\vdash}$ by construction (P4)
                            \item $\standb{\s{\pi}}[C'_m \sqsubseteq \exists R'_{m|m}.C'_{m+1}]\in\KB^{\vdash}$ by (1), (2), (3) and rule \exA
                        \end{enumerate}

                        We show that for $j>k$, if we have $\standb{\s{\pi}}[C'_j \sqsubseteq \exists R'_{j|m}.C'_{m+1}]\in\KB^{\vdash}$ then we can obtain $\standb{\s{\pi}}[C'_{j-1} \sqsubseteq \exists R'_{{j-1}|m}.C'_{m+1}]\in\KB^{\vdash}$.

                        \begin{enumerate}[label={(\arabic*)}]
                            \item $\standb{u}[\top\sqsubseteq \standb{\s{\pi}} [C_{j} \Rightarrow C'_{j}]] \in \KB^\vdash$ from $C'_{j}\in \Lambda_{\pi}(\delta'_{j})$ by construction
                            \item $\standb{\s{\pi}}[C'_{j-1} \sqsubseteq \exists R_{{j-1}|{j-1}}.C_{j}]\in\KB^{\vdash}$ by construction (P3.a)
                            \item $\standb{\s{\pi}}[R_{{j-1}|{j-1}}\sqsubseteq R'_{{j-1}|{j-1}}]\in\kb^{\vdash}$ by construction (P4)
                            \item $\standb{\s{\pi}}[R'_{{{j-1}|{j-1}}}\circ R'_{j|m}\sqsubseteq R_{{j-1}|m}]\in\kb^\vdash$ by construction (P5)
                            \item $\standb{\s{\pi}}[C'_j \sqsubseteq \exists R'_{j|m}.C'_{m+1}]\in\KB^{\vdash}$ by inductive hypothesis
                            \item $\standb{\s{\pi}}[C'_{j-1} \sqsubseteq \exists R'_{{j-1}|{j-1}}.C'_{j}]\in\KB^{\vdash}$ by (1), (2), (3) and rule \exA
                            \item $\standb{\s{\pi}}[C'_{j-1} \sqsubseteq \exists R'_{{j-1}|m}.C'_{m+1}]\in\KB^{\vdash}$ by (4), (5), (6) and rule \exC
                        \end{enumerate}

                        Now, assume $k=1$; we show the base case where $j=1$.
                        \begin{enumerate}[label={(\arabic*)}]
                            \item $\standb{\s{\pi}}[\exists R'_{{1|m}}.C'_{m+1}\sqsubseteq D]\in\KB^{\vdash}$ by the assumption
                            \item $\standb{\s{\pi}}[C'_1 \sqsubseteq \exists R'_{{1|m}}.C'_{m+1}]\in\KB^{\vdash}$ by the inductive hypothesis
                            \item $\standb{u} [\top\sqsubseteq \standb{\s{\pi}} [F \Rightarrow C'_1]] \in \KB^\vdash$  from $C'_1\in \Lambda_{\pi}(\delta')$ by construction
                                  %                    \item $\standb{\s{\pi^*}}[G \sqsubseteq \standb{\s{\pi}} [\top \Rightarrow C'_{m+1}] ]\in \KB^\vdash$ by assumption
                            \item $\standball[\top\sqsubseteq \standb{\s{\pi}}[C'_1 \Rightarrow D]]\in\KB^{\vdash}$ by (1), (2) and rule \exB
                            \item $\standb{u}[\top\sqsubseteq \standb{\s{\pi}}[C'_1 \Rightarrow D]]\in\KB^{\vdash}$ by (4), and rule \genr
                            \item $\standb{u} [\top\sqsubseteq \standb{\s{\pi}} [F \Rightarrow D]] \in \KB^\vdash$ by (3), (5) and rule \subA and hence $D\in \Lambda_{\pi}(\delta_1)$ as desired
                        \end{enumerate}

                        Else, in case $j=k\neq 1$, we first we obtain
                        \begin{enumerate}[label={(\arabic*)}]
                            \item $\standb{\s{\pi}}[R'_{j-1|j-1}(a_{j-1},a_{j})]\in \KB^\vdash$ by assumption (P3.b)
                            \item $\standb{\s{\pi}}[C'_{j} \sqsubseteq \exists R'_{{j}|m}.C'_{m+1}]\in\KB^{\vdash}$ by assumption
                            \item $\allstandb[\{a_{j}\} \sqsubseteq \standb{\s{\pi}} [\top \Rightarrow C'_{j}]]\in\kb^{\vdash}$
                                  %                    \item $\standb{\s{\pi}}[\top \sqsubseteq \standb{\s{\pi}} [\{a_{j}\} \Rightarrow C'_{j}]]\in\kb^{\vdash}$
                            \item $\standb{\s{\pi}}[R'_{{j-1|j-1}}\circ R'_{j|m}\sqsubseteq R_{j|m}]\in\kb^\vdash$ by assumption (P5)
                            \item $\standb{\s{\pi}}[R_{{j-1}|{m}}\sqsubseteq R'_{{j-1}|{m}}]\in\kb^{\vdash}$ by construction (P4)
                            \item $\allstandb[\{a_{j-1}\} \sqsubseteq \exists R_{{j-1}|m}.C'_{m+1}]\in\kb^{\vdash}$ by (1), (2), (3), (4) and rule \lastminute
                            \item $\allstandb[\{a_{j-1}\} \sqsubseteq \exists R'_{{j-1}|m}.C'_{m+1}]\in\kb^{\vdash}$ by (5), (6) and rule \abB
                        \end{enumerate}

                        Else, in case $j \in \{2,\ldots,k-1\}$ if we have $\allstandb[\{a_{j}\} \sqsubseteq \exists R'_{{j}|m}.C'_{m+1}]\in\kb^{\vdash}$ then we can obtain $\allstandb[\{a_{j-1}\} \sqsubseteq \exists R'_{{j-1}|m}.C'_{m+1}]\in\kb^{\vdash}$.

                        \begin{enumerate}[label={(\arabic*)}]
                            \item $\standb{\s{\pi}}[R_{j-1|j-1}(a_{j-1},a_{j})]$ by construction (P3.b)
                            \item $\allstandb[\{a_{j}\} \sqsubseteq \exists R'_{{j}|m}.C'_{m+1}]\in\kb^{\vdash}$ by assumption
                            \item $\standb{\s{\pi}}[R'_{{j-1|j-1}}\circ R'_{j|m}\sqsubseteq R_{j|m}]\in\kb^\vdash$ by assumption (P5)
                            \item $\standb{\s{\pi}}[R_{{j-1}|{m}}\sqsubseteq R'_{{j-1}|{m}}]\in\kb^{\vdash}$ by construction (P4)
                            \item $\allstandb[\{a_{j-1}\} \sqsubseteq \exists R_{{j-1}|m}.C'_{m+1}]\in\kb^{\vdash}$ by (1), (2), (3) and rule \abeB
                            \item $\allstandb[\{a_{j-1}\} \sqsubseteq \exists R'_{{j-1}|m}.C'_{m+1}]\in\kb^{\vdash}$ by (4), (5) and rule \abB
                        \end{enumerate}

                        Finally, assume the base case where $j=1$ and $k\neq 1$
                        \begin{enumerate}[label={(\arabic*)}]
                            \item $\standb{\s{\pi}}[\exists R'_{{1|m}}.C'_{m+1}\sqsubseteq D]\in\KB^{\vdash}$ by the assumption
                            \item $\allstandb[\{a_{1}\} \sqsubseteq \exists R'_{{1}|m}.C'_{m+1}]\in\kb^{\vdash}$ by induction
                            \item $\standb{u} [\top\sqsubseteq \standb{\s{\pi}} [\{a_{1}\} \Rightarrow D]] \in \KB^\vdash$  by (1), (2) and rule \exB
                            \item $\standb{u} [\{a_{1}\}\sqsubseteq \standb{\s{\pi}} [\top \Rightarrow D]] \in \KB^\vdash$  by (3) and rule \loc, $D\in \Lambda_{\pi}(\delta_1)$ as desired
                                  %                    \item $\standb{\s{\pi^*}}[G \sqsubseteq \standb{\s{\pi}} [\top \Rightarrow C'_{m+1}] ]\in \KB^\vdash$ by assumption

                        \end{enumerate}

                \end{description}
            \end{description}

        \item[$\phi=\standbs(C(a))\ $] Assume $\standbs(C(a))\in\KB$ (i.e. $\allstandb[\{a\}\sqsubseteq \standb{s}[\top \Rightarrow C]] \in \KB^\vdash$).
            Then, by construction $\delta_a= a^{\gamma_{0}}$ and $C\in\Lambda_{\pi}(\delta_a)$. %Assume that $C$ is a self-loop of the form $\exists R.\Self$, then by rule \selfF we obtain $\standbs(R(a,a))\in\KB^\vdash$ and thus $(\delta_a,\delta_a)\in R^{\gamma_{0}(\pi)}$ for all $\pi\in\sigma_{0}(\st)$ as required.
            %Assume otherwise, then $C\in\Lambda_{\pi}(\delta_a)$ directly by construction for all $\pi\in\sigma_{0}(\st)$. Hence $\dlstruct\models\standbs(C(a)).$

        \item[$\phi=\standbs(R(a,b))\ $] Assume $\standbs(R(a,b))\in\KB$.
            Then, by construction $\delta_a= a^{\gamma_{0}}$, $\delta_b= b^{\gamma_{0}}$ and $(\delta_a,\delta_b)\in R^{\gamma_{0}(\pi)}$ for all $\pi\in\sigma_{0}(\st)$. Thus also by construction $\delta_a= a^{\gamma}$, $\delta_b= b^{\gamma}$ and $(\delta_a,\delta_b)\in R^{\gamma(\pi)}$ for all $\pi\in\sigma(\st)$ and hence $\dlstruct\models\standbs(R(a,b))$.
    \end{description}
    \medskip

    With this, Claim \ref{claim:compl-claim} is proved.
    \vspace{1em}

    Now, returning to the proof of the completeness theorem, we show that it follows from the claim above. It remains to show that $\dlstruct$ is indeed a model. First, we observe that clearly, by the construction of $\Pi_0$ and $\sigma_0$, no standpoint is empty. Moreover, by axiom $\axA$ we obtain that $\sigma_0(*)=\Pi_0$ as required, and we remark that by construction this carries to $\sigma(*)=\Pi$. Finally, we need to make sure that for all $\pi\in\Pi$ and $\delta\in\Delta$, $\bot\notin\Lambda_{\pi}(\delta)$, so all domain elements can be instantiated in all precisisfications.

    Assume for the sake of contradiction that $\standball [\top \sqsubseteq \standball[\top \Rightarrow \bot]]\notin\kb^\vdash$ but in $\dlstruct$ there is $\bot\in\Lambda_{\pi}(\delta')$ for some $\delta'\in\Dom$. First, assume that $\delta'\in\Dom_0$ of $\dlstruct_0$, and thus $\bot\in\Lambda_{\pi}(\delta')$. Then by construction there is some $\standb{u}[\{a\} \sqsubseteq \standb{\s{\pi}}[\top \Rightarrow \bot]] \in \KB^\vdash$ for some named individual $a$. But then by rule \abr we have $\standball [\top \sqsubseteq \standball[\top \Rightarrow \bot]]\in\kb^\vdash$, thus reaching a contradiction. Thus, if $\delta'\notin\Dom_0$ of $\dlstruct_0$, there must be a $\dlstruct_{i+1}$ with $\delta'\in\Dom_{i+1}\setminus\Dom_i$ and $\bot\in\Lambda_{\pi}(\delta')$. Notice that the iteration has been triggered at $\pi^*$ to satisfy an axiom of the form $\standb{t}[E \sqsubseteq \exists R.F]$, and by \Cref{lemma:sharpest-box} we have $\standb{\s{\pi^*}}[E \sqsubseteq \exists R.F]$.
    \begin{description}
        \item[Case $\pi=\pi^*$] By construction we have $\Lambda_{\pi^*}(\delta')= \{ A \mid \standb{u}[ \top \sqsubseteq \standb{s} [F \Rightarrow A] ] \in \KB^\vdash, \st \in \sigma^{-1}(\pi^*) \}$ and thus by the assumption we have some $\standb{u}[ \top \sqsubseteq \standb{\s{\pi^*}} [F \Rightarrow \bot]]$. By axiom \axD and \Cref{lemma:sharpest-box} we obtain also $\standb{\s{\pi^*}}[R \sqsubseteq R]$ and by rule \exA we get $\standb{\s{\pi^*}}[E \sqsubseteq \exists R.\bot]$. Then, by rule \retA we obtain $\standball	[\top \sqsubseteq \standb{\s{\pi^*}} [E \Rightarrow \bot] ]$. This means that in $\dlstruct_{i}$ there is $\bot\in\Lambda_{\pi^*}(\delta^*)$ since $\delta^*\in\Dom_i$.

        \item[Case $\pi\in\Pi_i\setminus\pi^*$] By construction and \Cref{lemma:sharpest-box} if $\bot\in\Lambda_{\pi}(\delta')$ then  $\standb{\s{\pi^*}}[G \sqsubseteq \standb{\s{\pi}} [\top \Rightarrow \bot]] \in \KB^\vdash$ for $G\in\Lambda_{\pi^*}(\delta')$. By rule \retB we obtain $\allstandb[\top \sqsubseteq \standb{\s{\pi^*}} [G \Rightarrow \bot]] \in \KB^\vdash$, which leads to $\bot\in\Lambda_{\pi^*}(\delta')$ and, by the previous case (\textbf{Case $\pi=\pi^*$}), in $\dlstruct_{i}$ there is $\bot\in\Lambda_{\pi^*}(\delta^*)$ since $\delta^*\in\Dom_i$.

        \item [Case $\pi=\pi_{\delta',\standd{s} E}$] By construction  and \Cref{lemma:sharpest-box} if $\bot\in\Lambda_{\pi_{\delta',\standd{s} E}}(\delta')$ then  $\standb{\s{\pi^*}}[G \sqsubseteq \standb{\s{\pi}} [E \Rightarrow \bot]] \in \KB^\vdash$ and $\standb{\s{\pi^*}}[H \sqsubseteq \standd{\s{\pi}} E] \in \KB^\vdash$ for $G,H\in\Lambda_{\pi^*}(\delta')$. By rule \subB we obtain that also $\standb{\s{\pi^*}}[F \sqsubseteq \standb{\s{\pi}} [E \Rightarrow \bot]] \in \KB^\vdash$ and by rule \subC we obtain $\standb{\s{\pi^*}}[F \sqsubseteq \standd{\s{\pi}} E] \in \KB^\vdash$. Then by rule \subD we obtain $\standb{\s{\pi^*}}[F \sqsubseteq \standd{\s{\pi}} \bot] \in \KB^\vdash$ and by rule \retC we get $\allstandb[\top \sqsubseteq \standb{\s{\pi^*}} [F \Rightarrow \bot]] \in \KB^\vdash$. Hence, by the first case (\textbf{Case $\pi=\pi^*$}) in $\dlstruct_{i}$ there is $\bot\in\Lambda_{\pi^*}(\delta^*)$ since $\delta^*\in\Dom_i$.
    \end{description}
    \medskip

    Thus, if there is some $\delta\in\Dom_i$ of $\dlstruct_i$ such that $\bot\in\Lambda_{\pi}(\delta)$, then there is $\delta'\in\Dom_0$ of $\dlstruct_0$ such that $\bot\in\Lambda_{\pi}(\delta')$, but as we showed this implies  $\standball [\top \sqsubseteq \standball[\top \Rightarrow \bot]]\in\kb^\vdash$, thus reaching a contradiction.
\end{proof}

\newpage

\end{document}